%% file: main_arxiv.tex
\documentclass{article}

\usepackage{microtype}
\usepackage{graphicx}
\usepackage{subfigure}
\usepackage{booktabs} %
\usepackage{tikz}
\usetikzlibrary{calc,decorations.pathmorphing,shapes}

\usepackage{hyperref}
\usepackage[authoryear]{natbib}
\setcitestyle{round}

\renewcommand{\cite}{\citep}

\usepackage{amsmath}
\usepackage{amssymb}
\usepackage{mathtools}
\usepackage{amsthm}
\usepackage{relsize}

\usepackage[capitalize,noabbrev]{cleveref}

\usepackage[textsize=tiny]{todonotes}

\usepackage{mathtools}
\usepackage{booktabs}
\usepackage{amsthm,bbm}
\usepackage{cleveref} %
\usepackage{subcaption} %
\usepackage{mathtools}
\usepackage{stmaryrd}
\usepackage{etoolbox} %
\RequirePackage{paralist}
\usepackage{enumitem}
\usepackage{multirow} %
\usepackage{array}
\usepackage{graphicx}
\usepackage{etoc}
\usepackage{listings}
\usepackage{eqparbox}
\usepackage{bbm}
\usepackage{quoting}

\AtEndPreamble{%
  \newtheorem{remark}[theorem]{Remark}
  \newtheorem{assumption}{Assumption}
  \newtheorem{condition}{Condition}
  \newtheorem{property}{Property}
  \theoremstyle{plain}
  \newenvironment{definition*}
                 {\pushQED{\qed}\definition}
                 {\popQED\enddefinition}  
  \newenvironment{example*}
                 {\pushQED{\qed}\example}
                 {\popQED\endremark}  
  \newenvironment{remark*}
                 {\pushQED{\qed}\remark}
                 {\popQED\endremark}  
  \newenvironment{assumption*}
                 {\pushQED{\qed}\assumption}
                 {\popQED\endremark}  
  \newenvironment{property*}
                 {\pushQED{\qed}\property}
                 {\popQED\endremark}  
}

\crefname{assumption}{Assumption}{Assumptions}
\crefname{figure}{Fig{.}}{Figs{.}}%
\crefname{table}{Table}{Tables}
\crefname{definition}{Definition}{Definitions}
\crefname{theorem}{Theorem}{Theorems}
\crefname{lemma}{Lemma}{Lemmas}
\crefname{proposition}{Proposition}{Propositions}
\crefname{corollary}{Corollary}{Corollaries}
\crefname{problem}{Problem}{Problems}
\crefname{example}{Example}{Examples}
\crefname{fact}{Fact}{Facts}
\crefname{conjecture}{Conjecture}{Conjectures}
\crefname{remark}{Remark}{Remarks}
\crefname{condition}{Condition}{Conditions}
\crefname{requirement}{Requirement}{Requirements}
\crefname{enumi}{}{}
\crefname{equation}{Eq{.}}{Eqs{.}}
\crefname{section}{Section}{Sections}

\usepackage{amsfonts,bm}

\def\eqref#1{equation~\ref{#1}}

\def\ceil#1{\lceil #1 \rceil}
\def\floor#1{\lfloor #1 \rfloor}
\def\1{\bm{1}}

\DeclareMathAlphabet{\mathsfit}{\encodingdefault}{\sfdefault}{m}{sl}
\SetMathAlphabet{\mathsfit}{bold}{\encodingdefault}{\sfdefault}{bx}{n}

\DeclareMathOperator*{\argmin}{arg\,min}

\newcommand{\mathboldcommand}[1]{\mathbb{#1}}

\newcommand{\bbF}{\mathboldcommand{F}}

\newcommand{\bbN}{\mathboldcommand{N}}

\newcommand{\bbR}{\mathboldcommand{R}}

\newcommand{\bbZ}{\mathboldcommand{Z}}

\usepackage{euscript}
\newcommand{\mathcalcommand}[1]{\mathcal{#1}}

\newcommand{\mcD}{\mathcalcommand{D}}

\newcommand{\mcM}{\mathcalcommand{M}}

\newcommand{\mcS}{\mathcalcommand{S}}

\newcommand{\mcX}{\mathcalcommand{X}}
\newcommand{\mcY}{\mathcalcommand{Y}}

\DeclareMathAlphabet{\mathpzc}{T1}{pzc}{m}{it}

\allowdisplaybreaks

\newlength{\parskiptrue}
\definecolor{lred}{rgb}{1.0, 0.5, 0.5}
\definecolor{lorange}{rgb}{1.00, 0.90, 0.20}
\definecolor{lgreen}{rgb}{0.35, 0.95, 0.35}
\definecolor{lime}{rgb}{0.9, 1.0, 0.6}
\definecolor{lblue}{rgb}{1.0, 0.85, 0.75}

\makeatletter

\newcommand*\wthelper[2]{%
        \hbox{\dimen@\accentfontxheight#1%
                \accentfontxheight#11.1\dimen@
                $\m@th#1\widetilde{#2}$%
                \accentfontxheight#1\dimen@
        }%
}
\newcommand*\accentfontxheight[1]{%
        \fontdimen5\ifx#1\displaystyle
                \textfont
        \else\ifx#1\textstyle
                \textfont
        \else\ifx#1\scriptstyle
                \scriptfont
        \else
                \scriptscriptfont
        \fi\fi\fi3
}

\newcommand*\whhelper[2]{%
        \hbox{\dimen@\accentfontxheight#1%
                \accentfontxheight#11.2\dimen@
                $\m@th#1\widehat{#2}$%
                \accentfontxheight#1\dimen@
        }%
}
\makeatother

\makeatletter
\newcommand{\oset}[3][0ex]{%
  \mathrel{\mathop{#3}\limits^{
    \vbox to#1{\kern-3\ex@
    \hbox{$\scriptstyle#2$}\vss}}}}
\makeatother

\newcommand*{\defeq}{\coloneq}

\newcommand*{\relu}{\mathrm{ReLU}}

\newcommand*{\SiLU}{\mathrm{SiLU}}
\newcommand*{\SoftPlus}{\mathrm{SoftPlus}}

\newcommand*{\GeLU}{\mathrm{GELU}}
\newcommand*{\erf}{\mathrm{erf}}
\newcommand*{\Sigmoid}{\mathrm{Sigmoid}}
\newcommand*{\ELU}{\mathrm{ELU}}
\newcommand*{\Swish}{\mathrm{Swish}}
\newcommand*{\elu}{\mathrm{ELU}}

\newcommand*{\fpq}{\mathbb{F}}
\newcommand*{\efpq}{{\overline{\mathbb{F}}}}
\newcommand*{\fmin}{\omega}
\newcommand*{\fmax}{\Omega}

\newcommand*{\round}[1]{ \left\lceil{#1}\right\rfloor }

\newcommand*{\ceilZ}[1]{ \left\lceil{#1}\right\rceil_{\mathbb{Z}} }
\newcommand*{\floorZ}[1]{ \left\lfloor{#1}\right\rfloor_{\mathbb{Z}} }

\makeatletter
\newcommand*{\tran}{\xMapsto{\sigma}{}}
\newcommand{\xMapsto}[2][]{\ext@arrow 0599{\Mapstofill@}{#1}{#2}}
\def\Mapstofill@{\arrowfill@{\Mapstochar\Relbar}\Relbar\Rightarrow}

\newcommand*{\mtran}[1]{\xrsquigarrow{{ #1}}}

\newcounter{sarrow}
\newcommand\xrsquigarrow[1]{%
\stepcounter{sarrow}%
\mathrel{\begin{tikzpicture}[baseline= {( $ (current bounding box.south) + (0,-0.5ex) $ )}]
\node[inner sep=.5ex] (\thesarrow) {$\scriptstyle #1$};
\path[draw,<-,decorate,
  decoration={zigzag,amplitude=0.7pt,segment length=1.2mm,pre=lineto,pre length=4pt}] 
    (\thesarrow.south east) -- (\thesarrow.south west);
\end{tikzpicture}}%
}

\makeatother
\makeatletter
\def\moverlay{\mathpalette\mov@rlay}
\def\mov@rlay#1#2{\leavevmode\vtop{%
   \baselineskip\z@skip \lineskiplimit-\maxdimen
   \ialign{\hfil$\m@th#1##$\hfil\cr#2\crcr}}}
\newcommand{\charfusion}[3][\mathord]{
    #1{\ifx#1\mathop\vphantom{#2}\fi
        \mathpalette\mov@rlay{#2\cr#3}
      }
    \ifx#1\mathop\expandafter\displaylimits\fi}
\makeatother

\newcommand{\lrp}[1]{\left({#1}\right)}
\newcommand{\lra}[1]{\left\langle{#1}\right\rangle}
\newcommand{\lrb}[1]{\left|{#1}\right|}
\newcommand{\lrs}[1]{\left\{{#1}\right\}}

\newcommand{\emin}{\mathfrak{e}_{\min}}
\newcommand{\emax}{\mathfrak {e}_{\max}}
\newcommand*{\mant}[1]{\mathfrak {m}_{#1}}
\newcommand*{\sn}[1]{\mathfrak {s}_{#1}}

\newcommand*{\expo}[1]{\mathfrak {e}_{#1}}

\newcommand{\nan}{\text{NaN}}
\newcommand{\hatsig}{\hat{\sigma}}

\newcommand*{\A}[3]{A_{(#1,),(#2,),#3}}
\newcommand*{\AS}[3]{AS_{(#1,),(#2,),#3}}
\newcommand*{\AP}[3]{A_{(#1),(#2),#3}}
\newcommand*{\ASP}[3]{AS_{(#1),(#2),#3}}
\newcommand*{\domain}{\mathcal{X}}

\newcommand*{\concat}{||}
\newcommand*{\bigconcat}{\Big|\Big|}
\newcommand*{\funcsum}{\#}
\newcommand*{\bigfuncsum}{{\mathlarger{\mathlarger{\#}}}}
\newcommand{\mbit}{M}
\newcommand{\ebit}{E}

\newcommand*{\indcc}[1]{\mathbbm{1}_{#1}}
\newcommand*{\indc}[2]{\mathbbm{1}_{#1}\left({#2}\right)}

\newtheorem{innercustomgeneric}{\customgenericname}
\providecommand{\customgenericname}{}
\newcommand{\newcustomtheorem}[2]{%
  \newenvironment{#1}[1]
  {%
   \renewcommand\customgenericname{#2}%
   \renewcommand\theinnercustomgeneric{##1}%
   \innercustomgeneric
  }
  {\endinnercustomgeneric}
}

\newcustomtheorem{ctheorem}{Theorem}

\newtheorem{theorem}{Theorem}%

\newtheorem{lemma}[theorem]{Lemma}
\newtheorem{corollary}[theorem]{Corollary}
\newtheorem{definition}{Definition}

\newcommand*{\commentout}[1]{}

\definecolor{lred}{rgb}{1.0, 0.5, 0.5}
\definecolor{lorange}{rgb}{1.00, 0.90, 0.20}
\definecolor{lgreen}{rgb}{0.35, 0.95, 0.35}
\definecolor{lime}{rgb}{0.9, 1.0, 0.6}
\definecolor{lblue}{rgb}{1.0, 0.85, 0.75}

\makeatother

\usepackage{url}

\usepackage[left=3cm,right=3cm]{geometry}

\begin{document}
\title{Floating-Point Networks with Automatic Differentiation Can Represent Almost All Floating-Point Functions and Their Gradients}
\date{ }

\author{ 
Sejun Park \thanks{Department of Artificial Intelligence, Korea University, Seoul, 02841 , Republic of Korea}
 \and
 Yeachan Park \thanks{Department of Mathematics and Statistics, Sejong University, Seoul, 05006, Republic of Korea }
 \and
 Geonho Hwang \thanks{Department of Mathematical Sciences, Gwangju Institute of Science and
Technology, Gwangju, 61005, Republic of Korea
\newline
Email: \texttt{sejun.park000@gmail.com, ychpark@sejong.ac.kr, hgh2134@gist.ac.kr } }    \thanks{Corresponding author}
}

\maketitle

\vskip 0.3in

\begin{abstract}
Theoretical studies show that for any differentiable function on a compact domain, there exists a neural network that approximates both the function values and gradients.
However, such results do not directly apply to practical models since they assume real parameters and exact internal operations. In contrast, real implementations only use a finite subset of reals and machine operations with round-off errors.
In this work, we investigate whether a similar result holds for neural networks under floating-point arithmetic, when the gradient with respect to the input is computed by the automatic differentiation algorithm $D^\mathtt{AD}$.
We first show that given a floating-point function $\phi$ (e.g., a loss function), arbitrary function values and gradients can be represented by a floating-point network $f$ and $D^\mathtt{AD}(\phi\circ f)$, respectively.
We further extend this result: given $\phi_1,\dots,\phi_n$, $D^\mathtt{AD}(\phi_i\circ f)$ can simultaneously represent arbitrary gradients while $f$ represents the target values, under mild conditions.
Our results hold for practical activation functions, e.g., $\relu$, $\elu$, $\GeLU$, $\Swish$, $\Sigmoid$, and $\tanh$.
\end{abstract}

\section{Introduction}\label{sec:intro}

Modern neural network applications often require not only the function values but also the gradients with respect to the input.
For example, in the field of scientific machine learning, neural networks are employed as surrogates for physical models or solutions to partial differential equations \cite{raissi19}.
Gradients are also used in various problems, such as sensitivity analysis \cite{Simonyan14a}, inverse problems \cite{bora17}, and reinforcement learning \cite{lillicrap15}, where decisions depend on how outputs change in response to small perturbations of inputs.
Furthermore, gradients are often used to revise inputs, e.g., for adversarial attacks \cite{goodfellow15,fredrikson2015model} and generation \cite{gatys16}. They are also used to improve the performance of trained networks, e.g., several studies show that fitting the gradient together with the function value can increase the final performance of networks \cite{czarnecki17,gulrajani17}.

Theoretical results show that neural networks can approximate complex functions when the networks use real parameters and exact operations (e.g., addition and multiplication).
Early results focused on fully-connected networks \cite{pinkus99}, which have been extended to modern architectures such as convolutional neural networks \cite{zhou20}, recurrent neural networks \cite{schafer07}, residual networks \cite{lin18}, and transformers \cite{Yun2020Are}.
While most of these results focus on approximating function values only, it is also known that neural networks can approximate function values and gradients at the same time \cite{li96,hwangoptimal}.

In practice, neural networks are executed on computers using floating-point arithmetic \cite{ieee754}. All network parameters and intermediate values are represented by a finite set of floating-point numbers, and internal operations follow the rules of floating-point computations, involving round-off errors.
Hence, a practical network is an algorithm determined by a sequence of floating-point operations rather than a real-valued function in theory.
Likewise, gradients in such systems are computed by applying chain-rule-based automatic differentiation algorithms (e.g., backpropagation) to this computational graph \cite{griewank08}. 
Therefore, this gradient is not the derivative of a differentiable, real-valued function, but rather the output of a computational procedure.

A few recent studies have investigated the representability of floating-point neural networks that only use floating-point parameters and operations. 
\citet{park2024expressive} show that floating-point networks can represent all functions from floating-point vectors to floating-point values when the activation function is $\relu$ or the binary step function.
This result has been extended to general activation functions \cite{hwang2025floating} and floating-point interval arithmetic \cite{hwang2025floatinginterval}.
Nevertheless, as in most existing results under exact arithmetic, these results aim to fit the function values only, not the gradients.

\subsection{Contributions}\label{sec:contribution}

In this work, we study the representability of floating-point networks together with automatic differentiation (AD). 
Specifically, we investigate the following problem:
\begin{equation*}
\begin{gathered}
\textit{Can floating-point networks with AD represent}
\textit{arbitrary function values and arbitrary gradients?}
\end{gathered}%
\end{equation*}
To better describe our results, we briefly introduce a floating-point network $f$ consisting of floating-point affine transformations $\rho_1,\dots,\rho_l$ and a pointwise activation function $\sigma$:
$$f(x)=\rho_l\circ\sigma\circ\rho_{l-1}\circ\sigma\circ\cdots\circ\sigma\circ\rho_1(x).$$
For a floating-point function $\phi_x$ (e.g., a loss function that may depend on the input $x$) %
 AD for $\phi_x\circ f$ is computed as follows: for predefined derivatives $\phi_x'$, $\rho'_i$, $\sigma'$ of $\phi_x$, $\rho_i$, $\sigma$,
\begin{align*}
D^\mathtt{AD}(\phi_x\circ f,x)\!=&\phi'_x(f(x))\!\otimes\! \rho_l'(z_{l})\!\otimes\! \sigma'(y_{l-1})\!\otimes\!\cdots\!\otimes\!\rho_1'(x)%
\end{align*}
where $z_{i}$ denotes the input to $\rho_i$, $y_{i}$ denotes the input to $\sigma$ in the layer $i$, and 
$\otimes$ denotes the floating-point multiplication. %
Importantly, the multiplications here should be computed from left to right; otherwise, the final value can be different due to the non-associativity of floating-point operations.

In our first result sketched below, we prove that floating-point networks with AD can represent almost all function values and gradients,
where $\fpq$ denotes the set of finite floating-point numbers.
\begin{ctheorem}{3.1}[informal]\label{thm:main1-informal}
For almost all $f^*:\fpq^d\to\fpq^d$, $g^*:\fpq^d\to\fpq^d$, and $\phi'_x:\fpq\to\fpq$ such that $g^*(x)=0$ for all $x$ with $\phi'_x(f^*(x))=0$, there exists a floating-point network $f$ such that $f=f^*$ and $D^\mathtt{AD}(\phi_x\circ f,x)=g^*(x).$
\end{ctheorem}
Theorem~\ref{thm:main1-informal} states that given some function $\phi'_x$, $f$ and $D^\mathtt{AD}(\phi_x\circ f)$ can represent the target function $f^*$ and the gradient $g^*$, respectively.
Here, we do not consider the antiderivative $\phi_x$ of $\phi'_x$ since it does not affect $D^\mathtt{AD}(\phi_x\circ f)(x)$.
Furthermore, $\phi'_x$ can be arbitrarily chosen. For example, if $\phi'_x(x)=1$ for all $x$, then $D^\mathtt{AD}(\phi_x\circ f)$ corresponds to the gradient of $f$ computed by AD.
For general $\phi'_x$, Theorem~\ref{thm:main1-informal} shows that floating-point networks can represent the gradient of $\phi_x\circ f$, as long as the value of $\phi'_x$ is non-zero.

Theorem~\ref{thm:main1-informal} implies that floating-point networks can solve problems that require fitting both function values and gradients simultaneously, e.g., \cite{raissi19,czarnecki17}.
Furthermore, Theorem~\ref{thm:main1-informal} shows that floating-point networks can successfully perform target tasks (by fitting function values) while having manipulated gradients (e.g., zero for all inputs).
Such manipulation can be useful when one does not want to leak information from gradients, e.g., to protect against gradient-based attacks and reasoning \cite{goodfellow15,Simonyan14a}.

Theorem~\ref{thm:main1-informal} applies to various practical activation functions, including $\relu$, $\elu$, $\GeLU$, $\Swish$, $\Sigmoid$, and $\tanh$. Here, the activation function determines the possible range of $g^*$ and $\phi'_x$. For example, for $\relu$, $\elu$, $\GeLU$, and $\Swish$, Theorem~\ref{thm:main1-informal} covers all $f^*,g^*:[-\Omega/8,\Omega/8]^d\to\fpq$ and $\phi_x':\fpq\to\fpq$ where $\Omega$ denotes the largest finite float, e.g., $\Omega\approx2^{128}$ and $2^{16}$ for the 32-bit single-precision and 16-bit half-precision formats \cite{ieee754}.
For $\Sigmoid$, Theorem~\ref{thm:main1-informal} covers $f^*,g^*:[-\Omega/4,\Omega/4]^d\to\fpq$ and $\phi_x':\fpq\to[-\Omega/4,\Omega/4]$, while $\tanh$ covers a slightly smaller range of $\phi_x'$.

The floating-point network $f$ in
Theorem~\ref{thm:main1-informal} is a function of $\phi_x'$. Hence, given $f$, $D^{\mathtt{AD}}(\phi_x\circ f,x)$ is a function of $x$ only (i.e., $g^*(x)$), and it may output an arbitrary gradient for a different choice of $\phi_x'$ (e.g., when $\phi_x$ attached to $f$ changes).
Then, can we construct a floating-point network $f$ so that $D^{\mathtt{AD}}(\phi_x\circ f,x)$ is a function of $\phi_x'$ and $x$ given $f$?
We provide a positive answer in our next result.
\begin{ctheorem}{3.2}[informal]\label{thm:main2-informal}
For almost all $f^*:\fpq^d\to\fpq^d$ and $g^*:\fpq^d\times\fpq\to\fpq^d$ such that $g^*(x,-y)=-g^*(x,y)$, there exists a floating-point network $f$ such that for almost all $\phi_x':\fpq\to\fpq$, 
$f=f^*$ and  $D^\mathtt{AD}(\phi_x\circ f)(x)=g^*(x,\phi'_x(f(x)))$.
\end{ctheorem}
Theorem~\ref{thm:main2-informal} implies that there exists a single floating-point network $f$ such that, even when $\phi'_x$ changes, $D^\mathtt{AD}(\phi_x\circ f,x)$ changes accordingly, following $g^*(x,\phi_x'(f(x)))$. 
Importantly, we can choose an arbitrary $g^*(x,\phi'_x(f(x)))$ as long as $g^*(x,-y)=-g^*(x,y)$; such a restriction cannot be bypassed due to the symmetry inherent in the definition of $D^\mathtt{AD}(\phi_x\circ f,x)$. 
We note that an analogous result cannot hold under exact arithmetic since $$\nabla(\tilde\phi\circ \tilde f)(x)=\tilde\phi'(\tilde f(x))\times\nabla f(x)$$ should be proportional to $\tilde\phi'(\tilde f(x))$ for real differentiable functions $\tilde\phi$ and $\tilde f$ under exact arithmetic. %
Nevertheless, under floating-point arithmetic, it is possible to represent gradients that are not proportional to $\phi_x'$ by exploiting the non-associativity of floating-point multiplication.
That is, floating-point networks can perform the target task by fitting function values while having artificially manipulated gradients, even if $\phi_x'$ can change over time.
We note that Theorem~\ref{thm:main2-informal} also applies to floating-point networks using activation functions that we discussed after  Theorem~\ref{thm:main1-informal}.
However, our results are not limited to these activation functions; we also provide similar results for general activation functions in \cref{thm:main1,thm:main2}.

\subsection{Organization}
We introduce the notation and precise problem setups in \cref{sec:preliminary}.
We then formally present our main results and describe their proof outlines in \cref{sec:main_results}. The formal proofs of our main results are given in \cref{sec:proof}.
We finally conclude the paper in \cref{sec:conclusion}.

\section{Preliminaries}\label{sec:preliminary}
\subsection{Notation}
Throughout this paper, $\bbN$, $\bbZ$, and $\bbR$ denote the sets of natural numbers, integers, and real numbers, respectively.
We also define $\bbN_0 \defeq \bbN \cup \{0\}$.
For $x \in \bbR$, the ceiling function is defined by
$\ceil{x}_{\bbZ} \defeq \min\{ z \in \bbZ \mid z \ge x \}$.
For $a,b \in \bbR$ and a set $\mcS$, we define
$[a,b]_{\mcS} \defeq [a,b] \cap \mcS$,
where $[a,b] \subset \bbR$ denotes the closed interval.
We also use $(a,b)_{\mcS}$, $(a,b]_{\mcS}$, and $[a,b)_{\mcS}$ analogously.
For $d \in \bbN$, we use
$\boldsymbol{0}_d \defeq (0,\dots,0) \in \bbR^d$, $\boldsymbol{1}_d \defeq (1,\dots,1) \in \bbR^d$ and $[d]\defeq\{1,\dots,d\}$.
For $k \in [d]$, let $e_k$ denote the $k$-th standard basis vector in $\bbR^d$, whose $k$-th component is equal to one and the remaining components are zero.
For $x \in \bbR^d$ and $k \in [d]$, $x$ is always a column vector, and $x_k$ denotes the $k$-th component of $x$; that is,
$x = (x_1,\dots,x_d)$.
The same notation applies to vector-valued functions $f$ with codomain $\bbR^d$, i.e.,
$f = (f_1,\dots,f_d)$.
For a function $f :\mcX \to \mcX$ and $n \in \bbN$, the $n$-fold composition of $f$ is denoted by $f^{\circ n}$. 
For a set $\mcX$, $d_1,d_2\in\bbN$, $x_1=(x_{11},\dots,x_{1d_1})\in\mcX^{d_1}$, and $x_2=(x_{21},\dots,x_{2d_2})\in\mcX^{d_2}$, we define $x_1\concat x_2\defeq(x_{11},\dots,x_{1d_1},x_{21},\dots,x_{2d_2})\in\mcX^{d_1+d_2}$.
We also use
$\bigconcat_{i=1}^n x_i \defeq ((\cdots (x_1\concat x_2)\concat x_3) \cdots \concat x_n).$

Let $\mcX,\mcY \subset \fpq^{d_1}$ and $\alpha,\beta \in \fpq^{d_2}$. We say ``a function $f \colon \fpq^{d_1} \to \fpq^{d_2}$ maps $(\mcX,\mcY)$ to $(\alpha,\beta)$'' if
\begin{equation*}
    f(x) =
    \begin{cases}
        \alpha & \text{if } x \in \mcX, \\
        \beta  & \text{if } x \in \mcY .
    \end{cases}
\end{equation*}
We denote this by $f : (\mcX,\mcY) \mapsto (\alpha,\beta)$.
For a singleton set $\mcX = \{x_0\}$ and for $\mcY =\{y_0\} \text{ or } \fpq^{d_1} \setminus \{x_0\}$, we omit the set braces and $\fpq^{d_1}$ and simply write $f \colon (x_0, y_0) \mapsto (\alpha,\beta)$ or
$f \colon (x_0, \setminus x_0) \mapsto (\alpha,\beta).$
If the value of $\mcY$ is not important, we simply write $f \colon \mcX \mapsto \alpha$.

\subsection{Floating-Point Arithmetic}\label{sec:float}
In this paper, we follow the IEEE-754 standard for floating-point arithmetic \cite{ieee754}.
The set of finite floating-point numbers $\fpq_{\mbit,\ebit} \subset \bbR$ is defined as
\begin{align*}
\fpq_{\mbit,\ebit} \defeq 
\{& s \times (1+i/{2^{\mbit}}) \times 2^{\expo{}}, s \times (i/{2^{\mbit}}) \times 2^{\emin}\;\big|\; s \in \{-1,1\},
i \in [0,2^{\mbit}-1]_{\bbZ},\ 
\expo{} \in [\emin,\emax]_{\bbZ}
\},
\end{align*}
where $\emin \defeq -2^{\ebit-1}+2$ and $\emax \defeq 2^{\ebit-1}-1$.
When it is clear from the context, we omit $\mbit$, $\ebit$, and use $\fpq \defeq \fpq_{\mbit,\ebit}$. 
We say that ``$x\in \fpq$ is even'' if the $i$ in the representation of $x$ (i.e., $s(1+i/{2^{\mbit}})2^{\expo{}}$ or $s(i/{2^{\mbit}})2^{\emin}$) is even.
The set of all floating-point numbers $\efpq$ contains some non-finite numbers:
$\efpq \defeq \fpq \cup \{-\infty, \infty, \nan\}$. However, we note that non-finite floats $-\infty$, $\infty$, and $\nan$ will not appear during the intermediate computations of our network constructions, and hence, we will mainly focus on operations on $\fpq$.
For $x \in \fpq$, we define
$x^+ \defeq \min\{ y \in \efpq \mid y > x \}$ and
$x^- \defeq \max\{ y \in \efpq \mid y < x \}$.
The largest and smallest positive finite floating-point numbers are denoted by
$\fmax \defeq (2 - 2^{-\mbit}) 2^{\emax}$ and
$\fmin \defeq 2^{\emin-\mbit}$, respectively, e.g., $\Omega^+=\infty$.

To define floating-point operations, we define the floating-point rounding operation $\round{\cdot} : \bbR\cup\{-\infty,\infty,\nan\} \to \fpq$ as follows:
\begin{equation*}
\round{x} \defeq
\begin{cases}
\argmin_{y \in \fpq} |x - y| & \text{if } |x| < \fmax + 2^{\emax-\mbit-1}, \\
-\infty & \text{if } x \le -\fmax - 2^{\emax-\mbit-1}, \\
\infty & \text{if } x \ge \fmax + 2^{\emax-\mbit-1},\\
\nan & \text{if } x = \nan.
\end{cases}
\end{equation*}
If a tie occurs in $\argmin_{y \in \fpq} |x - y|$, we choose the even $y$.
For $\hat{\sigma}:\bbR\to\bbR$, we define its rounded version $\round{\hat{\sigma}}:\efpq\to\efpq$ as $\round{\hat{\sigma}}(x) \defeq \round{\hat{\sigma}(x)}$ for $x\in\fpq$; see \cref{sec:extended_float} for the definition of $\round{\hat{\sigma}}$ on non-finite floats.
While the rounding operation can output infinities and $\nan$, we note that all the outputs of the rounding operations in this paper will be finite.
For the real operations $+$, $-$, and $\times$, we define their floating-point counterparts as follows:
for $x,y \in \fpq$, $x \oplus y \defeq \round{x + y}$, $x \ominus y \defeq \round{x - y}$, and $x \otimes y \defeq \round{x \times y}$.
The definition of floating-point operations when $x$ or $y$ is in $\{-\infty,\infty,\nan\}$ is deferred to \cref{sec:extended_float}. %

Since floating-point operations are not associative in general, the order of evaluation must be specified.
For $n \in \bbN$ and $x_1,\dots,x_n \in \efpq$, we define $\bigoplus_{i=1}^n x_i
\defeq
(\bigoplus_{i=1}^{n-1} x_i) \oplus x_n$ with the convention $\bigoplus_{i=1}^{0} x_i \defeq 0$.
Unless otherwise specified by parentheses, summations are evaluated in a left-associative way, i.e., from left to right.
In particular, we also compute $\oplus$ and $\bigoplus$ in a left-associative way: for $n_1,n_2 \in \bbN$ and $x_i,y_j \in \efpq$, 
\begin{equation*}
\bigoplus_{i=1}^{n_1} x_i \oplus \bigoplus_{j=1}^{n_2} y_j
\defeq
\bigg( \bigoplus_{i=1}^{n_1} x_i \oplus \bigoplus_{j=1}^{n_2-1} y_j \bigg) \oplus y_{n_2}.
\end{equation*}
The same ordering convention applies to
$\bigoplus_{i=1}^{n_1} \bigoplus_{j=1}^{n_2} x_{i,j}$ for $x_{i,j} \in \fpq$.
When the output of the floating-point summations is not affected by the order, we may use {$\bigoplus_{s\in \mcS}x_s$ for an index set $\mcS$ and $x_s\in\efpq$}. %

We also extend floating-point multiplication to matrices, vectors, and scalars. %
For $x\in \fpq$ and $y\in \fpq^d$, %
we define $x\otimes y=y\otimes x\defeq(x\otimes y_1,\dots,x\otimes y_d)$.
Likewise, 
for $d_1,d_2\in \bbN$, $A = [a_{i,j}]_{i,j}\in \fpq^{d_1\times d_2}$, and $x\in \fpq^{d_2}$, we define $A\otimes x$ as follows:
$$A\otimes x\defeq\lrp{\bigoplus_{j=1}^{d_2} \lrp{a_{1,j}\otimes x_j},\dots,\bigoplus_{j=1}^{d_2} \lrp{a_{d_1,j}\otimes x_j}}\!\!.$$ 
As in exact addition and multiplication, the operation $\otimes$ always takes precedence over $\oplus$.

Throughout this paper, we assume the following conditions on $\mbit$, $\ebit$, and $\sigma'$. Our assumption on $\mbit$, $\ebit$ covers float16, float32, float64, bfloat16, 8bit-E5M2, and 8bit-E4M3 (see \cref{tab:floating} in Appendix \ref{appendix:prelimnaries}).
\begin{assumption}\label{asm:pq}
$2\le \mbit\le2^{\ebit-2}$ and $ \ebit\ge 4$.
\end{assumption}

\subsection{Floating-Point Neural Network}

We consider an activation function $\sigma:\efpq\to\efpq$ and its derivative $\sigma':\efpq\to\efpq$.
One may regard $\sigma$ and $\sigma'$ as the rounded versions of a real activation function and its derivative.
Nevertheless, we do not impose any restrictions on $\sigma,\sigma'$ and consider them as general functions from $\efpq$ to $\efpq$.
We often apply $\sigma$ and $\sigma'$ to vectors: for $x=(x_1,\dots,x_d)\in\smash{\efpq^d}$, $\sigma(x)=(\sigma(x_1),\dots,\sigma(x_d))$, and $\sigma'(x)$ is the $d\times d$ diagonal matrix whose $i$-th diagonal entry is $\sigma'(x_i)$.
Throughout this paper, we assume that $\sigma'(\fpq)\subset\fpq$.

Let $L\in\bbN$ be the number of layers and $d_0,\dots,d_L\in\bbN$ denote the input/hidden/output dimensions.
For each $l\in[L]$, let \smash{$\rho_l:\efpq^{d_{l-1}}\to\efpq^{d_l}$} be the (floating-point) affine transformation defined as 
$\rho_l(x)=A_l\otimes x\oplus b_l$ for some $A_l\in\fpq^{d_l\times d_{l-1}}$ and $b_l\in\fpq^{d_l}$.
We define a floating-point neural network \smash{$f(\cdot;\theta):\efpq^{d_0}\to\efpq^{d_L}$} with parameters $\theta=(A_1,\dots,A_L,b_1,\dots,b_L)$ as
follows:
\begin{equation}\label{eq:def:nerual_network}
        f(x;\theta) \defeq \rho_L\circ \sigma\circ \rho_{L-1}\circ\cdots \circ\sigma\circ \rho_2\circ\sigma\circ \rho_1(x).
\end{equation}
We call such an $f$ ``a $\sigma$ network''.
We often omit $\theta$ and use $f(x)$ to denote $f(x;\theta)$.

For convenience, we also define variants of floating-point networks.
For $f_1,f_2,f_3$ defined as
\begin{align*}
f_1(x)&=\rho_L\circ \sigma\circ \cdots \circ\sigma\circ \rho_1\circ\sigma(x),\\
f_2(x)&=\sigma\circ \rho_{L}\circ\cdots \circ\sigma\circ \rho_2\circ\sigma\circ \rho_1(x),\\
f_3(x)&=\sigma\circ \rho_{L}\circ\cdots \circ\sigma\circ \rho_1\circ\sigma(x),
\end{align*}
we call $f_1$ ``a network starting with $\sigma$'', $f_2$ ``a network ending with $\sigma$'', and $f_3$ ``a network starting and ending with $\sigma$''.
For these variants, we count the number of layers as the number of affine transformations they contain (i.e., all $f_1$, $f_2$, and $f_3$ have $L$ layers).
For a (proper) network $f$ defined in \cref{eq:def:nerual_network},
one can observe that $f_1\circ f\circ f_2$ is a floating-point network, and the number of layers in $f_1\circ f\circ f_2$ is the sum of the numbers of layers in $f_1$, $f_2$, $f$.

For two $L$-layer networks $h_1(\cdot;\theta_1),h_2(\cdot;\theta_2):\efpq^{d}\to\efpq$ %
with $\theta_i=(A_{i1},\dots,A_{iL},b_{i1},\dots,b_{i(L-1)},0)$ for $i\in[2]$, %
we define the operation $h_1\funcsum h_2$ by concatenating $h_1$ and $h_2$ as follows:
\begin{align*}
(h_1\funcsum h_2)(x;\tilde\theta)=\tilde\rho_{L}\circ\sigma\circ\cdots\circ\sigma\circ \tilde\rho_1(x)
\end{align*}
where
$\tilde\rho_l(x)=\begin{bmatrix}A_{1l} & 0\\0 & A_{2l}\end{bmatrix}\otimes x\oplus(b_{1l}\concat b_{2l})$ 
for all $l\in[L-1]\setminus\{1\}$, $\tilde\rho_1(x)=(A_{11}\otimes x\oplus b_{11},A_{21}\otimes x\oplus b_{21})$, 
$\tilde\rho_L(x)=[A_{1L}~ A_{2L}]\otimes x$ (note that $A_{1L},A_{2L}$ are row vectors), and $\tilde\theta$ is the corresponding parameters. Note that \smash{$h_1\funcsum h_2:\efpq^{d}\to\efpq$} is also an $L$-layer network.
We also define $${\bigfuncsum}_{i=1}^n f_i \defeq ((\cdots (f_1\funcsum f_2)\funcsum f_3) \cdots \funcsum f_n).$$

\subsection{Automatic Differentiation}\label{sec:ad}

Automatic differentiation (AD) computes derivatives by repeatedly applying local differentiation rules along a computational graph \citep{griewank08}. In modern deep learning frameworks such as PyTorch and TensorFlow, gradients are typically computed by reverse-mode AD (i.e., backpropagation), where gradients are propagated backward through the primitive operations used in the forward pass. Throughout this paper, we consider AD under floating-point arithmetic, meaning that both the forward computation and the backward differentiation procedure use floating-point operations with intermediate rounding.

To derive the gradients of floating-point networks computed by the (reverse-mode) automatic differentiation algorithm, we first define the derivatives of two types of \emph{primitive functions}: affine transformations and activation functions. For \smash{$\rho:\efpq^{d_1}\to\efpq^{d_2}$} defined as $\rho(x)=A\otimes x\oplus b$ %
and for $\sigma:\efpq^{d_2}\to\efpq^{d_2}$,
we define \smash{$\mcD_{\rho,x}:\efpq^{1\times d_2}\to\efpq^{1\times d_1}$} and \smash{$\mcD_{\sigma,x}:\efpq^{1\times d_2}\to\efpq^{1\times d_2}$} as follows: for \smash{$g\in\efpq^{1\times d_2}$},
$$\mcD_{\rho,x}(g)\defeq g\otimes A~~\text{and}~~\mcD_{\sigma,x}(g)\defeq g\otimes\sigma'(x).$$
Here, $\mcD_{\rho,x}(g)$ can be viewed as the gradient of $\phi_x\circ\rho$ for some \smash{$\phi_x:\efpq^{d_2}\to\efpq$}, where $\phi_x'(\rho(x))=g$.
$\mcD_{\sigma,x}(g)$ can be considered in the same way.

These notations are useful when defining the gradient of a composite function: for $\psi=\psi_2\circ\psi_1$, where $\psi_2$ and $\psi_1$ are compositions of primitive functions,
$\mcD_{\psi,x}(g)\defeq\mcD_{\psi_1,x}\circ\mcD_{\psi_2,\psi_1(x)}(g).$
By using this definition, we can characterize the gradient (computed by automatic differentiation) of a neural network defined in \cref{eq:def:nerual_network} as follows:
\begin{align*}
  \mcD_{f, x}(g)  &=\mcD_{\rho_1,x}\circ\mcD_{\sigma,y_1}\circ\cdots\circ\mcD_{\sigma,y_{L-1}}\circ\mcD_{\rho_L,z_{L-1}}(g)\\
  &= g\otimes A_L \otimes{\sigma'}  (y_{L-1}) \otimes\cdots\otimes {\sigma'}(y_{1})  \otimes A_1
\end{align*}
where 
$y_l=\rho_l\circ\sigma\circ\cdots\circ\sigma\circ\rho_1(x)$ and $z_l=\sigma\circ y_l$.
We note that the operations in the second line \emph{must be} computed in a left-associative way, unlike in the exact arithmetic case.
Note that if $d_L=1$, then $\mcD_{f,x}(1)$ corresponds to the gradient of $f$.
Furthermore, for a scalar-valued function $\phi_x$ and its derivative $\phi_x'$, $D_{f,x}(\phi_x'(f(x)))$ corresponds to $D^\mathtt{AD}(\phi_x\circ f,x)$ introduced in \cref{sec:contribution}.
{Throughout this paper, we refer to $g$ as the \textit{input gradient} (e.g., $\phi_x'(f(x))$ in \cref{sec:contribution}) and $\mcD_{f,x}(g)$ as the \textit{output gradient}.}

Unlike classical derivatives under exact arithmetic, AD-computed gradients under floating-point arithmetic are not solely determined by the underlying function. Under exact arithmetic, two computational graphs representing the same differentiable function necessarily yield identical derivatives by the chain rule. In contrast, under floating-point arithmetic, intermediate rounding depends on the specific sequence of operations used in the computation. Consequently, two computational graphs representing the same function may produce different AD-computed gradients. For example, a linear mapping $x \mapsto ABx$ may be implemented either as two successive matrix multiplications or as a single multiplication by $AB$. Although these implementations define the same function, the gradients computed by AD can differ because intermediate rounding depends on the chosen computational graph.

\section{Main Results}\label{sec:main_results}
We are now ready to formally state our main results. To this end, we introduce the set of activation functions of interest
$$\Sigma\!\hspace{-0.02in}=\!\hspace{-0.02in}\{\hspace{-0.02in}\round{\relu}\hspace{-0.035in},\hspace{-0.02in}\round{\elu}\hspace{-0.035in},\round{\GeLU}\hspace{-0.035in},\hspace{-0.02in}\round{\Swish}\hspace{-0.035in},\hspace{-0.02in}\round{\Sigmoid}\hspace{-0.035in},\hspace{-0.02in}\round{\tanh}\hspace{-0.02in}\}\hspace{-0.02in}$$ 
and the following constants: 
\begin{align*}
M_\sigma&\!\!\:\!\defeq\!\!\:\!\begin{cases}
2^{\emax-2}&\!\!\!\!\text{if}~\sigma\!\!\:\!\in\!\!\:\!\{\!\!\:\round{\relu}\!\!\!\:,\hspace{-0.02in}\round{\elu}\!\!\!\:,\round{\GeLU}\!\!\!\:,\hspace{-0.02in}\round{\Swish}\!\!\:\}\hspace{-0.015in},\\
2^{\emax-1}&\!\!\!\!\text{if}~\sigma\!\!\:\!\in\!\!\:\!\{\!\!\:\round{\Sigmoid}\!\!\:\!,\hspace{-0.02in}\round{\tanh}\!\!\:\}\hspace{-0.015in},
\end{cases}\\
H_\sigma&\!\!\:\!\defeq\!\!\:\!\begin{cases}
\Omega&~~\text{if}~\sigma\!\!\:\!\in\!\!\:\!\{\!\!\:\round{\relu}\!\!\!\:,\hspace{-0.02in}\round{\elu}\!\!\!\:,\hspace{-0.02in}\round{\GeLU}\!\!\!\:,\hspace{-0.02in}\round{\Swish}\!\!\:\}\hspace{-0.015in},\\
2^{\mbit-1}&~~\text{if}~\sigma\!\!\:\!=\!\!\:\!\round{\Sigmoid}\!\!\!\:,\\
2^{\mbit-2}&~~\text{if}~\sigma\!\!\:\!=\!\!\:\!\round{\tanh}\!\!\!\:.
\end{cases}
\end{align*}
Our main results are stated below. The proofs of \cref{thm:main1-activation,thm:main2-activation} are in \cref{appendix:proof_lemma_main1-activation,appendix:proof_lemma_main2-activation}.
\begin{theorem}\label{thm:main1-activation}
Let $\ebit \ge 6$, $\sigma\in\Sigma$, $\domain = [-M_\sigma,M_\sigma]_{\fpq}^d$, $f^*:\domain\to\fpq$, $h^*:\domain\to[-H_\sigma,H_\sigma]_{\fpq}$, and $g^*:\fpq^{d}\to\fpq^{1\times d}$ such that $g^*(x)=0$ for all $x\in\mcX$ with $h^*(x)=0$.
Then, for any $L\ge 7$, there exists an $L$-layer $\sigma$ network $f$ such that for all $x\in\domain$, 
$$f(x)=f^*(x)~~\text{and}~~\mcD_{f,x}(h^*(x))=g^*(x).$$ 
\end{theorem}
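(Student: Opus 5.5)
The plan is to build $f$ as a sum of finitely many ``point-localized'' subnetworks, one for each $x^{(0)}\in\domain$, combined with the operation $\funcsum$ so that the output and the backward pass decouple across inputs. Concretely, I would write
\[
f = \mathop{\bigfuncsum}_{x^{(0)}\in\domain} f_{x^{(0)}},
\]
where each $L$-layer subnetwork $f_{x^{(0)}}$ satisfies three conditions. First, it outputs $f^*(x^{(0)})$ at $x=x^{(0)}$ and exactly $0$ at every other $x\in\domain$. Second, its output gradient $\mcD_{f_{x^{(0)}},x}(g)$ is exactly $0$ for $x\neq x^{(0)}$. Third, for $x=x^{(0)}$ and $g=h^*(x^{(0)})$, it equals $g^*(x^{(0)})$. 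Since all but one summand vanish exactly, floating-point addition of zeros is exact, so both the forward value and the accumulated gradient (a sum over the parallel branches in the first affine map) reproduce the single active branch. Here I would use that in $\mcD_{f,x}(g)=g\otimes A_L\otimes\sigma'(y_{L-1})\otimes\cdots\otimes A_1$ the block-diagonal structure of the $\funcsum$ construction keeps branches separate until the final product with the stacked $A_1$, where rows of zeros contribute nothing.

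Each $f_{x^{(0)}}$ is built as a composition of three stages. The first stage is an indicator stage: using the saturating or flat regions of $\sigma$ (zero derivative for $\relu$ on negatives, or $\sigma'$ rounding to $0$ far in the tails for $\Sigmoid,\tanh,\elu,\GeLU,\Swish$), I construct a few layers mapping $(x^{(0)},\setminus x^{(0)})\mapsto(1,0)$. Crucially, off $x^{(0)}$ the relevant preactivations land where $\sigma'=0$, so the backward signal is killed. The domain restriction $[-M_\sigma,M_\sigma]$ is what lets me push $x_k-x^{(0)}_k$ scaled by large factors into these regions without overflow. This follows the indicator constructions of the earlier floating-point universality results, but now each must also be checked for vanishing $\sigma'$.

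The second stage is a value stage: I multiply the indicator by $f^*(x^{(0)})$ through the last layer, realizing the first condition. The third stage is a gradient-shaping stage, which I expect to be the main obstacle. At $x^{(0)}$, the backward product starting from $g=h^*(x^{(0)})\neq0$ must end at an arbitrary vector $g^*(x^{(0)})$. If instead $h^*(x^{(0)})=0$, the hypothesis forces $g^*(x^{(0)})=0$, which holds automatically.

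I would handle the gradient-shaping stage coordinatewise. Since $d_L=1$, I split $f_{x^{(0)}}$ into $d$ parallel branches; branch $k$ reads only $x_k$ in the first layer (entry $e_k$) and carries, in its last layer, a weight pair summing in value to zero or to the desired contribution, so that it does not disturb the forward output. The derivative path then produces $h^*\otimes c_1\otimes s_1\otimes\cdots$, where the $c_i$ are chosen weights and the $s_i$ are chosen values of $\sigma'$ at preactivations I can place freely by biases at $x=x^{(0)}$. The key lemma to prove is that, for every nonzero $g\in[-H_\sigma,H_\sigma]_\fpq$ and every target $t\in\fpq$, there exist floats $c_1,c_2,c_3$ with $((g\otimes c_1)\otimes c_2)\otimes c_3=t$. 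I would try first to normalize, using $c_1=$ a power of two times $1/|g|$ rounded, so that $g\otimes c_1\in\{\pm1\}$ or lands in $[1,2)$. A second multiplication by a power of two fixes the scale, and a third multiplication realizes $t$; exactness here uses that multiplying $1$ by any float is exact.

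The bound $H_\sigma$ appears because for $\Sigmoid,\tanh$ the available values of $\sigma'$ are at most $1/4$ or $1$. The weights $c_i$ therefore have to absorb more, and overflow constraints restrict $|g|$. Throughout, the forward value must stay correct at $x^{(0)}$, which I would arrange by pairing each shaping unit with a cancelling unit whose forward contribution is negated but whose $\sigma'$ at its preactivation is $0$. Finally I count layers: roughly two or three for the indicator stage and three or four for the shaping stage give $7$. Deeper networks are obtained by inserting identity-like layers (for example $\relu$ on nonnegative inputs, or exact near-identity regions of the other activations) whose $\sigma'$ equals $1$ exactly.
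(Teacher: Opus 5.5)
Your point-localized decomposition is reasonable, but the gradient-shaping stage as described cannot satisfy your own condition (b), and this is the central difficulty of the theorem. Branch $k$ reads only $x_k$, its preactivations are tuned by biases to $x^{(0)}$, and nothing in it depends on the indicator. For $d\ge 2$, take any $x\neq x^{(0)}$ with $x_k=x^{(0)}_k$. There the branch performs exactly the same forward computation as at $x^{(0)}$, and it multiplies the incoming gradient by exactly the same weights and $\sigma'$-values. So it contributes a generally nonzero $k$-th component, and summed over all $x^{(0)}\in\domain$ such spurious terms corrupt the gradient at every input. The forward cancellation also fails off $x^{(0)}$. A unit with $\sigma'=0$ sits in a saturated region (for $\round{\relu}$ its output is $0$), so it cannot track a shaping unit whose value moves with $x_k$, and condition (a) breaks. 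What is missing is a mechanism by which the \emph{forward} indicator gates the \emph{backward} signal. The paper does this in three steps:
(i) it builds the generator $\pi_{i,2}$ so that its forward output is one constant on all of $\domain$ (a sequential addition collapses $[-2^{\emax},2^{\emax}]$ onto $\delta_1$, \cref{lemma:first_layer2});
(ii) it merges this generator via $\funcsum$ with an indicator $\pi_{i,1}$ whose AD-gradient is identically zero (\cref{lemma:value_approximator}), so the merged scalar is $\delta_1$ at $z_i$ and $\delta_0$ elsewhere;
(iii) it feeds that scalar into the gate $\psi_{i,22}$ of \cref{lemma:gradient_splitting}, which consists of two paths that coincide at $\delta_0$, where their backward contributions cancel exactly, and differ at $\delta_1$, where a nonzero gradient passes by \cref{condition:gradient_two_exponent}.
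Only then does the generator receive input gradient exactly $0$ at every $x\neq z_i$.

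The other stages also fail as written.
\begin{itemize}
\item Nothing makes your indicator's own gradient vanish \emph{at} $x^{(0)}$; for $\round{\relu}$ the unit outputting $1$ has $\sigma'=1$. So the signal $h^*(x^{(0)})\otimes f^*(x^{(0)})$ it receives adds an uncontrolled term to $g^*(x^{(0)})$. The paper removes this term by duplicating the indicator output and feeding it $[g,\,-g]$ (\cref{lemma:previous,lemma:equivalent}).
\item A single last-layer weight of size $|f^*(x^{(0)})|$ overflows in the backward pass. For example, if $h^*(x)=\Omega$ and $f^*(x^{(0)})=2$ for some $x\neq x^{(0)}$, the inactive branch computes $\infty\otimes0=\nan$, which propagates into the gradient at $x$. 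This is why the paper writes $f^*(z_i)$ as a long sum of small terms $w_{i,j}\otimes\sigma(c_{i,j})$ with $g\otimes w_{i,j}\otimes\sigma'(\cdot)$ finite for $|g|\le 2^\kappa$ (\cref{condition:final_layer_refined}, \cref{lemma:final_layer}).
\item Under $\funcsum$ all last-layer terms form one left-to-right sum, so only summands that are individually $0$ are harmless. Pairs that cancel must be added while the running sum is $0$.
\item The normalization to exactly $\pm1$ on which your three-factor lemma rests can be impossible. For $\mbit=5$ and $g=1.8125$, the interval $[1-2^{-7},1+2^{-6}]/g$ of admissible $c$ contains no float, so $g\otimes c\neq1$ for all $c\in\fpq$. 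The paper instead reaches $1^+$ (\cref{lem:one-plus}) and repairs the mantissa in later layers. It also spreads the exponent shift over several power-of-two layers, because each backward weight also scales a forward activation and must respect the $2^{\eta}$ bound of \cref{condition:gradient_two_exponent} (\cref{lemma:gradient_control}).
\end{itemize}
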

\begin{theorem}\label{thm:main2-activation}
Let $\ebit \ge 6$, $\sigma\in\Sigma$, $\domain = [-M_\sigma,M_\sigma]_{\fpq}^d$, $f^*:\domain\to\fpq$, and $g^*:\fpq^{d}\times\fpq\to\fpq^{1\times d}$ such that $g^*(x,-y)=-g(x,y)$.
Then, for any $L\ge 2^{\ebit+1}+2\mbit+9$, there exists an $L$-layer $\sigma$ network $f$ such that for any $x\in\domain$ and $y\in [-H_\sigma,H_\sigma]_{\fpq}$,
$$f(x)=f^*(x)~~\text{and}~~\mcD_{f,x}(y)=g^*(x,y).$$
\end{theorem}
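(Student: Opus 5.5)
We prove \cref{thm:main2-activation}, the last statement above, by a layered construction that separates the forward computation from the backward computation. The network is split into three blocks. The first block is a ``decoder'' that makes the output gradient equal to an arbitrary prescribed value. The middle block is a long ``scalar gradient encoder''. The last block fits the forward value.

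\textbf{Forward part.} We build a network $F$ with $F(x)=f^*(x)$ on $\domain$ by indicator-based memorization: $\sigma$ networks of this kind represent every function on $\domain$, in the style of the floating-point universal representation results cited above. We also route the input $x$ through the hidden layers. Since $\fpq^d$ is finite, this can be done with indicator units $\indc{\{x_0\}}{x}$ for each $x_0\in\domain$. These units are built from differences of shifted activations that saturate exactly, or are exactly zero, outside a window, because $\sigma\in\Sigma$ rounds to constants far from the origin. The key design point is that every such unit has $\sigma'=0$ at the forward points, except along one dedicated backward path. This ensures that the AD product $y\otimes A_L\otimes\sigma'(\cdot)\otimes\cdots$ flows only through the channels we control. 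For this we use regions where $\round{\sigma'}$ is exactly $0$: large negative inputs for $\relu$, and saturated tails for $\Sigmoid$ and $\tanh$.

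\textbf{Backward part.} Next we attach, via $\funcsum$, a chain of width-one-per-layer units. Each unit's forward input is a constant $c_l(x)$ chosen as a function of $x$, and its forward output contribution is cancelled at the final layer, for example with zero outgoing weight mixed with a duplicate. Its backward effect is multiplication by $\sigma'(c_l(x))\in S_\sigma$, where $S_\sigma$ is the available set of derivative values, e.g.\ $\{0,1\}$ for $\relu$, or a richer set for smooth activations. Composed left-to-right, the incoming $y$ is transformed as $y\mapsto \round{\cdots\round{\round{y\, a_1}a_2}\cdots}$. The factors $a_l$ are fixed weights times $\sigma'$-values, chosen per $x$ by selecting which of several parallel branches is active. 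The core lemma I would prove is the following: for a suitable sequence of roughly $2^{\ebit+1}+2\mbit$ steps, one can realize an arbitrary odd map $T_x:[-H_\sigma,H_\sigma]_\fpq\to\fpq$ as such a product-with-rounding chain, followed by a final $d$-way fan-out giving $g^*(x,\cdot)$. Oddness is forced because every step is a rounded multiplication, which commutes with negation; this matches the hypothesis $g^*(x,-y)=-g^*(x,y)$. It also explains why the map $y=0\mapsto0$ is automatic.

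\textbf{Realizing an arbitrary odd map.} First, multiply by powers of two and push the input toward $\fmin$ or $\fmax$. This uses about $2^{\ebit}$ scalings to walk through all exponents, and it separates the positive floats $y$ into distinct ``slots'' without collisions. Rounding at the bottom (subnormals) or top merges values in controlled ways, and $2\mbit$ steps of multiplication by $(1\pm2^{-k})$ peel off mantissa bits. The aim is a sequence that sends each $|y|$ to a distinct exponent-indexed position. Then a final layer of parallel branches, each gated by an indicator of $x$ via $\sigma'$, multiplies by the target value. Summing the parallel branches in the matrix product $g\otimes A$ must be exact; we guarantee this by making all but one branch exactly zero for each $y$, so it reduces to a lookup.

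\textbf{Main obstacle.} The hardest step is this universality of left-associative rounded-multiplication chains. Injectivity is delicate near subnormals, where rounding destroys information, and any two inputs $|y|\ne|y'|$ must never merge until the final lookup. I would first try encoding $y$'s mantissa bits one at a time into separate parallel gradient channels, using branchings where multiplication by $2^{-k}$ followed by rounding isolates a bit. This step is where the $2^{\ebit+1}+2\mbit+9$ depth budget would be spent.
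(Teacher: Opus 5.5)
\textbf{There is a genuine gap, and it sits in the backward-part core lemma.} Your top-level split matches the paper's: a value network with exactly zero AD gradient, joined by $\funcsum$ to a zero-valued network that carries the gradient. But the core lemma of your backward part is not proved, and as you design it, it cannot hold.

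For fixed $x$, a width-one chain acts on the incoming gradient as $y\mapsto\round{\cdots\round{\round{y a_1}a_2}\cdots}$, where the constants $a_l$ are weights times $\sigma'$-values and depend on $x$ only. Each map $t\mapsto\round{ta}$ is monotone and odd. Hence every branch computes a monotone odd function $T$ of $y$ with $T(0)=0$. Its zero set is an interval around $0$, so its support in $|y|$ is upward closed.

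This makes your lookup impossible. You want ``all but one branch exactly zero for each $y$'', with different branches responsible for different values of $|y|$. But a branch that is nonzero at $|y|=t$ is nonzero at every larger $|y|$. So at the largest admissible $|y|$, every branch that is ever active is active. The $x$-gating through $\sigma'$ cannot repair this, because it does not see $y$. For the same reason, a scalar chain cannot isolate a mantissa bit, which is a non-monotone function of $y$, by multiplying by $2^{-k}$ and rounding.

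In AD, the only source of non-monotone dependence on $y$ is the floating-point summation inside $g\otimes A$ of contributions with opposite monotonicity. Your plan uses summation only in the trivial all-but-one-zero way. As described, it realizes only monotone functions of $y$, not arbitrary odd $g^*(x,\cdot)$.

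\textbf{The missing idea is to accept monotonicity and then use exact cancellation.} This is how the paper proves \cref{lemma:gradient_approximator2}, and hence the statement via \cref{thm:main2} with $\tau=7$.

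\cref{lemma:gradient_flattening} builds a one-unit-per-layer chain of $2^{\ebit+1}+2\mbit+2$ layers that acts as a \emph{flattened threshold}. It maps $|g|<t_j$ to exactly $0$ and every $|g|\ge t_j$ to exactly $\pm\fmin$. A few layers make $t_j^-$ underflow to $0$ while $t_j$ lands on $\fmin$. Then repeated multiplication by a net factor $1^+\times2^{-1}$ collapses everything else onto $\fmin$; this step fixes $0$ and $\fmin$ and roughly halves all other values. This chain is where the depth budget is spent.

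Because the value above threshold is exactly constant, each threshold branch can be gated by $x=z_i$ and amplified to a fixed target by the gradient-control lemmas. Branches for consecutive thresholds are then paired with opposite signs. The left-associative sum is ordered so that the running total returns to exactly $0$ after each pair. Then for $|y|=t_k$ the result is exactly the value attached to $t_k$, and oddness handles $y<0$.

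\textbf{A smaller point concerns your forward part.} For $\relu$, $\sigma'(t)=0$ forces $\sigma(t)=0$. So value-carrying units cannot have $\sigma'=0$ at all forward points. Zero gradient must instead come from exact cancellation, for example duplicated outputs receiving $g$ and $-g$ as in \cref{lemma:previous,lemma:equivalent}. Your ``differences of shifted activations'' can supply this, but not for the reason you give.
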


\cref{thm:main1-activation} shows that $7$-layer floating-point networks using $\sigma\in\Sigma$ can represent arbitrary function values and gradients on a wide domain $[-M_\sigma,M_\sigma]_{\fpq}^d$, when the input gradient $h^*(x)$ is bounded by $H_\sigma$ (i.e., $\phi_x'(x)$ in \cref{sec:contribution}); recall that if $h^*(x)=1$, then $\mcD_{f,x}(h^*(x))$ corresponds to the gradient of $f$ at $x$.
This implies that floating-point networks can fit almost all function values and gradients.
This result is extended to cover %
an arbitrary input gradient (i.e., $h^*(x)$ in \cref{thm:main1-activation}) that may not be a function of the input $x$ in \cref{thm:main2-activation}, using more layers.
\cref{thm:main2-activation} implies that there exists a single floating-point network $f$ such that the gradient of $\phi_x\circ f$ computed by AD can output an arbitrary value, which is a function of $y=\phi_x'(f(x))$ for a large class of $\phi_x'$.

In \cref{thm:main1-activation,thm:main2-activation}, we assume $\ebit\ge6$ in addition to \cref{asm:pq}. Nevertheless, our results cover floating-point formats using $\ge16$ bits, e.g., float32, float64, and bfloat16.
We also present similar results for $4 \le \ebit \le 5$ in \cref{appendix:proof_q5}, which require more layers but cover 8bit formats (E5M2, E4M3).

In the remainder of this section, we sketch the main idea behind \cref{thm:main1-activation}.
We first describe our network construction for \cref{thm:main1-activation} in \cref{sec:pf-outline}.
Then, we introduce technical conditions on the activation functions and describe their implications in \cref{sec:condition}.
Based on these conditions, we introduce technical lemmas and present generalizations of \cref{thm:main1-activation,thm:main2-activation} to general activation functions in \cref{sec:formal}; \cref{thm:main1-activation,thm:main2-activation} are corollaries of them. %

\subsection{Proof Sketch of \cref{thm:main1-activation}}\label{sec:pf-outline}
To prove \cref{thm:main1-activation}, we construct two floating-point networks (say, $f_1$ and $f_2$) of the same depth.
Here, $f_1$ represents the function values and has zero gradient, i.e., $f_1=f^*$ and $\mcD_{f_1,x}(y)=0$ for all $x,y$.
$f_2$ does the exact opposite: $f_2(x)=0$ and $\mcD_{f_2,x}(h^*(x))=g^*(x)$ for all $x$.
Then, $f=f_2\funcsum f_1$ satisfies the desired property in \cref{thm:main1-activation}.

We construct $f_1$ as a composition of three networks: $f_1=\psi_{13}\circ\psi_{12}\circ\psi_{11}$. Here, we design $\psi_{11}$ so that %
\begin{equation}
 \psi_{11}(x)=(a\otimes\indcc{x=z_1}\oplus b,\dots,a\otimes\indcc{x=z_m}\oplus b), \label{eq:psi11} 
\end{equation}
where $z_1,\dots,z_m$ denote all inputs in the domain and $a,b$ are some floats.
Then, $\psi_{12}$ suppresses the gradient so that $$\mcD_{\psi_{13}\circ\psi_{12},\psi_{11}(x)}(y)=(0,\dots,0),$$
for all $x,y$ by sequentially multiplying small weights.
Here, by adding proper biases, $\psi_{12}$ can preserve the output of $\psi_{11}$, i.e., $\psi_{12}\circ\psi_{11}$ is a vector of weighted indicator functions.
Lastly, $\psi_{13}$ performs a linear combination so that \vspace{-0.1in}
$$f_1(x)=\psi_{13}\circ\psi_{12}\circ\psi_{11}(x)=\bigoplus_{i=1}^mf^*(x)\times\indcc{x=z_i}.\vspace{-0.1in}$$

We construct $f_2$ as a composition of two floating-point networks: $f_2=\psi_{22}\circ\psi_{21}$.
In the forward pass, $\psi_{21}(x)=(a^\dagger\otimes\indcc{x=z_1}\oplus b^\dagger,\dots,a^\dagger\otimes\indcc{x=z_m}\oplus b^\dagger)$ for some floats $a^\dagger$ and $b^\dagger$ as in $\psi_{11}$. Then, $\psi_{22}$ suppresses this output so that $\psi_{22}\circ\psi_{21}(x)=0$ for all $x$.
The output of $\psi_{21}$ is used in the backward pass. Here, $\psi_{22}$ is designed to satisfy
\begin{align}
\mcD_{\psi_{22},\psi_{21}(x)}(y)_i=\begin{cases}
0~~&\text{if}~x\ne z_i,\\
y~~&\text{if}~x=z_i.
\end{cases}\label{eq:suppress}
\end{align}
Using this information, we further design the backward pass of $\psi_{21}$ so that
$$\mcD_{\psi_{22}\circ\psi_{21},x}(h^*(x))=
g^*(x).
$$
We also use a similar strategy to prove Theorem~\ref{thm:main2-informal}, where the network corresponding to $f_2$ requires more complex operations.

\subsection{Conditions on Activation Functions}\label{sec:condition}
In \cref{sec:pf-outline}, we briefly sketched our construction of $f$ in \cref{thm:main1-activation}.
There were three important parts in our construction.
First, we construct indicator functions for all possible elements in the domain ($\psi_{11}$ and $\psi_{21}$).
We also have a network that preserves some specific gradients and suppresses others ($\psi_{12}$ and $\psi_{22}$).
Lastly, we transform an input non-zero gradient to the target gradient ($\psi_{21}$).
In this section, we introduce conditions on general activation functions that suffice to implement these three components.
To this end, we first introduce the following definition that is necessary to represent function values.

\begin{definition}[\citet{hwang2025floating}]\label{definition:distinguishable_bounded}
Let $\sigma:\efpq\rightarrow\efpq$, $d\in\bbN$, $\domain\subset\fpq^d$, and $\mathcal{Y}\subset \efpq$. We say that ``$\domain$ is $\sigma$-distinguishable with range $\mathcal{Y}$'' if for every $x,x^\dagger\in \domain$ with $x\ne x^\dagger$, there exists a floating-point affine transformation $\rho:\fpq^d\to\efpq$ such that
\begin{equation*}
\label{eq:distinguishable}
\sigma(\rho(x))\neq\sigma(\rho(x^\dagger))
\quad\text{and}\quad
\sigma(\rho(\domain))\subset\mathcal{Y}.\vspace{-0.1in}
\end{equation*}
\end{definition}
It is known that to represent all functions from $\domain\subset\fpq^d$ to $\fpq$ using $\sigma$ networks, $\domain$ should be distinguishable with range $\fpq\cup\{-\infty,\infty\}$ (see Lemma 3.2 in \cite{hwang2025floating}).
Hence, when we formally present our results, we will assume the distinguishability of the domain $\domain$. 

We now introduce technical conditions on general $\sigma$.
\begin{condition}\label{condition:final_layer_refined}
There exist $\nu\in \bbN$, $\kappa\in\bbN_{0}$,  $k_1,\dots,k_{\nu}\in\bbZ$, and $\gamma_0,\dots,\gamma_{\nu}\in[-2^{\emax},2^{\emax}]_{\fpq}$ with $|\gamma_i-\gamma_j|\le2^{\emax}$ for all $i,j\in[\nu]\cup\{0\}$ such that \vspace{-0.1in}
\begin{itemize}[leftmargin=0.15in]
\item $\sigma(\gamma_0)=0$, $|\sigma'(\gamma_i)|,|\sigma'(\gamma_0)|\le 2^{-k_i}|\sigma(\gamma_i)|$ $\forall i\in[\nu]$,\vspace{-0.05in}
\item $[\emin-\mbit, \emax-\mbit]_{\bbZ}\subset \bigcup_{i=1}^{\nu} 
   [\expo{\sigma(\gamma_i)} + \emin, \emax -\kappa +\min(\expo{\sigma(\gamma_i)} ,  k_i)]_{\bbZ}.$\vspace{-0.05in}
\end{itemize}
\end{condition}
We use \cref{condition:final_layer_refined} to represent the function values in  $f_1$ while ensuring that the computed gradient does not overflow in the last layer.
Specifically, we use $\sigma(\gamma_0)=0$ to represent zero and $\sigma(\gamma_1),\dots,\sigma(\gamma_\nu)$ to represent non-zero function values in the last layer; the second bullet ensures that we can represent all values in $\fpq$. 
Here, if we use a large weight in the last layer, then we cannot accept a large input gradient %
$h^*(x)$
due to the overflow. $\kappa$ in \cref{condition:final_layer_refined} controls this issue: for a larger $\kappa$, we can use smaller weights in the last layer. 
Furthermore, to represent a target value $v$ by $w\otimes\sigma(\gamma_i)$,
$|w|$ should be proportional to $|v/\sigma(\gamma_i)|$. 
Hence, 
if $|\sigma'(\gamma_i)|\gg|\sigma(\gamma_i)|$, then overflow may occur for large $|h^*(x)|$ since the computed gradient in the last layer will be proportional to $|h^*(x)\sigma'(\gamma_i)v/\sigma(\gamma_i)|$.
Thus, larger values of $k_i$ and $\kappa$ are preferable to avoid overflow during the evaluation of AD.
{Since $\relu$ and its variants can attain large output values with small gradients, they satisfy $\kappa \ge \emax + 1$. On the other hand, $\Sigmoid$ and $\tanh$ have $\kappa \approx \mbit$ due to their bounded function values.}

Our next condition is used to suppress undesired gradients.
\begin{condition}\label{condition:gradient_two_exponent}
    There exist $\delta_0, \delta_1\in\fpq$ with $|\delta_0|,|\delta_1|,|\delta_0-\delta_1|\le2^{\emax}$, and $\eta\in \bbN$ such that %
    \begin{itemize}[leftmargin=0.15in]
        \item $|\sigma(\delta_0)|,|\sigma(\delta_1)|, |\sigma(\delta_0) -\sigma(\delta_1)| \le 2^{\emax}, \sigma(\delta_0)\!\ne\!\sigma(\delta_1)$,\vspace{-0.05in}
        \item   $ -2^{-\mbit-1}\times \sigma'(\delta_1)\le 2 {\sigma'(\delta_0)}\le {\sigma'(\delta_1)},$ 
        \item $2^{-\mbit}\le\lrb{\sigma'(\delta_1)} = 2^{\expo{\sigma'(\delta_1)}}\le1$, \vspace{-0.05in}%
        \item $-\expo{\sigma'(\delta_1)} < \eta\le \emax$, $\lrb{\sigma(\delta_0)},\lrb{\sigma(\delta_1)}\le 2^{\emax -\eta}.$ %
    \end{itemize}
\end{condition}
We can multiply appropriate weights to control the magnitude of gradients without overflow: choosing $w=2^e$ for $e\in[\emin,\eta]_{\bbZ}$ guarantees $w\otimes\sigma(\delta_0), w\otimes\sigma(\delta_1)\in\fpq$.
By repeatedly applying this strategy, multiplying appropriate weights, and adding appropriate biases, we can implement a network that increases or decreases the computed gradients and controls them depending on the input, as in \cref{eq:suppress}.
For $\sigma\in\Sigma$, there exist $\delta_0,\delta_1$ such that $\sigma(\delta_0)\approx0$ and $|\sigma(\delta_1)|\in[1/4,4]$.
Hence, for such $\sigma$, we have $\eta \approx\emax$ since $\max\!\lrp{|\sigma(\delta_1)|, |\sigma(\delta_0)|}\in[1/4,4]$.

\begin{condition}\label{condition:first_layer_new}

There exists $\zeta\in(0,(2^{\emax})^-]_\fpq$
such that at least one of the following conditions holds: \vspace{-0.1in}
\begin{itemize}[leftmargin=0.15in]
    \item %
    There exists $x\in \fpq\setminus(-\zeta,\zeta)_{\fpq}$ such that $\zeta\oplus|x|\in\fpq$, $ |\sigma'(x)|\ge 2^{-\mbit}$, $x$ is even,\footnote{See \cref{sec:float} for the definition of ``even''.} and\vspace{-0.05in}
    $$ \sigma([-(\zeta\oplus |x|), (\zeta\oplus |x|)]_{\fpq})\subset \left[-2^{\emax}, 2^{\emax}\right]\!.\vspace{-0.05in}$$
    \item $ |\sigma'(0)|\ge 2^{-\mbit}$ and $\sigma([-2\zeta, 2\zeta]_{\fpq})\!\subset\!\left[-2^{\emax}, 2^{\emax}\right]$.\vspace{-0.05in}
\end{itemize}
\end{condition}

\cref{condition:first_layer_new} is a technical condition used to create the desired gradient value in the first layer of $f_2$.
In particular, if the condition in the first bullet is satisfied, we map each coordinate of the input in $[-\zeta,\zeta]_{\fpq}$ to $x$ so that the resulting gradient in the backward pass is not too small (it will be multiplied by $|\sigma'(x)|\ge2^{-\mbit}$).
Here, we additionally use $\zeta\oplus |x|\in\fpq$ to avoid overflow in the first layer while keeping the activation values bounded. 
If the condition in the second bullet is satisfied, we can similarly map each coordinate of the input to zero.

\subsection{Results for General Activation Functions}\label{sec:formal}
In this section, we introduce lemmas that are used to construct $f_1$ and $f_2$, which were introduced in \cref{sec:pf-outline} using \cref{condition:final_layer_refined,condition:gradient_two_exponent,condition:first_layer_new}.
Then, using these lemmas, we derive general versions of \cref{thm:main1-activation,thm:main2-activation} that hold for general activation functions.
To better describe our results, 
given an activation function $\sigma$ satisfying \cref{condition:final_layer_refined,condition:gradient_two_exponent,condition:first_layer_new}, we define
\begin{align}
\tau \defeq\ceilZ{{\max\lrp{\frac{2^{\ebit}+\mbit}{\eta +\expo{\sigma'(\delta_1)}}+4,  \frac{2^{\ebit}+\mbit}{-\emin - \expo{\sigma'(\delta_1)}}+ 4,7}}}, \label{eq:tau}
\end{align}
where $\delta_1$ and $\eta$ are from \cref{condition:gradient_two_exponent}.

The following lemmas are for $f_1$ and $f_2$. We present the proofs of \cref{lemma:value_approximator,lemma:gradient_approximator} in \cref{sec:pflem:value_approximator,sec:pflem:gradient_approximator}.
\begin{lemma}\label[lemma]{lemma:value_approximator}
Suppose $\sigma$ satisfies Conditions~\ref{condition:final_layer_refined}--\ref{condition:gradient_two_exponent} and 
$\domain\subset[-\zeta,\zeta]_{\fpq}^d$ is $\sigma$-distinguishable with range $[-2^{\emax},2^{\emax}]$.
Let $f^*:\domain\to\fpq$.
Then, for any $L\ge 5$, 
there exists an $L$-layer $\sigma$ network $f$ such that for any $x\in\fpq^d$ and $y\in[-2^\kappa,2^\kappa]_{\fpq}$, and $L$-layer $\sigma$ network $\tilde f:\mcX\to\fpq$,
$$f(x)=f^*(x),~\mcD_{f,x}(y)={\bf0}_d^\top,~\text{and}~\mcD_{\tilde f\funcsum f,x}(y)=\mcD_{\tilde f,x}(y).$$ 
\end{lemma}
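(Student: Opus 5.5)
We follow the three-stage construction $f=\psi_{13}\circ\psi_{12}\circ\psi_{11}$ sketched in \cref{sec:pf-outline}.

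\emph{Step 1: indicators.} Since $\domain=\{z_1,\dots,z_m\}$ is finite and $\sigma$-distinguishable with range $[-2^{\emax},2^{\emax}]$, for every pair $z_i\ne z_j$ there is an affine $\rho_{ij}$ with $\sigma(\rho_{ij}(z_i))\ne\sigma(\rho_{ij}(z_j))$ and all values bounded by $2^{\emax}$. Using a first layer made of all such $\rho_{ij}$ (one hidden neuron each), followed by a second affine map, we aim for a two-layer network $\psi_{11}$ ending with $\sigma$ whose $i$-th output is $\sigma(\delta_1)$ if $x=z_i$ and $\sigma(\delta_0)$ otherwise, with $\delta_0,\delta_1$ from \cref{condition:gradient_two_exponent}. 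This is an AND over $j\ne i$ of the tests ``$\sigma(\rho_{ij}(x))=\sigma(\rho_{ij}(z_i))$''. I would obtain it in the style of \citet{hwang2025floating}: subtract the target value and scale by a large power of two, so that any nonzero difference saturates. Then sum over $j$ and shift so that the sum lands exactly on $\delta_1$ when all differences vanish, and far enough to reach $\delta_0$ otherwise. Boundedness by $2^{\emax}$ keeps every intermediate quantity finite. If a clean two-layer version fails, I would spend an extra layer; the budget $L\ge5$ leaves room.

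\emph{Step 2: gradient killing.} The map $\psi_{12}$ consists of $L-3$ (at least $2$) layers of the form $u\mapsto\sigma(w\otimes u\oplus b)$, applied coordinatewise. Each uses $w=2^{e}$ with $e\le\eta$ and a bias that maps $\{w\otimes\sigma(\delta_0),w\otimes\sigma(\delta_1)\}$ back to $\{\delta_0,\delta_1\}$. This is exact because $w$ is a power of two and $|\sigma(\delta_i)|\le2^{\emax-\eta}$. Hence the forward values are preserved. In the backward pass each layer multiplies the gradient by $w$ and by $\sigma'(\delta_0)$ or $\sigma'(\delta_1)$. Here $|\sigma'(\delta_1)|=2^{\expo{\sigma'(\delta_1)}}\le1$ and $|\sigma'(\delta_0)|\le|\sigma'(\delta_1)|/2$. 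I would choose $w$ negative-exponent (e.g.\ $w=2^{\emin}$) in at least one layer. Then any incoming gradient of size at most $2^{\emax}\cdot2^{\kappa}$ is driven below $\fmin/2$ and rounds to $0$. The forward preservation should still hold, since $2^{\emin}\otimes\sigma(\delta_i)$ must remain distinguishable. The two-value gap $\sigma(\delta_0)\ne\sigma(\delta_1)$ must survive the scaling, so I would alternate scaling down and up, using exponent $\eta$ to recover. Once the backward gradient reaches $0$ it stays $0$ through all earlier layers, so $\mcD_{f,x}(y)=\mathbf 0_d^\top$ for every $x\in\fpq^d$.

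\emph{Step 3: output.} The last layer $\psi_{13}$ must produce $f^*(z_i)$ from $\sigma(\gamma_{k})$ or $\sigma(\gamma_0)=0$. So the final hidden layer maps coordinate $i$ to $\gamma_{k(i)}$ when $x=z_i$ and to $\gamma_0$ otherwise. This is an affine reshift of the $\{\delta_0,\delta_1\}$ values, permitted by $|\gamma_i-\gamma_j|\le2^{\emax}$. We then take weight $w_i$ with $w_i\otimes\sigma(\gamma_{k(i)})=f^*(z_i)$. Exactly one coordinate is nonzero, so the left-to-right sum is exact. The second bullet of \cref{condition:final_layer_refined} guarantees that for every exponent of $f^*(z_i)$ there is some $\gamma_k$ whose admissible weight has magnitude at most $2^{\emax-\kappa}$ relative to the constraints. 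Consequently $y\otimes w_i\in\fpq$ and $y\otimes w_i\otimes\sigma'(\gamma)$ stays finite for $|y|\le2^\kappa$; this is where $k_i$ bounds $\sigma'(\gamma)$.

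Finally, for $\tilde f\funcsum f$ the last layer is a row concatenation and all hidden blocks are block diagonal. The backward pass therefore splits into independent coordinates. Our block contributes exact zeros, and adding $0$ to the $\tilde f$ part in the first-layer sum $\bigoplus_j$ changes nothing. Hence $\mcD_{\tilde f\funcsum f,x}(y)=\mcD_{\tilde f,x}(y)$.

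The main obstacle I expect is Step 2 (with its interaction with Step 3): showing that the scaled gradients round exactly to zero without any overflow on the way, for all $x$, including inputs outside $\domain$.
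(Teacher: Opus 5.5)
The gap is in Step~2. You make the gradient vanish by multiplicative underflow, and \cref{condition:final_layer_refined,condition:gradient_two_exponent} do not support this. Each killing layer must itself keep $w\otimes\sigma(\delta_0)\neq w\otimes\sigma(\delta_1)$, but \cref{condition:gradient_two_exponent} only bounds $|\sigma(\delta_0)|,|\sigma(\delta_1)|$ from above. For instance, it allows $\sigma(\delta_0)=0$, $\sigma(\delta_1)=\fmin$, $\sigma'(\delta_1)=1$. Then $2^{e}\otimes\sigma(\delta_1)=0=2^{e}\otimes\sigma(\delta_0)$ for every $e<0$ (ties to even), so only $e\ge 0$ is admissible. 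The backward factor on the branch $x=z_i$ is then never below $1$, and the gradient never underflows, at any depth. Alternating with up-scaling layers cannot repair this, because each layer must separate the two states on its own and the backward factors simply multiply. Even when down-scaling is admissible (say $\sigma(\delta_0)=0$, $\sigma(\delta_1)\approx 1$), the layer count fails. \cref{lemma:final_layer} only guarantees that $y\otimes w_i\otimes\sigma'(\gamma)$ is finite, and it can be of order $2^{\emax}$. One layer with $w=2^{\emin}$ brings it only to order $2^{1+\expo{\sigma'(\delta_1)}}$, whereas reaching $0$ needs a cumulative factor of roughly $2^{-(2^{\ebit}+\mbit)}$ (compare \cref{lem:multiple_product_leading_zero}). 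This is the same cost that forces $L\ge\tau$ in \cref{lemma:gradient_approximator}. With the three-layer indicator of \cref{lemma:previous} plus the output layer, $L=5$ leaves you a single layer for this. Your two-layer indicator does not work as written either: an affine layer cannot ``saturate'' without overflowing to $\pm\infty$, so the AND over $j$ needs a further $\sigma$.

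The missing idea is exact cancellation rather than underflow. In \cref{lemma:previous}, the indicator's last affine map has two identical rows $[W;W]$, so the block outputs $[\sigma(c)\indcc{x=z_i},\sigma(c)\indcc{x=z_i}]$. Any backward signal of the form $[g,-g]$ then yields $(g'\otimes W_j)\oplus((-g')\otimes W_j)=0$ exactly, where $g'$ is $g$ times the common $\sigma'$ value, whatever its size, as long as it is finite. The one-layer block of \cref{lemma:equivalent}, $(x_1,x_2)\mapsto\sigma(s_2(s_1(x_1)\oplus(-x_2)))$, sends the duplicated states to $\sigma(\gamma_0)$ or $\sigma(c_{i,j})$. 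It always returns a backward signal of the form $[g_1,-g_1]$, and this antisymmetry survives summing over the $n_i$ copies because rounding commutes with negation. Hence $3+1+1=5$ layers suffice, and the only requirement on $y$ is finiteness of the intermediate gradients, which \cref{lemma:final_layer} provides for $|y|\le 2^\kappa$.

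Two smaller points. First, under that finiteness constraint a single weight with $w_i\otimes\sigma(\gamma_{k(i)})=f^*(z_i)$ need not exist. That is why \cref{lemma:final_layer} writes $f^*(z_i)=\bigoplus_{j=1}^{n_i}w_{i,j}\otimes\sigma(c_{i,j})$ with many summands (via \cref{lemma:make_proper_number}), so each $\lambda_i$ needs $n_i$ hidden coordinates. Second, do not try to control gradients for $x\notin\domain$, where the hidden states are not determined; the paper's argument, like $f^*$ itself, lives on $\domain$.
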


\begin{lemma}\label[lemma]{lemma:gradient_approximator}
Suppose $\sigma$ satisfies \cref{condition:final_layer_refined,condition:gradient_two_exponent,condition:first_layer_new} and 
$\domain\subset[-\zeta,\zeta]_{\fpq}^d$ is $\sigma$-distinguishable with range $[-2^{\emax},2^{\emax}]$.
Let 
$g^*:\domain\to\fpq^{1\times d}$ and 
$h^*:\fpq^{d}\to[-\Omega\cdot2^{\expo{\sigma'(\delta_1)}},\Omega\cdot2^{\expo{\sigma'(\delta_1)}}]_{\fpq}$ {such that $g^*(x)=0$ for all $x\in\domain$ with $h^*(x)=0$.} %
Then, for any $L\ge \tau$, there exists an $L$-layer $\sigma$ network $f$ such that, for any $x\in\domain$ and $L$-layer $\sigma$ network $\tilde f:\mcX\to\fpq$,
$$f(x)=0,~\mcD_{f,x}(h^*(x))=g^*(x),~\text{and}~(f\funcsum \tilde f)(x)=\tilde f(x).$$
\end{lemma}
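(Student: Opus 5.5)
We follow the outline of \cref{sec:pf-outline} and build $f=\psi_{22}\circ\psi_{21}$. The forward output must be identically zero. The backward pass must route the input gradient $h^*(x)$ only through a coordinate that "fires" at the current input $x$, and then convert it to $g^*(x)$ in the first layer.

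\textbf{Step 1 (indicator layers).} By $\sigma$-distinguishability of $\domain$ with range $[-2^{\emax},2^{\emax}]$, we follow the construction of \citet{hwang2025floating}. For each $z_i\in\domain=\{z_1,\dots,z_m\}$ we build a few-layer subnetwork whose pre-activations equal $\delta_1$ when $x=z_i$ and $\delta_0$ otherwise. It therefore outputs $\sigma(\delta_1)$ or $\sigma(\delta_0)$ accordingly, and all intermediate values stay in $[-2^{\emax},2^{\emax}]$. Placing these subnetworks in parallel gives $\psi_{21}(x)$ as in \cref{eq:psi11}.

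\textbf{Step 2 (gradient-selecting tail $\psi_{22}$).} We append about $\tau-3$ layers. In each layer, coordinate $i$ is multiplied by a power of two $w=2^e$ with $e\in[\emin,\eta]_{\bbZ}$, and a bias is added so that the pre-activation is again $\delta_1$ or $\delta_0$. Condition~\ref{condition:gradient_two_exponent} makes this bias map well defined, because $\sigma(\delta_0)\ne\sigma(\delta_1)$ and all values remain finite. In the backward pass, the fired coordinate picks up the factor $w\sigma'(\delta_1)$, an exact power of two. The non-fired coordinate picks up $w\sigma'(\delta_0)$ with $|\sigma'(\delta_0)|\le\frac12|\sigma'(\delta_1)|$. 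Its ratio to the fired coordinate therefore shrinks by at least $2$ per layer, and in fact much faster once we choose weights that push it into the subnormal range.

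We alternate two kinds of layers. Layers with large $w$ push the fired gradient up, by about $\eta+\expo{\sigma'(\delta_1)}$ in the exponent per layer. Layers with small $w$ push everything down, by about $-\emin-\expo{\sigma'(\delta_1)}$. The net effect is that the non-fired gradients underflow to exactly $0$ while the fired one returns to $h^*(x)$ times a known power of two. The number of layers this takes is $\approx(2^{\ebit}+\mbit)/(\eta+\expo{\sigma'(\delta_1)})$ and $\approx(2^{\ebit}+\mbit)/(-\emin-\expo{\sigma'(\delta_1)})$, which explains the definition of $\tau$ in \cref{eq:tau}. The bound $|h^*|\le\Omega 2^{\expo{\sigma'(\delta_1)}}$ prevents overflow.

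The last layer forces the forward output to $0$. It uses the $\gamma_0$ of Condition~\ref{condition:final_layer_refined} (where $\sigma(\gamma_0)=0$), or equivalently paired weights whose outputs cancel exactly. Because the output bias is $0$, $f\funcsum\tilde f=\tilde f$ in the forward pass. Extra layers beyond $\tau$ are absorbed by identity-like $\delta$-layers.

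\textbf{Step 3 (first layer).} Condition~\ref{condition:first_layer_new} supplies the encoding of $x$. Under its first bullet, each input coordinate in $[-\zeta,\zeta]$ is shifted toward the even point $x_0$, using $\zeta\oplus|x_0|\in\fpq$. This makes the backward factor $\sigma'(\cdot)$ satisfy $|\sigma'|\ge2^{-\mbit}$ with bounded activations; the second bullet is handled similarly at $0$. Now the gradient arriving at the first layer is exactly $c\,h^*(x)$ for a known power of two $c$, and it is present only on the branch of $z_i=x$. We choose the first-layer weight row of branch $i$ to be $A^{(i)}$ such that
\[
(c\,h^*(z_i))\otimes\sigma'\otimes A^{(i)} = g^*(z_i),
\]
coordinatewise. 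We may have to split each target coordinate across several parallel neurons to realise an arbitrary float as a rounded product, using the freedom in exponents. When $h^*(z_i)=0$ the target is $0$ by hypothesis, so this case is consistent.

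\textbf{Main obstacle.} I expect Step 2 to be the crux. We must exactly zero all non-selected gradients without underflowing the selected one and without overflow, using only finitely many exponent moves. The left-to-right rounding must also be tracked carefully, so that the selected value stays an exact power-of-two multiple of $h^*(x)$ and the first-layer solve in Step 3 is exact.
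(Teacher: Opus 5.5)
\textbf{The gradient path runs through the indicator.} In your layout the first layer feeds the indicator layers of Step~1, and these feed the tail. So the gradient reaching the first layer has already been propagated back through the hidden layers of the indicator of \citet{hwang2025floating}. There $\sigma'$ is evaluated at data-dependent pre-activations, not only at $\delta_0,\delta_1$, and many neurons are summed through the fixed distinguishing weights. Nothing makes this gradient ``exactly $c\,h^*(x)$''. Moreover, the first-layer weights are already committed to the distinguishing affine maps that the forward pass needs, so you cannot also choose them freely to produce $g^*(z_i)$. The Step~3 shift toward $x_0$ with exponent-adjusting weights is not compatible with those maps in general. The missing idea is to decouple the forward and backward roles. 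The paper takes $\psi_{i,21}=\pi_{i,2}\funcsum\pi_{i,1}$, with two parallel parts:
\begin{itemize}
\item $\pi_{i,1}$ is the indicator of \cref{lemma:value_approximator}, whose AD output is identically $0$ via the duplicate-and-cancel trick of \cref{lemma:previous}.
\item $\pi_{i,2}$ is a separate gradient carrier whose forward pass is constant ($\delta_1\boldsymbol{1}_n$ on all of $[-\zeta,\zeta]^d$, then $0$), so its backward pass is predictable. It is shaped by \cref{lemma:first_layer2}, which is where \cref{condition:first_layer_new} enters, and by \cref{lemma:gradient_control2}.
\end{itemize}

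\textbf{Selection by underflow does not fit in $\tau$ layers.} The tail of branch $i$ has fixed weights. It must send $h^*(x)$ to exactly $0$ for every $x\ne z_i$ while keeping $h^*(z_i)$ nonzero. These are different evaluations, and $|h^*(x)|/|h^*(z_i)|$ can be as large as $\Omega\cdot2^{\expo{\sigma'(\delta_1)}}/\fmin\ge2^{\emax-\emin}$. \cref{condition:gradient_two_exponent} allows $\sigma'(\delta_0)=\frac12\sigma'(\delta_1)$ together with a large $\eta$, giving $\tau=7$. In that case each tail layer separates the two chains by only a factor $2$. Both chains see the same weights and rounding is monotone, so the non-fired chain can reach $0$ while the fired one survives only after about $\emax-\emin=2^{\ebit}-3$ layers. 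Your two ways of zeroing the output also kill the signal. Routing through $\gamma_0$ multiplies every path by $\sigma'(\gamma_0)$, which may be $0$. Pairing identical copies with weights $\pm w$ cancels the backward signal as well.

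\cref{lemma:gradient_splitting} instead does the selection in two layers by exact cancellation. It combines branches $\phi_1:(\delta_0,\delta_1)\mapsto(\delta_0,\delta_1)$ and $\phi_2:(\delta_0,\delta_1)\mapsto(\delta_0,\delta_0)$ with weights $\pm2^{-\expo{\sigma'(\delta_1)}}$. A sequential addition then sets the output to $0$ without affecting AD. At $\delta_0$ the two branches produce bit-identical values of opposite sign, so the result is $0$ for every $y$. At $\delta_1$ they differ and give a nonzero $y_1$. That $y_1$ need not be a power-of-two multiple of $h^*(z_i)$, because \cref{lemma:gradient_control2} maps any nonzero in-range scalar to any target vector. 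The two ratios in \cref{eq:tau} come from that lemma, not from a selection step.
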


The network in \cref{lemma:value_approximator} is for $f_1$, where three layers are used to construct the indicator function ($\psi_{11}$), and the remaining layers are used to suppress gradients ($\psi_{13}\circ\psi_{12}$). 
The network in \cref{lemma:gradient_approximator} is for $f_2$. In this case, we use two layers to control gradients as in \cref{eq:suppress} ($\psi_{22}$) and use the rest to construct the target gradient ($\psi_{21}$).
 
Let $f_1$ and $f_2$ be $\sigma$ networks of $\ge\tau$ layers in \cref{lemma:value_approximator,lemma:gradient_approximator}, respectively.
Then, by choosing $f=f_2\funcsum f_1$, \cref{thm:main1} follows, which is a generalization of \cref{thm:main1-activation}.

\begin{theorem}\label{thm:main1}
Suppose $\sigma$ satisfies \cref{condition:final_layer_refined,condition:gradient_two_exponent,condition:first_layer_new} and 
$\domain\subset[-\zeta,\zeta]_{\fpq}^d$ is $\sigma$-distinguishable with range $[-2^{\emax},2^{\emax}]$.
Let $f^*, h^*:\domain\to\fpq$, $g^*:\domain\to\fpq^{1\times d}$ such that $|h^*(x)|\le \min(\Omega\times 2^{\sigma'(\delta_1)}, 2^{\kappa})$ for all $x\in \domain$ and $g^*(x)=0$ for all $x\in \domain$ with $h^*(x)=0$. 
Then, for any $L\ge\tau$, there exists an $L$-layer $\sigma$ network $f$ such that for any $x\in\domain$,
$$f(x)=f^*(x)~~\text{and}~~\mcD_{f,x}(h^*(x))=g^*(x).$$ 
\end{theorem}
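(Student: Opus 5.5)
The plan is to glue together the two networks from \cref{lemma:value_approximator,lemma:gradient_approximator} with the concatenation operation $\funcsum$.
Fix $L\ge\tau$. Since $\tau\ge 7\ge 5$, \cref{lemma:value_approximator} gives an $L$-layer $\sigma$ network $f_1$ with three properties: $f_1(x)=f^*(x)$, $\mcD_{f_1,x}(y)={\bf0}_d^\top$, and $\mcD_{\tilde f\funcsum f_1,x}(y)=\mcD_{\tilde f,x}(y)$. These hold for every $L$-layer $\tilde f$ and every $y\in[-2^\kappa,2^\kappa]_{\fpq}$.
Next, \cref{lemma:gradient_approximator}, applied with the given $g^*$ and $h^*$, gives an $L$-layer network $f_2$ with $f_2(x)=0$, $\mcD_{f_2,x}(h^*(x))=g^*(x)$, and $(f_2\funcsum\tilde f)(x)=\tilde f(x)$ for every $L$-layer $\tilde f$. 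To use it, $h^*$ must be extended from $\domain$ to $\fpq^d$, say by zero outside $\domain$. The range requirement then holds because $|h^*(x)|\le\Omega\cdot 2^{\expo{\sigma'(\delta_1)}}$, which I read as the intended meaning of the bound $\Omega\times 2^{\sigma'(\delta_1)}$ in the statement. The compatibility condition $g^*(x)=0$ whenever $h^*(x)=0$ is assumed on $\domain$.

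Both lemmas require the last bias to be zero and the output to be scalar, which is what the definition of $\funcsum$ needs. I would check that the constructions supply this, or note that they can be arranged to.

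The network is then $f\defeq f_2\funcsum f_1$, which is again an $L$-layer $\sigma$ network.
For the forward value, take $\tilde f=f_1$ in the last property of \cref{lemma:gradient_approximator}. This gives $f(x)=(f_2\funcsum f_1)(x)=f_1(x)=f^*(x)$ for $x\in\domain$.
For the gradient, take $\tilde f=f_2$ and $y=h^*(x)$ in the last property of \cref{lemma:value_approximator}. This is allowed because $|h^*(x)|\le 2^\kappa$ by hypothesis. It gives $\mcD_{f,x}(h^*(x))=\mcD_{f_2\funcsum f_1,x}(h^*(x))=\mcD_{f_2,x}(h^*(x))=g^*(x)$.

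The main thing to check is that the "invariance under concatenation" properties of the lemmas are used in exactly the orientation they are stated. \cref{lemma:value_approximator} controls $\tilde f\funcsum f$, with its network on the right. \cref{lemma:gradient_approximator} controls $f\funcsum\tilde f$, with its network on the left. The choice $f=f_2\funcsum f_1$ matches both, so the left-associative evaluation order in the final sum $[A_{1L}~A_{2L}]\otimes x$ causes no trouble. A second point is that \cref{lemma:value_approximator} asserts the gradient identity for all $x\in\fpq^d$, so there is no domain mismatch, and the zero-gradient contribution of $f_1$ also survives the shared backward pass through $\tilde\rho_1$, which is built into that lemma's statement. Once these bookkeeping points are verified, the theorem follows directly; all the real work is in the two lemmas.
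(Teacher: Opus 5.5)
Your proposal is correct and is essentially the paper's proof. The paper also takes $f=f_2\funcsum f_1$, with $f_1$ from \cref{lemma:value_approximator} and $f_2$ from \cref{lemma:gradient_approximator}; the forward value comes from the second lemma's concatenation invariance, and the AD gradient comes from the first lemma's invariance with $y=h^*(x)\in[-2^\kappa,2^\kappa]_{\fpq}$. Your extra bookkeeping is consistent with how the paper uses these lemmas: extending $h^*$ by zero, reading the bound as $\Omega\cdot 2^{\expo{\sigma'(\delta_1)}}$, and matching the left/right orientations of $\funcsum$.
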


To prove \cref{thm:main2-activation}, we construct a variant of $f_2$ that accepts an arbitrary input gradient using additional layers, as illustrated in the following lemma.
The proof of \cref{lemma:gradient_approximator2} is presented in \cref{appendix:pf_lem_gradient_approximator2}.
\begin{lemma}\label[lemma]{lemma:gradient_approximator2}
Suppose $\sigma$ satisfies \cref{condition:final_layer_refined,condition:gradient_two_exponent,condition:first_layer_new} and 
$\domain\subset[-\zeta,\zeta]_{\fpq}^d$ is $\sigma$-distinguishable with range $[-2^{\emax},2^{\emax}]$.
Let 
$g^*:\domain\times \fpq\to\fpq^{1\times d}$, such that $g^*(x,-y) = -g^*(x,y)$.
Then, for any $L\ge 2^{\ebit+1} + 2\mbit +2 +\tau$, there exists an $L$-layer $\sigma$ network $f$ such that for any $x\in\domain$, $y\in [-\Omega\cdot 2^{\expo{\sigma'(\delta_1)}},\Omega\cdot 2^{\expo{\sigma'(\delta_1)}}]_{\fpq}$, and $L$-layer $\sigma$ network $\tilde f:\mcX\to\fpq$,
$$f(x)=0,~\mcD_{f, x}(y) = g^*(x,y),~\text{and}~(f\funcsum \tilde f)(x)=\tilde f(x).$$
\end{lemma}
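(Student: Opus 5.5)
We will reduce \cref{lemma:gradient_approximator2} to \cref{lemma:gradient_approximator} by prepending, on the backward side, a block of layers that turns the unknown input gradient $y$ into something that depends only on the \emph{pair} $(x,y)$ in a controlled way. Concretely, the network will be $f=\psi_{22}\circ\psi_{\mathrm{dec}}\circ\psi_{21}$. Here $\psi_{21}$ and $\psi_{22}$ are as in the proof of \cref{lemma:gradient_approximator}: $\psi_{21}$ produces shifted indicators $a^\dagger\otimes\indcc{x=z_i}\oplus b^\dagger$ for all $z_i\in\domain$, and $\psi_{22}$ kills the forward value while routing the backward gradient $y$ only into the coordinate $i$ with $x=z_i$, as in \cref{eq:suppress}. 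The middle block $\psi_{\mathrm{dec}}$, of roughly $2^{\ebit+1}+2\mbit+2$ layers, should act as the identity on the forward indicators; this is done by adding biases, exactly as $\psi_{12}$ preserves values. On the backward pass it should \emph{decode} $y$. By antisymmetry we only need to treat $y>0$: every layer multiplies gradients, so $\mcD_{f,x}(-y)=-\mcD_{f,x}(y)$ automatically, and $y=0$ gives $0$.

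To decode $|y|$ we will read its exponent and mantissa bit by bit. Using \cref{condition:gradient_two_exponent}, a layer with weight $2^{e}$ and suitably chosen biases multiplies the gradient by $2^{e}\sigma'(\delta_1)$ or by $\sigma'(\delta_0)$, depending on the forward activation. Choosing the exponents so that the gradient is repeatedly scaled down by fixed powers of two lets us exploit underflow to zero as a threshold test. With about $2^{\ebit}$ layers devoted to the exponent, where each layer tests one value $e\in[\emin-\mbit,\emax]_\bbZ$, and about $\mbit$ layers devoted to the mantissa bits, we obtain one parallel channel per candidate float value $v$ of $|y|$. The aim is that, after this block, the gradient entering the $i$-th indicator channel along channel $v$ is exactly $1$ when $|y|=v$ and $0$ otherwise. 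Non-associativity is essential here: a rounding step $\round{y\cdot 2^{-e}}$ followed by a subtraction-like step carried out through the structure of the forward biases isolates each bit. The factor of two in $2^{\ebit+1}$ suggests one sweep to compute a step function and a second sweep to form differences of adjacent steps, i.e.\ indicators.

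Once we have channels indexed by $(i,v)$ carrying the gradient $\indcc{x=z_i}\indcc{|y|=v}$, we apply the first-layer construction of \cref{lemma:gradient_approximator}, with $\tau$ layers and \cref{condition:first_layer_new}, on each channel with target $g^*(z_i,v)$ and constant input gradient $1$. Summing the channels gives $g^*(x,|y|)$, and the sign is restored by antisymmetry. The hypothesis $\domain\subset[-\zeta,\zeta]^d_\fpq$ with distinguishability is used, as before, for the indicator layers. The bound $|y|\le\Omega\cdot2^{\expo{\sigma'(\delta_1)}}$ ensures that the first multiplication by $\sigma'(\delta_1)$ does not overflow. The forward identity $(f\funcsum\tilde f)(x)=\tilde f(x)$ follows because the last layer outputs exactly $0$, as in \cref{lemma:gradient_approximator}.

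The main obstacle is the decoding block. We must exhibit explicit weights such that the left-associative product $y\otimes w_1\otimes\sigma'(\cdot)\otimes\cdots$ produces exact $0/1$ indicators of each float value of $y$, using only multiplications by powers of two and by $\sigma'(\delta_0),\sigma'(\delta_1)$, and without ever overflowing. We would first handle the normal range by normalizing $y$ to $[1,2)$ through exponent tests. We would then extract mantissa bits by repeated rounding at precision boundaries, that is, by multiplying by $2^{-\mbit-1}$ and relying on ties-to-even, which is where the parity assumptions enter. Subnormals would be handled separately.
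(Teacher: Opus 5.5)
Your outer architecture matches the paper's. You use the forward indicators from the proof of \cref{lemma:gradient_approximator}, reduce to $y>0$ because the backward map is odd, and reuse the \cref{lemma:gradient_control2,lemma:first_layer2} machinery on channels that carry a fixed gradient. The gap is the decoding block, which is the entire technical content of this lemma. You leave it unconstructed, and the mechanisms you sketch for it cannot work:
\begin{itemize}
\item \emph{Biases play no role backward.} For fixed $x$, the $\sigma'$ factors are evaluated at forward values that do not depend on $y$, and $\mcD_{\rho,x}(g)=g\otimes A$ does not depend on $b$. So no ``subtraction through the structure of the forward biases'' is available.
\item \emph{A single path is monotone.} Along any backward path the gradient undergoes maps $g\mapsto\round{g\cdot c}$, each monotone in $g$, so a path computes a monotone function of $y$. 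In general neither $\indcc{|y|=v}$ nor a mantissa bit of $y$ is monotone. The only way to combine paths is the fan-in sum $\bigoplus_r g_r\otimes a_{r,k}$.
\item \emph{Powers of two cannot threshold.} Power-of-two factors act exactly outside the subnormal range. Inside it, any down-scaling sends $\fmin$ to $0$, because the tie $\fmin/2$ rounds to the even value $0$. So ties-to-even works against you: the lowest nonzero level cannot be held fixed while larger values are compressed onto it, and such chains cannot place a step between $v^-$ and $v$ for a general float $v$.
\end{itemize}

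The missing idea is \cref{lemma:gradient_flattening}: you need neither indicators nor bits, only the \emph{monotone} steps $g\mapsto\fmin\,\mathrm{sign}(g)\,\indcc{|g|\ge t_j}$, one for each float $t_j\in[0,\Omega\cdot2^{\expo{\sigma'(\delta_1)}}]_{\fpq}$. Each step is realized along a single chain of $\AS{\delta_1}{\delta_1}{w}$ layers in two stages. First, a few layers with non-power-of-two multipliers (\cref{lemma:to_good_mantissa,lemma:subnormal_to_good_mantissa}) align the mantissas of $t_j^-$ and $t_j$ so that a rounding boundary falls exactly between them; this sends $t_j^-$ to $0$ and $t_j$ to a positive value. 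Second, roughly $2^{\ebit+1}+2\mbit$ layers with an effective factor slightly above $1/2$ (the paper uses $1^+\cdot2^{-1}$) collapse every value from $\fmin$ up to the top of the range onto $\fmin$. Under this factor $\fmin$ is a fixed point and $0$ stays $0$. This is exactly where the depth $2^{\ebit+1}+2\mbit+2$ comes from.

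The step channels are then combined by telescoping in the final floating-point fan-in sum. When $|y|=t_k$, exactly the channels $j\le k$ are active, and their contributions are designed to accumulate to $g^*(z_i,t_k)$. Your exact indicators could be recovered by differencing the steps for $t_j$ and $t_{j+1}$ at a fan-in node, which is exact since both lie in $\{0,\fmin\}$, but only once the steps exist.

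Note also the ordering in $\psi_{22}\circ\psi_{\mathrm{dec}}\circ\psi_{21}$: the gradient is routed before it is decoded. Yet \cref{lemma:gradient_splitting} only returns some $y_1$ with $0<|y_1|\le|y|$, not $y$ itself, so you would be decoding the wrong quantity. The paper instead thresholds the raw $y$ first, where the forward values are constant in $x$. It routes afterwards, when $0<|y_1|\le\fmin$ forces $|y_1|=\fmin$.
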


By combining \cref{lemma:value_approximator,lemma:gradient_approximator2}, we can prove the following theorem. %

\begin{theorem}\label{thm:main2}
Suppose $\sigma$ satisfies \cref{condition:final_layer_refined,condition:gradient_two_exponent,condition:first_layer_new} and 
$\domain\subset[-\zeta,\zeta]_{\fpq}^d$ is $\sigma$-distinguishable with range $[-2^{\emax},2^{\emax}]$.
Let $f^*:\domain\to\fpq$ and 
$g^*:\domain\times \fpq\to\fpq^{1\times d}$ such that $g^*(x,-y) = -g^*(x,y)$. %
Then, for any $L\ge 2^{\ebit+1} + 2\mbit +2 +\tau$, there exists an $L$-layer $\sigma$ network $f$ such that for any $x\in\domain$ and $y\in\fpq$ with $|y|\le \min\lrp{\Omega\cdot2^{\expo{\sigma'(\delta_1)}}, 2^{\kappa}}$,
$$f(x)=f^*(x)~~\text{and}~~\mcD_{f, x}(y) = g^*(x,y).$$
\end{theorem}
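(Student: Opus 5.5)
We prove \cref{thm:main2} by repeating the proof of \cref{thm:main1}, with \cref{lemma:gradient_approximator} replaced by \cref{lemma:gradient_approximator2}. Fix $L\ge 2^{\ebit+1}+2\mbit+2+\tau$. Since $\tau\ge 7\ge 5$, \cref{lemma:value_approximator} applies with this $L$. It gives an $L$-layer $\sigma$ network $f_1$ with
\begin{itemize}
\item $f_1(x)=f^*(x)$ and $\mcD_{f_1,x}(y)={\bf 0}_d^\top$ for all $x\in\fpq^d$ and $y\in[-2^\kappa,2^\kappa]_{\fpq}$;
\item $\mcD_{\tilde f\funcsum f_1,x}(y)=\mcD_{\tilde f,x}(y)$ for every $L$-layer $\sigma$ network $\tilde f$ and every such $y$.
\end{itemize}
\cref{lemma:gradient_approximator2}, applied to the given $g^*$ (which satisfies $g^*(x,-y)=-g^*(x,y)$), gives an $L$-layer $\sigma$ network $f_2$ with
\begin{itemize}
\item $f_2(x)=0$ and $\mcD_{f_2,x}(y)=g^*(x,y)$ for all $x\in\domain$ and $|y|\le\Omega\cdot2^{\expo{\sigma'(\delta_1)}}$;
\item $(f_2\funcsum\tilde f)(x)=\tilde f(x)$ for every $L$-layer $\sigma$ network $\tilde f$.
\end{itemize}

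Set $f\defeq f_2\funcsum f_1$, an $L$-layer $\sigma$ network. This requires the last-layer bias of both networks to be zero, as in the definition of $\funcsum$. The constructions are assumed to produce such networks, as in \cref{thm:main1}; if not, I would check that the last bias can be absorbed without changing the outputs or the gradients.

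Take $\tilde f=f_1$ in the second property of $f_2$. For $x\in\domain$ this gives $f(x)=(f_2\funcsum f_1)(x)=f_1(x)=f^*(x)$.

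Now fix $y$ with $|y|\le\min(\Omega\cdot2^{\expo{\sigma'(\delta_1)}},2^\kappa)$. Then $y\in[-2^\kappa,2^\kappa]_{\fpq}$, so the second property of $f_1$ with $\tilde f=f_2$ applies and gives $\mcD_{f,x}(y)=\mcD_{f_2,x}(y)$. Since also $|y|\le\Omega\cdot2^{\expo{\sigma'(\delta_1)}}$, the first property of $f_2$ gives $\mcD_{f_2,x}(y)=g^*(x,y)$. Hence $\mcD_{f,x}(y)=g^*(x,y)$.

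All the substantive work sits in the two lemmas. The only point to check is that the range of $y$ in the statement is exactly the intersection of the ranges allowed by the two lemmas, which it is by construction, and that the operand orders match the lemmas' quantifiers. Both hold: \cref{lemma:value_approximator} concerns $\tilde f\funcsum f$ with $f_1$ placed second, and \cref{lemma:gradient_approximator2} concerns $f\funcsum\tilde f$ with $f_2$ placed first. So $f=f_2\funcsum f_1$ is the right choice, and no further floating-point analysis is needed.
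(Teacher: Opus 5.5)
Your proof is correct and is the paper's own argument: take the value network from \cref{lemma:value_approximator} and the gradient network from \cref{lemma:gradient_approximator2}, combine them with $\funcsum$, and restrict $y$ to the intersection of the two admissible ranges. The only difference is cosmetic: the paper writes the combination as $f_1\funcsum f_3$ (value network first), whereas your order $f_2\funcsum f_1$ is the one that literally matches the two lemmas' quantifiers and the construction used for \cref{thm:main1}.
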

We note that \cref{thm:main1-activation,thm:main2-activation} can be derived from \cref{thm:main1,thm:main2} by using constants in \cref{table:act}.
\input{figures/table_activation}

We also implement our
constructions in \cref{lemma:value_approximator} and \cref{lemma:gradient_approximator} for ReLU and observe its correctness.\footnote{The code is available at \url{https://github.com/yechanp/fp-grad-rep}.}

Although we assumed that the gradient through an activation function follows the rule
$
g \mapsto g \otimes \sigma'(x)
$
(see \cref{sec:ad}),
the discussion can be extended to a more general setup in which the gradient is computed as an arbitrary elementwise function. 
For example, if $\sigma$ is the sigmoid function, then a typical implementation is
$
g \mapsto g \otimes \sigma'(x) \otimes (1 \ominus \sigma'(x)).
$
In this case, by slightly modifying conditions, we can obtain analogous results.
For example, in \cref{condition:final_layer_refined}, which will be presented in the next section,
$
\sigma'(\gamma_i) \le 2^{-k}|\sigma(\gamma_i)|
$
should be replaced with
$
|\mathcal{D}_{\sigma,\gamma_i}(g)| \le g \otimes 2^{-k}|\sigma(\gamma_i)|.
$
Other conditions can be refined accordingly.

\section{Proofs}\label{sec:proof}

\subsection{Proof of \cref{lemma:value_approximator}}\label{sec:pflem:value_approximator}
In this section, we prove \cref{lemma:value_approximator} by explicitly constructing the target $\sigma$ network $f$ (i.e., $f_1$ in \cref{sec:pf-outline}).
Let $\{z_1,\dots,z_m\}=\domain$.
We will construct $f$ as follows:
\begin{align*}
f(x)=\bigfuncsum_{i=1}^m\lambda_i(x),~\lambda_i:(z_i,\setminus z_i)\mapsto(f^*(z_i),0),
\end{align*}
where $\lambda_i:\domain\to\efpq$ are $\sigma$ networks with the same number of layers. %
Here, we will properly design the last layer of $\lambda_i$ so that $\bigfuncsum_{i=1}^m\lambda_i(x)=f^*(x)$ in the target domain $\mcX$.

As illustrated in \cref{sec:pf-outline}, our construction of $\lambda_i$ consists of three parts: $\lambda_i=\psi_{i,13}\circ\psi_{i,12}\circ\psi_{i,11}$.
In particular, we use the following lemma to construct $\psi_{i,11}$.
The proof of \cref{lemma:previous} is presented in \cref{appendix:proof_lem_previous}.
\begin{lemma}\label[lemma]{lemma:previous}
Let $d\in\bbN$.
Suppose that $\sigma$ satisfies \cref{condition:final_layer_refined} and that $\domain\subset\fpq^{d}$ is $\sigma$-distinguishable with range $\left[-2^{\emax }, 2^{\emax }\right]$.
Then, for any $z\in \fpq^d$ and any $c\in\{\delta_0, \delta_1\}$, %
there exists a three-layer $\sigma$ network $f:\domain\rightarrow\fpq^2$ ending with the activation function such that\vspace{-0.1in}
\begin{itemize}[leftmargin=0.15in]
\item $f(x) = \begin{bmatrix}
            {\sigma}\lrp{c}\indcc{z=x} & {\sigma}\lrp{c}\indcc{z=x}
        \end{bmatrix}$ and 
\item 
       $\mcD_{f, x}\lrp{\begin{bmatrix}
           g & -g
       \end{bmatrix}}=0$
    for all $x\in\domain$ and $g\in\fpq$.
\end{itemize}
\end{lemma}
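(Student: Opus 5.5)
The plan is to build the two outputs from one shared computational path and split them only at the very last affine layer, so that the backward pass cancels exactly. Concretely, I would construct a two-layer $\sigma$ network $u:\domain\to\fpq^{n}$ ending with $\sigma$ (shared trunk), and then a last affine map
$$\rho_3(v)=\begin{bmatrix} a^\top\\ a^\top\end{bmatrix}\otimes v\oplus (b,b),$$
with \emph{identical} rows. The full network is then $f=\sigma\circ\rho_3\circ u$, and the two pre-activations $y=(y_1,y_2)$ of the final $\sigma$ are bitwise identical. For the forward pass I want $y_1=y_2=c$ when $x=z$ and $y_1=y_2=\gamma_0$ otherwise, where $\gamma_0$ comes from \cref{condition:final_layer_refined}. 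Since $\sigma(\gamma_0)=0$, this gives $f(x)=\sigma(c)\indcc{z=x}$ in both coordinates.

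\textbf{Gradient cancellation.} Given the input gradient $[g,\,-g]$, the backward pass through the last $\sigma$ produces $[g\otimes\sigma'(y_1),\,(-g)\otimes\sigma'(y_1)]$. Because rounding to nearest (ties to even) is symmetric, $\round{-t}=-\round{t}$, so this equals $[s,-s]$ with $s\defeq g\otimes\sigma'(y_1)$. Multiplying by the matrix with identical rows gives, in coordinate $j$,
$$(s\otimes a_j)\oplus\big((-s)\otimes a_j\big)=t\oplus(-t)=0,$$
and this holds for every $j$; here each coordinate is a sum of only two terms, so the evaluation order causes no trouble. All further backward steps multiply the zero vector by finite floats ($\sigma'(\fpq)\subset\fpq$ and finite weights), so the output gradient is exactly $0$. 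I must make sure $s\otimes a_j$ does not overflow. For that it suffices that $g$ is finite, $\sigma'$ at $c$ or $\gamma_0$ is finite, and the weights $a_j$ are small, which I will enforce in the construction. If needed, I would note that $s$ overflowing to $\pm\infty$ would give $\nan$, and that this is excluded by keeping $|a_j|\le 1$.

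\textbf{Forward indicator.} By $\sigma$-distinguishability, for each $x'\in\domain\setminus\{z\}$ there is an affine $\rho_{x'}$ with $\sigma(\rho_{x'}(x'))\ne\sigma(\rho_{x'}(z))$ and all values in $[-2^{\emax},2^{\emax}]$. The first layer stacks these maps and applies $\sigma$. The remaining work is to turn ``all coordinates equal their $z$-values'' into the single pre-activation $c$ and everything else into $\gamma_0$, using one hidden layer followed by $\rho_3$. I would follow the corresponding construction of \citet{hwang2025floating}. For each coordinate $k$, with target value $t_k=\sigma(\rho_{x'_k}(z))$, the second layer computes a unit whose $\sigma$-output takes one value at $t_k$ and a different value elsewhere on the finite image set; this uses the finiteness of the image and the bound $2^{\emax}$ to avoid overflow. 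The last layer then sums these units with small power-of-two weights and a bias, so that the result rounds to exactly $c$ when all units match and to $\gamma_0$ otherwise. The conditions $|\gamma_i-\gamma_0|\le 2^{\emax}$ and $|\delta_0|,|\delta_1|\le 2^{\emax}$ keep the bias arithmetic finite.

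I expect the main obstacle to be this last step. The difficulty is realizing an exact AND of many per-coordinate tests in a single affine layer under rounding, with the output landing precisely on $c$ or $\gamma_0$, while keeping the weights small enough that the backward products stay finite. My first attempt would be to make each second-layer unit output exactly $0$ at the $z$-value and something of magnitude at least $1$ elsewhere. I would then choose the last-layer weights so that any nonzero contribution pushes the sum past a threshold, after which a saturating bias shift lands on $\gamma_0$. If exact landing fails, I would instead rely on the existing three-layer indicator lemma of \citet{hwang2025floating} and duplicate only its final row.
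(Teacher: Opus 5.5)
Your proposal is essentially the paper's proof. The paper takes the three-layer indicator $\sigma\circ\rho\circ f_0$ from Lemma~4.1 of \citet{hwang2025floating}, notes that its last affine map has weights in $\{0,\pm1\}$, duplicates that row, and gets $\mcD_{f,x}([g,\,-g])=0$ from the same $t\oplus(-t)=0$ cancellation you use. This is exactly your fallback route, so your self-built AND-by-rounding indicator is not needed.

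One correction to your overflow remark: keeping $|a_j|\le 1$ does nothing to stop $s=g\otimes\sigma'(y_1)$ itself from overflowing for large $g$. That would require $|\sigma'|\le 1$ at the final pre-activations. Condition~2 gives this at $c\in\{\delta_0,\delta_1\}$, but nothing in the hypotheses gives it at the off-$z$ pre-activation. The paper's proof also leaves this point unaddressed.
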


By \cref{lemma:previous}, there exists a three-layer $\sigma$ network $\psi_{i,11}$ ending with the activation such that 
\begin{equation*}\label{eq:pflem:value_approx-1}
    \psi_{i,11}:(z_i,\setminus z_i)\mapsto(\begin{bmatrix}
        \sigma(\delta_1) & \sigma(\delta_1)
    \end{bmatrix},\begin{bmatrix}
        \sigma(\delta_0) & \sigma(\delta_0)
    \end{bmatrix}),
\end{equation*}
and $\mcD_{\psi_{i,11}, x}\lrp{\begin{bmatrix}
           g& -g
       \end{bmatrix}}=0.$

We next construct $\psi_{i,13}$ and $\psi_{i,12}$ using the following lemmas.
The proofs of \cref{lemma:equivalent,lemma:final_layer} are presented in \cref{appendix:proof_lemma_equivalent,appendix:proof_final_layer}.

\begin{lemma}\label[lemma]{lemma:equivalent}
Suppose $\sigma$ satisfies \cref{condition:final_layer_refined,condition:gradient_two_exponent}.
    For any $y_1, y_2\in \fpq$ such that $|y_1|,|y_2|, |y_1-y_2|\le2^{\emax}$, there exists a one-layer $\sigma$ network $f:\fpq^2\to \fpq$ ending with the activation function such that\vspace{-0.1in}
    \begin{itemize}[leftmargin=0.15in]
    \item $f:\left(\begin{bmatrix}
            \sigma(\delta_0) \!\!&\!\! \sigma(\delta_0)
        \end{bmatrix}, \begin{bmatrix}
            \sigma(\delta_1) \!\!&\!\! \sigma(\delta_1)
        \end{bmatrix} \right)\mapsto (\sigma(y_1), \sigma(y_2))$,
    \item $\mcD_{f, x}(g) = \begin{bmatrix}
            g_1 & -g_1
        \end{bmatrix}$ for $x= \sigma(\delta_1), \sigma(\delta_0)$ and for all $g$ satisfying $g\otimes \sigma'(y_1), g\otimes \sigma'(y_2)\in \fpq$ {for some $g_1\in \fpq$ depending on $g$.} %
    \end{itemize}    
\end{lemma}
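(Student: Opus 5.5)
The plan is a single neuron $f(x)=\sigma(\rho(x))$ with $\rho(x)=a_1\otimes x_1\oplus a_2\otimes x_2\oplus b$. AD then gives
$$\mcD_{f,x}(g)=\bigl[(g\otimes\sigma'(\rho(x)))\otimes a_1,\ (g\otimes\sigma'(\rho(x)))\otimes a_2\bigr].$$
So the second bullet holds, with $g_1=(g\otimes\sigma'(\rho(x)))\otimes a_1$, if $a_2=-a_1$ exactly, because rounding is sign-symmetric.

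There is an obstruction, and I expect it to be the main difficulty. Both inputs have equal coordinates. With $a_2=-a_1$ this gives $a_1\otimes x_1\oplus(-a_1)\otimes x_1=0$, so $\rho$ returns $b$ on both inputs and cannot tell $\sigma(\delta_0)$ from $\sigma(\delta_1)$. Hence the antisymmetric-weight idea fails as stated. The affine map must break the tie through its evaluation order rather than through the weights. Here is what I would try first.

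\textbf{Step 1 (separating the inputs).} I would look for weights $a_1,a_2$ with $|a_1|=|a_2|$ and opposite signs whose rounded partial sums differ on the two inputs once the bias is added first. Concretely, I would read $\rho$ as $b\oplus a_1\otimes x_1\oplus a_2\otimes x_2$, so that absorption into $b$ differs between $x=\sigma(\delta_0)$ and $x=\sigma(\delta_1)$. The bounds $|\sigma(\delta_0)|,|\sigma(\delta_1)|\le 2^{\emax-\eta}$ in \cref{condition:gradient_two_exponent} allow scaling by $a_1=2^{e}$ with $e\le\eta$ without overflow. I would pick $e$ and $b$ so that one input's contribution is absorbed and the other's is not. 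If the paper's convention fixes the bias last, so that this reading is unavailable, I would instead check whether the lemma is meant to receive distinct coordinates. An example would be the output of \cref{lemma:previous} after a sign flip, in which case $a_2=-a_1$ separates immediately.

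\textbf{Step 2 (hitting the targets).} Assume the two inputs give two distinct preactivations $p_0\neq p_1$. I would add a further bias shift so that $p_0=y_1$ and $p_1=y_2$. This uses $|y_1|,|y_2|,|y_1-y_2|\le 2^{\emax}$ to keep all sums finite. The two preactivations differ by a controllable power of two times $\sigma(\delta_1)-\sigma(\delta_0)$. To make that difference exactly $y_2-y_1$, I would decompose $y_2-y_1$ in binary and use that any float up to $2^{\emax}$ is reachable by choosing $e$. This step needs exact control of rounding; I would verify it case by case on the exponents of $y_1$ and $y_2$.

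\textbf{Step 3 (backward pass).} This step is routine. With $\rho(x)\in\{y_1,y_2\}$, the hypothesis $g\otimes\sigma'(y_j)\in\fpq$ makes the first backward product finite. Multiplying by a power of two $a_1$, and by $-a_1$, then yields $[g_1,-g_1]$ exactly, with $g_1$ depending on $g$ and on which input was seen.
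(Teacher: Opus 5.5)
Your diagnosis of the obstruction is correct, but the proposal never gives a legal way around it. The order $b\oplus a_1\otimes x_1\oplus a_2\otimes x_2$ is not a layer in this paper, because $\rho(x)=A\otimes x\oplus b$ adds the bias last. Your fallback is also unavailable. The lemma fixes the inputs $[\sigma(\delta_j),\,\sigma(\delta_j)]$, and equal coordinates are exactly what \cref{lemma:previous} delivers and what makes $\mcD_{\psi_{i,11},x}([g,\,-g])=0$ upstream.

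The missing idea is the sequential addition of \cref{def:sequential_addition}. A constant $z=w\otimes\sigma(c)\in\mcS_\sigma$ can be supplied by an extra neuron of the preceding layer (zero incoming weights, bias $c$). It then enters $A\otimes x$ as its own coordinate and can sit anywhere in the left-to-right sum, which is what makes your Step 1 instinct legal. The paper's neuron is $\sigma\bigl(s_2(s_1(x_1)\oplus(-x_2))\bigr)$, i.e.\ its preactivation is $x_1\oplus z_1\oplus\cdots\oplus z_n\oplus(-1)\otimes x_2\oplus z'_1\oplus\cdots\oplus z'_{n'}$. It works in three stages:
\begin{itemize}
\item $s_1$ collapses the two possible values of $x_1$ to one common value, so $x_1$ is neutralized in the forward pass while keeping backward weight exactly $1$.
\item The term $-x_2$ then separates the two inputs.
\item $s_2$ carries the two resulting values to $y_1,y_2$.
\end{itemize}
Since the weights on $x_1,x_2$ are exactly $\pm1$, the backward pass on these coordinates is $[h,\,-h]$ with $h=g\otimes\sigma'(y_j)\in\fpq$ by hypothesis. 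A minor point: your Step 1 allows $|a_1|=2^e$ with $0<e\le\eta$. Then $g_1=h\otimes 2^e$ can overflow, because only $h$ itself is guaranteed finite, whereas the statement requires $g_1\in\fpq$.

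Step 2 also fails as designed. With $a_2=-a_1$ the two preactivations agree in exact arithmetic. Any separation Step 1 produces is therefore a rounding artifact, not ``a power of two times $\sigma(\delta_1)-\sigma(\delta_0)$''. One further addition $\oplus b'$ changes the gap between two floats by at most a rounding error, so it cannot send an arbitrary separated pair to an arbitrary target pair. A single scale $2^e$ also gives you nothing that realizes a binary decomposition of $y_2-y_1$, so the promised case analysis has no mechanism behind it.

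The tool you need is transferability, \cref{lemma:sequential_addition_main}, combined with monotonicity. A sequential addition is a chain of constant additions that exploits rounding at each step. It maps any two values $p_0<p_1$ in $[-2^{\emax},2^{\emax}]$ to any prescribed $y_1<y_2$ with $|y_1|,|y_2|,|y_1-y_2|\le 2^{\emax}$. Because sequential additions are monotone, the opposite orientation is handled by flipping the signs of the $\pm1$ weights, which keeps them antisymmetric. With constants interleaved as extra coordinates and this lemma in hand, both the tie-breaking and the target-hitting are immediate, and no exponent-by-exponent verification is needed.
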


\begin{lemma}\label[lemma]{lemma:final_layer}
    Suppose $\sigma$ satisfies \cref{condition:final_layer_refined}.
    Then, for any $c\in \fpq$, there exist $n\in \bbN$, $c_i\in\{\gamma_0,\dots,\gamma_\nu\}$, and $w_i\in \fpq$ for $i\in [n]$ such that for any $g\in[-2^\kappa,2^\kappa]_{\fpq}$, 
    $$c = \bigoplus_{i=1}^n w_i\otimes \sigma(c_i)$$
    and $g\otimes  w_i\otimes \sigma'(c_i),  g\otimes w_i \otimes \sigma'(\gamma_0)\in \fpq$.
\end{lemma}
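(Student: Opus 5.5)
We construct the representation of $c$ explicitly. The idea is to realise a single power of two $2^t$ exactly as one term $w\otimes\sigma(\gamma_j)$, and then obtain $c$ by adding that term to itself repeatedly.

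\emph{Trivial case and reduction.} If $c=0$, we take $n=1$, $c_1=\gamma_0$ and $w_1=0$. Since $\sigma(\gamma_0)=0$, we get $w_1\otimes\sigma(c_1)=0$, and all the gradient products are $0\in\fpq$. Otherwise, we write $c=s\cdot m\cdot 2^{t}$ with $s\in\{-1,1\}$, $m\in[1,2^{\mbit+1})_{\bbZ}$ and $t\in[\emin-\mbit,\emax-\mbit]_{\bbZ}$. This works in both cases:
\begin{itemize}
\item for a normal float $s(1+i/2^{\mbit})2^{\expo{}}$, take $m=2^{\mbit}+i$ and $t=\expo{}-\mbit$;
\item for a subnormal float, take $m=i$ and $t=\emin-\mbit$.
\end{itemize}
By the second bullet of \cref{condition:final_layer_refined}, there is an index $j\in[\nu]$ with
$$\expo{\sigma(\gamma_j)}+\emin\le t\le \emax-\kappa+\min(\expo{\sigma(\gamma_j)},k_j).$$
We set every $c_i=\gamma_j$ and every $w_i=w\defeq\round{s\,2^{t}/\sigma(\gamma_j)}$, and we take $n=m$.

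\emph{Key step: each term equals $s2^t$ exactly.} First, $w$ is a finite normal float. Indeed, $|2^t/\sigma(\gamma_j)|$ lies in $(2^{t-\expo{\sigma(\gamma_j)}-1},2^{t-\expo{\sigma(\gamma_j)}}]$. The constraints on $t$ give $t-\expo{\sigma(\gamma_j)}\ge\emin$ and $t-\expo{\sigma(\gamma_j)}\le\emax-\kappa\le\emax$. (The extreme endpoint may need a one-line check.)

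Second, we claim $w\otimes\sigma(\gamma_j)=s2^t$. The relative error of $w$ is at most $2^{-\mbit-1}$, so $w\,\sigma(\gamma_j)\in[2^t(1-2^{-\mbit-1}),\,2^t(1+2^{-\mbit-1})]$ in absolute value. The neighbours of $2^t$ in $\fpq$ are $2^t(1-2^{-\mbit-1})$ and $2^t(1+2^{-\mbit})$. Hence the product rounds to $2^t$, except possibly at the lower tie point. There, ties-to-even selects $2^t$, since $2^t$ is even and its lower neighbour is odd. In the subnormal range the spacing only becomes coarser, so the same conclusion holds.

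Verifying that this rounding lands exactly on $2^t$, including the tie and subnormal edge cases, is the step I expect to be the main obstacle. If it fails, I would instead adjust $w$ by one ulp and argue directly.

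\emph{Summation.} The partial sums of $\bigoplus_{i=1}^{m} s2^t$ are $s\,k\,2^t$ for $k\le m<2^{\mbit+1}$. Every such value lies in $\fpq$ (it is a multiple of $2^t$ below $2^{\mbit+1}2^t\le 2^{\emax+1}$, with enough mantissa bits), so no rounding occurs and the sum equals $c$.

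\emph{Gradient bound.} The products are evaluated left to right: first $g\otimes w$, then multiplication by $\sigma'(c_i)$ or by $\sigma'(\gamma_0)$. We have
$$|g\,w|\le 2^{\kappa}\cdot 2^{t-\expo{\sigma(\gamma_j)}}\le 2^{\emax-\expo{\sigma(\gamma_j)}+\min(\expo{\sigma(\gamma_j)},k_j)}\le 2^{\emax},$$
so $g\otimes w$ is finite and at most about $2^{\emax}$ (up to rounding). Next, the first bullet of \cref{condition:final_layer_refined} gives $|\sigma'(\gamma_j)|,|\sigma'(\gamma_0)|\le 2^{-k_j}|\sigma(\gamma_j)|<2^{\expo{\sigma(\gamma_j)}+1-k_j}$. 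Therefore
$$|(g\otimes w)\,\sigma'(\cdot)|\lesssim 2^{\kappa+t-k_j+1}(1+2^{-\mbit})\le 2^{\emax+1-\kappa+\kappa}\cdot 2^{-\mbit}\text{-level slack},$$
using $t\le\emax-\kappa+k_j$. This is below the overflow threshold $\fmax+2^{\emax-\mbit-1}$ after rounding, so both products lie in $\fpq$, as required.
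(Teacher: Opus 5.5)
Your overall plan matches the paper's: one ulp-sized term $w\otimes\sigma(\gamma_j)$ with $j$ taken from the covering in \cref{condition:final_layer_refined}, then repeated addition, then the overflow check via $t\le\emax-\kappa+k_j$. However, the step you flagged is a genuine gap. The identity $w\otimes\sigma(\gamma_j)=s2^t$ is false in general, and not only at tie or subnormal edge cases. Your bound $|w\,\sigma(\gamma_j)|\in[2^t(1-2^{-\mbit-1}),2^t(1+2^{-\mbit-1})]$ is correct, but its left endpoint is the lower neighbour of $2^t$ itself. The rounding boundary below $2^t$ is $2^t(1-2^{-\mbit-2})$, so every product in $[2^t(1-2^{-\mbit-1}),2^t(1-2^{-\mbit-2}))$ rounds down. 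For example, take $\mbit=4$ and $\mant{\sigma(\gamma_j)}=\frac{25}{16}$. Your recipe gives $|w|=\frac58\cdot 2^{t-\expo{\sigma(\gamma_j)}}$ and $|w\,\sigma(\gamma_j)|=\frac{125}{128}2^t$, which rounds to $\frac{31}{32}2^t$, so $m$ copies no longer sum to $c$. Shifting $w$ by an ulp cannot fix this in general. For $\mbit=5$ and $\mant{\sigma(\gamma_j)}=\frac{29}{16}$, rounding to $2^t$ requires $|w\,\sigma(\gamma_j)|/2^t\in[\frac{1016}{1024},\frac{1040}{1024}]$, i.e.\ $|w|=\frac{k}{64}2^{t-\expo{\sigma(\gamma_j)}}$ with $29k\in[1016,1040]$. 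Since $29\cdot 35=1015$ and $29\cdot 36=1044$, no float $w$ works at all. Switching to another $\gamma_j$ is not an option either, because \cref{condition:final_layer_refined} constrains only the exponents of the $\sigma(\gamma_i)$ (it can even hold with $\nu=1$). An exact power-of-two term is simply not available in general; compare \cref{lem:one-plus}, which aims at $1^+$ rather than $1$.

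The missing idea, which is how the paper proceeds, is that exactness is unnecessary. The paper only produces one term $x_0=w\otimes\sigma(\gamma_j)$ with $\frac12 2^t<|x_0|<\frac32 2^t$ and then applies \cref{lemma:make_proper_number}. Repeated $\oplus x_0$ strictly increases the magnitude of the partial sum, and once the sum lies in the binade of $|c|$, where the spacing is $2^t$, each addition advances by exactly one grid point, so the sums pass through $|c|$. Your own $w$ already suffices. Whenever it is normal, $|w\,\sigma(\gamma_j)-s2^t|\le 2^{t-\mbit-2}\,\mant{\sigma(\gamma_j)}<2^{t-\mbit-1}$, so $|x_0|\in\{2^t(1-2^{-\mbit-1}),2^t\}$. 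You only need to replace the claim ``the partial sums are exactly $sk2^t$ and $n=m$'' by this monotone argument, with $n$ the number of steps actually needed. Having $|x_0|\le 2^t$ is in fact important. If $|c|=2^{\expo{c}}$ and $|x_0|>2^t$, a step from $(2^{\expo{c}})^-=2^{\expo{c}}-2^{t-1}$ can overshoot to $2^{\expo{c}}+2^t$. With $2^{t-1}<|x_0|\le 2^t$, the sum following any partial sum below $|c|$ is at most $|c|$ (the tie at $2^{\expo{c}}+2^{t-1}$ goes to the even $2^{\expo{c}}$), so $|c|$ is hit for every $c$.

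Finally, your gradient estimate as written is too loose. Bounding $|w|\le 2^{t-\expo{\sigma(\gamma_j)}}$ and $|\sigma(\gamma_j)|<2^{\expo{\sigma(\gamma_j)}+1}$ separately only gives $2^{\emax+1}>\fmax$. Instead, use the joint bound $|w\,\sigma(\gamma_j)|<2^t(1+2^{-\mbit-1})$ together with $t\le\emax-\kappa+k_j$, which gives $|g\otimes w|\,|\sigma'(\cdot)|\le 2^{\emax}(1+2^{-\mbit-1})^2<\fmax$.
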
 

By \cref{lemma:final_layer}, there exist $n_i\in\bbN$, $w_{i,1},\dots,w_{i,n_i}\in\fpq$, and $c_{i,1},\dots,c_{i,n_i}\in\{\gamma_1,\dots,\gamma_\nu\}$ such that
\begin{align*}
    f^*(z_i)=\bigoplus_{j=1}^{n_i}(w_{i,j}\otimes\sigma(c_{i,j})).
\end{align*}

By \cref{lemma:equivalent}, there exists a two layer network $\psi^j_{i,12}$ satisfying the followings:
$$\psi^j_{i,12}: (\begin{bmatrix}
    \sigma(\delta_0) \!&\! \sigma(\delta_0)
\end{bmatrix}, \begin{bmatrix}
    \sigma(\delta_1) \!&\! \sigma(\delta_1)
\end{bmatrix}) \mapsto (\sigma(\gamma_0), \sigma(c_{i,j}))$$
and
    \begin{equation*}
      \mcD_{\psi^j_{i,12} \circ \psi_{i,11}, x}(g) = 0
    \end{equation*}
   for $x= \sigma(\delta_1), \sigma(\delta_0)$ and for all $g$ satisfying $g\otimes \sigma'(y_1), g\otimes \sigma'(y_2)\in \fpq$.
 By defining $\psi_{i,12}:\fpq \to\fpq^{n_i}$ as $\psi_{i,12}=(\psi^1_{i,12}, \dots, \psi^{n_i}_{i,12})$,
 we get a $\sigma$ network $\psi_{i,12}:\fpq\to\fpq^{n_i}$ ending with the activation function such that 
\begin{align}
\psi_{i,12}(\delta_0)&=(\sigma(\gamma_0),\dots,\sigma(\gamma_0)),\notag\\
\psi_{i,12}(\delta_1)&=(\sigma(c_{i,1}),\dots,\sigma(c_{i,n_i})).\label{eq:pflem:value_approx-2}
\end{align}
and $\mcD_{\psi_{i,12}\circ \psi_{i,11},\sigma(\delta_0)}(g)=\mcD_{\psi_{i,12}\circ \psi_{i,11},\sigma(\delta_1)}(g)=0$ for all $g\in\fpq^{1\times n_i}$ satisfying $g_k\otimes{\sigma'}(\gamma_0),g_k\otimes\sigma'(c_{i,k})\in\fpq$ for $k\in[n_i]$.
We lastly define $\psi_{i,13}:\efpq^{n_i}\to\efpq$ as an affine transformation:
\begin{align}
\psi_{i,13}(x)=(w_{i,1},\dots,w_{i,n_i})^\top\otimes x.\label{eq:pflem:value_approx-3}
\end{align}
As we introduced at the beginning of the proof, we construct $\lambda_i$ as $\lambda_i=\psi_{i,13}\circ\psi_{i,12}\circ\psi_{i,11}$.
Then, by the definitions of $\psi_{i,11}$, $\psi_{i,12}$, and $\psi_{i,13}$ (\cref{eq:pflem:value_approx-1,eq:pflem:value_approx-2,eq:pflem:value_approx-3}),
we have $\lambda_i=f^*(z_i)\times\indcc{x=z_i}$ on $\domain$.
Furthermore, one can observe that $$\mcD_{\psi_{i,13}\circ\psi_{i,12},\delta_0}(g)=\mcD_{\psi_{i,13}\circ\psi_{i,12},\delta_1}(g)=0$$ for all $g\in[-2^\kappa,2^\kappa]_{\fpq}$, i.e., $\mcD_{\lambda_i,x}(g)=0$ for all $x\in\domain$ and $g\in[-2^\kappa,2^\kappa]_{\fpq}$.
We now construct the target network as $f=\bigfuncsum_{i=1}^m\lambda_i$.
Then, from our construction of $\lambda_i$, we have
$$f(x)=f^*(x),~\mcD_{f,x}(g)=0,~\text{and}~\mcD_{f\funcsum\tilde f,x}(g)=\mcD_{\tilde f,x}(g)$$ for all $x\in\domain$, $g\in[-2^\kappa,2^\kappa]_{\fpq}$, and $\sigma$ networks $\tilde f:\domain\to\efpq$ of the same number of layers.
Since $\psi_{i,11}$ uses three layers, $\psi_{i,12}$ uses one layer, and $\psi_{i,13}$ uses one layer, for any $L\ge5$, $f$ can be an $L$-layer $\sigma$ network.
This completes the proof of \cref{lemma:value_approximator}.

\subsection{Proof of \cref{lemma:gradient_approximator}}\label{sec:pflem:gradient_approximator}

Let $\{z_1,\dots,z_m\}=\mcX$.
As in the proof of \cref{lemma:value_approximator}, we construct the target network $f$ (i.e., $f_2$ in \cref{sec:pf-outline}) as $f=\bigfuncsum_{i=1}^m \lambda_i$, where each $\lambda_i$ satisfies that for any $x\in\mcX$,
\begin{align}
\lambda_i(x)=0~~\text{and}~~\mcD_{\lambda_i,x}(h^*(x))=g^*(z_i)\times\indcc{x=z_i}.\label{eq:pflem:gradient_approx}
\end{align}
Following our proof sketch in \cref{sec:pf-outline}, our construction of $\lambda_i$ consists of two parts: $\lambda_i=\psi_{i,22}\circ\psi_{i,21}$, where $\psi_{i,22}$ and $\psi_{i,21}$ are designed so that for $j\in\{0,1\}$,\vspace{-0.1in}
\begin{itemize}[leftmargin=0.23in]
    \item[(1)] $\psi_{i,21}:(z_i,\setminus z_i)\mapsto(\delta_1,\delta_0)$, $\mcD_{\psi_{i,21},x}(h_1^*(x))=g^*(x)$,\vspace{-0.05in}
    \item[(2)]  {$\psi_{i,22}(\delta_j)=0$, $\mcD_{\psi_{i,22},\psi_{i,21}(x)}(h^*(x))=h_1^*(x)\!\times\!\indcc{x=z_i}$,}\vspace{-0.05in}
\end{itemize}
{where $h_1:\fpq^d\to\fpq$ is a function satisfying $0<|h_1(x)|\le |h(x)|$ whenever $h(x)\neq 0$, and $h_1(x) = 0$ if $h(x) =0$.}
In particular, we construct $\psi_{i,21}$ as $\psi_{i,21}=\pi_{i,2}\funcsum \pi_{i,1}$ where $\pi_{i,1}$ and $\pi_{i,2}$ are $\sigma$ networks satisfying\vspace{-0.1in}
\begin{itemize}[leftmargin=0.35in]
    \item[(1-1)] $\pi_{i,1}:(z_i,\setminus z_i)\mapsto(\delta_1,\delta_0)$, $\mcD_{\pi_{i,1},x}(y)=0$ for all $x\in\domain$ and $y\in[-2^\kappa,2^\kappa]_{\fpq}$,\vspace{-0.05in}
    \item[(1-2)] $\pi_{i,2}(x)=0$, $\mcD_{\pi_{i,2},x}(h_1^*(x))=g^*(x)$ for all $x\in\domain$.\vspace{-0.05in}
\end{itemize}
By \cref{lemma:value_approximator}, there exists such a $\sigma$ network $\pi_{i,1}$.

We construct $\pi_{i,2}$ using the following lemmas. The proofs of \cref{lemma:gradient_control2,lemma:first_layer2} are in \cref{appendix:proof_gradient_control,appendix:proof_first_layer}.

\begin{lemma}\label[lemma]{lemma:gradient_control2}
Suppose $\sigma$ satisfies \cref{condition:gradient_two_exponent}.
Then, for any $n\in\bbN$, $y^*\in\fpq^{1\times n}$, $y\in\fpq$ with $0<\lrb{y} \le \Omega\cdot  2^{\expo{\sigma'(\delta_1)}}$ and 
$L\ge  \ceilZ{\max\lrp{\frac{2^{\ebit}+\mbit}{\eta + \expo{\sigma'(\delta_1)}}, \frac{2^{\ebit}+\mbit}{-\emin - \expo{\sigma'(\delta_1)}}}}$, there exists an $L$-layer $\sigma$ network $f:\efpq^n\to\efpq$ starting with $\sigma$ such that 
$$f: \delta_1 \times {\boldsymbol1}_n \mapsto  0~~\text{and}~~\mcD_{f, \delta_1\times{\boldsymbol1}_n}(y) = y^*.$$
\end{lemma}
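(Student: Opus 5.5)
Let $s\defeq\sigma(\delta_1)$, $p\defeq\sigma'(\delta_1)=\pm2^{\expo{\sigma'(\delta_1)}}$ and $q\defeq\sigma'(\delta_0)$. We build $f$ coordinate by coordinate and then merge along the two branches that \cref{condition:gradient_two_exponent} supplies, namely one active branch through $\delta_1$ and one dead branch through $\delta_0$. Since $f$ starts with $\sigma$ and the input is $\delta_1\boldsymbol1_n$, the first activation produces $s\boldsymbol1_n$. The backward pass then ends with multiplication by $p$ in each coordinate. It therefore suffices to build a network $\tilde f$ (a proper network, not starting with $\sigma$) such that
\begin{itemize}
\item $\tilde f(s\boldsymbol1_n)=0$, and
\item $\mcD_{\tilde f,s\boldsymbol1_n}(y)=u$ for some $u\in\fpq^{1\times n}$ with $u_k\otimes p=y^*_k$.
\end{itemize}
Because $|p|$ is a power of two in $[2^{-\mbit},1]$, division by $p$ is exact except in the subnormal range. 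I will therefore handle each target $y^*_k$ by first choosing a preimage $u_k$ of size comparable to $|y^*_k|/|p|$. If $y^*_k$ is too small, I instead arrange for the last rounding steps themselves to produce $y^*_k$.

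The core of the construction is a chain of scalar blocks $x\mapsto\sigma(w\otimes x\oplus b)$. The weight $w=\pm2^{e}$ is chosen with $e\in[\emin,\eta]_{\bbZ}$, and the bias $b$ is chosen so that the forward value is carried from $s$ back to $\delta_1$ (up to the $\pm$ sign choice). This gives $w\otimes s\oplus b=\delta_1$ exactly. The bias can be realized because $|\delta_1|,|s|\le2^{\emax}$ and $|w\otimes s|\le2^{\emax}$ by the fourth bullet of \cref{condition:gradient_two_exponent}. Each such layer multiplies the backward gradient by $w\cdot p$, and every factor is a power of two, so these multiplications are exact until the gradient reaches the subnormal range or overflows.

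Per layer, the exponent of the gradient can therefore rise by up to $\eta+\expo{\sigma'(\delta_1)}$, which is positive by the condition $-\expo{\sigma'(\delta_1)}<\eta$. It can also fall by up to $-\emin-\expo{\sigma'(\delta_1)}$. Going from any starting $y$ with $0<|y|\le\Omega 2^{\expo{p}}$ to a suitable power of two therefore requires a total exponent change of at most $2^{\ebit}+\mbit$. This is exactly the layer count $L$ in the statement, and layers that are not needed are padded by choosing $w=\pm1$ with a compensating exponent elsewhere. Signs are controlled by the sign of $w$.

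To reach an arbitrary mantissa (not just a power of two), I will split each output coordinate into the sum of two paths. In the last affine layer, I combine a branch normalized to $\pm2^{a}$ with a second branch that has been routed through $\delta_0$. The inequality $-2^{-\mbit-1}p\le 2q\le p$ lets me tune a small correction: multiplying by $q$ gives a non-power-of-two factor, and the rounded left-associative sum then lands on any target mantissa $u_k$. To keep the forward value zero, the final linear layer uses weights $v_k$ and $-v_k$ on two identical copies, together with a bias.

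This forward-zero/backward-nonzero cancellation combined with mantissa control is the step I expect to be the main obstacle. If $w$ ranges only over powers of two, the gradient's mantissa is fixed by $y$, so something must inject arbitrary mantissas. My first attempt is to make the final layer $\rho_L(x)=v\otimes x_1\ominus v\otimes x_2$, where $x_1=x_2=s$ forward, so the output is exactly $0$. Backward, the two copies then carry $vy$ and $-vy$. I then propagate the two copies through different chains: one through $\delta_1$ with exponent scalings, the other through $\delta_0$, so that it picks up the factor $q$. Summing them at the first layer, where $\delta_1\boldsymbol1_n$ is shared, yields $vy(\alpha p-\beta q)$. Choosing the integers and powers $\alpha,\beta$ gives a free mantissa, and subnormal rounding covers the remaining targets. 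Finally, I handle the coordinates $k=1,\dots,n$ in parallel blocks sharing depth $L$, and place zero weights wherever $y^*_k=0$.
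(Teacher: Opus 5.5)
Your exponent bookkeeping matches the paper's skeleton. You use a chain of blocks held at pre-activation $\delta_1$ with weights $\pm2^{i}$, $i\in[\emin,\eta]_{\bbZ}$, so each layer shifts the gradient exponent by $i+\expo{\sigma'(\delta_1)}$. A last affine map sends the forward value to $0$, and the coordinates are combined in parallel with $\funcsum$; this is \cref{lemma:gradient_control} plus the short vector argument.

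The gap is the mantissa step. \cref{condition:gradient_two_exponent} allows $q=\sigma'(\delta_0)=0$. In the paper's instantiations for every $\sigma\in\Sigma$ this is exactly what happens, since $\delta_0$ is taken so negative that $\sigma'(\delta_0)$ rounds to $0$. Your $\delta_0$-branch then contributes nothing to the backward pass, and $vy(\alpha p-\beta q)$ collapses to a power-of-two rescaling of $v\otimes y$. Even for $q\neq0$, you give no argument that one fixed number $q$ lets a rounded sum reach all $2^{\mbit}$ mantissas.

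The free weight $v$ alone does not rescue this. For fixed $\mant{y}\in(1,2)$, the exact products $\mant{y}v$ falling in $[\mant{y},2)$ are spaced $\mant{y}2^{-\mbit}$ apart, which is more than one ulp. For instance, with $\mant{y}=3/2$ about a third of the mantissas in $[3/2,2)$ are never produced by $\round{\mant{y}v}$.

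The missing idea is to normalize before injecting, and nothing forces the chain weights to be powers of two. By \cref{lem:one-plus}, pick $\mathfrak m^\dagger$ with $\mant{y}\otimes\mathfrak m^\dagger=1^+$. Then multiply by $2^-$: $(1+2^{-\mbit})(2-2^{-\mbit})=2+2^{-\mbit}-2^{-2\mbit}$ rounds to exactly $2$. Once the gradient is an exact power of two, multiplying by the target mantissa $\mant{y^*_k}$ is exact. These factors ride on the exponent-scaling weights (weights $\mathfrak m\cdot2^{i}$), so no extra layers are needed. The paper handles separately the cases where the gradient passes through the subnormal range.

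Three smaller points. First, a single bias with $w\otimes s\oplus b=\delta_1$ need not exist. Take $\sigma=\round{\relu}$, $\delta_1=1$ and $w=2^{\eta}$ with $\eta=\emax-1$. Every exact sum $2^{\eta}+b$ with $b\in\fpq$ is then either $0$ or of magnitude at least $2^{\eta-1-\mbit}$ (e.g.\ $2^{102}$ in float32), so $2^{\eta}\oplus b\neq1$. The bounds you cite are the hypotheses of \cref{lemma:sequential_addition_main}, which realizes the map by a sequential addition: constant neurons feeding the next affine layer, which contribute only zeros to the backward pass. That lemma relies on \cref{condition:final_layer_refined}, so you, like the paper, implicitly need more than \cref{condition:gradient_two_exponent}.

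Second, the forward zero is not an obstacle. The last affine map with bias $-(w\otimes s)$ outputs exactly $0$, so the two-copy cancellation is unnecessary.

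Third, when $\expo{\sigma'(\delta_1)}<0$, no preimage $u_k$ exists for $|y^*_k|>\Omega\cdot2^{\expo{\sigma'(\delta_1)}}$, because the last backward step multiplies a finite float by $p$. This is really a defect of the statement, which is only applied to targets bounded by $2^{\emax+\expo{\sigma'(\delta_1)}}$, but your reduction should record the bound.
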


\begin{lemma}\label[lemma]{lemma:first_layer2}
Suppose $\sigma$ satisfies \cref{condition:first_layer_new} and  $\domain\subset[-\zeta,\zeta]_{\fpq}^d$ is $\sigma$-distinguishable with range $[-2^{\emax}, 2^{\emax}]$.
Then, for any $y^*\in\fpq^{1\times d}$ and $z\in \domain$, there exist $n\in \bbN$, $y\in[-\Omega\times 2^{\expo{\sigma'(\delta_1)}},\Omega\times2^{\expo{\sigma'(\delta_1)}}]_{\fpq}^{1\times n}$, and a two-layer $\sigma$ network \smash{$f:\efpq^d\to\efpq^n$} such that for any $x\in[-\zeta,\zeta]_{\fpq}^d$,
$$f(x)=\delta_1\times\boldsymbol{1}_n~~\text{and}~~\mcD_{f,z}(y)=y^*.$$
\end{lemma}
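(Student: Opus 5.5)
The plan is to build $f$ as the composition of two pieces, $f=\pi_2\circ\pi_1$. The first-layer piece $\pi_1$ forces every coordinate of the output to be $\delta_1$ on the whole cube $[-\zeta,\zeta]_\fpq^d$. The second piece $\pi_2$ is a single affine layer followed by $\sigma$, and its job is to make the backward pass through $\pi_1$ at $z$ produce $y^*$. Since $\pi_1$ is constant in the forward direction, the whole design problem lives in the backward direction.

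\textbf{Coordinate gadget.} I take $n=d$ (or a small multiple of $d$ if one coordinate needs several summands), and work coordinatewise. In the case of the first bullet of \cref{condition:first_layer_new}, fix the even float $x_0$ given there, so that $|\sigma'(x_0)|\ge 2^{-\mbit}$.

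\begin{enumerate}
\item The first affine map sends input coordinate $x_k$ to $u_k=x_k\oplus x_0$ (or $\ominus$, according to sign). Because $\zeta\oplus|x_0|\in\fpq$, no overflow occurs, and the activations $\sigma(u_k)$ stay in $[-2^{\emax},2^{\emax}]$.
\item A bias cannot make $\sigma(u_k)$ constant in the forward direction. So in the second layer I use the weight-zero trick: map $\sigma(u_k)$ by $0\otimes\sigma(u_k)\oplus\delta_1$. The forward value is then $\delta_1$ for every input, which gives $f(x)=\delta_1\boldsymbol{1}_n$ on the cube.
\item With a zero weight, however, the backward gradient would also be $0$. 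I therefore split each coordinate into two units. One unit has weight $w$ and the other has weight $-w$, and they feed the same output coordinate with biases that cancel. Rounding in the left-associative sum $w\otimes\sigma(u)\oplus(-w)\otimes\sigma(u)\oplus\delta_1$ gives exactly $\delta_1$, because the first two terms cancel exactly.
\item In the backward pass, the output gradient components $y_j,y_{j'}$ on the two units are independent. The gradient reaching $x_k$ is then
$$\bigl(y_j\otimes w\oplus y_{j'}\otimes(-w)\bigr)\otimes\sigma'(u_k)\otimes 1 .$$
\end{enumerate}

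\textbf{Choosing the backward values.} The aim is to pick $y_j,y_{j'},w$ so that this expression equals $y^*_k$. Taking $y_{j'}=0$ reduces the requirement to $y_j\otimes w\otimes\sigma'(x_0)=y^*_k$, evaluated at $z$. The gradient at $z$ uses $\sigma'(u_k)$ with $u_k=z_k\oplus x_0$, which depends on $z$; this dependence is harmless, because $y$ is allowed to depend on $z$.

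I would then write $y^*_k$ as a left-associated sum of several terms $t\otimes w\otimes s$ with $s=\sigma'(z_k\oplus x_0)$ and $|s|\ge2^{-\mbit}$, spending one pair of units per term. This mirrors the decomposition in \cref{lemma:final_layer}. Each term must fit within the bound $|y_j|\le\Omega 2^{\expo{\sigma'(\delta_1)}}$, and each must be exact under rounding. I would choose $w$ a power of two and take $t$ so that $t\otimes w$ is a float whose product with $s$ rounds correctly. Exactness is obtained by splitting $y^*_k$ into its significand bits, so that each piece is reachable from $s$ as an exact product.

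\textbf{Second bullet.} If the second bullet of \cref{condition:first_layer_new} holds instead, I use $x_0=0$ with $u_k=x_k$. Now $\sigma'(z_k)$ may be small for some $z_k$, so I rescale the input by a tiny power of two so that $u_k$ lands in a neighbourhood of $0$ where $\sigma'(u_k)=\sigma'(0)$. Distinguishability is not needed for this step.

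\textbf{Main obstacle.} The hard step is guaranteeing that an arbitrary float $y^*_k$ (including subnormals and numbers near $\Omega$) is reached exactly as a rounded sum of products with the fixed, possibly non-power-of-two factor $s$, while staying within the allowed range of $y$. My first attempt is the following: since $|s|\ge2^{-\mbit}$, write $y^*_k=\bigoplus_i (a_i\otimes s)$, choosing each $a_i$ so that $a_i\otimes s$ equals one dyadic piece of $y^*_k$. Each piece carries at most a few significant bits, so exactness can be arranged by choosing $a_i$ as $(\text{piece})/s$ rounded, and then correcting any residual with further terms.
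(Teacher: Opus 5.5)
Your overall layout matches the paper: coordinatewise blocks, a first layer aimed at a point where $|\sigma'|\ge 2^{-\mbit}$, and several summands synthesizing $y^*_k$. Two steps fail, however.

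\textbf{Forward constancy.} The $\pm w$ pairing cannot give a constant output and a nonzero gradient at the same time. Suppose $w\otimes\sigma(u_k)$ and $(-w)\otimes\sigma(u_k)$ enter the \emph{same} output coordinate, as the cancellation $w\otimes\sigma(u)\oplus(-w)\otimes\sigma(u)\oplus\delta_1=\delta_1$ requires. Then they come from two copies of the unit $\sigma(u_k)$ and share one upstream gradient $y_o$. The hidden gradients are $y_o\otimes w$ and $y_o\otimes(-w)=-(y_o\otimes w)$, and after $\otimes\sigma'(u_k)\otimes 1$ they add to exactly $0$ at $x_k$. Suppose instead they enter \emph{different} outputs $j,j'$, as your backward formula with independent $y_j,y_{j'}$ assumes. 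Then nothing cancels in the forward pass, and with $y_{j'}=0$ output $j$ is $w\otimes\sigma(u_k)\oplus b_j$, which in general varies with $x$.

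The missing idea is that, contrary to your remark that a bias cannot make $\sigma(u_k)$ constant, rounding absorption can. The paper feeds each hidden value with weight $1$ into a sequential addition $s$ (\cref{def:sequential_addition}), whose constants $w\otimes\sigma(c)$ come from extra hidden neurons with zero input weights. This $s$ maps every float in $[-2^{\emax},2^{\emax}]$ to $\delta_1$. The forward output is therefore $\delta_1$ on the whole cube, while AD carries the upstream gradient through these additions with factor $1$. The requirement in \cref{condition:first_layer_new} that $\sigma$ map the relevant pre-activation range into $[-2^{\emax},2^{\emax}]$ exists precisely so that this collapse applies. The existence of $s$ rests on the sequential-addition machinery of \cref{condition:final_layer_refined}, which holds wherever the lemma is used.

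\textbf{Lower bound on $\sigma'$.} \cref{condition:first_layer_new} gives $|\sigma'|\ge2^{-\mbit}$ only at the one float $x_0$ (or at $0$ in the second bullet). Your pre-activation at $z$ is $z_k\oplus x_0$, generally a different float, and $\sigma'$ may vanish there. In that case no choice of $y$ yields $y^*_k\neq0$. Since $z$ is fixed, the first layer must send $z_k$ \emph{exactly} onto the good point.
\begin{itemize}
\item First bullet: use a weight $\pm2^{j}$ with $j\le0$ and a bias $b$ with $b\oplus(\pm2^{j}\otimes z_k)=x_0$ and $|b|\le|x_0|$. Such a $b$ exists because $x_0$ is even (\cref{lemma:addition_representability}); this is where evenness enters, and your plan never uses it.
\item Second bullet: take $b=-z_k$, so the pre-activation at $z$ is exactly $0$. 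The bound $|x_k-z_k|\le2\zeta$ is what $\sigma([-2\zeta,2\zeta]_{\fpq})\subset[-2^{\emax},2^{\emax}]$ controls.
\end{itemize}
Your alternative of rescaling into a neighbourhood of $0$ on which $\sigma'$ is constant presumes a regularity that a general $\sigma':\efpq\to\efpq$ lacks.

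\textbf{Exact synthesis.} Once the good point is hit exactly, your residual-correction step can be made precise. Choose $\mathfrak{m}$ with $\mathfrak{m}\otimes\mathfrak{m}_{\sigma'(x_0)}=1^+$ (\cref{lem:one-plus}), together with powers of two, so that each hidden unit contributes the same float $\pm\round{1^+\times2^{\expo{y^*_k}-\mbit}}$. Repeated left-associative addition of this quantum then lands exactly on $y^*_k$ (\cref{lemma:make_proper_number}). Each unit's upstream gradient is itself a sum of several entries of $y$, each bounded by $2^{\emax+\expo{\sigma'(\delta_1)}}$, which keeps $y$ in the allowed range.
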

By \cref{lemma:gradient_control2,lemma:first_layer2}, 
there exists a $\sigma$ network $\pi_{i,2}$ that satisfies (1-2).
Then, since $\pi_{i,1}$ is constructed using \cref{lemma:value_approximator}, $\psi_{i,21}=\pi_{i,2}\funcsum\pi_{i,1}$ satisfies (1) for all $x\in\mcX$.

Lastly, we construct $\psi_{i,22}$ using the following lemma. The proof of \cref{lemma:gradient_splitting} is presented in \cref{appendix:proof_gradient_splitting}.
\begin{lemma}\label[lemma]{lemma:gradient_splitting}
    Suppose $\sigma$ satisfies \cref{condition:final_layer_refined,condition:gradient_two_exponent}.
    Let $c_1,c_2\in \fpq$ be such that $|c_1|,|c_2|,|c_1-c_2|\le 2^{\emax}$.
     Then, there exists a two-layer $\sigma$ network $f:\fpq\to\fpq$ starting with $\sigma$ such that for any $y\in  [-\Omega\times 2^{\expo{\sigma'(\delta_1)}}, \Omega\times 2^{\expo{\sigma'(\delta_1)}}]_{\fpq}$, 
    $$f: (\delta_0, \delta_1)\mapsto (c_1, c_2),~\mcD_{f, \delta_0}(y)=0,~\text{and}~\mcD_{f, \delta_1}(y)=y_1,$$
    where $y_1\in \fpq$ is a constant satisfying $0<|y_1|\le |y|$ whenever $y\neq 0$, and $y_1 = 0$ if $y =0$. 
\end{lemma}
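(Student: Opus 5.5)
We will build $f$ explicitly as $f(x)=\rho_2\circ\sigma\circ\rho_1\circ\sigma(x)$, with $\rho_1:\fpq\to\fpq^{k}$ and $\rho_2:\fpq^{k}\to\fpq$. We will separate the forward requirement $f:(\delta_0,\delta_1)\mapsto(c_1,c_2)$ from the backward requirement by splitting the hidden layer into two groups of neurons, a \emph{value block} and a \emph{gradient block}, exactly as in the $\funcsum$-constructions of \cref{sec:pflem:value_approximator,sec:pflem:gradient_approximator}. The output will be
\[
\rho_2(u)=V\otimes u_{\mathrm{val}}\oplus W\otimes u_{\mathrm{grad}},
\]
with the gradient block contributing exactly $0$ in the forward pass on $\{\delta_0,\delta_1\}$. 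The value block will have AD-gradient exactly $0$ at both $\delta_0$ and $\delta_1$. The gradient block will then be the only contributor to $\mcD_{f,\delta_j}(y)$.

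\emph{Value block.} After the first $\sigma$, the two inputs become the distinct floats $\sigma(\delta_0)\neq\sigma(\delta_1)$, both bounded by $2^{\emax-\eta}$ by \cref{condition:gradient_two_exponent}. We feed the pair $(\sigma(x),\sigma(x))$ into the one-layer construction of \cref{lemma:equivalent} and obtain neurons sending $\sigma(\delta_0)\mapsto\sigma(\gamma_0)=0$ and $\sigma(\delta_1)\mapsto\sigma(c_{j})$. Here the $c_j\in\{\gamma_1,\dots,\gamma_\nu\}$ are chosen via \cref{lemma:final_layer} so that $\bigoplus_j w_j\otimes\sigma(c_j)=c_2\ominus c_1$, or rather a value that becomes $c_2$ after adding the bias $c_1$. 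To make this exact, we will order the sum so that the bias $c_1$ is added first and then absorbed. Using the hypotheses $|c_1|,|c_2|,|c_1-c_2|\le 2^{\emax}$, we will apply \cref{lemma:final_layer} directly to the target $c_2$ at $\delta_1$, and use a separate $\gamma_0$-neuron pair mapping $\delta_0\mapsto c_1$ and $\delta_1\mapsto 0$. This gives $f=c_1$ at $\delta_0$ and $f=c_2$ at $\delta_1$ with no reliance on cancellation. Each neuron in \cref{lemma:equivalent} comes as a pair whose backward signal is $[g_1,-g_1]$. Since both copies of the input coordinate equal $\sigma(x)$ and are multiplied by the same $\sigma'(x)$, the two contributions cancel exactly in the summation, so the block's gradient is $0$. \cref{lemma:final_layer} guarantees that the incoming signals $y\otimes w_j\otimes\sigma'(\cdot)$ stay finite for $|y|\le 2^{\kappa}$. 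For the range of $y$ in the statement we will additionally scale through \cref{condition:gradient_two_exponent}.

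\emph{Gradient block.} We will use a single pair of neurons with first-layer weights $a$ and biases $b,b'$ so that $\rho_1$ maps $\sigma(\delta_0)\mapsto\delta_0$ and $\sigma(\delta_1)\mapsto\delta_1$ in both neurons. This is again available from \cref{lemma:equivalent} with $(y_1,y_2)=(\delta_0,\delta_1)$, using $|\delta_0|,|\delta_1|,|\delta_0-\delta_1|\le 2^{\emax}$. The output weights will be $(v,-v)$, so the forward contribution is $v\otimes\sigma(\delta_j)\oplus(-v)\otimes\sigma(\delta_j)=0$. To break the backward symmetry, we will give the two neurons different weights $v,-v'$ with $v\ne v'$, where $v,v'$ are chosen so that three things hold. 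First, $v\otimes\sigma(\delta_j)=v'\otimes\sigma(\delta_j)$ for $j=0,1$, so the forward contribution is still $0$. Second, $y\otimes v\otimes\sigma'(\delta_0)$ and $y\otimes v'\otimes\sigma'(\delta_0)$ round to the same float, so the backward contribution at $\delta_0$ is $0$. Third, at $\delta_1$ these two products differ by exactly one ulp-type quantity $y_1$ with $0<|y_1|\le|y|$.

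The leverage comes from \cref{condition:gradient_two_exponent}. The derivative $\sigma'(\delta_1)$ is an exact power of two in $[2^{-\mbit},1]$, so multiplying by it is exact. Meanwhile $\sigma'(\delta_0)$ lies in $[-2^{-\mbit-2}\sigma'(\delta_1),\sigma'(\delta_1)/2]$, so multiplying by it loses precision. The bound $|y|\le\Omega\cdot2^{\expo{\sigma'(\delta_1)}}$ prevents overflow when propagating through $\sigma'(\delta_1)$. Finally, $y_1=0$ whenever $y=0$ is automatic.

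The main obstacle is the choice of $v,v'$ that works \emph{uniformly} over all $y$ in the allowed range. Rounding behaviour depends on the mantissa of $y$, so a fixed pair $v\neq v'$ might cancel at $\delta_0$ for some $y$ but not others. We will first try $v=1$ and $v'=1^{+}$, in the hope that multiplication by the small (or differently-scaled) $\sigma'(\delta_0)$ always collapses $y$ and $y\otimes 1^+$ to the same value while multiplication by the power of two $\sigma'(\delta_1)$ preserves their difference. If that fails, the fallback is to pre-scale the backward signal through $\rho_2$ by a power of two so that it sits in a fixed binade, which makes the rounding analysis independent of $y$ up to its exponent.
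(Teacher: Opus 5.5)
\paragraph{The gradient block cannot work.} This holds for any choice of $v\neq v'$, so the ``main obstacle'' you identify is in fact an impossibility.

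Your two neurons have the same pre-activation ($\delta_j$ at input $\delta_j$) and the same first-layer weight $a$. The only asymmetry between them is therefore the output weights. Your forward requirement $v\otimes\sigma(\delta_j)=v'\otimes\sigma(\delta_j)$ ($j=0,1$) destroys that asymmetry for a specific admissible $y$:
\begin{itemize}
\item Since $\sigma(\delta_0)\neq\sigma(\delta_1)$, one of them, say $y^\star$, is nonzero.
\item By \cref{condition:gradient_two_exponent}, $|y^\star|\le 2^{\emax-\eta}\le 2^{\emax-1+\expo{\sigma'(\delta_1)}}$, because $\eta>-\expo{\sigma'(\delta_1)}$. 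So $y^\star$ lies in the range of the statement.
\item For $y=y^\star$, the forward requirement reads $y\otimes v=y\otimes v'$. At $\delta_1$ the two backward signals are then $\pm\bigl((y\otimes v)\otimes\sigma'(\delta_1)\bigr)\otimes a$, and they cancel exactly.
\item Together with the exact zero from your value block, this gives $\mcD_{f,\delta_1}(y^\star)=0$, contradicting $y_1\neq 0$.
\end{itemize}
Rescaling the output weights by powers of two (your fallback) does not escape this. The trial $v=1$, $v'=1^+$ already fails at $y=\fmin$, since $\fmin\otimes 1^+=\fmin$.

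The heuristic that multiplying by $\sigma'(\delta_0)$ ``loses precision'' is also not what \cref{condition:gradient_two_exponent} provides. The condition allows $\sigma'(\delta_0)=\sigma'(\delta_1)/2$, a power of two. In that case both multiplications are exact away from the subnormal range. With same-state neurons, a discrepancy between $y\otimes v$ and $y\otimes v'$ then survives at $\delta_0$ exactly when it survives at $\delta_1$.

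Separately, this block's pre-activation map must come from an affine map with sequential additions, $x\mapsto s(a\otimes\sigma(x))$, not from \cref{lemma:equivalent}. That lemma's gradient with respect to its two inputs is $[g_1,-g_1]$, so on two copies of $\sigma(x)$ it has zero net gradient.

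\paragraph{The missing idea.} Make the two neurons coincide at $\delta_0$ but \emph{differ} at $\delta_1$, so the asymmetry comes from the derivative values rather than from the output weights. The paper's first layer produces two pre-activations, $\phi_1:(\delta_0,\delta_1)\mapsto(\delta_0,\delta_1)$ and $\phi_2:(\delta_0,\delta_1)\mapsto(\delta_0,\delta_0)$. The second layer outputs $s\bigl(2^{-\expo{\sigma'(\delta_1)}}\otimes\sigma(\phi_1(x))\ominus 2^{-\expo{\sigma'(\delta_1)}}\otimes\sigma(\phi_2(x))\bigr)$.
\begin{itemize}
\item At $\delta_0$ both neurons sit at $\delta_0$, so their backward signals are exact negatives and cancel.
\item At $\delta_1$ they carry the factors $\sigma'(\delta_1)$ and $\sigma'(\delta_0)$. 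The inequality $-2^{-\mbit-1}\sigma'(\delta_1)\le 2\sigma'(\delta_0)\le\sigma'(\delta_1)$ is what keeps their difference nonzero and of size at most $|y|$.
\end{itemize}
This block is \emph{not} zero in the forward pass: it gives $0$ at $\delta_0$ and some $D\neq 0$ at $\delta_1$, since $\sigma(\delta_0)\neq\sigma(\delta_1)$. That is harmless, because a sequential addition $s$ with $s(0)=c_1$ and $s(D)=c_2$ fixes the values without touching the gradient.

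This also makes your value block unnecessary, which matters because it has its own problem. \cref{lemma:final_layer} guarantees finite backward signals only for $|y|\le 2^{\kappa}$, whereas here $|y|$ can be as large as $\Omega\cdot 2^{\expo{\sigma'(\delta_1)}}$. An overflowing signal turns the $[g_1,-g_1]$ cancellation into $\infty\oplus(-\infty)$ or $\infty\otimes 0$, which gives NaN rather than $0$.
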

By \cref{lemma:gradient_splitting}, there exists a $\sigma$ network $\psi_{i,22}$ starting with the activation function such that for any $j\in\{0,1\}$ and $y\in[-\Omega\times 2^{\expo{\sigma'(\delta_1)}}, \Omega\times 2^{\expo{\sigma'(\delta_1)}}]_{\fpq}$,
$$\psi_{i,22}(\delta_j)=0~~\text{and}~~\mcD_{\psi_{i,22},\delta_j}(y)=y_1\times\indcc{j=1}.$$
Hence, together with our $\psi_{i,21}$, (2) is satisfied for all $x\in\mcX$, i.e., $\lambda_i=\psi_{i,22}\circ\psi_{i,21}$ satisfies \cref{eq:pflem:gradient_approx} on $\mcX$.
Furthermore, from our construction of $\lambda_i$, $f=\bigfuncsum_{i=1}^m\lambda_i$ satisfies 
$$f(x)=0,~\mcD_{f,x}(h_1^*(x))=g^*(x),~\text{and}~(f\funcsum \tilde f)(x)=\tilde f(x)$$
for all $x\in\mcX$ and $\sigma$ networks $\tilde f$ with the same depth.
Specifically, we note that $\mcD_{f,x}(h_1^*(x))=g^*(x)$ follows from our construction of $\psi_{i,22}$ where $\mcD_{\psi_{i,22},\psi_{i,21}(x)}=0$ if $x\ne z_i$; hence, $\mcD_{f,x}(h_1^*(x))$ can be represented by the sum of zeros and $\mcD_{\lambda_k,x}(h_1^*(x))=g^*(x)$ for $x = z_k$.
We also note that $f$ can be constructed by using $\ge\tau$ layers; see our constructions of $\pi_{i,1}$, $\pi_{i,2}$, $\psi_{i,22}$, and \cref{lemma:value_approximator,lemma:gradient_control2,lemma:first_layer2,lemma:gradient_splitting}.
This completes the proof of \cref{lemma:gradient_approximator}.

\section{Conclusion}\label{sec:conclusion}
In this work, we show that floating-point neural networks can represent arbitrary function values and gradients computed via automatic differentiation.
We further extend this result to the case where the target gradient is a function of $\phi_x'$, and $\phi_x$ denotes the function composed with the network.
Our results show the discrepancy between exact and floating-point arithmetic: the target gradient should be proportional to the input gradient under exact arithmetic, but this is not the case under floating-point arithmetic.
We believe our results help in understanding the properties of real implementations of neural networks.

\newpage
\appendix
\section{Additional Preliminaries}\label[Appendix]{appendix:prelimnaries}
\subsection{Additional Notations}\label[Appendix]{appendix:additional_notations}
If $x \in \fpq$ and $|x|\ge2^{\emin}$, we define $\sn{x}\in\{-1,1\}$, $\mant{x}\in\{1+i/2^\mbit\,|\,i\in[0,2^\mbit-1]_{\bbZ}\}$, and $\expo{x}\in[\emin,\emax]_{\bbZ}$ be the unique numbers satisfying
$x = \sn{x} \times \mant{x} \times 2^{\expo{x}}$. %
For $|x|<2^\emin$, we define $\sn{x}\in\{-1,1\}$ and $\mant{x}\in\{i/2^\mbit\,|\,i\in[0,2^\mbit-1]_{\bbZ}\}$ be the unique numbers satisfying
$x = \sn{x} \times \mant{x} \times 2^{\emin}$; in this case, $\expo{x}=\emin$.
We also define $\mant{x,0},\dots,\mant{x,\mbit} \in \{0,1\}$ as the unique values satisfying
$\mant{x} = \sum_{i=0}^{\mbit} \mant{x,i} 2^{-i}$. 

\subsection{Floating-Point Operations on Non-Finite Floats}\label[appendix]{sec:extended_float}
For $x=\nan$ and $y\in\efpq$, we define $x\oplus y=y\oplus x=x\otimes y=y\otimes x=\nan$.
For $x=\pm\infty$ and $y\in\fpq$, we define $x\oplus y=y\oplus x=\pm\infty$. We also define $\infty\oplus(-\infty)=(-\infty)\oplus\infty=\nan$, $\infty\oplus\infty=\infty$, $(-\infty)\oplus(-\infty)=-\infty$.
We define $x\ominus y=x\oplus(-y)$ where $-\nan=\nan$ and $-(\pm\infty)=\mp\infty$.
For $x=\infty$, $0<y\in\fpq\cup\{\infty\}$, and $\sn{x},\sn{y}\in\{-1,1\}$, we define $(\sn{xx})\otimes(\sn{yy})=(\sn{yy})\otimes(\sn{xx})=(\sn{x}\sn{y})\infty$.
For $x=\pm\infty$ and $y=0$, we define $x\otimes y=y\otimes x=\nan$.

For $\hat\sigma:\bbR\to\bbR$, we define
$\round{\hat\sigma}(\nan)=\nan$ and the values of $\round{\hat\sigma}$ at infinity is the rounded value of the right limit of $\hat\sigma$ if it exists; otherwise, it is $\nan$. $\round{\hat\sigma}(-\infty)$ is also defined in a similar way.

\subsection{Additional Definitions and Technical Lemmas}

We adopt the definitions of \textit{sequential addition} and \textit{transferability} from \citet{hwang2025floating}, with a slight modification.
\begin{definition}[Sequential addition \cite{hwang2025floating}]\label[definition]{def:sequential_addition}
Let $\sigma:\efpq\to\efpq$ and $\mcS_\sigma\defeq \{w\otimes \sigma(c): w,c\in \fpq~\text{with}~w\otimes \sigma(c)\in\fpq\}.$
We say that a function $f:\efpq\to\efpq$ is a ``sequential addition using $\sigma$'' if there exist $n\in \bbN$ and 
$z_1,\dots,z_n\in\mcS_\sigma$
such that for each $x\in\fpq$,
\begin{equation*}
        f(x)= x \oplus z_1\oplus\dots\oplus z_n.\label{eq:sequential_addition}
\end{equation*}
 We often drop $\sigma$ and use $\mcS$ to denote $\mcS_\sigma$ when it is clear from the context.
\end{definition}
Unlike \citet{hwang2025floating}, we omit the condition $f(\fpq)\subset \fpq$.
Sequential additions are closed under composition with neural networks. 
In particular, if $s$ is a sequential addition using $\sigma$ and 
$f:\fpq^d \to \efpq$ is a $\sigma$-neural network, then $s \circ f$ is again a $\sigma$-neural network.
Note that composing with a sequential addition does not alter the output gradient.

\begin{definition}[Transferability \cite{hwang2025floating}]\label[definition]{def:transferability}
    Let $n\in\bbN$ and $(x_1,\dots,x_n)$, $(y_1,\dots, y_n)\in \fpq^n$. We say ``$(x_1,\dots, x_n)$ is transferable to $(y_1,\dots,y_n)$ using $\sigma$'' or write ``$(x_1, \dots, x_n) \tran (y_1,\dots, y_n)$'' if there exists a sequential addition $f:\fpq\rightarrow\fpq$ using $\sigma$ such that           $f(x_i) = y_i$
    for all $i\in [n]$.
\end{definition}

The following result is a powerful tool for characterizing the representational power of sequential additions.
\begin{lemma}[Lemma 4.6 of \citet{hwang2025floating}]\label[lemma]{lemma:sequential_addition_main}
Let $\sigma:\efpq\to\efpq$ and suppose that $\sigma$ satisfies \cref{condition:final_layer_refined}.
Then, for any $y\in [-2^{\emax},2^{\emax})_{\fpq}$ and $x_1, x_2\in [-2^{\emax},2^{\emax}]_{\fpq}$ such that $x_2 - x_1\in (0,2^{\emax}]_{\fpq}$,
it holds that
\begin{equation*}
    (-2^{\emax},y,y^+, 2^{\emax})\tran (x_1, x_1, x_2, x_2).
\end{equation*}
\end{lemma}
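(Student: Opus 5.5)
The plan is to build the sequential addition as a composition of simpler sequential additions. This is legitimate because composing sequential additions gives a sequential addition: just concatenate the lists $z_1,\dots,z_n$. We track the four floats $(-2^{\emax},y,y^+,2^{\emax})$ through each stage, in three phases: \emph{merge}, \emph{separate}, \emph{translate}.

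The first ingredient is a description of which addends are available in $\mcS_\sigma$. Take $w$ a power of two times a suitable mantissa and apply it to $\sigma(\gamma_i)$. The second bullet of \cref{condition:final_layer_refined} says the exponent ranges $[\expo{\sigma(\gamma_i)}+\emin,\ \emax-\kappa+\min(\expo{\sigma(\gamma_i)},k_i)]_{\bbZ}$ cover $[\emin-\mbit,\emax-\mbit]_{\bbZ}$. From this I expect $\mcS_\sigma$ to contain every float of magnitude at most about $2^{\emax-\mbit}$, and then, by splitting an addend into a few pieces in the spirit of \cref{lemma:final_layer}, effectively every float in $[-2^{\emax},2^{\emax}]_{\fpq}$. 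Some care is needed because such splittings are themselves rounded. I would first record this as an auxiliary claim: for any $c\in[-2^{\emax},2^{\emax}]_{\fpq}$, the map $x\mapsto x\oplus c$ can be realized, on the points that matter, as a short sequential addition.

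\emph{Merge.} I first add $-y$ (or a nearby float), which is exact on $y$ and on $y^+$, sending them to $0$ and to $\mathrm{ulp}(y)$. Then I add a large constant $B$ that places these two images inside a high binade, where one rounding step collapses them to a single float, using round-half-to-even to break the tie. The two extreme points $\pm 2^{\emax}$ do not collapse: they move to well-separated floats, or saturate at values that stay finite because of the $2^{\emax}$ bounds in the statement. Repeating this with constants chosen on the appropriate sides of rounding boundaries, I aim for a configuration where $-2^{\emax}$ and $y$ have a common image $u$ and $y^+$ and $2^{\emax}$ have a common image $v>u$. The asymmetry needed is exactly that $y<y^+$ are adjacent floats, so a rounding midpoint lies between them.

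\emph{Separate and translate.} This is the main obstacle. Floating-point addition preserves order, but it can change the gap $v-u$ only through rounding. Each rounding moves a value by at most half an ulp, at most $2^{\emax-\mbit-1}$, so reaching a target gap $x_2-x_1$ as large as $2^{\emax}$ requires iteration. My first attempt: repeatedly add constants placing $u$ just below and $v$ just above a rounding midpoint in the top binades, so each step increases (or, if needed, decreases) the gap by a controlled multiple of the local ulp. The required number of steps is finite, roughly $2^{\mbit}$, and each addend lies in $\mcS_\sigma$ by the auxiliary claim. The delicate part is showing that every target gap in $(0,2^{\emax}]_{\fpq}$ is hit exactly while keeping both values inside $[-2^{\emax},2^{\emax}]$ to avoid overflow. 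For small targets I would instead shrink the gap in low binades, where ulps are fine. Once $v-u=x_2-x_1$, a final exact translation by $x_1-u$ (again realized through the auxiliary claim) gives $(x_1,x_1,x_2,x_2)$, and this completes the sequential addition required by \cref{def:transferability}.
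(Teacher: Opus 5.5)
Your plan has a genuine gap, and its first concrete step defeats itself. In the merge phase you add $B$ so that the images $0$ and $\mathrm{ulp}(y)$ of $y$ and $y^+$ collapse to a single float. A sequential addition is a composition of deterministic maps $t\mapsto t\oplus z_i$. So once $y$ and $y^+$ share an image, they share it at every later stage, and $y\mapsto x_1$, $y^+\mapsto x_2$ with $x_2-x_1>0$ becomes impossible. This contradicts the configuration $u<v$ you announce at the end of the same paragraph.

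What is actually needed is the opposite. You must keep a rounding midpoint strictly between the images of $y$ and $y^+$, while collapsing $-2^{\emax}$ onto the image of $y$ and $2^{\emax}$ onto that of $y^+$. The gap between $-2^{\emax}$ and $y$ can be almost $2^{\emax+1}$, and one addition shrinks any gap by at most $2^{\emax-\mbit}$. So that merge is itself a long controlled iteration, not one rounding step.

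Your opening move of adding $-y$ also fails as written. For $y=(2^{\emax})^-$ you get $-2^{\emax}-y=-\Omega-2^{\emax-\mbit-1}$, which rounds to $-\infty$, and no finite addend brings it back. So the extreme points do not ``stay finite because of the $2^{\emax}$ bounds''.

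The auxiliary claim behind all your translations is also unfounded. \cref{condition:final_layer_refined} constrains only the exponents of the values $\sigma(\gamma_i)$, not their mantissas. For fixed $s$, the set $\{w\otimes s : w\in\fpq\}$ generally skips floats. For example, take $\mbit=2$ and let every nonzero value of $\sigma$ have mantissa $3/2$ (this is compatible with the condition, e.g.\ with $\sigma'\equiv 0$). Then no normal float with mantissa $7/4$ lies in $\mcS_\sigma$. Your fallback, realizing $x\mapsto x\oplus c$ ``on the points that matter'' through several rounded pieces, is itself a transferability statement of the kind being proved, so it is circular.

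What the condition actually supplies is seen in how \cref{lem:endbit_control} and \cref{lemma:make_proper_number} are used. For each $e\in[\emin-\mbit,\emax-\mbit]_{\bbZ}$ there are addends $z\in\mcS_\sigma$ of either sign with $\frac12\cdot 2^{e}<|z|<\frac32\cdot 2^{e}$. Such a $z$ moves a point whose ulp is $2^{e}$ by exactly one ulp, leaves points in much higher binades fixed, and moves lower points by about $z$. Gaps between the tracked points must be created, shrunk and tuned by exploiting this differential movement across binades, using ties for merging.

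That bookkeeping is exactly your separate phase. You leave it at a ``first attempt'', with no argument that every gap in $(0,2^{\emax}]_{\fpq}$ is hit exactly without overflow. It is the whole content of Lemma 4.6 of \citet{hwang2025floating}. The paper does not redo that argument: it imports the lemma and only observes that the present \cref{condition:final_layer_refined} implies the Condition~1 used there, which is the economical route here.
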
 
Note that Condition 1 in \citet{hwang2025floating} is replaced here by \cref{condition:final_layer_refined}. Since any activation function satisfying \cref{condition:final_layer_refined} also satisfies Condition 1 of \citet{hwang2025floating}, this replacement is valid.
\begin{corollary}\label[corollary]{cor:exist_seqadd}
Let $\sigma:\efpq\to\efpq$ and suppose that $\sigma$ satisfies \cref{condition:final_layer_refined}.
    Then, for any $x,y\in \fpq$,
    \begin{equation*}
        (x)\tran (y).
    \end{equation*}
\end{corollary}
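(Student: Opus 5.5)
The corollary follows from \cref{lemma:sequential_addition_main}; the only real work is to bring arbitrary $x,y\in\fpq$ into the range where the lemma applies. Transferability composes: if $(x)\tran(u)$ via a sequential addition $s_1$ and $(u)\tran(y)$ via $s_2$, then $s_2\circ s_1$ is again a sequential addition, obtained by concatenating the summands $z_i\in\mcS_\sigma$. Hence $(x)\tran(y)$. So it suffices to move $x$ to some convenient intermediate value, and then move that value to $y$.

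The lemma directly handles targets in $[-2^{\emax},2^{\emax}]_{\fpq}$ and sources in $[-2^{\emax},2^{\emax}]_{\fpq}$. For $x$ in that range, choose the lemma's $y$ so that $x$ is one of $-2^{\emax}, y, y^+, 2^{\emax}$. If $x\in[-2^{\emax},2^{\emax})_{\fpq}$, take $y=x$. If $x=2^{\emax}$, it is the last point. For a target $t\in[-2^{\emax},2^{\emax}]_{\fpq}$, set $x_1=t$ and $x_2=t+2^{\emax}$ when $t\le 0$, or $x_1=t-2^{\emax}$ and $x_2=t$ when $t>0$. In both cases $x_2-x_1=2^{\emax}\in(0,2^{\emax}]_{\fpq}$ and both points lie in $[-2^{\emax},2^{\emax}]_{\fpq}$. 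The resulting sequential addition sends $x$ to $t$. This settles the case where both $x$ and $y$ lie in $[-2^{\emax},2^{\emax}]$.

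The remaining case, which I expect to be the main obstacle, is when $|x|>2^{\emax}$ or $|y|>2^{\emax}$. For the source, I would first add a single element of $\mcS_\sigma$ to bring $x$ into the range. By \cref{condition:final_layer_refined}, some $\sigma(\gamma_i)\neq 0$, and by the covering of exponents in its second bullet we can pick $w$ with $w\otimes\sigma(\gamma_i)=-2^{\emax}\sn{x}$ exactly; this value is a power of two, so a suitable $w$ exists. Then $x\oplus(-2^{\emax}\sn{x})$ has magnitude at most $\Omega-2^{\emax}<2^{\emax}$, is computed exactly, and lies in range.

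For the target, reverse the idea. First transfer to $y'=y-2^{\emax}\sn{y}$, which is a float in $[-2^{\emax},2^{\emax}]$ because the subtraction is exact when $|y|\ge 2^{\emax}$. Then add $2^{\emax}\sn{y}\in\mcS_\sigma$, which lands exactly on $y$ with no overflow. Chaining these steps by composition gives $(x)\tran(y)$ for all $x,y\in\fpq$. The detail to check is that $\pm2^{\emax}$ is in fact representable as $w\otimes\sigma(\gamma_i)$. This should follow from the exponent range guaranteed in \cref{condition:final_layer_refined}, since that range reaches $\emax-\mbit$ and mantissa scaling then lets the product attain $2^{\emax}$. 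If it does not, I would instead use two additions of $2^{\emax-1}$.
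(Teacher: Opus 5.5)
Your composition principle is correct, but the out-of-range step has a genuine gap. It requires $\pm 2^{\emax}$ (or, in your fallback, $\pm 2^{\emax-1}$) to lie in $\mcS_\sigma$, and \cref{condition:final_layer_refined} does not give this. The top end of the exponent covering only forces some $\sigma(\gamma_i)$ with $\expo{\sigma(\gamma_i)}\ge\kappa-\mbit$, which may equal $-\mbit$. Since $|w|\le\Omega$, the products $w\otimes\sigma(\gamma_i)$ then only reach about $2^{\emax+1-\mbit}$, and no mantissa adjustment bridges a factor of $2^{\mbit}$.

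Concretely, take $\sigma(t)=2^{-\mbit}$ for $t>0$, $\sigma(t)=0$ for $t\le 0$, and $\sigma'\equiv 0$. Then \cref{condition:final_layer_refined} holds with $\nu=1$, $\gamma_0=0$, $\gamma_1=1$, $k_1=0$, $\kappa=0$, because the single interval is exactly $[\emin-\mbit,\emax-\mbit]$. Yet every element of $\mcS_\sigma$ has magnitude at most $\Omega\cdot 2^{-\mbit}<2^{\emax+1-\mbit}\le 2^{\emax-1}$, so neither your jump nor the fallback exists.

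The justification ``it is a power of two, so a suitable $w$ exists'' is also unsound on its own. For $\mbit=7$ and $s$ with significand $244/128$, the two candidate significands $134/128$ and $135/128$ for $w$ give $1.9956\ldots$ and $2.0105\ldots$. These round to $2-2^{-7}$ and $2+2^{-6}$, so $w\otimes s$ is never a power of two (compare \cref{lem:one-plus}, which aims at $1^+$ rather than $1$).

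The missing idea is to cross the top binade in many small steps. The covering gives an $i$ with $|\sigma(\gamma_i)|\ge 2^{-\mbit}$. A signed power of two $w$ then yields $u=w\otimes\sigma(\gamma_i)$ with $\tfrac12\cdot 2^{\emax-\mbit}<u<\tfrac32\cdot 2^{\emax-\mbit}$, which is about one ulp of the top binade. Now $z\oplus u=z^+$ for every $z\in[2^{\emax},\Omega)_{\fpq}$, so repeatedly adding $u$ carries $2^{\emax}$ up to any $y>2^{\emax}$. Repeatedly subtracting $u$ walks any $x>2^{\emax}$ down about one ulp per step until it enters $[-2^{\emax},2^{\emax}]$, and negative values are symmetric. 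This is essentially the paper's proof.

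Your in-range step also needs repair. With the lemma's $y$ equal to $x$, the point $x$ is sent to $x_1$. So for $t>0$ your choice sends it to $t-2^{\emax}$ rather than $t$, and $x=2^{\emax}$ goes to $t+2^{\emax}$ when $t\le 0$. Moreover, $t\pm 2^{\emax}$ is generally not a float (take $t=-\fmin$), which violates the hypothesis $x_1,x_2\in[-2^{\emax},2^{\emax}]_{\fpq}$. Instead use $(x_1,x_2)=(t,t^+)$ or $(t^-,t)$, since a one-ulp gap is always a float, and place $x$ in the matching slot. In the corner cases $(x,t)=(\mp 2^{\emax},\pm 2^{\emax})$, pass through $0$.
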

\begin{proof}
    By \cref{lemma:sequential_addition_main}, we get the desired result for $|x|,|y| \le 2^{\emax}$. Therefore, it is sufficient to prove that
    \begin{equation*}
        (x)\tran \lrp{2^{\emax}}, \text{ and } \lrp{2^{\emax}}\tran (y),
    \end{equation*}
    for $|x| , |y| > 2^{\emax}$.
    
    First suppose $x,y > 2^{\emax}$. 
    By the second item of \cref{condition:final_layer_refined}, we have 
    $[\emin-\mbit, \emax-\mbit]_{\bbZ}\subset \bigcup_{i=1}^{\nu} 
   [\expo{\sigma(\gamma_i)} + \emin, \emax -\kappa +\min(\expo{\sigma(\gamma_i)} ,  k_i)]_{\bbZ}.$
    Hence there exists $\gamma_i$ such that
    $ \emax-\mbit \le \emax -\kappa +\min(\expo{\sigma(\gamma_i)},k_i) $ leading to $ \min(\expo{\sigma(\gamma_i)},k_i) \ge -\mbit$. and $2^{-\mbit}\le |\sigma(\gamma_i)|$. Therefore, there exists $w\in \fpq $ such that
    \begin{equation*}
        \frac{1}{2} \times 2^{\emax-\mbit} <w\otimes \sigma(\gamma_i)<\frac{3}{2}\times 2^{\emax-\mbit}.
    \end{equation*}
    Then, since $z \oplus \left(( - w) \otimes \sigma(\gamma_i)\right) = z^- $ for $z> 2^{\emax}$ and $z \oplus \left(  w \otimes \sigma(\gamma_i)\right) = z^+$ for $z\ge 2^{\emax}$, there exists $n_x, n_y \in \bbN$ such that
    \begin{align*}
        x\oplus \bigoplus_{j=1}^{n_x} (-w)\otimes \sigma(\gamma_i) &= 2^{\emax}, \\ 
        2^{\emax} \oplus \bigoplus_{j=1}^{n_y} \left( w\otimes \sigma(\gamma_i) \right) &= y.
    \end{align*}
    Similar arguments apply to other cases.
    This completes the proof.
\end{proof}
\begin{definition}\label[definition]{definition:affine_transformation}
Now let $\alpha_1, \alpha_2, w\in \fpq$ such that 
\begin{equation*}
    \alpha_1\otimes w, \; \alpha_2\otimes w \in \fpq.
\end{equation*}
We define $\A{\alpha_1}{\alpha_2}{w}$ as the set of affine transformations with sequential additions 
\begin{equation*}
        \A{\alpha_1}{\alpha_2}{w}\defeq
         \big\{ x\mapsto s(w\otimes x):\fpq\to\fpq \,|\, s: (w\otimes \alpha_1) \tran (\alpha_2) \big\}.
    \end{equation*}
    
   Let $\alpha_1,\beta_2, \alpha_2,\beta_2,w\in \fpq $ such that
    \begin{equation*}
         \alpha_1\otimes w \neq \beta_1 \otimes w,
    \end{equation*}
     and 
    \begin{equation*}
        |\alpha_1\otimes w|, |\beta_1\otimes w|, |\alpha_2|, |\beta_2|, |\alpha_2 - \beta_2| \le 2^{\emax}.
    \end{equation*}
    If $w(\alpha_1-\beta_1)(\alpha_2 - \beta_2)> 0$, we define $\AP{\alpha_1,\beta_1}{\alpha_2,\beta_2}{w} :\fpq\to\fpq$ as the set of affine transformations with sequential additions
    \begin{equation*}
       \AP{\alpha_1,\beta_1}{\alpha_2,\beta_2}{w}\defeq
         \big\{ x\mapsto s(w\otimes x):\fpq\to\fpq \,|\, s: (w\otimes \alpha_1, w\otimes \beta_1) \tran (\alpha_2, \beta_2) \big\}.
    \end{equation*}
    If $w(\alpha_1-\beta_2)(\alpha_2-\beta_2)< 0$, $ \AP{\alpha_1,\beta_1}{\alpha_2,\beta_2}{w}$ is defined as 
    \begin{equation*}
         \AP{\alpha_1,\beta_1}{\alpha_2,\beta_2}{w}\defeq  \AP{\alpha,\beta}{\alpha_2,\beta_2}{-w}.
    \end{equation*}
\end{definition}
Note that, by \cref{lemma:sequential_addition_main}, such a sequential addition exists since $|\sigma(\alpha_1)\otimes w|, |\sigma(\beta_1)\otimes w|, |\alpha_2|, |\beta_2|, |\alpha_2 - \beta_2| \le 2^{\emax}$.

One of our key ideas is to utilize the following module to control the input gradient while preserving the function value.
\begin{definition}
Let $\alpha_1,\alpha_2,w\in \fpq$ or $\alpha_1,\beta_1, \alpha_2,\beta_2, w\in \fpq $ satisfy the same conditions as in \cref{definition:affine_transformation}.
    We define $\AP{\alpha_1}{\alpha_2}{w}$ and $\ASP{\alpha_1,\beta_1}{\alpha_2,\beta_2}{w}$ as follows:
    \begin{equation*}
       \AS{\alpha_1}{\alpha_2}{w}\defeq \lrs{f \circ\sigma:\fpq\to\fpq \mid f\in  \A{\sigma(\alpha_1)}{\alpha_2}{w}},
    \end{equation*}
    and 
    \begin{equation*}
       \ASP{\alpha_1,\beta_1}{\alpha_2,\beta_2}{w} \defeq \lrs{f \circ\sigma:\fpq\to\fpq \mid f\in  \AP{\sigma(\alpha_1),\sigma(\beta_1)}{\alpha_2,\beta_2}{w}}.
    \end{equation*}
\end{definition}
Note that for any $f\in \ASP{\alpha_1,\beta_1}{\alpha_2,\beta_2}{w}$, we have
\begin{equation*}
    f(x) = \begin{cases}
     \alpha_2 &\text{ if } x=\alpha_1,
    \\ \beta_2 &\text{ if } x=\beta_1.
    \end{cases}
\end{equation*}
Moreover,
\begin{equation*}\label{eq:AS_gradient_calculation_appendix}
    \mcD_{f, x}(g) = \begin{cases}
            g\otimes w \otimes {\sigma'}(\alpha_1) &\text{ if } x = \alpha_1,\\ 
            g\otimes w \otimes {\sigma'}(\beta_1)  &\text{ if } x = \beta_1.
        \end{cases}
\end{equation*}

\begin{lemma}\label[lemma]{lem:exist_afftrans}
Let $\sigma:\efpq\to\efpq$ and suppose that $\sigma$ satisfies \cref{condition:final_layer_refined}.
    For $\alpha_1, \alpha_2 , w \in \fpq$ with $w \otimes \alpha_1 , w \otimes \alpha_2 \in \fpq$, then   $\A{\alpha_1}{\alpha_2}{w} \neq \emptyset$. \textit{i.e.} such sequential addition always exists.  Moreover, if $w \otimes \sigma(\alpha_1) , w \otimes \sigma(\alpha_2) \in \fpq$, then   $\AS{\alpha_1}{\alpha_2}{w} \neq \emptyset$.
\end{lemma}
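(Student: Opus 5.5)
The lemma is essentially a direct consequence of \cref{cor:exist_seqadd}, so the plan is to unpack the definitions and invoke that corollary.

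For the first claim, recall that $\A{\alpha_1}{\alpha_2}{w}$ consists of maps $x\mapsto s(w\otimes x)$, where $s$ is a sequential addition using $\sigma$ with $s(w\otimes\alpha_1)=\alpha_2$. By hypothesis, $w\otimes\alpha_1\in\fpq$. Applying \cref{cor:exist_seqadd} with $x=w\otimes\alpha_1$ and $y=\alpha_2$, both finite floats, gives $(w\otimes\alpha_1)\tran(\alpha_2)$. So a sequential addition $s$ with $s(w\otimes\alpha_1)=\alpha_2$ exists, and $x\mapsto s(w\otimes x)$ lies in $\A{\alpha_1}{\alpha_2}{w}$. The extra hypothesis $w\otimes\alpha_2\in\fpq$ is not needed here; it is harmless.

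For the second claim, $\AS{\alpha_1}{\alpha_2}{w}$ is the set of maps $f\circ\sigma$ with $f\in\A{\sigma(\alpha_1)}{\alpha_2}{w}$. It is therefore enough to show $\A{\sigma(\alpha_1)}{\alpha_2}{w}\neq\emptyset$. For this we apply the first claim with $\alpha_1$ replaced by $\sigma(\alpha_1)$. The requirement that $\sigma(\alpha_1)$ be finite and that $w\otimes\sigma(\alpha_1)\in\fpq$ holds by the hypothesis $w\otimes\sigma(\alpha_1)\in\fpq$, since a finite product $w\otimes\sigma(\alpha_1)$ forces $\sigma(\alpha_1)$ to be finite. \cref{cor:exist_seqadd} then gives a sequential addition $s$ with $s(w\otimes\sigma(\alpha_1))=\alpha_2$. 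Then $x\mapsto s(w\otimes\sigma(x))$ belongs to $\AS{\alpha_1}{\alpha_2}{w}$.

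The only point that needs care is reading the statement so that its hypotheses are exactly the finiteness conditions \cref{cor:exist_seqadd} needs. That corollary requires only that the starting and target values lie in $\fpq$, so there is no magnitude restriction like the $2^{\emax}$ bound in \cref{lemma:sequential_addition_main}. All of the work is in \cref{cor:exist_seqadd}, which already handles values of magnitude larger than $2^{\emax}$ by stepping one ulp at a time with $\pm w\otimes\sigma(\gamma_i)$. I therefore expect no real obstacle, only the bookkeeping of matching the definition of $\A{\cdot}{\cdot}{\cdot}$ to the corollary.
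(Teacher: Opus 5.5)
Your proposal is correct and follows the paper's proof, which consists only of the remark that the lemma follows from \cref{cor:exist_seqadd}; you spell out how the definitions of $\A{\alpha_1}{\alpha_2}{w}$ and $\AS{\alpha_1}{\alpha_2}{w}$ reduce to that corollary. Your observation is also accurate that the only facts used are $w\otimes\alpha_1\in\fpq$ (respectively $w\otimes\sigma(\alpha_1)\in\fpq$, which forces $\sigma(\alpha_1)\in\fpq$) and $\alpha_2\in\fpq$, so the remaining hypotheses are not needed.
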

\begin{proof}
    It follows from \cref{cor:exist_seqadd}.
\end{proof}

\begin{definition}
   We define $\lra{AS}_{(\alpha_{\rm{in}},),(\alpha_{\rm{out}},)}$ as follows:
    \begin{equation*}
        \lra{AS}^n_{(\alpha_{\rm{in}},),(\alpha_{\rm{out}},)}\defeq \{f=f_n\circ \dots \circ f_1 :\fpq\to \fpq \,\mid\, 
         f_i=\ASP{\alpha_{i-1},}{\alpha_{i},}{w_i}, \alpha_0=\alpha_{\rm{in}}, \alpha_n = \alpha_{\rm{out}},  i\in [n]\}.
    \end{equation*}
   We define $\lra{AS}_{(\alpha_{\rm{in}},\beta_{\rm{in}}),(\alpha_{\rm{out}},\beta_{\rm{out}})}$ as follows:
    \begin{multline*}
        \lra{AS}^n_{(\alpha_{\rm{in}},\beta_{\rm{in}}),(\alpha_{\rm{out}},\beta_{\rm{out}})}\defeq \{f=f_n\circ \dots \circ f_1 :\fpq\to \fpq\, \mid\, 
        f_i=\ASP{\alpha_{i-1},\beta_{i-1}}{\alpha_{i},\beta_{i}}{w_i}, \alpha_0=\alpha_{\rm{in}}, \beta_0=\beta_{\rm{in}}, 
        \\ \alpha_n = \alpha_{\rm{out}}, \beta_n = \beta_{\rm{out}}, i\in [n]\}.
    \end{multline*}
\end{definition}
In other words, $\lra{AS}^n_{(\alpha_{\rm{in}},),(\alpha_{\rm{out}},)}$ and $\lra{AS}^n_{(\alpha_{\rm{in}},\beta_{\rm{in}}),(\alpha_{\rm{out}},\beta_{\rm{out}})}$ are the set of $n$-layered neural networks that start with an activation layer and map $(\alpha_{\rm{in}},)$ to $(\alpha_{\rm{out}},)$ and $(\alpha_{\rm{in}},\beta_{\rm{in}})$ to $(\alpha_{\rm{out}},\beta_{\rm{out}})$, respectively.

\begin{definition}[Gradient Transferability]
    Assume that $\sigma$ satisfies \cref{condition:gradient_two_exponent}.
    We write
    \begin{equation*}
        (g_1,\dots, g_m) \mtran{n} (g'_1,\dots, g'_m),
    \end{equation*}
    if there exists $f\in \lra{AS}^n_{(\alpha_1,), (\alpha_2,)}$ such that
    \begin{equation*}
        \mcD_{f,\alpha_1}(g_i) = g'_i.
    \end{equation*}
    For a scalar value, we denote $g\mtran{n} g_2$ if and only if $(g)\mtran{n} (g_2)$.
   When the precise target value is not important—except that it does not overflow and is distinct from other outputs—we use the symbol ``$-$''.
For example,
    \begin{equation*}
        (g_1,g_2)\mtran{n} (g_1', -),
    \end{equation*}
    means that there exists $f\in \lra{AS}^n_{(\alpha_1,), (\alpha_2,)}$ such that
    \begin{equation*}
        \mcD_{f,\alpha_1}(g_1) = g'_1,
    \end{equation*}
    and
    \begin{equation*}
       g'_1\neq  \mcD_{f,\alpha_1}(g_2)\in\fpq.
    \end{equation*}
\end{definition}

\begin{definition}[Separating Point, Definition 3.5 in \cite{hwang2025floating}]\label[definition]{def:separating_point}
For $\sigma:\efpq\to\efpq$, we say that $\theta\in\fpq$ is a ``separating point of $\sigma$'' if 
\begin{align*}
    \sigma(\theta^-) \notin \{\sigma(\theta), \sigma(\theta^+)\}
    \quad\text{or}\quad
    \sigma(\theta^+) \notin \{\sigma(\theta), \sigma(\theta^-)\}.
\end{align*}

\end{definition}

\begin{definition}[Activation functions]\label[definition]{def:activation_functions}
We present the definitions of widely used activation functions. 

\begin{itemize}
    \item $\Sigmoid(x) \defeq \frac{1}{1+e^{-x}}$,
    \item $\relu(x) \defeq \max(0,x)$,
        \item $        \ELU(x) \defeq \begin{cases}
            x  &\text{ if } x\ge 0,
            \\ \exp(x)-1 &\text{ if } x< 0,
        \end{cases}$
        \item $\SiLU(x) \defeq \frac{x}{1 + e^{-x}}$, %
        \item $\SoftPlus(x) \defeq \log(1+ \exp(x))$,
        \item $\GeLU(x) \defeq \frac{x}{2}\left(1+ \erf\left(\frac{x}{\sqrt{2}}\right) \right) = \frac{x}{2}\left(1+ \frac{2}{\sqrt{\pi}} \int_0^{x/ \sqrt{2} } e^{-t^2}dt  \right)$,
\end{itemize}
\end{definition}
We also present the specification of various floating-point formats in \cref{tab:floating}. 

\begin{table}[h]
    \centering
    \caption{List of frequently-used floating-point formats.}
    \label{tab:floating}
    \begin{tabular}{l|l}
    \toprule
        Floating-point format &  $(\mbit, \ebit)$\\
        \midrule
             16-bit half-precision  \citep{ieee754} & $(10,5)$
             \\ 32-bit single-precision \citep{ieee754} & $(23,8)$
             \\ 64-bit double-precision \citep{ieee754} & $(52,11)$
             \\ 8-bit E5M2 \cite{micikevicius2022fp8}& $(2,5)$ 
             \\  8-bit E4M3 \cite{micikevicius2022fp8}  & $(3,4)$
             \\ bfloat16 \cite{bfloat,abadi2016tensorflowbfloat} & $(7,8)$
            \\ \bottomrule
    \end{tabular}
    
\end{table}

\section{Lemmas and Proofs for General Activation Functions}
The following lemma is used to align the number of layers in networks of the form $\ASP{\delta_0,\delta_1}{\delta_0,\delta_1}{w}$.
\begin{lemma}
    Suppose $\sigma$ satisfies \cref{condition:gradient_two_exponent}. Then there exists a floating-point number $w\in \fpq$ and a neural network $f\in \ASP{\delta_0,\delta_1}{\delta_0,\delta_1}{w}$ such that, for any $g\in\fpq$ satisfying $|g|\le \Omega \times 2^{\expo{\sigma'(\delta_1)}}$,
    \begin{equation*}
        \mcD_{f,\delta_1}(g) = g,  
    \end{equation*}
    and
\begin{equation*}
        \lrb{\mcD_{f,\delta_0}(g)} \le |g|.
    \end{equation*}
\end{lemma}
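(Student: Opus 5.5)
The plan is to take a single layer $f=s\circ(w\otimes\sigma(\cdot))$ with $w\defeq \pm 2^{-\expo{\sigma'(\delta_1)}}$, i.e., $w=\sn{\sigma'(\delta_1)}\cdot 2^{-\expo{\sigma'(\delta_1)}}$, and a suitable sequential addition $s$. By the third bullet of \cref{condition:gradient_two_exponent}, $|\sigma'(\delta_1)|=2^{\expo{\sigma'(\delta_1)}}$ is an exact power of two, so $w\sigma'(\delta_1)=1$ exactly. Moreover $2^{-\mbit}\le|\sigma'(\delta_1)|\le 1$, so $|w|=2^{-\expo{\sigma'(\delta_1)}}\in[1,2^{\mbit}]$, which is a float.

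First I would check that $f$ is well defined, i.e., that $\ASP{\delta_0,\delta_1}{\delta_0,\delta_1}{w}$ is nonempty. For this I need $w\otimes\sigma(\delta_0)$ and $w\otimes\sigma(\delta_1)$ to be finite, distinct, and bounded by $2^{\emax}$, and also $|\delta_0|,|\delta_1|,|\delta_0-\delta_1|\le2^{\emax}$. The last bullet of \cref{condition:gradient_two_exponent} gives $|\sigma(\delta_j)|\le2^{\emax-\eta}$ with $\eta>-\expo{\sigma'(\delta_1)}$, hence $\eta\ge-\expo{\sigma'(\delta_1)}+1$. Therefore $|w\,\sigma(\delta_j)|\le 2^{\emax-1}$. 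Multiplication by a power of two is exact unless subnormals are involved. I will handle $\sigma(\delta_0)\ne\sigma(\delta_1)$ as follows: scaling by $|w|\ge1$ never merges two distinct floats, because multiplying up from the subnormal range is exact. Hence $w\otimes\sigma(\delta_0)\ne w\otimes\sigma(\delta_1)$. The sign requirement $w(\sigma(\delta_0)-\sigma(\delta_1))(\delta_0-\delta_1)\neq 0$ is handled by the definition allowing $-w$. I will then invoke \cref{lemma:sequential_addition_main} (via the remark after \cref{definition:affine_transformation}) to get $s$ with $(w\otimes\sigma(\delta_0),w\otimes\sigma(\delta_1))\tran(\delta_0,\delta_1)$.

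There is one obstacle. If the sign forces the use of $-w$ rather than $w$, then $\mcD_{f,\delta_1}(g)=g\otimes(-w)\otimes\sigma'(\delta_1)=-g$, not $g$. I would try first to avoid this by swapping the roles: choose the sign of $w$ so that $w\sigma'(\delta_1)=+1$, and check whether the transfer is still possible. The concern is that sequential additions are monotone, so the order of $w\sigma(\delta_0)$ versus $w\sigma(\delta_1)$ must match the order of $\delta_0$ versus $\delta_1$. If that fails, I will compose two such layers, each contributing a factor $-1$, giving $(-1)^2=1$. Composing two layers is still compatible with the purpose of aligning layer counts, though strictly the statement asks for a single $AS$ block. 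So I would argue that the definition of $\ASP{}{}{}$ with $w(\alpha_1-\beta_1)(\alpha_2-\beta_2)<0$ permits the relabeled sign and yields $\pm g$, and settle for the two-layer fallback if necessary.

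Next I would compute the gradients from the formula after the $AS$ definition. At $\delta_1$, $\mcD_{f,\delta_1}(g)=(g\otimes w)\otimes\sigma'(\delta_1)$. Since $|g|\le\Omega\cdot2^{\expo{\sigma'(\delta_1)}}$ and $|w|=2^{-\expo{\sigma'(\delta_1)}}$, we get $|g\otimes w|\le\Omega$ with no overflow. Multiplication by a power of two is exact in the normal range, and multiplying back by $2^{\expo{\sigma'(\delta_1)}}$ returns $g$ exactly: if the first multiplication stays normal or is exact upward scaling, the round trip is exact, since $|w|\ge1$ means the first step scales up exactly. At $\delta_0$, $\mcD_{f,\delta_0}(g)=\round{\round{gw}\,\sigma'(\delta_0)}$. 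By the second bullet, $|\sigma'(\delta_0)|\le|\sigma'(\delta_1)|/2$, so the magnitude is at most $\round{|g|/2}\le|g|$ by monotonicity of rounding. This requires $\sigma'(\delta_0)\ge -2^{-\mbit-2}\sigma'(\delta_1)$ on the negative side, which is even smaller. No overflow occurs here either.
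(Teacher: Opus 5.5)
Your construction is the paper's: a single block with $w=2^{-\expo{\sigma'(\delta_1)}}$. The paper then simply reads off $\mcD_{f,x}(g)=g\otimes w\otimes\sigma'(x)$ from the formula displayed after the definition of the $AS$ classes. Your checks are correct and more careful than the paper's one line:
\begin{itemize}
\item multiplying $g$ by $w$ is exact upward scaling and cannot overflow, since $|g|\le\Omega\cdot2^{\expo{\sigma'(\delta_1)}}$;
\item multiplying back by $\sigma'(\delta_1)=2^{\expo{\sigma'(\delta_1)}}$ returns $g$ exactly;
\item $w\otimes\sigma(\delta_0)\neq w\otimes\sigma(\delta_1)$;
\item $|\mcD_{f,\delta_0}(g)|\le\round{|g|/2}\le|g|$, using $|\sigma'(\delta_0)|\le\sigma'(\delta_1)/2$.
\end{itemize}

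The orientation problem you flagged is real, but your fallbacks do not resolve it, and no argument can.
\begin{itemize}
\item The second bullet of \cref{condition:gradient_two_exponent} gives $-2^{-\mbit-1}\sigma'(\delta_1)\le\sigma'(\delta_1)$. Together with the third bullet this forces $\sigma'(\delta_1)>0$. So $\sn{\sigma'(\delta_1)}=1$, and there is no sign freedom to exploit.
\item Every $f\in\ASP{\delta_0,\delta_1}{\delta_0,\delta_1}{w}$ has the form $s(w_{\mathrm{eff}}\otimes\sigma(\cdot))$ with $s$ a sequential addition, hence monotone non-decreasing. This forces $w_{\mathrm{eff}}(\sigma(\delta_1)-\sigma(\delta_0))(\delta_1-\delta_0)>0$, which is exactly why the definition swaps $w$ for $-w$.
\item Then $\mcD_{f,\delta_1}(g)=(g\otimes w_{\mathrm{eff}})\otimes\sigma'(\delta_1)$ has the sign of $w_{\mathrm{eff}}\,g$, or vanishes.
\end{itemize}
Hence, whenever $(\sigma(\delta_1)-\sigma(\delta_0))(\delta_1-\delta_0)<0$, every such $f$ gives $\mcD_{f,\delta_1}(g)\neq g$ for all $g\neq0$. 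Your ``$\pm g$'' is then $-g$. Your two-block composition is not an element of any single class $\ASP{\delta_0,\delta_1}{\delta_0,\delta_1}{w}$, which is what the statement requires.

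\cref{condition:gradient_two_exponent} does not exclude this orientation. For example, the witnesses $\delta_0=0$, $\delta_1=1$ with $\sigma(0)=1$, $\sigma(1)=0$, $\sigma'(0)=0$, $\sigma'(1)=1$, and $\eta=1$ satisfy all four bullets. So the lemma is false as stated in that case. The paper's proof passes over this by using the displayed gradient formula, which is valid only in the non-swapped case.

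The repair is to add the hypothesis $(\sigma(\delta_1)-\sigma(\delta_0))(\delta_1-\delta_0)>0$. It holds for every activation in $\Sigma$ with the paper's witnesses, since there $\delta_0<\delta_1$ and $\sigma(\delta_0)<\sigma(\delta_1)$. Under this hypothesis your single-block argument is complete.

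Separately, nonemptiness of the $AS$ class comes from \cref{lemma:sequential_addition_main}, which needs \cref{condition:final_layer_refined}. Neither you nor the paper lists it among the hypotheses.
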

\begin{proof}
Take $w\defeq 2^{-\expo{\sigma'(\delta_1)}}$. 
Since $\lrb{\mcD_{f,x}(g)} = g \otimes w \otimes \sigma'(x)$, the claim then follows directly from the definition.
\end{proof}

\subsection{Lemmas about \cref{condition:final_layer_refined}}

\begin{lemma}\label[lemma]{lem:endbit_control}
    Let $K\in\fpq$ with $|K|\in [(1+2^{-\mbit+1})\times2^{-\mbit-2},1+2^{-2}-2^{-\mbit}]_{\fpq}$.
    Consider $\expo{\zeta}\in \bbZ$ such that $\emin -\mbit \le \expo{\zeta}\le \emax-\mbit$.
    Then, there exists $\gamma\in \fpq$ such that the following inequality holds:
    \begin{equation*}
      \frac{1}{2}\times 2^{\expo{\zeta}}   < \gamma\otimes K\le \frac{5}{4}\times 2^{\expo{\zeta}}.
    \end{equation*}
    Furthermore, $\gamma\le 2^{\expo{\zeta} - \expo{K}}$.
\end{lemma}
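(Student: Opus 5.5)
The idea is to aim for $\gamma\otimes K\approx 2^{\expo{\zeta}}$ by taking $\gamma$ to be a floating-point approximation of the real ratio $\gamma^*\defeq 2^{\expo{\zeta}}/K$. The slack interval $(\tfrac12 2^{\expo{\zeta}},\tfrac54 2^{\expo{\zeta}}]$ then absorbs two errors: the rounding in forming $\gamma$, and the rounding of the product. By symmetry we may assume $K>0$; for $K<0$ we would replace $\gamma$ by $-\gamma$, and the inequality $\gamma\le 2^{\expo{\zeta}-\expo{K}}$ is then trivial for the negative choice.

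\textbf{Step 1 (location of $\gamma^*$).} Since $K=\mant{K}2^{\expo{K}}$ with $\mant{K}\in[1,2)$ (or subnormal, which does not occur in the given range), we get $\gamma^*\in(2^{\expo{\zeta}-\expo{K}-1},2^{\expo{\zeta}-\expo{K}}]$. Also $\expo{K}\in[-\mbit-2,0]$ from the range of $|K|$.

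\textbf{Step 2 (choice of $\gamma$; two cases).}

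Case $\gamma^*\le\Omega$. Set $\gamma\defeq\round{\gamma^*}$. Rounding is monotone, and $2^{\expo{\zeta}-\expo{K}}\ge\gamma^*$ is then a float (its exponent is at least $\emin-\mbit$ and at most $\emax$). Hence $\gamma\le 2^{\expo{\zeta}-\expo{K}}$, which gives the "furthermore" claim.

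Case $\gamma^*>\Omega$. This can only happen when $\expo{\zeta}$ is near $\emax-\mbit$ and $K$ is near its lower bound. Set $\gamma\defeq\Omega$, so that $\gamma<\gamma^*\le 2^{\expo{\zeta}-\expo{K}}$. The product satisfies $\Omega K\le 2^{\expo{\zeta}}$. It also satisfies
\[
\Omega K\ge(2-2^{-\mbit})(1+2^{-\mbit+1})2^{\emax-\mbit-2}>2^{\emax-\mbit-1}\ge\tfrac12 2^{\expo{\zeta}},
\]
because $(2-2^{-\mbit})(1+2^{1-\mbit})>2$. This explains the specific lower bound on $|K|$ in the statement. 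Rounding the product keeps it in $[\tfrac12 2^{\expo{\zeta}},2^{\expo{\zeta}}]$. Strictness on the left holds because at that scale the spacing of floats is far below $2^{\expo{\zeta}}$, so the rounded value is within relative error $2^{-\mbit-1}$.

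\textbf{Step 3 (the product in the first case).}

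When $\gamma$ is normal, $\gamma=\gamma^*(1+\epsilon)$ with $|\epsilon|\le 2^{-\mbit-1}$. Hence $\gamma K\in[(1-2^{-\mbit-1})2^{\expo{\zeta}},(1+2^{-\mbit-1})2^{\expo{\zeta}}]$. Since $2^{\expo{\zeta}}$ is a float and its neighbours are at relative distance $\le 2^{-\mbit}$ (or are $0$ and $2\omega$ in the extreme subnormal case), monotonicity of $\round{\cdot}$ gives $\gamma\otimes K\in\{(2^{\expo{\zeta}})^-,2^{\expo{\zeta}},(2^{\expo{\zeta}})^+\}$. All three lie in $(\tfrac12 2^{\expo{\zeta}},\tfrac54 2^{\expo{\zeta}}]$ once one checks that $(2^{\expo{\zeta}})^-\ne\tfrac12 2^{\expo{\zeta}}$ whenever it is actually attained.

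When $\gamma$ is subnormal, the absolute error is $|\gamma-\gamma^*|\le\omega/2$. We then bound $\gamma K$ directly using $K\le\tfrac54$: $\gamma K\le 2^{\expo{\zeta}}+\tfrac58\omega$ and $\gamma K\ge 2^{\expo{\zeta}}-\tfrac58\omega$. With $2^{\expo{\zeta}}\ge\omega$, rounding sends these values into $(\tfrac12 2^{\expo{\zeta}},\tfrac54 2^{\expo{\zeta}}]$. For $2^{\expo{\zeta}}\in\{\omega,2\omega\}$ this is checked by hand using ties-to-even; for example $\tfrac54\omega\mapsto\omega$ and $\tfrac52\omega\mapsto 2\omega$.

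\textbf{Main obstacle.} I expect the main difficulty to be this bottom edge, $\expo{\zeta}$ close to $\emin-\mbit$. There the floats near $2^{\expo{\zeta}}$ are sparse, so the relative-error argument fails and the strict lower bound $>\tfrac12 2^{\expo{\zeta}}$ together with the upper bound $\tfrac54 2^{\expo{\zeta}}$ must be verified by a small case analysis on the few representable values. This is where the upper bound $|K|\le 1+2^{-2}-2^{-\mbit}$ should be used. The overflow edge in the second case is the other delicate point, handled by the explicit computation in Step 2.
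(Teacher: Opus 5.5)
Your route differs from the paper's: the paper gives no argument and simply invokes Lemma~24 of \citet{hwang2025floatinginterval}. A direct proof via $\gamma^*=2^{\expo{\zeta}}/K$ is reasonable, and several parts of yours are essentially right.
\begin{itemize}
\item The overflow case, including how the lower bound on $|K|$ enters, is correct.
\item The relative-error analysis for $\gamma^*\ge 2^{\emin}$ is correct away from the smallest exponents.
\item There are two small slips. In Case~1 the exponent $\expo{\zeta}-\expo{K}$ can equal $\emax+1$, so $2^{\expo{\zeta}-\expo{K}}$ need not be a float; the bound still holds because $\gamma\le\Omega<2^{\emax+1}$. Strictness in Case~2 really rests on the margin $\tfrac32 2^{-\mbit}-2^{-2\mbit}$ exceeding half an ulp, which it does.
\end{itemize}

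The genuine gap is the bottom edge. You flag it as the main obstacle but do not prove it, and the sentence you offer there is false. The bound $|\gamma K-2^{\expo{\zeta}}|\le\tfrac58\omega$ alone gives nothing when $2^{\expo{\zeta}}\in\{\omega,2\omega\}$.
\begin{itemize}
\item For $2^{\expo{\zeta}}=\omega$, that interval contains $[\tfrac38\omega,\tfrac12\omega]$, which rounds to $0$, and $[\tfrac32\omega,\tfrac{13}{8}\omega]$, which rounds to $2\omega>\tfrac54\omega$.
\item For $2^{\expo{\zeta}}=2\omega$, it contains points rounding to $\omega$ and to $3\omega$, both outside the target.
\item The same defect affects your claim that ``all three neighbours lie in the interval'' in the normal branch: $\omega^{\pm}$ and $(2\omega)^{\pm}$ do not. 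There you must show the product rounds exactly to $2^{\expo{\zeta}}$, which holds because the perturbation is at most $2^{-\mbit}\omega<\omega/2$.
\end{itemize}
Your sample values $\tfrac54\omega$ and $\tfrac52\omega$ are in fact the true extremes, but you have not derived that.

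Here is the missing step. Write $2^{\expo{\zeta}}=2^p\omega$ with $p\in\{0,1\}$ and $\gamma^*<2^{\emin}$, so that $\gamma=j\omega$ with $j$ the nearest integer to $2^p/K$. Then split on $K$.
\begin{itemize}
\item If $K\le 1$, then $|jK-2^p|=K\,|j-2^p/K|\le\tfrac12$. For $p=0$ this is strict, since equality would force $K=1$ together with a half-integer $j$. Hence $\gamma\otimes K=2^p\omega$; the ties at $\tfrac32\omega$ and $\tfrac52\omega$ go to the even value $2\omega$.
\item If $1<K<\tfrac54$, then $2^p/K\in(\tfrac45 2^p,2^p)$ forces $j=2^p$. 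So $\gamma K=2^pK\omega\in(2^p\omega,\tfrac54 2^p\omega)$, which again rounds to $2^p\omega$.
\end{itemize}
This is exactly where the hypothesis $|K|\le 1+2^{-2}-2^{-\mbit}$ is indispensable. For $p=1$ and $K\in(\tfrac54,\tfrac32)$ one would need $\gamma\in[\tfrac{3}{2K}\omega,\tfrac{5}{2K}\omega]\subset(\omega,2\omega)$, and that interval contains no float.
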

\begin{proof}
    This result follows from Lemma 24 of \cite{hwang2025floatinginterval}.
    The range of $\gamma$ follows directly from the proof.
\end{proof}

\subsection{Proof of \cref{lemma:previous}}\label[appendix]{appendix:proof_lem_previous}
\begin{proof}
\cref{lemma:previous} is a direct corollary of Lemma 4.1 in \citep{hwang2025floating}, which constructs a network $\tilde f$ such that $\tilde f(x)=\sigma(c)\indc{z}{x}$.
Note that $\tilde f$ can be decomposed as $\tilde f = \sigma\circ\rho\circ f_0$, where $n\in \bbN$, $f_0:\fpq^d\to\fpq^n$ is a two-layer neural network, and $\rho:\fpq^n\to\fpq$ is an affine transformation whose weights are all $0, 1$, or $-1$.
Write $\rho(x) = W\otimes x \oplus b$.
Define $\rho_2: \begin{bmatrix}
    W
    \\ W
\end{bmatrix}\otimes x \oplus \begin{bmatrix}
    b
    \\b 
\end{bmatrix}$.
Define $f:\fpq^d\to\fpq^2$ as $f \defeq \sigma\circ \rho_2 \circ f_0$.
Then, $f = (\tilde f, \tilde f)$ and $\mcD_{f,x}\begin{bmatrix}
    g & -g
\end{bmatrix} = \mcD_{f_0,x}(\mcD_{\sigma\circ \rho, f_0(x)})\begin{bmatrix}
    g & -g
\end{bmatrix}=0$.
This completes the proof.
\end{proof}

\subsection{Proof of \cref{lemma:equivalent}}\label[appendix]{appendix:proof_lemma_equivalent}
\begin{proof}
Without loss of generality, assume that $(\sigma(\delta_1)-\sigma(\delta_0))(y_2-y_1)\ge 0$.
In other case, we can flip the sign of weights.

Define $f:\fpq^2\to\fpq$ as follows:
\begin{equation*}
    f(x_1, x_2)\defeq \sigma(s_2(s_1(x_1) \oplus (-x_2))),
\end{equation*}
where $s_1, s_2$ are sequential additions (\cref{lemma:sequential_addition_main}) satisfying
\begin{equation*}
    s_1:(\sigma(\delta_0),\sigma(\delta_1))\tran  (2\sigma(\delta_0),2\sigma(\delta_0)),
\end{equation*}
and
\begin{equation*}
    s_2:(\sigma(\delta_0),\sigma(\delta_1))\tran  (y_1,y_2).
\end{equation*}
This construction satisfies all assumptions of the lemma.
This completes the proof.
\end{proof}

\subsection{Proof of \cref{lemma:gradient_approximator2}}\label[appendix]{appendix:pf_lem_gradient_approximator2}
Once \cref{lemma:gradient_approximator2} is established, taking $f_1$ from \cref{lemma:value_approximator} and $f_3$ from \cref{lemma:gradient_approximator2}, the sum $f_1 \funcsum f_3$ completes the proof \cref{thm:main2}.
Therefore, it suffices to prove \cref{lemma:gradient_approximator2}.
    \begin{proof}[Proof of \cref{lemma:gradient_approximator2}]
The proof follows the outline of \cref{lemma:gradient_approximator}.
Unlike in \cref{lemma:gradient_approximator}, the input gradient is independent of the input value, which necessitates the construction of a gradient indicator function depending solely on the gradient value.  \cref{lemma:gradient_flattening} establishes this construction.

Let $\lrs{t_1<\dots <t_{l}} = [0,\Omega\times 2^{\expo{\sigma'(\delta_1)}}]_{\fpq}$.
As before, we use networks $\psi_{i,21}$ and $\psi_{i,22}$ defined in the proof of \cref{lemma:gradient_approximator} to construct the gradient indicator function.
The primary difference lies in replacing $\psi_{i,22}$ with the composition $\psi_{i,23,j} \circ \psi_{i,22} $ where $\psi_{i,23,j}$ is a neural network from \cref{lemma:gradient_flattening}:
$\psi_{i,23,j}:\delta_1\mapsto 0$ and 
\begin{equation}
    \mcD_{\psi_{i,23,j}} = \begin{cases}
               0 &\text{ if } 0\le |g|< t_j,
            \\  \fmin &\text{ if } t_j  \le g\le \Omega \times 2^{\expo{\sigma'(\delta_1)}},
            \\ -\fmin &\text{ if } -t_j  \ge g\ge -\Omega \times 2^{\expo{\sigma'(\delta_1)}}.
        \end{cases}
\end{equation}
Then we obtain
\begin{equation*}
    \mcD_{\psi_{i,23,j} \circ \psi_{i,22},\delta_1}(g) = \begin{cases}
        \fmin &\text{ if } t_j  \le g\le \Omega \times 2^{\expo{\sigma'(\delta_1)}} \text{ and } x=z_i,
       \\  -\fmin &\text{ if } -t_j  \ge g\ge -\Omega \times 2^{\expo{\sigma'(\delta_1)}} \text{ and } x=z_i,
         \\0  &\text{ if } 0\le |g|< t_j \text{ or } x\neq z_i.
    \end{cases} 
\end{equation*}

Then, using arguments similar to those in the proof of \cref{lemma:gradient_approximator}, we obtain $\phi_{i,j}$ such that $\phi_{i,j}(x) = 0$ and 
$$\mcD_{\phi_{i,j},x}(y) =\begin{cases}
    g^*(x,y)\otimes \indcc{x=z_i, y\ge t_j}{} &\text{ if } y\ge 0,
    \\ g^*(x,y)\otimes \indcc{x=z_i, y\le -t_j}{}  &\text{ if } y< 0.
\end{cases} $$

Define $\Phi_i$ as 
\begin{equation*}
    \Phi_i\defeq \bigfuncsum_{j\in [m_2]}  \phi_{i,j}\ominus \phi_{i,j+1},
\end{equation*}
where $\phi_{i,m_2+1}\defeq 0$.
One can verify that $\Phi_i(x)=0$ and 
\begin{equation*}
    \mcD_{\Phi_i, x}(y) = \begin{cases}
        g^*(z_i, y) &\text{ if } x=z_i,
        \\ 0 & \text{ if } x\neq z_i.
    \end{cases}
\end{equation*}
$f\defeq \bigfuncsum_{i=1}^m \Phi_i$ gives the desired result.

\end{proof}

\begin{lemma}\label[lemma]{lemma:gradient_flattening}
Suppose $\sigma$ satisfies \cref{condition:gradient_two_exponent}.
   Let $g_0\in \fpq$ such that $g_0>0$.
     Then, for $L\ge 2^{\ebit+1} + 2\mbit +2$, there exists an $L$-layer neural network $f:\fpq\to\fpq$ such that $ f(\delta_1) = 0,$ and
    \begin{equation*}
        \mcD_{f, \delta_1}(g) = \begin{cases}
               0 &\text{ if } 0\le |g|< g_0,
            \\  \fmin &\text{ if } g_0  \le g\le \Omega \times 2^{\expo{\sigma'(\delta_1)}},
            \\ -\fmin &\text{ if } -g_0  \ge g\ge -\Omega \times 2^{\expo{\sigma'(\delta_1)}}.
        \end{cases}
    \end{equation*}
\end{lemma}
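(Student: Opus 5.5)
Since the statement only concerns the backward pass at the single forward input $\delta_1$, I will build $f$ as a composition of one-dimensional modules from $\lra{AS}^n_{(\delta_1,),(\delta_1,)}$, so that the forward value is pinned at $\delta_1$ throughout. The last layer is an affine map sending $\delta_1$ to $0$ via a sequential addition (\cref{cor:exist_seqadd}), which does not change the gradient. Each module $\ASP{\delta_1,}{\delta_1,}{w}$ acts on the backward signal as $g\mapsto g\otimes w\otimes\sigma'(\delta_1)$, and the order of multiplication matters. Since $|\sigma'(\delta_1)|=2^{\expo{\sigma'(\delta_1)}}$ is a power of two (\cref{condition:gradient_two_exponent}), the product $w\otimes\sigma'(\delta_1)$ is exact unless it underflows. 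The whole backward map is therefore a composition of roundings $g\mapsto\round{g\cdot c}$ with $c=w\sigma'(\delta_1)$, and $c$ can be essentially any float of moderate size (for example a power of two in $[2^{\emin},2^{\eta+\expo{\sigma'(\delta_1)}}]$), with $w\otimes\sigma(\delta_1)$ kept finite by the bound $|\sigma(\delta_1)|\le2^{\emax-\eta}$. The map $g\mapsto\round{cg}$ is odd and monotone, so it suffices to treat $g\ge0$ and to make the composite send $[0,g_0)$ to $0$ and $[g_0,\Omega 2^{\expo{\sigma'(\delta_1)}}]$ to $\fmin$.

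The construction has three stages. First, normalize the threshold: multiply by powers of two in several layers (each factor at most about $2^{\eta}$ or $2^{-\mbit}$ in size, so that $O(2^{\ebit})$ layers suffice) so that $g_0$ lands at a convenient reference scale, for example just above the subnormal range, while every admissible $g$ stays finite. Second, separate by rounding to zero: choose a factor $c$ so that $c\,g<\fmin/2$ for $g<g_0$ and $c\,g\ge\fmin/2$ for $g\ge g_0$. The difficulty is that $g_0$ has arbitrary mantissa while the multipliers act only by rounding. I will handle this mantissa bit by bit using non-power-of-two weights $c=1+2^{-k}$ or $1-2^{-k}$ near the subnormal boundary. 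With these, rounding to the grid $\fmin\bbZ$ merges neighboring values and gradually pushes the predecessor $g_0^-$ down while keeping $g_0$ above a fixed level. Each mantissa bit should cost about two layers, which accounts for the $2\mbit$ term in the layer bound. This produces a map that sends $g_0^-$ and everything below it to $0$ and sends $g_0$ to some positive value. Third, saturate: all positive outputs now lie between $\fmin$ and a bounded value. Repeatedly multiplying by $2^{-1}$ (half of $\fmin$ rounds to even, i.e.\ $0$, so I will use factors slightly above $1/2$, such as $1/2+2^{-\mbit}$ scaled, so that everything $\ge\fmin$ stays $\ge\fmin$ while larger values shrink) collapses all positive values to exactly $\fmin$ while $0$ stays $0$. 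The range $[g_0,\Omega 2^{\expo{\sigma'(\delta_1)}}]$ covers at most about $2^{\ebit}$ binades, so this needs $O(2^{\ebit})$ layers; together with the first stage this gives the $2^{\ebit+1}$ term. Oddness then yields $-\fmin$ for negative $g$.

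The main obstacle is the second stage: realizing an exact threshold at an arbitrary $g_0$ using only multiplications followed by round-to-nearest-even, with monotonicity as the only structural tool. My plan is to use the subnormal grid, where rounding is absolute to multiples of $\fmin$, and to exploit ties-to-even to choose which side $g_0$ falls on. I will also check that every intermediate weight keeps $w\otimes\sigma(\delta_1)$ finite and that the layer count matches $2^{\ebit+1}+2\mbit+2$. Surplus layers can be padded with the identity-gradient module from the preceding alignment lemma.
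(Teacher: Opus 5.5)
\paragraph{Overall.} Your outer structure matches the paper's: a chain of one-dimensional modules $\AS{\delta_1}{\delta_1}{w}$ that pins the forward value at $\delta_1$, oddness to reduce to $g\ge 0$, monotonicity so that only $0$, $g_0^-$, $g_0$ and the top value need tracking, a tie at $\fmin/2$ to split $g_0^-$ from $g_0$, and a collapse with effective factor slightly above $1/2$ (the paper's $w_2=1^+\otimes 2^{-\expo{\sigma'(\delta_1)}-1}$).

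\paragraph{The gap: your second stage.} This stage is the actual content of the lemma, and you leave it as a plan. Rounding can make the images of the consecutive pair $g_0^-,g_0$ coincide, and then no later layer can separate them. Your steps with factors $1\pm2^{-k}$ come with no choice rule or invariant ensuring that the pair survives each step while the threshold moves. The ``merging of neighbours'' you describe is exactly this failure mode.

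The obstruction is genuine. Even after normalizing $g_0$ to exponent $\emin$, the admissible one-step factors form the interval $(\fmin/(2g_0),\fmin/(2g_0^-)]$. Its relative width $g_0/g_0^- -1$ can be close to $2^{-\mbit-1}$, which is smaller than the relative spacing of floats there. So in general no single weight works, and the intermediate roundings must be steered explicitly.

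The paper does this with \cref{lemma:to_good_mantissa}. Three weights $x_i\times 2^{-\expo{\sigma'(\delta_1)}-n_i}$, with $x_i\in(1/2,1]_{\fpq}$ and $n_1+n_2+n_3=\expo{g_0}-\emin$, send the mantissa pair $(\mant{g_0^-},\mant{g_0})$ to $(1,1^+)$ and move the exponent to $\emin$ at the same time. One more weight $w_4$, chosen case by case in $\expo{\sigma'(\delta_1)}$, turns $2^{\emin}$ into a tie at $\fmin/2$ (rounded to $0$) while $1^+\times 2^{\emin}$ goes to $\fmin$. Subnormal $g_0$ is handled analogously by \cref{lemma:subnormal_to_good_mantissa}. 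That is the missing idea, and it costs a constant number of layers, not $2\mbit$.

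\paragraph{Layer accounting.} Your accounting is off as a result. Normalization needs no $O(2^{\ebit})$ stage. The forward pass only bounds $|w|$ from above, a single weight can shrink by a factor as small as about $2^{\emin}$, and underflow of values below $g_0^-$ is harmless by monotonicity. In the paper, essentially all $2^{\ebit+1}+2\mbit-2$ remaining layers go to the collapse. Two applications of the factor $(1+2^{-\mbit})/2$ are only shown to halve, and the top value must descend about $2^{\ebit}+\mbit$ binades to $\fmin$. The $2\mbit$ term comes from the $\mbit$ binades between $2^{\emin}$ and $\fmin$, not from mantissa bits.

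\paragraph{Your rounding model.} Your single-rounding model is not what AD computes. The module evaluates $(g\otimes w)\otimes\sigma'(\delta_1)$ left to right, i.e.\ $g\mapsto\round{\round{gw}\,\sigma'(\delta_1)}$; the product $w\otimes\sigma'(\delta_1)$ is never formed. This map is still odd and monotone, so your reductions survive. But when $\expo{\sigma'(\delta_1)}<0$, the second rounding is inexact in the subnormal range, which is exactly where your tie and your fixed point $\fmin$ sit.

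For example, take $\sigma'(\delta_1)=2^{-1}$. Keeping $\fmin$ fixed forces $\fmin\otimes w=2\fmin$, hence $w\ge 3/2$ and an effective factor of at least $3/4$. A factor slightly above $1/2$ therefore sends $\fmin$ to $0$. This is why the paper defines $w_4$ case by case. Its collapse weight $w_2$, however, makes the same single-rounding simplification as you do, so that step needs separate care in either version.
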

\begin{proof}

As the gradient is symmetric with respect to zero, it suffices to consider nonnegative inputs.
Define $u$ as $u\defeq 2^{\emax  +\expo{\sigma'(\delta_1)}}$.
It is sufficient to prove that for $T = 2^{\ebit+1} + 2\mbit +2$,
\begin{equation*}
    \lrp{0, g_0^-, g_0, u}
    \mtran{T} \lrp{0,0, \fmin, \fmin}.
\end{equation*}
Suppose that $2^{i}\fmin \le g_0< 2^{i+1}\fmin$ for some $i \in \bbZ$ with $0\le i<\mbit -1$.
Then, by \cref{lemma:subnormal_to_good_mantissa}, we obtain
\begin{equation*}
    \lrp{0,g_0^-, g_0, 2^{\emax-\mbit +\expo{\sigma'(\delta_1)}}}\mtran{1} \lrp{0,-, \lrp{2^i+1} \fmin,u'},
\end{equation*}
where $u'< u$.
We use $h\in \AS{\delta_1}{\delta_1}{w}$, where $w = \lrp{1+2^{-\mbit}}\times 2^{-i-1-\expo{\sigma'(\delta_1)}}$.
Then because $ \expo{\sigma'(\delta_1)} \le 0$
\begin{equation*}
   \lrp{ 2^i \fmin} \otimes w = 
   \round{(2^{-\expo{\sigma'(\delta_1)}-1}+2^{-\expo{\sigma'(\delta_1)}-1-\mbit} )\times \fmin }= 
   2^{-\expo{\sigma'(\delta_1)} -1} \fmin <\lrp{\lrp{2^i +1 } \fmin} \otimes w.
\end{equation*}
Therefore, 
\begin{equation*}
    \mcD_{h,\delta_1}(g) = g \otimes w \otimes \sigma'(\delta_1)= \begin{cases}
        0 &\text{ if } g = 2^i\fmin,
        \\ v_1 &\text{ if } g = \lrp{2^i+1}\fmin,
        \\ v_2 &\text{ if } g > 2^i\fmin,
    \end{cases}
\end{equation*}
where $v_1>0$ and $g> v_2\in \fpq$.
Thus, it suffices to prove that
\begin{equation*}
    \lrp{0, \fmin, {u}}\mtran{2^{\ebit+1} + 2\mbit - 2} \lrp{0,\fmin, \fmin}.
\end{equation*}
Use $h_2\in \AS{\delta_1}{\delta_1}{w_2}$ where $w_2 = 1^+ \otimes 2^{-\expo{\sigma'(\delta_1)}-1}$.
Then for any $g \in \fpq$ with $\fmin \le g \le u$, 

\[\mcD_{h_2, \delta_1}(g) = g \otimes w_2 \otimes \sigma'(\delta_1) = \round{1^+ \times (\mant{g} \times 2^{\expo{g}}) \times 2^{-1}} \le \round{(\mant{g}+2^{-\mbit}\times\mant{g} )\times 2^{\expo{g}-1}} \le g^{++}\otimes 2^{-1}.\]
And $\mcD_{h_2^{\circ 2}, \delta_1}(g)\le g^{++++} \times 2^{-2} \le  \frac{g}{2}$ since $\mbit \ge 2$.
By applying $h_2$ a total of $2^{\ebit+1} + 2\mbit - 2$ times, we get
\begin{equation*}
    \lrp{0,\fmin, u} \mtran{2^{\ebit+1} + 2\mbit - 2} \lrp{0,\fmin, \fmin}.
\end{equation*}
    Now consider the case $g_0\ge 2^{\emin}$.
    By \cref{lemma:to_good_mantissa}, there exist $x_1,x_2,x_3\in (1/2,1]_{\fpq}$ such that
    \begin{equation*}
        \mant{g_0^-} \otimes x_1\otimes x_2\otimes x_3 = 1,
    \end{equation*}
    and
    \begin{equation*}
        \mant{g_0} \otimes  x_1\otimes x_2\otimes x_3 = 1^+.
    \end{equation*}
    Define $w_i\in \fpq$ for $i=1,2,3$ by $w_i\defeq x_i\times 2^{-\expo{\sigma'(\beta_1)}- n_i} $, where $n_1,n_2,n_3\in [\emin, \emax]_{\bbN_0}$ satisfy $\expo{g_0} -\emin = n_1+n_2+n_3$, and let $f_i\in \AS{\delta_1}{\delta_1}{w_i}$.
    Then
    \begin{equation*}
        (g_0^-,g_0,u)\mtran{3} (-, g_0\otimes w_1\otimes w_2 \otimes w_3   , u_2) = (-, 1^+\times 2^{\emin}, u_2),
    \end{equation*}
    for some $u_2 \in \fpq$ with $0<u_2<u$.
    By using $h_4\in\AS{\delta_1}{\delta_1}{w_4}$, where $$w_4 = \begin{cases}
        \lrp{1 + 2^{\expo{\sigma'(\delta_1)}}}\times 2^{-\expo{\sigma'(\delta_1)}-\mbit-1} &\text{ if } \expo{\sigma'(\delta_1)} < -1,\\ 
         \lrp{1 + 2^{-1}}^-\times 2^{-\mbit} &\text{ if } \expo{\sigma'(\delta_1)}=-1,\\
           2^{-\mbit-1} &\text{ if } \expo{\sigma'(\delta_1)}=0,
    \end{cases}$$ we get
    \begin{equation*}
        (-, 1^+\times 2^{\emin}, u_2)\mtran{1} \lrp{0, \fmin, u_3},
    \end{equation*}
    for some $u_3 \in \fpq$ with $0<u_3<u_2$.
    Applying the same argument as above, we conclude that
    \begin{equation*}
        \lrp{0, \fmin, u_3} \mtran{2^{\ebit+1} + 2\mbit-2} \lrp{0, \fmin, \fmin}.
    \end{equation*}
    This completes the proof.

\end{proof}

\begin{lemma}\label[lemma]{lemma:subnormal_to_good_mantissa}
Consider $-k\in [-\mbit, 0]_{\bbZ}$.
    For every $i\in [0, \mbit )_{\bbZ}$ and $g\in [2^i\fmin, 2^{i+1}\fmin)$, 
    there exists $x\in \left(2^{k-1}, 2^k\right]_{\fpq}$ such that
    \begin{equation*}
        g\otimes x  \le 2^{i}\fmin,
    \end{equation*}
    and
    \begin{equation*}
        g^+\otimes x =  (2^{i}+1)\fmin.
    \end{equation*}
\end{lemma}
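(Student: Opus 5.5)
The plan is to reduce everything to integer arithmetic on the subnormal grid and to write down an explicit $x$. I will carry it out for the scale $x\in(1/2,1]_{\fpq}$, i.e.\ $k=0$.

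I should flag a concern about the interval first. For $k\ge1$, every $x\in(2^{k-1},2^k]$ gives $g\otimes x\ge g>2^i\fmin$ whenever $g>2^i\fmin$. So the displayed interval is presumably meant as $(2^{-k-1},2^{-k}]$, and it appears to be used in the proof of \cref{lemma:gradient_flattening} only at this scale. I have not worked out how other values of $k$ would enter; the argument below covers $k=0$ only.

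\textbf{Reduction to integers.} Write $g=j\fmin$ with $j\in[2^i,2^{i+1})_{\bbZ}$. Since $i<\mbit$, we have $j+1\le 2^{i+1}\le 2^{\mbit}$, so $g^+=(j+1)\fmin$. For $x\le1$, the exact products $jx\fmin$ and $(j+1)x\fmin$ lie in $[0,2^{\emin}]$. On this range the representable floats are exactly the integer multiples of $\fmin$. Hence $g\otimes x=\round{jx}\,\fmin$ and $g^+\otimes x=\round{(j+1)x}\,\fmin$, where $\round{\cdot}$ now means rounding to the nearest integer with ties to even.

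So it suffices to find a float $x\in(1/2,1]$ with $jx\le 2^i+\tfrac12$ and $(j+1)x\in(2^i+\tfrac12,\,2^i+\tfrac32)$. For the first condition, if $jx=2^i+\tfrac12$ exactly, I will avoid the tie by making the inequality strict.

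\textbf{Choice of $x$.} Set the real target $x^\star=(2^i+1)/(j+1)$. Because $2^i+1\le j+1\le 2^{i+1}$, we get
\[
x^\star\in\bigl[\tfrac12+2^{-i-1},\,1\bigr].
\]
At $x^\star$ the two products are
\[
(j+1)x^\star=2^i+1,
\qquad
jx^\star=2^i+1-\tfrac{2^i+1}{j+1}\le 2^i+\tfrac12-2^{-i-1}<2^i+\tfrac12 .
\]
Now let $x$ be the largest float that is at most $x^\star$, i.e.\ round down. Floats in $[1/2,1]$ have spacing $2^{-\mbit-1}$, so $x^\star-2^{-\mbit-1}<x\le x^\star$. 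In addition, $x>1/2$ because $x^\star\ge\tfrac12+2^{-i-1}\ge\tfrac12+2^{-\mbit}$, so at least one float lies strictly between $1/2$ and $x^\star$.

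\textbf{Verification.} Rounding down preserves the strict bound $jx\le jx^\star<2^i+\tfrac12$, so $g\otimes x\le 2^i\fmin$. For the other product,
\[
(j+1)x>2^i+1-(j+1)2^{-\mbit-1}\ge 2^i+1-2^{i+1}2^{-\mbit-1}\ge 2^i+\tfrac12,
\]
where the last step uses $i\le\mbit-1$. Combined with $(j+1)x\le 2^i+1$, this gives $\round{(j+1)x}=2^i+1$, i.e.\ $g^+\otimes x=(2^i+1)\fmin$.

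\textbf{Main obstacle.} The delicate step is the edge case $i=\mbit-1$, where the margin in the second bound is exactly $\tfrac12$. There one must check two things. First, the strict inequality $x>x^\star-2^{-\mbit-1}$ is what gives $(j+1)x>2^i+\tfrac12$, so the tie is never reached. Second, $2^i+1\le 2^{\mbit}$, so the target $(2^i+1)\fmin$ is still on the $\fmin$-grid.

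If the intended interval really is $(2^{-k-1},2^{-k}]$ for general $k$, I would first try setting $x^\star=2^{-k}(2^i+1)/(j+1)$ under the reading that $g$ sits at the corresponding scale $2^k$ higher. The same round-down argument then applies, with the spacing bound replaced by $2^{-k-\mbit-1}$.
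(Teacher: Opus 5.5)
Your $k=0$ argument is correct, and you are right that the display is false as written for $k\ge1$. The gap is in what you do with $k\ge 1$.

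\textbf{The intended statement.} It can be read off from the last line of the paper's proof and from \cref{lemma:gradient_flattening}. There the lemma is applied to the backward map $g\mapsto g\otimes w\otimes\sigma'(\delta_1)$ with $|\sigma'(\delta_1)|=2^{-k}$. So the conclusions should read $g\otimes x\otimes 2^{-k}\le 2^i\fmin$ and $g^+\otimes x\otimes 2^{-k}=(2^i+1)\fmin$, with $x\in(2^{k-1},2^k]_{\fpq}$ as displayed.

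This case is genuinely needed. For $\round{\Sigmoid}$ one has $\sigma'(\delta_1)=1/4$, so $k=2$, and a general $\sigma$ allows any $k\in[0,\mbit]$. Your remark that the lemma is only used at $k=0$ is therefore wrong. Your alternative reading $x\in(2^{-k-1},2^{-k}]$ also cannot work for $k\ge1$: it gives $g^+x\le 2^{i+1-k}\fmin\le 2^i\fmin$.

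\textbf{What is missing: a double-rounding analysis.} Take $x=2^kx_0$ with your $x_0$, and measure everything in units of $2^k\fmin$. Then $g\otimes x$ is $jx_0$ rounded onto a grid with spacing $s=\max(2^{-k},2^{i-\mbit})\le\tfrac12$ near $2^i$, and this grid contains every half-integer. The final $\otimes 2^{-k}$ then rounds to an integer with ties to even.
\begin{itemize}
\item \emph{First conclusion.} Monotonicity gives it directly. A tie at $2^i+\tfrac12$ goes to the even $2^i$, and when $j=2^i$ everything is exact.
\item \emph{Second conclusion.} Your bound $(j+1)x_0>2^i+1-2^{i-\mbit}$ puts $(j+1)x_0$ more than $\tfrac14\ge s/2$ above $2^i+\tfrac12$ whenever $i\le\mbit-2$. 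That is exactly the range $0\le i<\mbit-1$ in which \cref{lemma:gradient_flattening} invokes the lemma. So the inner rounding lands in $(2^i+\tfrac12,2^i+1]$, and the outer rounding gives $2^i+1$.
\item \emph{Boundary case $i=\mbit-1$.} Here your choice can fail. Take $\mbit=5$, $i=4$, $g=20\fmin$, $k=1$. Then $x_0=51/64$ and $21x_0=16.734375$, which rounds to $16.5$ and then ties to $16$. This case would need a different $x$.
\end{itemize}

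\textbf{Comparison with the paper.} At $k=0$ your choice and the paper's are the same grid-search idea. The paper takes the largest $d$ with $\alpha d/2\le 1+2^{-i-1}$, which pins the lower product. For $k\ge1$, though, your round-down of $(2^i+1)/(j+1)$ is the better choice to extend. The paper never controls the upper product, and its claimed bound $g^+\otimes x\le(1+2^{-i})2^{i+k}\fmin$ fails in general. For example, with $\mbit=5$, $i=3$, $g=9\fmin$, $k=1$ its construction gives $x=1.875$ and $g^+\otimes x=19\fmin$, so $g^+\otimes x\otimes\tfrac12=10\fmin\ne 9\fmin$. Your choice keeps the upper product at most $2^i+1$ and avoids this.
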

\begin{proof}
Let $\mcM=\{1+\sum_{i=1}^\mbit b_i2^{-i}:b_1,\dots,b_n\in\{0,1\}\}\cup\{2\}$.
Then, there exists $\alpha,\alpha'\in\mcM$ such that $g=\alpha\times2^i$, $g^+=\alpha'\times2^i$, and $\alpha'-\alpha=2^{-i}$. 
Let $d$ be the largest number in $\mcM$ such that $\alpha\times(d/2)\le(1+2^{-i-1})$. Then, it is easy to observe that such $d$ always exists and $1<d<2$.
Furthermore, since $\alpha<2$, $d<2$, and $d^+-d\le2^{-\mbit}$, it holds that 
$$(1+2^{-i-1})-\alpha\times(d/2)<\alpha\times(d^+/2)-\alpha\times(d/2)<2^{-\mbit}.$$
This implies that $\alpha\times(d/2)>1+2^{-i-1}-2^{-\mbit}$, and hence,
$$\alpha'\times(d/2)=\alpha\times(d/2)+(d/2)\times 2^{-i}>\alpha\times(d/2)+2^{-i-1}>1+2^{-i}-2^{-\mbit}$$
as $d>1$.
Let $x=d\times2^{k-1}$.
Then, $g\otimes x\in[2^{i+k},(1+2^{-i-1})\times2^{i+k}]_{\fpq}$ and $g^+\otimes x\in((1+2^{-i-1})\times2^{i+k},(1+2^{-i})\times2^{i+k}]_{\fpq}$. This implies $g\otimes x\otimes2^{-k}=2^{i}\omega$ and $g^+\otimes x\otimes2^{-k}=(1+2^{i})\times\omega$, and
completes the proof.
\end{proof}

\begin{lemma}[To $(1^+, 1)$]\label[lemma]{lemma:to_good_mantissa}
    For every $g\in [1, 2)_{\fpq}$, there exist $x_1, x_2, x_3 \in (1/2,1]_{\fpq}$ such that
    \begin{equation*}
        g \otimes x_1\otimes x_2 \otimes  x_3 = 1,
    \end{equation*}
    and
    \begin{equation*}
        g^+ \otimes  x_1\otimes x_2 \otimes  x_3 = 1^+.
    \end{equation*}
\end{lemma}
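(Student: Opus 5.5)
The plan is to chain three multiplications. Each one maps a pair of \emph{consecutive} floats to another pair of consecutive floats, and the last one lands exactly on $(1,1^+)$. Set $u\defeq 2^{-\mbit}$, so that floats in $[1,2)$ are spaced by $u$ and floats in $[1/2,1)$ by $u/2$.

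\textbf{Final step.} First I isolate a criterion for one multiplication. Let $h,h^+\in[1,2]$ with $h^+-h=u$, and let $x\in(1/2,1]_{\fpq}$. Suppose
\[
1+\tfrac{u}{2}-xu < hx \le 1+\tfrac u2 .
\]
Then $hx$ rounds to $1$; at the tie $1+u/2$ it rounds to the even float $1$. Moreover $h^+x=hx+xu$ lies in $(1+u/2,\,1+u/2+xu]\subset(1+u/2,\,1+3u/2]$. So $h^+x$ rounds to $1^+$, except possibly at the tie $1+3u/2$. That tie needs $x=1$ and $hx=1+u/2$, which is impossible since $h$ is a float. Hence $h\otimes x=1$ and $h^+\otimes x=1^+$.

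As $x$ runs over $(1/2,1]_{\fpq}$, the products $hx$ move in steps of $hu/2$. The admissible window has width $xu$. If $h\le 1+1/4$, I can take $x\in(4/5,1]$ with $hx\approx 1$. The step is then $hu/2\le 5u/8$, which is less than the window width $xu>4u/5$, so some $x$ hits the window. Thus the last factor $x_3$ handles every consecutive pair $(h,h^+)$ with $h\in[1,5/4]$.

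\textbf{Reduction steps.} It remains to use $x_1,x_2$ to send $(g,g^+)$ to a consecutive pair $(h,h^+)$ with $h\in[1,5/4]$. The key point is that the two images must stay distinct and consecutive, and not collapse onto the same float.

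For the first factor I take $x_1$ close to $1/g$ in $(1/2,1]$. Under multiplication by $x_1$ the gap $g^+-g=u$ shrinks to $x_1u\in(u/2,u]$. Two reals at distance greater than $u/2$ whose rounding interval lies in $[1,2)$ round to distinct floats. They round to consecutive floats unless both sit close to the rounding midpoints. I choose $x_1$ among neighbouring candidates, mirroring the argument of \cref{lemma:subnormal_to_good_mantissa} and \cref{lem:endbit_control}, so that $g\otimes x_1$ and $g^+\otimes x_1$ are consecutive and $g\otimes x_1\in[1,5/4]$. If one step only gets the product near $[1,5/4]$, say with $g\otimes x_1$ still in $(5/4,2)$, the second factor $x_2$ repeats the argument from a mantissa that is already smaller. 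The ratio needed is then closer to $1$, which leaves more room. This is why three factors are allotted rather than two.

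\textbf{Boundary cases.} These are $g$ with $g^+=2$, that is $g=2-u$, and $g=1$.
\begin{itemize}
\item For $g=2-u$ the image of $g^+$ has a different exponent. Here I directly take $x=(1/2)^+$ for the first factor. This gives $g\otimes x = 1$ exactly after rounding, and $2\otimes x=1+u$, i.e.\ the pair $(1,1^+)$. The remaining factors are $x_2=x_3=1$.
\item For $g=1$, taking $x_1=x_2=x_3=1$ works.
\end{itemize}

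\textbf{Main obstacle.} The main difficulty is the middle reduction. One has to show that for every $g$ some $x_1$ keeps the images consecutive and not merged, while carefully tracking round-to-even ties at the midpoints. If a clean one-step argument fails for some mantissas, I would fall back on the known step of \cref{lem:endbit_control} from \cite{hwang2025floatinginterval} to bring the image into $[1,5/4]$, and spend the extra factor on restoring consecutiveness.
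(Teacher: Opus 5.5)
\textbf{There is a genuine gap: the reduction step is not proved.} For $g>5/4$ that step is the whole content of the lemma. You never show that some $x_1$ (and, if needed, $x_2$) sends $(g,g^+)$ to a \emph{consecutive} pair $(h,h^+)$ with $h\in[1,5/4]$. You only say the choice ``mirrors'' \cref{lemma:subnormal_to_good_mantissa} and \cref{lem:endbit_control}. Neither lemma deals with keeping two floats one ulp apart in $[1,2)$ from merging: \cref{lem:endbit_control} controls the size of a single product, and \cref{lemma:subnormal_to_good_mantissa} is about subnormal inputs.

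The heuristic you give instead is false. Rounding cells in $[1,2)$ have width $u$, so two reals at distance $x_1u\in(u/2,u)$ can land in the same cell. For example, take $\mbit=4$, $g=15/8$ and $x_1=11/16$. The products $165/128$ and $341/256$ are $11u/16$ apart, yet both round to $21/16$. So the pair survives only if $gx_1$ lies in the top fraction $x_1$ of its rounding cell, and you must find such an $x_1$ that also lowers the mantissa.

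Near $2$ this cannot be done in one step. For $g=2-2u$ and $x_1=1-ju/2$, one gets $gx_1=2-(j+2-ju)u$ and $g^+x_1=2-(j+1-ju/2)u$. Hence every $x_1\in(1/2,1]_{\fpq}$ either merges the pair (when $j>2^{\mbit-1}$) or leaves $g\otimes x_1\ge 3/2-2u$. A second reduction starting from around $3/2$ is therefore forced. The remark ``the ratio needed is closer to $1$, which leaves more room'' does not show that this second step succeeds. Your ``main obstacle'' paragraph concedes exactly the statement that has to be proved.

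\textbf{Your final-step criterion also needs a fix.} Below $1$ the float spacing is $u/2$, so $hx$ rounds to $1$ only when $hx\ge 1-u/4$. For $x>3/4$, your lower bound $1+u/2-xu$ is smaller than $1-u/4$. With $\mbit=6$, $h=1+5u$ and $x=1-5u$, you get $hx=1-25u^2$; this satisfies your hypothesis but rounds to $1-u/2$. Replacing the window by $[1-u/4,\,1+u/2]$, which has width $3u/4>5u/8$, repairs the case $h\le 5/4$. Your boundary cases $g=1$ and $g=2-u$ (with $x_1=(1/2)^+$) are correct.

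To complete your route you need two things:
\begin{itemize}
\item a quantitative test, in terms of $h=1+ku$ and $x=1-ju/2$, for when $(h\otimes x,h^+\otimes x)$ stays consecutive;
\item an argument that two such steps always reach $[1,5/4]$.
\end{itemize}

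The paper proceeds differently. It picks $x$ with $g^+\otimes x=1^+$ and nudges $x$ upward until the rounding error changes sign. At that point the image of $g$ is the predecessor of the image of $g^+$, and the latter lies in a small set $\mathcal{R}_0$ just above $1$. There, a factor $a^\dagger$ with $a\otimes a^\dagger=1^+$ automatically sends $a^-$ to $1$. The top mantissas are first reduced by explicit factors such as $3/4$.
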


\begin{proof}
If $g=1$, it is trivial. Let $h=g^+,\; h^- = g$. 
    Denote $a^\dag \in \bbF$ such that $a \otimes a^\dag = 1^+$. By \cref{lem:one-plus}, the such $x^\dag$ always exists. 
Let $h = 1+  n 2^{-\mbit}$. Then we have 
\[ 1 > ( 1+  n \cdot 2^{-\mbit} ) ( 1^+ - n \cdot 2^{-\mbit}) = 1^+ - n^2 2^{-2\mbit}.   \]

First note that for $a,b \in \fpq$ with $a \in (1,2)_\fpq, \; b \in (2^{-1},1)_\fpq$, if $r = ab - (a \otimes b) < 0$, we have 
\begin{equation*}
    a^- \otimes b < a \times b, \label{eq:cond1}
\end{equation*} since 
\[ a^- b = (a-2^{-\mbit})b= ab-2^{-\mbit}b < a\otimes b -2^{-\mbit}b < a\otimes b - 2^{-\mbit-1}. \]

Let $\mathcal{R}_0 := \{ 1^+ + n\cdot 2^{-\mbit } : n \le 2^{\frac{\mbit-1}{2}}, \; n \in \mathbb{N} \}. $ 
If $n <2^{\frac{\mbit-1}{2}}$, since
\[ 1 > (1^+ + n\cdot 2^{-\mbit })(1^+ - n\cdot 2^{-\mbit }) = 1^+-n^2 2^{-2\mbit} > 1^+ - 2^{-\mbit-1},\]
we have $  (1^+ + n\cdot 2^{-\mbit }) \otimes (1^+ - n\cdot 2^{-\mbit }) = 1^+.$ 
and $  (1^+ + (n-1)\cdot 2^{-\mbit }) \otimes (1^+ - n\cdot 2^{-\mbit }) = 1$. 

If $n = 2^{\frac{\mbit-1}{2}}$, since 
\begin{align*}
     (1^+ + n\cdot 2^{-\mbit })(1^+ - n\cdot 2^{-\mbit } + 2^{-\mbit-1}) &= 1^+ + (n+1)2^{-2\mbit-1} = 1^+  + 2^{-\tfrac{3}{2}\mbit - \tfrac{3}{2}} + 2^{-2\mbit-1}, \\
  (1^+ + n\cdot 2^{-\mbit } + 2^{-\mbit })(1^+ - n\cdot 2^{-\mbit } + 2^{-\mbit-1}) & = 1^+  + 2^{-\tfrac{3}{2}\mbit - \tfrac{3}{2}} + n2^{-2 \mbit}- 2^{-2\mbit} \\
  &= 1 + 3 \cdot  2^{-\tfrac{3}{2}\mbit - \tfrac{3}{2}} - 2^{-2 \mbit } < 1 + 2^{-\mbit-1},
\end{align*}
we have 
\[ (1^+ + n\cdot 2^{-\mbit }) \otimes (1^+ - n\cdot 2^{-\mbit } + 2^{-\mbit-1})=1^+,  \;   (1^+ + n\cdot 2^{-\mbit } + 2^{-\mbit }) \otimes (1^+ - n\cdot 2^{-\mbit } + 2^{-\mbit-1}) = 1. \]
Therefore, if $a \in \mathcal{R}_0$ with $a \otimes x = 1^+$, then $a^- \otimes x = 1$.  

Let $x_h \in (2^{-1},1)_\fpq$ such that $h\otimes x_h = 1^+$ and $r_h = 1^+ - h x_h$. If $r_h \le 0$, then the proof is finished with $x_1 = x_h$. Suppose $r_h > 0$. We find the smallest $m \in \bbN$, such that $h x_h^{m+} - h \otimes x_h^{m+} \le 0$. Note that if $r_h >0$, then $ h \otimes x_h^+ > h \otimes x_h$ since $ h \times x_h^+ = 1^+ + r_h + h 2^{-\mbit-1}> 1^+ 2^{-\mbit-1}$. Therefore if $h x_h^{m+} - h \otimes x_h^{m+} > 0$, we have $h \otimes x_h^{m+} = 1^+ + m2^{-\mbit}$. Hence
\begin{align*}
h x_h^{m+} - h \otimes x_h^{m+} = h ( x_h+ m 2^{-\mbit-1}) - (1^+ + m2^{-\mbit})= r_h-m (2-h) 2^{-\mbit-1} > 0.
\end{align*}
Therefore, we have $$m< \frac{r_h}{(2-h)2^{-\mbit-1}}\le \frac{2^{-\mbit-1}}{(2-h)2^{-\mbit-1}}= \frac{1}{2-h},$$ leading to $m_h \ge \frac{1}{2-h}$.

We consider the following cases. \\

\textbf{ $\bullet$ Case (1) $h \le 3 \times 2^{-1}$.} \\

In this case, we have $m_h \le 2$ and $h \times x_h^{m_h+} \le  hx_h + h(m_h2^{-\mbit-1}) \le 1^{++}+ 3 \cdot 2^{-\mbit-1} \le 1^{4+}$. Hence $h \otimes x_h^{m_h+}  \in \mathcal{R}_0$, and it completes the proof with $x_1 = x_h^{m_h+}, x_2 = (h \otimes x_h^{m_h+})^\dag$.

\textbf{ $\bullet$ Case (2) $3 \times 2^{-1} < h <=2^{3-} $.} \\

In this case, we have $m_h \le \frac{1}{2-h} \le \frac{2^{\mbit}}{3}$ and $h \times x_h^{m_h+} \le  hx_h + h(m_h2^{-\mbit-1}) \le (1 + 3 \cdot 2^{-\mbit-1})+ \frac{h}{6} \le 3 \times 2^{-1}$. Hence it reduces to case (1). 

\textbf{ $\bullet$ Case (3) $h = 2^{--} $.} \\

In this case, we have

\begin{align*}
    (2- 2 \cdot 2^{-\mbit})( 2^{-1} + 2^{-2}+ 2^{-\mbit}) &=  
    1+ 2^{-1} + 2^{-\mbit-1}- 2^{-2\mbit+1}, \\ 
    (2- 2 \cdot 2^{-\mbit}) \otimes ( 2^{-1} + 2^{-2}+ 2^{-\mbit}) &=  1+ 2^{-1} ,  \\
    (2-3\cdot 2^{-\mbit})( 2^{-1} + 2^{-2}+ 2^{-\mbit}) &= 1+ 2^{-1} - 2^{-\mbit-1}- 3\cdot 2^{-2\mbit} , \\
    (2-3\cdot 2^{-\mbit}) \otimes ( 2^{-1} + 2^{-2}+ 2^{-\mbit}) &=  1+ 2^{-1} - 2^{-\mbit}. 
\end{align*}
Hence it reduces to case (2) with $ x_1 = 2^{-1} + 2^{-2}+ 2^{-\mbit}, \;  h \otimes x_1 =  1+ 2^{-1} , \; h^- \otimes x_1 = 1+ 2^{-1} - 2^{-\mbit}$.

\textbf{ $\bullet$ Case (4) $h = 2^{-} $.} \\

In this case, we have 
\begin{align*}
    (2-2^{-\mbit})( 2^{-1} + 2^{-2}) &= 1+2^{-1}-3 \cdot 2^{-\mbit-2}, \quad  (2-2^{-\mbit}) \otimes ( 2^{-1} + 2^{-2}) =  1+ 2^{-1} - 2^{-\mbit},  \\
    (2-2\cdot 2^{-\mbit})( 2^{-1} + 2^{-2}) &= 1+2^{-1}-3 \cdot 2^{-\mbit-1}, \quad (2-2\cdot 2^{-\mbit}) \otimes ( 2^{-1} + 2^{-2}) =  1+ 2^{-1} - 2 \cdot 2^{-\mbit}. 
\end{align*}
Hence it reduces to case (2) with $ x_1 = 2^{-1} + 2^{-2}, \;  h \otimes x_1 =  1+ 2^{-1} - 2^{-\mbit}, \; h^- \otimes x_1 = 1+ 2^{-1} - 2 \cdot 2^{-\mbit} $. 

\end{proof}

\begin{lemma}\label[lemma]{lem:multiple_product_leading_zero}
    For any $x \in \fpq$ and $2^e \in \fpq$ for some $e \in \bbN$, we have $ x \otimes \underbrace{2^e \otimes \dots \otimes 2^e}_{m \text{ times }} = 0$ for $m = \ceilZ{\frac{2^{\ebit}+\mbit}{e}}$. 
\end{lemma}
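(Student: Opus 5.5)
The plan is a pure exponent-counting argument. The only nontrivial content of the statement is that the repeated product leaves the finite range after at most $m=\ceilZ{(2^{\ebit}+\mbit)/e}$ factors. I would first dispose of $x=0$: then every partial product $0\otimes 2^e\otimes\cdots$ is exactly $0$, so the claim holds for any $m$. For $x\neq 0$, I would write $x=\sn{x}\mant{x}2^{\expo{x}}$ as in \cref{appendix:additional_notations} and track the partial products $p_k\defeq x\otimes\underbrace{2^e\otimes\cdots\otimes 2^e}_{k\text{ times}}$ (evaluated left to right) as $k$ runs from $0$ to $m$.

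\textbf{Step 1 (exact shifting).} Multiplication by a power of two only shifts the exponent. So as long as $p_k\cdot 2^e$ stays in the normal range $[2^{\emin},\fmax]$ in absolute value, no rounding occurs and $p_{k+1}=p_k\cdot 2^e$ exactly. Each step therefore moves the binary exponent of the leading bit by exactly $e$.

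\textbf{Step 2 (counting).} The leading bit of a nonzero finite float has exponent between $\emin-\mbit$ (the subnormal $\fmin$) and $\emax$. The width of this window is
\[
\emax-\emin+\mbit=2^{\ebit}-3+\mbit<2^{\ebit}+\mbit .
\]
After $m$ steps the accumulated shift is $me\ge 2^{\ebit}+\mbit$. Hence the exact trajectory $x\cdot 2^{ke}$ must leave the window strictly before or at step $m$, whatever the starting exponent of $x$. Let $k_0$ be the first step at which this happens. Once the sequence has left the window it is absorbing: a float of magnitude $0$ or $\pm\infty$ stays at $0$ or $\pm\infty$ under further multiplication by $2^e$, by \cref{sec:extended_float} and the definition of $\round{\cdot}$. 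So $p_m=p_{k_0}$.

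\textbf{Step 3 (identifying the exit value, the main obstacle).} The remaining task is to show that the value at exit is $0$, i.e.\ that the window is left through the subnormal end. This requires that the per-step shift decreases magnitudes. On the subnormal side, rounding via $\round{\cdot}$ with ties-to-even sends any magnitude below $\fmin/2$ (and exactly $\fmin/2$) to $0$. Since the shift is a full $e\ge1$ binary places, the product eventually becomes at most $\fmin/2$, which rounds to $0$.

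I expect this step to be the crux. As literally worded, with $e\in\bbN$, the factor $2^e\ge 2$ increases magnitudes. The exit would then be through $\fmax$, and the product would round to $\pm\infty$ rather than to $0$ (or produce $\nan$ further downstream). So the conclusion ``$=0$'' holds for the statement as written only when $x=0$, or in the reading where the repeated factor is the scaling that contracts the exponent. I would make this explicit in the write-up rather than silently altering the factor. Under the contracting reading, Steps 1–2 carry over verbatim, and Step 3 gives $p_m=0$. Under the literal reading with $x\neq0$, the same counting in Steps 1–2 shows instead that $p_m\in\{\pm\infty\}$. Either way, the bound $m=\ceilZ{(2^{\ebit}+\mbit)/e}$ is exactly what Step 2 requires.
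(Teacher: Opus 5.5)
You are right that the statement as literally written is false for $x\neq 0$: multiplying by $2^e$ with $e\ge 1$ eventually overflows to $\pm\infty$. The paper's own proof silently uses the contracting reading, since it evaluates $2^{1+\emax}\otimes 2^e\otimes\cdots\otimes 2^e$ as $2^{1+\emax-em}$. So flagging this explicitly and working with the factor $2^{-e}$ is the right call. Your window count $\emax-\emin+\mbit=2^{\ebit}-3+\mbit<me$ is also correct.

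The gap is in how you pass from the exact trajectory $x\cdot 2^{-ke}$ to the computed sequence $p_k$.

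\textbf{Where the argument breaks.} Step 1 only covers steps that stay in the normal range. Once values are subnormal, every multiplication rounds, and rounding can go \emph{up}. Take $e=1$ and $x=3\fmin$. The exact trajectory is $0.75\fmin<\fmin$ at $k=2$, so your $k_0$ equals $2$. But $p_1=\round{1.5\fmin}=2\fmin$ (ties to even), and then $p_2=\fmin\neq 0$. So the claim $p_m=p_{k_0}$, with $p_{k_0}$ already absorbed, does not follow. The word ``eventually'' in Step 3 also gives no bound on when the \emph{computed} product reaches $0$, and that bound is exactly what the lemma asserts. The conclusion is still true, but you need an argument that controls the rounding.

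\textbf{The paper's route.} The paper's two-line proof handles this through monotonicity of $\round{\cdot}$.
Since $|x|\le 2^{\emax+1}$ and $t\mapsto\round{t\cdot 2^{-e}}$ is nondecreasing, you get $|p_k|\le q_k$, where $q_0=2^{\emax+1}$ and $q_{k+1}=\round{q_k\cdot 2^{-e}}$.
Every power of two in $[\fmin,2^{\emax}]$ is a float, so $q_k=2^{\emax+1-ke}$ exactly as long as this is at least $\fmin$.
At the first $k$ with $\emax+1-ke\le\emin-\mbit-1$, the product is at most $\fmin/2$ and rounds to $0$.
This happens once $ke\ge 2^{\ebit}+\mbit-1$, hence by step $m$.

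\textbf{A fix within your framing.} Alternatively, you can keep your exact-trajectory approach. Show by induction that $|p_k|/\fmin\le\lceil |x|2^{-ke}/\fmin\rceil$. This uses two facts: rounding a result below $2^{\emin}$ to the $\fmin$-grid never exceeds the ceiling, and $\lceil\lceil y\rceil/2^e\rceil=\lceil y/2^e\rceil$. It follows that the computed sequence is already $0$ at the first step where the exact one is at most $\fmin/2$. With either comparison, your Steps 1--3 reduce to a single estimate.
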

\begin{proof}
    Since $|x| < 2^{1+\emax}$, we have 
    \begin{align*}
        | x \otimes \underbrace{2^e \otimes \dots \otimes 2^e}_{m \text{ times }}| &\le   2^{1+\emax} \otimes \underbrace{2^e \otimes \dots \otimes 2^e}_{m \text{ times }} \\
        &= 2^{1+\emax -em} = 2^{1+2^{\ebit-1} - (2^{\ebit}+\mbit)} = 2^{1-2^{\ebit-1}-\mbit} = 2^{\emin-\mbit-1} = \frac12 \fmin.
    \end{align*}
Hence $ | x \otimes \underbrace{2^e \otimes \dots \otimes 2^e}_{m \text{ times }}|=0$. 
\end{proof}

\begin{lemma}[Lemma 17 of \citet{park26}]\label[lemma]{lem:one-plus}
Suppose $\mbit \ge 2$. For any $x\in(1,2]_{\fpq}$, there exist $y\in(2^{-1},1]_{\fpq}$ and $z\in(2^{-1},1^+]_{\fpq}$ such that $x\otimes y=1^+$.
\end{lemma}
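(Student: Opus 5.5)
We prove the claim by a discrete intermediate-value argument on the grid of products $x\times y$, with $y$ ranging over $(2^{-1},1]_{\fpq}$. The companion $z$ in the statement plays no role in the rounding identity, and we only need to produce $y$. Write $x=1+n2^{-\mbit}$ with $n\in[1,2^\mbit]_{\bbZ}$.

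First we identify exactly which reals round to $1^+=1+2^{-\mbit}$. Near $1$ the spacing of $\fpq$ is $2^{-\mbit}$, so the candidates are the reals within $2^{-\mbit-1}$ of $1^+$. At the two endpoints a tie occurs:
\begin{itemize}
\item at $1+2^{-\mbit-1}$ the tie is between $1$ and $1^+$;
\item at $1+3\cdot2^{-\mbit-1}$ the tie is between $1^+$ and $1^{++}=1+2\cdot2^{-\mbit}$.
\end{itemize}
In both cases the even neighbour ($1$, respectively $1^{++}$) wins. Hence, for $y$ with $xy\in[1,2)$,
$$x\otimes y=1^+\iff xy\in I\defeq\bigl(1+2^{-\mbit-1},\,1+3\cdot2^{-\mbit-1}\bigr),$$
an open interval of length exactly $2^{-\mbit}$.

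The boundary case $x=2$ is handled directly: take $y=(1+2^{-\mbit})/2\in(2^{-1},1]_{\fpq}$. Then $xy=1^+$ exactly, so no rounding occurs.

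For $x\in(1,2)_{\fpq}$, we use the following facts.
\begin{itemize}
\item The floats in $(2^{-1},1]$ are $y_k=2^{-1}+k2^{-\mbit-1}$ for $k\in[1,2^\mbit]_{\bbZ}$.
\item Consecutive products satisfy $xy_{k+1}-xy_k=x2^{-\mbit-1}<2^{-\mbit}$, which is strictly smaller than the length of $I$.
\item At the bottom, $xy_1=x/2+x2^{-\mbit-1}<1+2^{-\mbit}$, since $x<2$. So $xy_1$ lies below the right endpoint of $I$.
\item At the top, $xy_{2^\mbit}=x\ge 1^+>1+2^{-\mbit-1}$. So the largest product lies above the left endpoint of $I$.
\end{itemize}
Let $k^*$ be the largest index with $xy_{k^*}<1+3\cdot2^{-\mbit-1}$; it exists because $k=1$ qualifies. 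We claim $xy_{k^*}\in I$, and check the two possible cases.
\begin{itemize}
\item If $k^*=2^\mbit$, then $xy_{k^*}=x$. This product is below the right endpoint by the choice of $k^*$ and above the left endpoint by the top bound, so $x\in I$.
\item Otherwise $xy_{k^*+1}\ge 1+3\cdot2^{-\mbit-1}$, and therefore $xy_{k^*}>1+3\cdot2^{-\mbit-1}-2^{-\mbit}=1+2^{-\mbit-1}$.
\end{itemize}
In either case $x\otimes y_{k^*}=1^+$, so $y=y_{k^*}$ works.

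The only delicate point is the tie analysis at the two endpoints of $I$. The gap between consecutive products is strictly less than $2^{-\mbit}$, but it can come arbitrarily close to it as $x\to2$. The argument therefore relies on $I$ being exactly of length $2^{-\mbit}$ and on the strict inequality coming from $x<2$; the case $x=2$ is exactly where this breaks, which is why it is treated separately. The assumption $\mbit\ge2$ is only needed so that the bookkeeping of the evenness of $1$ and $1^{++}$ goes through as stated.
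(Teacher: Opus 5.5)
Your proof is correct. The paper gives no proof of this lemma; it imports it as Lemma~17 of \citet{park26}. Your discrete intermediate-value argument is therefore a self-contained alternative rather than a reproduction. The products $xy_k$ increase in steps of $x2^{-\mbit-1}<2^{-\mbit}$, starting below $1+2^{-\mbit}$ and ending at $x\ge 1^+$. Hence one of them must land in the open interval $I=(1+2^{-\mbit-1},1+3\cdot 2^{-\mbit-1})$ of width $2^{-\mbit}$, and every real in $I$ rounds to $1^+$.

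Two small remarks. First, the point you flag as delicate is not actually used. You only need the implication $xy\in I\Rightarrow x\otimes y=1^+$. For $t\in I$ the float $1^+$ is the \emph{unique} nearest float, since every other float lies at distance at least $2^{-\mbit}-|t-1^+|>2^{-\mbit-1}$ from $t$. So ties never arise, and neither the tie-to-even rule nor the hypothesis $\mbit\ge 2$ is needed for your argument.

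Second, the $z$ in the statement is vacuous as written. You should still exhibit a witness, for example $z=1$, rather than simply drop it.
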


\subsection{Proof of \cref{lemma:final_layer}}\label[appendix]{appendix:proof_final_layer} 

\begin{proof}
By \cref{lemma:make_proper_number}, it suffices to show that there exist $w\in \fpq$ and $ \gamma \in \{ \gamma_0 , \dots \gamma_\nu \}$ such that
\begin{equation*}
    \frac{1}{2}\times 2^{\expo{c}-\mbit} < \lrp{w\otimes \sigma(\gamma_i)} <\frac{3}{2}\times 2^{\expo{c}-\mbit}, \quad  \expo{c} \in [\emin-\mbit, \emax-\mbit]_{\bbZ}
\end{equation*}
and
\begin{equation*}
    g\otimes w\otimes \sigma'(\gamma_i), \quad g\otimes w\otimes \sigma'(\gamma_0)\in \fpq. 
\end{equation*}
By \cref{condition:final_layer_refined}, there exists $i\in [\nu]$ such that
\begin{equation*}
    \expo{c} \in [\expo{\sigma(\gamma_i)} + \emin, \emax -\kappa +\min\lrp{\expo{\sigma(\gamma_i)}, k_i}].
\end{equation*}

By \cref{lem:endbit_control}, there exists $v \in \fpq$ such that $ \frac12 < v \otimes \mant{\sigma(\gamma_i)} < \frac32 $. Let $ w = e^{\expo{c}-\expo{ \sigma(\gamma_i)}-\mbit} \times v$ and we have
\begin{equation*}
    \frac{1}{2}\times 2^{\expo{c}-\mbit}<w\otimes \sigma(\gamma_i) = v \otimes 2^{\expo{c}} < \frac{3}{2}\times2^{\expo{c}-\mbit}.
\end{equation*}

Moreover,
\begin{align*}
\lrb{g\otimes w \otimes \sigma'(\gamma_i)} &\le 2^\kappa \otimes w \otimes 2^{-k_i} |\sigma(\gamma_i)|  
\le \frac32 \times 2^{\expo{c}-\mbit+\kappa-k_i} \le \frac32 \times 2^{\expo{c}} < \fmax.
\end{align*}

A similar estimate shows that
\begin{equation*}
    \lrb{g\otimes w \otimes \sigma'(\gamma_0)} \le \fmax.
\end{equation*}
This completes the proof.
\end{proof}

\subsection{Proof of \cref{lemma:gradient_control2}}\label[appendix]{appendix:proof_gradient_control}
\begin{lemma}\label[lemma]{lemma:gradient_control}
   Suppose $\sigma$ satisfies \cref{condition:gradient_two_exponent}.
Consider $g_1, g_0\in \fpq$ such that $0<\lrb{g_1} \le \Omega\times  2^{\expo{\sigma'(\delta_1)}}$.
    Then, for any $L\ge  \ceilZ{\max\lrp{\frac{2^{\ebit}+\mbit}{\eta + \expo{\sigma'(\delta_1)}}, \frac{2^{\ebit}+\mbit}{-\emin - \expo{\sigma'(\delta_1)}}}}$, there exists an $L$-layer $\sigma$ network $f$ starting with $\sigma$ such that 
    $$f:  \delta_1 \mapsto  0~~\text{and}~~\mcD_{f, \delta_1}(g_0) = g_1.$$
    \end{lemma}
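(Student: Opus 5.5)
The plan is to build $f$ as a chain of $L$ one-layer blocks of the form $\AS{\delta_1}{\delta_1}{w_l}$. The last block is replaced by one in $\AS{\delta_1}{0}{w_L}$, so that $f(\delta_1)=0$. Such blocks exist by \cref{lem:exist_afftrans}, provided $w_l\otimes\sigma(\delta_1)\in\fpq$, i.e.\ $|w_l|\le 2^{\eta}$ by \cref{condition:gradient_two_exponent}. Sequential additions do not alter gradients, so the backward pass is the left-to-right product
\[
g_0\otimes w_L\otimes\sigma'(\delta_1)\otimes w_{L-1}\otimes\sigma'(\delta_1)\otimes\cdots\otimes w_1\otimes\sigma'(\delta_1).
\]
Here $|\sigma'(\delta_1)|=2^{\expo{\sigma'(\delta_1)}}$ is an exact power of two, so every multiplication by it is exact as long as we avoid overflow and leave the subnormal range in a controlled way. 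Any block can act as a gradient-identity by taking $w=\pm 2^{-\expo{\sigma'(\delta_1)}}$; this lets us pad to any $L$ beyond the minimum. Signs are fixed by the signs of the $w_l$. Throughout I assume $g_0\neq0$; for $g_0=0$ the backward pass is identically $0$, so that case is only realizable with $g_1=0$.

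\textbf{Step 1: normalize to $\fmin$.} Multiply repeatedly by $w=2^{\emin}$. Each such block scales the gradient by $2^{\emin+\expo{\sigma'(\delta_1)}}$, and by \cref{condition:gradient_two_exponent} we have $\expo{\sigma'(\delta_1)}\le0$. Continue until $|g|$ is a subnormal $m\fmin$. Then apply halving blocks ($w=2^{-1-\expo{\sigma'(\delta_1)}}$), each of which maps $m\fmin\mapsto\round{m/2}\fmin$. For $m\ge2$ this stays $\ge\fmin$, so the process reaches exactly $\pm\fmin$ before it could underflow to $0$. To save layers, the halving steps can be merged with the coarse down-scaling in the last block. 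The number of coarse blocks is about $(2^{\ebit}+\mbit)/(-\emin-\expo{\sigma'(\delta_1)})$, since $2^{\ebit}+\mbit$ bounds the total exponent range, as in \cref{lem:multiple_product_leading_zero}.

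\textbf{Step 2: scale up and set the mantissa.} From $\fmin$, multiply by $w=2^{\eta}$. Each block scales the gradient by $2^{\eta+\expo{\sigma'(\delta_1)}}>1$, again exactly, since powers of two times powers of two are representable until overflow. Stop at some $2^{a}\fmin$ with $2^{a}\fmin\le |g_1|$ and the exponent gap small. In a final block choose $w=\pm\mant{g_1}\,2^{\expo{g_1}-a-\expo{\sigma'(\delta_1)}}\,\fmin^{-1}$, adjusted so that $|w|\le2^{\eta}$; when the gap is large it is spread over the preceding power-of-two blocks. Multiplying a power of two by $w$ is exact when the result is representable, and it is, because the result is $g_1$ itself. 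The hypothesis $|g_1|\le\Omega\cdot2^{\expo{\sigma'(\delta_1)}}$ guarantees that the intermediate product $g\otimes w$, before multiplication by $\sigma'(\delta_1)$, does not overflow. This phase uses about $(2^{\ebit}+\mbit)/(\eta+\expo{\sigma'(\delta_1)})$ blocks.

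\textbf{Main obstacle: the layer count.} The two phases naively add their counts, whereas the bound is a maximum. I would address this by choosing the direction adaptively. If $|g_0|$ is large, descend toward $\fmin$; the return trip to $|g_1|$ then needs only as many ascending blocks as the exponent gap. Alternatively, combine a downward and an upward factor in a single block, using weights between $2^{\emin}$ and $2^{\eta}$, whenever net movement is small. Checking that ties-to-even in the halving step never produce $0$ or a wrong final mantissa is routine. The delicate bookkeeping is showing that the total number of blocks never exceeds the stated ceiling.
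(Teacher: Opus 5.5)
\paragraph{Gap: the layer count.} Your exactness bookkeeping is fine, and you are right that the statement tacitly needs $g_0\neq 0$. The trouble is that the layer bound is part of the claim, and your route cannot meet it.

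Erasing the mantissa of $g_0$ by driving the gradient down to $\fmin$ forces a round trip through the entire exponent range. When $|g_0|$ is large, the descent costs about $(2^{\ebit}+\mbit)/(-\emin-\expo{\sigma'(\delta_1)})$ blocks. When $|g_1|$ is large, the climb from $\fmin$ back to $|g_1|$ costs about $(2^{\ebit}+\mbit)/(\eta+\expo{\sigma'(\delta_1)})$ blocks. So in the worst case you pay the sum, not the maximum.

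Concretely, take $\round{\relu}$ in float32, where $\mbit=23$, $\ebit=8$, $\expo{\sigma'(\delta_1)}=0$ and $\eta=126$. The lemma promises that $L=\lceil 279/126\rceil=3$ suffices. Each of your blocks multiplies the gradient by one weight with $\fmin\le|w|\le 2^{126}$. Going from $|g_0|\approx 2^{127}$ down to $\fmin=2^{-149}$ therefore takes at least two blocks. Climbing back to $|g_1|\approx 2^{127}$ takes at least three, since two blocks reach at most about $2^{103}$. That is at least five blocks, not three. The maximum form is also what yields $\tau=7$ in \cref{thm:main1-activation}.

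Your proposed remedies do not close this. The return trip is measured from $\fmin$, not from $g_0$, so it is not controlled by the gap $|\expo{g_1}-\expo{g_0}|$. Nor can a single block crush into the subnormal range and then climb: after $\otimes w$, a block only multiplies by $\sigma'(\delta_1)=\pm 2^{\expo{\sigma'(\delta_1)}}$, which has magnitude at most $1$.

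\paragraph{Missing idea: fix the mantissa in the normal range.} Correct the mantissa without leaving the normal range, so that the exponent moves only once and monotonically. The paper uses power-of-two weights $w_j=2^{i_j}$ with $i_j\in[\emin,\eta]$ and $\sum_j (i_j+\expo{\sigma'(\delta_1)})=\expo{g_1}-\expo{g_0}$. This displacement has a single sign and magnitude below $2^{\ebit}+\mbit$, which is exactly why the larger of the two ratios suffices.

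The mantissa is then fixed at no extra layer cost by replacing two or three of these weights with non-power-of-two factors:
\begin{itemize}
\item a factor $\mant{g_0}^\dagger$ with $\mant{g_0}\otimes\mant{g_0}^\dagger=1^+$, which exists by \cref{lem:one-plus};
\item a factor $2^-$, using $1^+\otimes 2^-=2$, so the gradient becomes an exact power of two;
\item finally $\mant{g_1}$, which is then applied exactly.
\end{itemize}
If you replace your Step 1 by this in-range canonicalization and keep your final exact multiplication by $\pm\mant{g_1}2^{k}$, the argument fits within the stated $L$.
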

\begin{proof}

By \cref{condition:gradient_two_exponent}, we have 
$|w \otimes \sigma(\delta_0) | , |w \otimes \sigma(\delta_1) |  < 2^{\emax+1}$ for $w \in \fpq$ with $|w| <2^{\eta+1}$. 
Hence for all $w \in \fpq$ with $|w| <2^{\eta+1}$, we have $\AS{\delta_1}{\delta_1}{w} \neq \emptyset$ by \cref{lem:exist_afftrans}. 

Note that $\sigma'(\delta_1)= \sn{\sigma'(\delta_1)}\times  2^{\expo{\sigma'(\delta_1)}}$ for $\sn{\sigma'(\delta_1)} \in \{1, -1 \}$ by \cref{condition:gradient_two_exponent}.
We only consider the case $\sn{\sigma'(\delta_1)} = 1$, as the negative case is symmetric.

First, consider the case $\mant{g_0} = \mant{g_1}$.
      For each $i \in [\emin, \eta]_\bbZ$, define $w_i\defeq    2^{i}$ and choose $h_{w_i}\in \ASP{\delta_1,}{0,}{w_i}$.
    Then $ \mcD_{h_i, \delta_1}(g_0) $ can be calculated as 
    \begin{equation*}
        \mcD_{h_i, \delta_1}(g_0) = g_0\otimes w_i\otimes { \sigma'(\delta_1)} = g_0 \otimes 2^{i}\otimes 2^{\expo{\sigma'(\delta_1)}}.
        \end{equation*}
Therefore, we can increase and decrease the exponent by at most $\eta + \expo{\sigma'(\delta_1)}$ and $-\emin - \expo{\sigma'(\delta_1)}$, respectively.
Note that $\eta + \expo{\sigma'(\delta_1)} \ge 1$ by \cref{condition:gradient_two_exponent}.

For any $g_0 = \sn{g_0}\times \mant{g_1} \times 2^{\expo{g_0}}$,
there exists $i_1, \dots, i_{m}\in  [\emin,\eta]_{\bbZ}$ such that $\sum_{j=1}^m \lrp{i_j +\expo{\sigma'(\delta_1)}}= \expo{g_1} - \expo{g_0}$ with $m\ge \max\lrp{\frac{2^{\ebit}+\mbit}{\eta + \expo{\sigma'(\delta_1)}}, \frac{2^{\ebit}+\mbit}{-\emin  - \expo{\sigma'(\delta_1)}}}$.

Let $w_j = \begin{cases}
    2^{i_j} \; &\text{if} \; j \in [m-1] \\
    \sn{g_0} \times 2^{i_m} \; &\text{if} \; j = m
\end{cases}$ and $h_{j}\in \AS{\delta_1}{0}{w_j}$ and $h\defeq h_1\circ h_2\circ\dots \circ h_m$.
Then, we have
\begin{equation*}
   h(\delta_0) = \delta_1, \quad \mcD_{h,\delta_1}(g_0) = g_1.
\end{equation*}

Now consider the general case ($\mant{g_0} \ne \mant{g_1}$ ).
We now adjust the mantissa by modifying some of the weights $2^{i_j}$ to $\mant{}\times 2^{i_j}$ to adjust the mantissa.
Note that we may target either $2g_1$ or $g_1/2$ as the original maximal exponent gap is $2^{\ebit}+\mbit-1$. 

By \cref{lem:one-plus}, there exists $\mant{g_0}^\dag \in \fpq$ such that $\mant{g_0}^\dag \otimes \mant{g_0}=  1^+$.

If $\mant{g_0} = 1$, let $w_j = \begin{cases}
    2^{i_j} \; &\text{if} \; j \in [m-1] \\
    \sn{g_0} \times \mant{g_1} \times 2^{i_m} \; &\text{if} \; j = m
\end{cases}$.

If $\mant{g_0} = 1^+$, let $w_j = \begin{cases}
    2^{i_j} \; &\text{if} \; j \in [m-2] \\
    2^- \times 2^{i_{m-1}} \; &\text{if} \;  j = m-1 \\
    \sn{g_0} \times \mant{g_1} \times 2^{i_m} \; &\text{if} \; j = m
\end{cases}$.

If $\mant{g_0} \ne 1 , 1^+$, let $w_j = \begin{cases}
    2^{i_j} \; &\text{if} \; j \in [m-1] \\
    \sn{g_0} \times \mant{g_0}^\dag \times 2^{i_m} \; &\text{if} \; j = m
\end{cases}$.

If $\mant{\mcD_{h_1, \delta_1}(g_0)} = 1^+ $, by taking $w_j = \begin{cases}
    \mant{g_1} \times 2^{i_j} \; &\text{if} \; j =1 \\
    2^- \times 2^{i_j} \; &\text{if} \; j =2 \\
    2^{i_j} \; &\text{if} \; j \in [m-1] \setminus [2] \\
    \sn{g_0} \times \mant{g_0}^\dag \times 2^{i_m} \; &\text{if} \; j = m
\end{cases}$, we have $\mcD_{h_1, \delta_1}(g_0)=g_1$. 

If $\mant{\mcD_{h_1, \delta_1}(g_0)} \ne 1^+$, then $\mcD_{h_1, \delta_1}(g_0)$ is subnormal and there are two possible cases. 

If $\mant{\mcD_{h_1, \delta_1}(g_0)} = 1$, by taking $w_j = \begin{cases}
    \mant{g_1} \times 2^{i_j} \; &\text{if} \; j =1 \\
    2^{i_j} \; &\text{if} \; j \in [m-1] \setminus [1] \\
    \sn{g_0} \times \mant{g_0}^\dag \times 2^{i_m} \; &\text{if} \; j = m
\end{cases}$, \\
we have $\mcD_{h_1, \delta_1}(g_0)=g_1$. 

If $\mant{\mcD_{h_1, \delta_1}(g_0)} = 1^{++}$, by taking $w_j = \begin{cases}
    \mant{g_1} \times 2^{i_j} \; &\text{if} \; j =1 \\
     (2- 2^{-\mbit+2}) \times 2^{i_j} \; &\text{if} \; j = 2 , \mbit \ge 3 \\
     1^+ \times 2^{i_j} \; &\text{if} \; j = 2 , \mbit =2 \\
    2^{i_j} \; &\text{if} \; j \in [m-1] \setminus [2] \\
    \sn{g_0} \times \mant{g_0}^\dag \times 2^{i_m} \; &\text{if} \; j = m
\end{cases}$, \\
we have $\mcD_{h_1, \delta_1}(g_0)=g_1$.

This completes the proof.
 
\end{proof}

\begin{proof}[Proof of \cref{lemma:gradient_control2}]
Consider $L\ge  \ceilZ{\max\lrp{\frac{2^{\ebit}+\mbit}{\eta + \expo{\sigma'(\delta_1)}}, \frac{2^{\ebit}+\mbit}{-\emin - \expo{\sigma'(\delta_1)}}}}$.
    For any $k\in [n]$ and $y\in \fpq$ such that $0<|y|\le \Omega\times 2^{\expo{\sigma'(\delta_1)}}$, by \cref{lemma:gradient_control}, there exists an $L$-layer $\sigma$ network $f_k:\efpq\to\efpq$ starting with $\sigma$ such that 
    $$f_k:  \delta_1 \mapsto  0~~\text{and}~~\mcD_{f, \delta_1}(y) = y^*_k.$$
    Then $\widetilde{f_k}:x\mapsto f_k(x_k)$ is an $L$-layer neural network starting with $\sigma$.
    Define $f$ as 
    \begin{equation*}
        f\defeq \bigfuncsum_{k=1}^n \widetilde{f_k}.
    \end{equation*}
    Then $f(\delta_1\otimes \boldsymbol{1}_n)=0$ and $\mcD_{f,\delta_1\otimes \boldsymbol{1}_n} = y^*$.
    This completes the proof.
\end{proof}

\subsection{Proof of \cref{lemma:first_layer2}}\label[appendix]{appendix:proof_first_layer}

\begin{lemma}\label[lemma]{lemma:make_proper_number}
    Let $x, x_0 \in \fpq$ satisfy
    \begin{equation*}
       \frac{1}{2}\times 2^{\expo{x} -\mbit} < {x_0} < \frac{3}{2}\times 2^{\expo{x} -\mbit}.
    \end{equation*}
   Then there exists a natural number $n\in\bbN$ such that
    \begin{equation*}
        \bigoplus_{i=1}^n x_0 = |x|.
    \end{equation*}
\end{lemma}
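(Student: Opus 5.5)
The plan is to track the partial sums $S_k\defeq\bigoplus_{i=1}^k x_0$ directly. Write $e\defeq\expo{x}$ and $u\defeq 2^{e-\mbit}$, so the hypothesis reads $u/2<x_0<3u/2$, and $x_0>0$. Every float in the binade $[2^{e},2^{e+1})$ is a multiple of $u$ with spacing exactly $u$. I will prove three facts: (i) the sequence $(S_k)$ is strictly increasing as long as $S_k<2^{e}$; (ii) once some $S_k$ lies in $[2^{e},2^{e+1})$, the sum advances by exactly one grid step each time; (iii) the first $S_k\ge 2^{e}$ is exactly $2^{e}$. Together these show that the partial sums visit every float in $[2^e,2^{e+1})$, and $|x|$ is one of them. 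The degenerate case $e=\emin$ is handled separately. There the hypothesis forces $2^{\emin-\mbit-1}<x_0<3\cdot 2^{\emin-\mbit-1}$, so $x_0=\fmin$, and every subnormal or normal sum $S\le 2^{\emin+1}$ satisfies $S\oplus\fmin=S+\fmin$ exactly. Hence $S_n=n\fmin=|x|$ for $n=|x|/\fmin$.

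Fact (ii) is a one-line rounding argument. Take $S$ a multiple of $u$ in $[2^e,2^{e+1})$. Then $S+x_0\in(S+u/2,\,S+3u/2)$ lies strictly inside the rounding cell of $S+u$, because the endpoints $S\pm u/2$ and $S+3u/2$ are the only possible ties. So $S\oplus x_0=S+u$. If $S+u=2^{e+1}$ we stop earlier anyway, since $|x|<2^{e+1}$. Fact (i) works the same way. For $S<2^{e}$, the spacing of floats near $S$ is at most $u/2<x_0$, so $\round{S+x_0}\ge\round{S+u/2}>S$. This holds also when $S$ is subnormal, because then the spacing is $\fmin\le u/2$ whenever $e>\emin$. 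Since the floats in $(0,2^e)$ are finite in number, some $S_k\ge 2^e$ is eventually reached.

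The main obstacle is fact (iii), that is, excluding an overshoot past $2^e$ to $2^e+u$. Let $S$ be the last partial sum below $2^e$. Below $2^e$ the float spacing is $u/2$, so $S\le 2^e-u/2$. Moreover $S+x_0\ge 2^e-u/4$ is needed to round to at least $2^e$, which forces $S\ge 2^e-3u/2+u/4$, so $S\in\{2^e-u,\,2^e-u/2\}$. If $S=2^e-u$, then $S+x_0<2^e+u/2$. This rounds to $2^e$: the only tie is at $2^e+u/2$, where $2^e$ would win anyway because its mantissa is even. The dangerous case is $S=2^e-u/2$ together with $x_0>u$. Here I will look one step back. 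In the binade $[2^{e-1},2^e)$ the map $S\mapsto S\oplus x_0$ adds a fixed number $r$ of half-ulps, with $r\in\{2,3\}$ when $x_0>u$. A tie can occur only when $x_0=5u/4$, and a tie resolved to even only pushes toward $2^e$. My plan is to track the residue of $(2^e-S)/(u/2)$ along the orbit and show that the orbit entering from the lower binades cannot reach $2^e-u/2$ without its next image rounding to $2^e$. I will do this by working out the rounding of the last two or three steps explicitly, using that $x_0$ lies on the grid of spacing $2^{e-2\mbit}$. If this residue argument turns out not to close, I would fall back on handling $|x|=2^e$ separately. The route would be to observe that $S+x_0$ for $S=2^e-u/2$ and $x_0\in(u,3u/2)$ lies in $(2^e+u/2,\,2^e+u)$, and to check whether the preceding step must already have produced $2^e$. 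This is the step where I am least certain and where I would concentrate the detailed case analysis.
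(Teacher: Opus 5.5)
\textbf{Gap: your fact (iii) is false, and so is the statement in the case you flagged.} No residue argument can close it. Take $\mbit=3$ with any admissible $\ebit$ (e.g.\ $\ebit=4$), $x=64$, so $e=6$ and $u=8$, and take $x_0=9\in(4,12)$. The partial sums are $9$, $18$, $27\to28$ (a tie, rounded to even), $37\to36$, $45\to44$, $53\to52$, $61\to60$, $69\to72$. Since the sums are nondecreasing, $64$ is never attained. The same happens for $\mbit=4$, $x=256$, $x_0=17$ when $\ebit\ge5$.

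The mechanism is the one you feared. For $x_0\in(u,5u/4)$, each step inside $[2^{e-1},2^{e})$ adds exactly $u$, i.e.\ two half-ulps, so the parity of $S/(u/2)$ is invariant on that binade. An orbit entering at an odd multiple of $u/2$ (here $36$) climbs to $2^e-u/2$. From there, $2^e-u/2+x_0>2^e+u/2$ rounds to $2^e+u$, with no tie to rescue it. Your fallback plan fails for the same reason. A minor slip: the last sum below $2^e$ only satisfies $S>2^e-7u/4$, so $S=2^e-3u/2$ is also possible. That case is harmless, since crossing from there lands on $2^e$.

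The paper's own proof (take the largest $n$ with $\bigoplus_{i=1}^n x_0<|x|$ and sandwich) has the same hole, and you located it correctly. Its step $|x|^-\oplus x_0\le|x|$ is valid when $|x|^-=|x|-2^{\expo{x}-\mbit}$. At $|x|=2^{\expo{x}}$, however, the predecessor lies only half an ulp below, and the inequality fails as soon as $x_0>2^{\expo{x}-\mbit}$.

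\textbf{What your argument does prove.} Facts (i) and (ii), together with the observation that the first sum $\ge 2^e$ is either $2^e$ or $2^e+u$, establish the lemma in these cases:
\begin{itemize}
\item every $|x|$ that is not a power of two;
\item $|x|=2^{e}$ whenever $x_0\le u$, because the crossing sum is then at most $2^e+u/2$ and a tie goes to the even float $2^e$;
\item your subnormal case, except $x=0$, which needs $n=0\notin\bbN$.
\end{itemize}
So the fix is an extra hypothesis, not a finer case analysis. For example, require $x_0\le 2^{\expo{x}-\mbit}$ when $|x|$ is a power of two.
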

\begin{proof}
    Consider the largest integer $n$ such that 
    \begin{equation*}
        \bigoplus_{i=1}^n x_0 < |x|.
    \end{equation*}
    Such an $n$ exists, since for any $y\in \fpq$ such that $y<2^{\expo{x}}$,
    \begin{equation*}
        y\oplus x_0 > y.
    \end{equation*}
    By maximality of $n$, we have
\begin{equation*}
    \bigoplus_{i=1}^{n+1} x_0 \ge |x|.
\end{equation*}
On the other hand,
\begin{equation*}
    \bigoplus_{i=1}^{n+1} x_0 \le \bigoplus_{i=1}^{n} x_0\oplus x_0\le |x|^- \oplus x_0\le |x|,
\end{equation*}
    Therefore,
    \begin{equation*}
        \bigoplus_{i=1}^{n+1} x_0 = x.
    \end{equation*}
    This completes the proof.
\end{proof}

\begin{lemma}\label[lemma]{lemma:addition_representability}
Consider $x, y \in \fpq$ satisfying $xy > 0$, $|y| \ge |x|$, and
$\mant{y,\mbit} = 0$.
Then there exists a floating-point number $b \in \fpq$ such that
\begin{equation*}
    b \oplus x = y,
\end{equation*}
and
\begin{equation*}
    |b| \le |y|.
\end{equation*}
\end{lemma}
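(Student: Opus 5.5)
We would take the natural candidate $b\defeq y\ominus x=\round{y-x}$ and show that it works, using the evenness of $y$ only to settle ties in rounding. By symmetry ($\round{-t}=-\round{t}$) we may assume $0<x\le y$. Then $0\le y-x<y$. Since rounding is monotone and $y\in\fpq$, we get $0\le b\le y$, which gives $|b|\le|y|$ at once and shows that no overflow occurs. It remains to show $\round{b+x}=y$.

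Write $b=(y-x)+e$, where $e$ is the rounding error, so that $b+x=y+e$. It suffices to bound $|e|$ against the gaps between $y$ and its neighbours $y^-$ and $y^+$, and to check ties. We split into cases as follows.

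\textbf{Case 1: $x\ge y/2$.} Then $y-x$ is exactly representable by the Sterbenz lemma; this also covers the subnormal range, where subtraction of floats is exact anyway. Hence $e=0$ and $b\oplus x=y$.

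\textbf{Case 2: $x<y/2$ and $y-x\ge 2^{\expo{y}}$.} Here $y-x$ lies in the same binade as $y$, so $|e|\le \tfrac12 2^{\expo{y}-\mbit}$, which is half the spacing on both sides of $y$. The gap below $y$ is the full ulp because $y-x\ge 2^{\expo{y}}$ forces $y>2^{\expo{y}}$. Hence $y+e$ rounds to $y$, except possibly at an exact tie $y\pm\tfrac12\mathrm{ulp}(y)$. At such a tie the rule ``round to even'' selects $y$, because $\mant{y,\mbit}=0$ and each neighbour $y^\pm$ has last mantissa bit $1$.

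\textbf{Case 3: $x<y/2$ and $y-x<2^{\expo{y}}$.} Then $y-x\in(y/2,2^{\expo{y}})$ lies in the binade below $y$, so $|e|\le\tfrac14 2^{\expo{y}-\mbit}$. This is at most half of each neighbouring gap, including the smaller gap below $y$ when $y$ is a power of two. If a tie occurs, it is either with $y^-=(2-2^{-\mbit})2^{\expo{y}-1}$, which is odd, or it is resolved as in Case 2; in both situations it resolves to $y$.

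The main obstacle is this tie and binade-boundary bookkeeping: the spacing below $y$ halves exactly when $y$ is a power of two, and we must be sure that every tie is broken toward $y$. This is precisely where the hypothesis $\mant{y,\mbit}=0$ enters. We would therefore state these spacing facts explicitly before running the three cases.
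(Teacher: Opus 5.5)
Your proof is correct, but it takes a different route from the paper's.

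\textbf{The paper's route.} The paper never analyzes $y\ominus x$.
\begin{itemize}
\item When $\expo{x}=\expo{y}$, it uses the exact difference $b=(\mant{y}-\mant{x})2^{\expo{x}}$.
\item Otherwise, it takes $b_0$ to be the largest float with $b_0\oplus x\le y$ and argues by contradiction. Suppose $b_0\oplus x<y<b_0^+\oplus x$. Because $y$ is even, $b_0+x$ lies strictly below the lower rounding midpoint of $y$, and $b_0^++x$ lies strictly above the upper one.
\item So the one-ulp step $2^{\expo{b_0}-\mbit}$ would have to exceed the width of $y$'s rounding interval. That width is $2^{\expo{y}-\mbit}$, or $\tfrac34\, 2^{\expo{y}-\mbit}$ when $y$ is a power of two. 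This forces $\expo{b_0}\ge\expo{y}$, which contradicts $b_0+x<y$.
\end{itemize}
In effect, this is a discrete intermediate-value argument: stepping $b$ by one ulp cannot jump over the rounding interval of $y$.

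\textbf{Your route.} You exhibit the explicit witness $b=y\ominus x$ and bound its rounding error directly. You use Sterbenz for $x\ge y/2$, and half-ulp estimates in the binade of $y$ or the one below otherwise. The power-of-two case, and the role of evenness in closing $y$'s rounding interval at ties, enter both proofs at the same point.

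\textbf{Trade-offs.}
\begin{itemize}
\item Your approach gives a concrete, computable $b$. It also makes $0\le b\le y$ immediate from monotonicity; the paper leaves $|b_0|\le|y|$ implicit.
\item The paper's approach avoids Sterbenz and any error analysis of $y\ominus x$, at the cost of being purely existential.
\end{itemize}

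\textbf{Points to make explicit when writing it up.}
\begin{itemize}
\item Since $y$ is even, $y\neq\Omega$. Hence $y^+$ is finite and $b+x$ cannot overflow.
\item When $\expo{y}=\emin$, the ``binade below'' in your Case 3 is subnormal. There you rely on exactness of the subtraction, not on the $\tfrac14$-ulp bound.
\end{itemize}
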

\begin{proof}
Without loss of generality, assume that $y > x > 0$.

If $\expo{x} = \expo{y}$, define
\begin{equation*}
    b \defeq (\mant{y} - \mant{x}) \times 2^{\expo{x}}.
\end{equation*}
Then $b \oplus x = y$.

Hence, it suffices to consider the case $\expo{y} > \expo{x} \ge \emin$.
Define $b_0 \in \fpq$ as the largest floating-point number such that
\begin{equation*}
    b_0 \oplus x \le y.
\end{equation*}
By definition,
\begin{equation*}
    b_0^+ \oplus x > y.
\end{equation*}
Assume, for the sake of contradiction, that
\begin{equation*}
    b_0 \oplus x < y.
\end{equation*}

\textbf{Case 1: $\mant{y,\mbit} = 0$ and $y \neq 2^{\expo{y}}$.}
As $b_0 \oplus x < y$, we have
\begin{equation*}
    b_0 + x < y - 2^{\expo{y} - \mbit - 1}.
\end{equation*}
Moreover, $b_0^+ \oplus x > y$ implies
\begin{equation*}
    b_0 + 2^{\expo{b_0} - \mbit} + x
    > y + 2^{\expo{y} - \mbit - 1} = ( y  - 2^{\expo{y} - \mbit - 1} ) + 2^{\expo{y}-\mbit} > b_0 + x + 2^{\expo{y}-\mbit}.
\end{equation*}
Therefore,
\begin{equation*}
    \expo{b_0} > \expo{y},
\end{equation*}
which contradicts $b_0 + x < y$.

\textbf{Case 2: $y = 2^{\expo{y}}$.}
From $b_0 \oplus x < y$, we obtain
\begin{equation*}
    b_0 + x < y - 2^{\expo{y} - \mbit - 2}.
\end{equation*}
From $b_0^+ \oplus x > y$, we obtain
\begin{equation*}
    b_0 + 2^{\expo{b_0} - \mbit} + x
    > y + 2^{\expo{y} - \mbit - 1}.
\end{equation*}
Combining the two inequalities yields
\begin{equation*}
    2^{\expo{b_0} - \mbit}
    > \frac{3}{4} \cdot 2^{\expo{y} - \mbit}.
\end{equation*}
Hence,
\begin{equation*}
    \expo{b_0} = \expo{y}.
\end{equation*}
Since $b_0 \le y$, it follows that
\begin{equation*}
    b_0 = y = 2^{\expo{y}}.
\end{equation*}
Consequently,
\begin{equation*}
    x < -2^{\expo{y} - \mbit - 2},
\end{equation*}
which contradicts $x > 0$.
Thus, the assumption $b_0 \oplus x < y$ leads to a contradiction.
Therefore,
\begin{equation*}
    b_0 \oplus x = y.
\end{equation*}
This completes the proof.

\end{proof}

\begin{lemma}\label[lemma]{lemma:first_layer}
Suppose $\sigma$ satisfies \cref{condition:first_layer_new} and  $\domain\subset[-\zeta,\zeta]_{\fpq}$ is $\sigma$-distinguishable with range $[-2^{\emax}, 2^{\emax}]$.
Then, for any $x_0\in \domain$ and $g\in\fpq$, there exist $n\in \bbN$, $n_i\in \bbN$ for $i\in [n]$, and parameters $g_{i,j}, w_{i},b_i, w_{i,j}'\in \fpq$ for $i\in [n]$ and $j\in [n_i]$
    such that $ |g_{i,j}|\le 2^{\emax+\expo{\sigma'(\delta_1)}}$, $ g= \bigoplus_{i=1}^n \lrp{\bigoplus_{j=1}^{n_i} g_{i,j}\otimes w_{i,j}'}\otimes \sigma'(w_i\otimes x_0\oplus b_i)\otimes w_i$,
    and $\lrb{w_{i,j}'\otimes \sigma (w_i\otimes x\oplus b_i)}\le 2^{\emax}$, for any $x\in \domain$.
\end{lemma}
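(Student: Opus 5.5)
We will build the required representation one term at a time. Each $i$ gives a single first-layer neuron $w_i\otimes x\oplus b_i$ with nonzero derivative. Several second-layer weights $w'_{i,j}$ then carry upstream gradients $g_{i,j}$ back through that neuron. By \cref{condition:first_layer_new}, fix the point $x^\star$ from the first bullet, or $x^\star=0$ in the second case, so that $|\sigma'(x^\star)|\ge 2^{-\mbit}$.

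\textbf{Step 1: one neuron at $x^\star$.} Take $w_1=\pm1$ and choose the bias by \cref{lemma:addition_representability}: with $y=x^\star$ (even, so $\mant{y,\mbit}=0$) and $x=w_1\otimes x_0$ of the same sign, this gives $b_1$ with $b_1\oplus w_1 x_0=x^\star$. Since $|x_0|\le\zeta$, for every $x\in\domain$ the preactivation lies in $[-(\zeta\oplus|x^\star|),\zeta\oplus|x^\star|]$, where $\sigma$ is bounded by $2^{\emax}$. In the second bullet case take $b_1=-w_1\otimes x_0$, which keeps the preactivation in $[-2\zeta,2\zeta]$. In both cases $\sigma'(w_1\otimes x_0\oplus b_1)=\sigma'(x^\star)$, whose magnitude is at least $2^{-\mbit}$.

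\textbf{Step 2: produce $g$ from a sum.} Take $w'_{1,j}=\pm1$, so $|w'_{1,j}\otimes\sigma(\cdot)|\le 2^{\emax}$ automatically. The factor reaching the input is then $S\otimes\sigma'(x^\star)\otimes w_1$ with $S=\bigoplus_j g_{1,j}$. We need $S\otimes\sigma'(x^\star)=\pm g$, and there are two sub-steps.
\begin{itemize}[leftmargin=0.2in]
\item Find a preimage $S\in\fpq$ with $S\otimes\sigma'(x^\star)=g$. Since $|\sigma'(x^\star)|\ge2^{-\mbit}$, $|S|$ is roughly $2^{\mbit}|g|$, which can overflow when $|g|$ is large.
\item Write $S$ as an iterated sum of pieces $g_{1,j}$ bounded by $2^{\emax+\expo{\sigma'(\delta_1)}}$. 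Following \cref{lemma:make_proper_number}, repeatedly add a suitable small $x_0$ to reach $|S|$, then a last correction term to fix the sign or ulp.
\end{itemize}

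\textbf{Step 3: use several neurons.} A single preimage may fail to exist, either because rounding makes $\cdot\otimes\sigma'(x^\star)$ non-surjective or because $S$ overflows. Both are handled by the outer sum over $i$. Split $g$ into pieces $u_1\oplus\cdots\oplus u_n=g$, for example via \cref{lemma:make_proper_number} with a base piece of size about $2^{\expo{g}-\mbit}$. Choose each $u_i$ with small exponent and a mantissa attainable as a product with $\sigma'(x^\star)$: products of the form (small value)$\times\sigma'(x^\star)$ whose magnitude is comparable to $2^{\expo{g}-\mbit}$ are always attainable. We can also use the rescaling freedom in $w_i$ (powers of two) to shift exponents.

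\textbf{Main obstacle.} The hard part is the exact rounding bookkeeping. We must produce every float $g$, including subnormals and values near $\Omega$, while keeping every $|g_{i,j}|\le2^{\emax+\expo{\sigma'(\delta_1)}}$ and every intermediate finite. The planned order is as follows.
\begin{itemize}[leftmargin=0.2in]
\item For small $g$, aim for $u_i$ that are multiples of $\fmin$: an additive representation with base $\fmin$ is exact and trivial.
\item For large $g$, use \cref{lemma:make_proper_number} with a base piece obtained as $(\text{power of two})\otimes\sigma'(x^\star)\otimes w_i$, choosing $w_i=2^{k}$ so that its magnitude lands in $(\tfrac12,\tfrac32)\cdot2^{\expo{g}-\mbit}$.
\item Check the premise of \cref{lemma:make_proper_number}: a suitable pair (power of two, $k$) must exist, and this comes from $\sigma'(x^\star)$ having exponent at least $-\mbit$.
\end{itemize}
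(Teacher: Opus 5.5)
Your skeleton matches the paper's. You pin the preactivation at $x^\star$ via \cref{lemma:addition_representability}, obtain $g$ from \cref{lemma:make_proper_number} as an iterated sum of identical units of size about $2^{\expo{g}-\mbit}$, and build each upstream factor from bounded pieces with $w'_{i,j}=1$.

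The gap is the step you defer as bookkeeping: producing, for every $g$, a unit strictly inside $(\tfrac12,\tfrac32)\cdot 2^{\expo{g}-\mbit}$. Your justification (the exponent of $\sigma'(x^\star)$ is at least $-\mbit$) only shows that the needed powers of two are representable. It says nothing about $m=\mant{\sigma'(x^\star)}$ or about subnormal rounding. With power-of-two upstream factors and $w_i=2^k$ this is harmless when the unit is normal (shift by one when $m\ge\tfrac32$), but it can fail in the subnormal range.

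A concrete failure: take float16 and $\sigma'(x^\star)=\tfrac{11}{8}\cdot 2^{10}$, which \cref{condition:first_layer_new} permits since it only bounds $|\sigma'(x^\star)|$ from below.
\begin{itemize}
\item Every finite $2^a\otimes\sigma'(x^\star)=11\cdot 2^{a+7}$ is exact.
\item So $(2^a\otimes\sigma'(x^\star))\otimes 2^k$ is a single rounding of some $11\cdot 2^t\fmin$, i.e.\ one of $0,\fmin,3\fmin,6\fmin,11\fmin,22\fmin,\dots$.
\item For $\expo{g}=\emin+1$, however, the window is $(\fmin,3\fmin)$, whose only float is $2\fmin$.
\end{itemize}
Your small-$g$ fallback does not reach this range either. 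Repeatedly adding $\fmin$ stalls at $2^{\emin+1}$, since $2^{\emin+1}\oplus\fmin=2^{\emin+1}$ by ties-to-even.

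The missing idea is the mantissa normalization the paper performs with \cref{lem:one-plus}:
\begin{itemize}
\item Take the upstream factor $g'_i=\pm\mant{}\,2^{j_1}$ with $\mant{}\otimes\mant{\sigma'(x_1)}=1^+$.
\item Keep $g'_i$ and $g'_i\otimes\sigma'(x_1)$ in the normal range, so that $g'_i\otimes\sigma'(x_1)=\pm 1^+\cdot 2^{j_1+\expo{\sigma'(x_1)}}$.
\item Only then multiply by a single power of two $w_i=2^{j_2}$ with $j_2\le 0$.
\end{itemize}
The unit is then uniformly $\round{1^+\cdot 2^{\expo{g}-\mbit}}$. This equals $2^{\expo{g}-\mbit}$ in the subnormal range (e.g.\ $2\fmin$ above) and $(1+2^{-\mbit})2^{\expo{g}-\mbit}$ otherwise, with no case analysis on $m$.

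Your Step~1 is written for $w_1=\pm1$. When you rescale to $w_i=2^k$ you need $k\le 0$ and a fresh $b_i$ from \cref{lemma:addition_representability}. Otherwise $|w_i\otimes x|$ can exceed $\zeta$. That breaks both the hypothesis $|w_i\otimes x_0|\le|x^\star|$ of that lemma and the bound $|\sigma(w_i\otimes x\oplus b_i)|\le 2^{\emax}$. So upward exponent shifts must go into the upstream factor, as in the paper.

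Finally, a caution that applies to the paper's $1^+$ unit as well. When $|g|=2^{\expo{g}}$ and the unit is normal, a unit exceeding $2^{\expo{g}-\mbit}$ can jump over $|g|$. For example, with $\mbit=3$ and target $8$, repeatedly adding $1.125$ gives $\dots,6.5,7.5,9$. So the upper end of the window in \cref{lemma:make_proper_number} is too generous there. Lowering the target exponent by one for such $g$ repairs this.
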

\begin{proof}
First, for each $i\in [n]$, by \cref{lemma:make_proper_number}, we can choose appropriate $n_i$, $w_{i,j}'=1$, and $g_{i,j} \in \fpq$ such that $\bigoplus_{j=1}^{n_i} g_{i,j}\otimes w_{i,j}'$ realizes an arbitrary floating-point number.
Thus, the problem reduces to find $g'_i, w_i, b_i\in \fpq$ such that
\begin{equation*}
    g = \bigoplus_{i=1}^n g_i'\otimes \sigma'(w_i\otimes x_0\oplus b_i)\otimes w_i,
\end{equation*}
and
\begin{equation*}
    \lrb{\sigma (w_i\otimes x\oplus b_i)}\le 2^{\emax} \text{  for } x \in \domain.
\end{equation*}
By \cref{condition:first_layer_new}, there exists $x_1\in\fpq$ such that $x_1$ is even and $|x_1| > \zeta \ge  |x_0|$,
\begin{equation*}
    \sigma\left( \; \left[\; -( \zeta \oplus |x_1|),  \zeta  \oplus |x_1| \; \right]_{\fpq}\; \right)\subset \left[-2^{\emax}, 2^{\emax}\right], \text{ and } |\sigma'(x_1)|\ge 2^{-\mbit},
\end{equation*}
or
\begin{equation*}
     \sigma([-2 \zeta,  2 \zeta ]_{\fpq})\subset \left[-2^{\emax}, 2^{\emax}\right], \text{ and }  |\sigma'(0)|\ge 2^{-\mbit}.
\end{equation*}

In the first case, 
by \cref{lemma:addition_representability}, there exists $b\in \fpq$ such that $b\oplus |x_0| = |x_1|$, and $|b|\le |x_1|$. For any $|w|\le 1$, we have $|b\oplus w \otimes x| \le |x_0| + |x_1| \le \zeta + |x_1|$ and   $|\sigma( b \oplus w\otimes x)| \le 2^{\emax}$ for $x \in \domain$.  \\
In the second case, we define $x_1$ and $b$ as $x_1\defeq 0$ and $b\defeq -x_0$, respectively. Then $| b \oplus w\otimes x| \le |x_0|+|x| \le 2 \zeta$ and $|\sigma( b \oplus w\otimes x)| \le 2^{\emax}$ for $x \in \domain$. 

In both cases, we have $|\sigma( b \oplus w\otimes x)| \le 2^{\emax}$ for $x \in \domain$.

Now choose $j_1,j_2\in [\emin, \emax]_{\bbZ}$ such that $j_1+j_2 = \expo{g}-\expo{\sigma'(x_1)}-\mbit$, $\emin -\expo{\sigma'(x_1)}\le j_1\le \emax -\expo{\sigma'(x_1)}  $, and $\emin+1\le j_2\le 0$.

By \cref{lem:one-plus}, there exists $\mant{} \in \fpq$ such that $\mant{}\otimes \mant{\sigma'(x_1)} =1^+$.

Define $g_i'$ and $w_i$ as $g'_i\defeq 
       \sn{g} \times  \mant{} \times 2^{j_1}$ and $w_i\defeq 2^{j_2} \le 1$. 
By \cref{lemma:addition_representability}, there exist $b_i \in \fpq$ such that $b_i \oplus (w_i \otimes x_0) = x_1$. 
Then
\begin{equation*}
   {g'_i\otimes \sigma'(w_i \otimes x_0 \oplus b_i) \otimes w_i = }  g'_i\otimes \sigma'(x_1) \otimes w_i = \sn{g} \times \lrp{1^+\times 2^{j_1 + \expo{\sigma'(x_1)}}} \otimes 2^{j_2}
    = \sn{g} \times \round{1^+\times 2^{\expo{g}-\mbit}}.
\end{equation*}
By \cref{lemma:make_proper_number}, there exists a natural number $n$ such that
\begin{equation*}
    {\bigoplus_{i=1}^n g_i'\otimes \sigma'(w_i\otimes x_0\oplus b_i)\otimes w_i} = \bigoplus_{i=1}^n g'_i\otimes \sigma'(x_1) \otimes w_i= g.
\end{equation*}

    This completes the proof.
\end{proof}

\begin{proof}[Proof of \cref{lemma:first_layer2}]
    By \cref{lemma:first_layer}, for any $x_0= (x_{0,1},\dots, x_{0,d})\in \domain$, there exists $g_{i,j}, w_i,b_i, w'_{i,j}$ such that $   \lrp{y^*}_k = \bigoplus_{i=1}^n \lrp{\bigoplus_{j=1}^{n_i} g_{i,j}\otimes w_{i,j}'}\otimes \sigma'(w_i\otimes x_{0,k}\oplus b_i)\otimes w_i$ with $ |g_{i,j}|\le 2^{\emax+\expo{\sigma'(\delta_1)}}$.
Observe that the right-hand side coincides with the output gradient of a network $H_k:\fpq^d\to\fpq^{\sum_{i=1}^n n_i}$ whose output is the concatenation of $n$ blocks, where the $i$-th block is the $n_i$-dimensional vector computed by
\begin{equation*}
   h_i: x\mapsto \bigconcat_{j=1}^{n_i} \lrp{w_{i,j}\otimes \sigma(w_j\otimes x_k \oplus b_j)}\in \fpq^{n_i},
\end{equation*}
where $x = (x_1,\dots,x_d)$. 
Consider a sequential addition $s:(-2^{\emax}, 2^{\emax})\tran (\delta_1, \delta_1)$.
We define $H_k$ by
\begin{equation*}
    H_k\defeq \bigconcat_{i=1}^n s\circ h_i.
\end{equation*}
Then we can check that
\begin{equation*}
    H_k: \domain\mapsto (\delta_1,\dots, \delta_1). 
\end{equation*}
Note that when the input gradient of $H_k$ corresponding to the $(i,j)$-th output coordinate is $g_{i,j}$ and $x_k = x_{0,k}$, the resulting output gradient is $(y^*)_k\otimes e_k$.
As $H_k$ depends only on the $k$-th input coordinate, the function $\tilde H$ defined by 
\begin{equation}
   \tilde H\defeq  \bigconcat_{k=1}^d H_k ,
\end{equation}
is the desired function.

\end{proof}

\subsection{Proof of \cref{lemma:gradient_splitting}}\label[appendix]{appendix:proof_gradient_splitting}
\begin{proof}
We first assume $c_1 \le c_2$. In other case, we can apply similar argument with negative weight.

Consider $\phi_1\in \ASP{\delta_0,\delta_1}{\delta_0,\delta_1}{2^{\expo{\sigma'(\delta_1)}}}$ and $\phi_2\in \ASP{\delta_0,\delta_1}{\delta_0,\delta_0}{2^{\expo{\sigma'(\delta_1)}}}$.
The we have 
\begin{align*}
    \phi_1(\delta_0) &= \delta_0, \quad \phi_2(\delta_0) = \delta_0, \\
    \phi_1(\delta_1) &= \delta_1, \quad \phi_2(\delta_1) = \delta_0.
\end{align*}
By \cref{condition:gradient_two_exponent}, $\sigma(\delta_0) \ne \sigma(\delta_1)$ and $ 2^{-\expo{\sigma'(\delta_1)}} \otimes \sigma(\delta_0) , 2^{-\expo{\sigma'(\delta_1)}} \otimes \sigma(\delta_0) \in \fpq$, hence
\begin{align*}
    2^{-\expo{\sigma'(\delta_1)}} \otimes \sigma(\delta_1) \ominus 2^{-\expo{\sigma'(\delta_1)}} \otimes \sigma(\delta_0) \ne 0. 
\end{align*}
Then consider a sequential addition $s$ such that 
\begin{equation*}
    s: (0,2^{-\expo{\sigma'(\delta_1)}} \otimes \sigma(\delta_1) \ominus 2^{-\expo{\sigma'(\delta_1)}} \otimes \sigma(\delta_0)  )\tran (c_1,c_2).
\end{equation*}

Now define the function $f$ as 
\[
f(x)=s\left(2^{-\expo{\sigma'(\delta_1)}}\otimes \sigma(\phi_1(x)) \ominus 2^{-\expo{\sigma'(\delta_1)}}\otimes \sigma(\phi_2(x)) \right),
\]
and we have 
\begin{align*}
    f(\delta_0) = s(0) = c_1, \quad f(\delta_1) = s(2^{-\expo{\sigma'(\delta_1)}}\otimes \delta_1 \ominus   \expo{\sigma'(\delta_1)}\otimes \delta_1)  = c_2. 
\end{align*}
For  any $y \in [-\Omega\times 2^{\expo{\sigma'(\delta_1)}}, \Omega\times 2^{\expo{\sigma'(\delta_1)}}]_{\fpq}$,  we have 
\begin{align*}
    \mcD_{f,\delta_0}(y) &= 0, \\
    \mcD_{f,\delta_1}(y) &= (y \otimes \sigma'(\delta_1) \otimes \sigma'(\delta_1))  \ominus (y \otimes \sigma'(\delta_1) \otimes \sigma'(\delta_0)), \\
    0 < |\mcD_{f,\delta_1}(y)| &\le y.
\end{align*}
This completes the proof.

\end{proof}

\begin{proof}
Note that $\sigma'(\delta_1)=\sn{\sigma'(\delta_1)} \times  2^{\expo{\sigma'(\delta_1)}}$ for $\sn{\sigma'(\delta_1)} \in \{1, -1 \}$ by \cref{condition:gradient_two_exponent}.
       Consider $w_1 =\sn{\sigma'(\delta_1)}\times  2^{-\expo{\delta_1}}$ 
       and let $f_1\in \ASP{\delta_0, \delta_1}{\delta_0,\delta_1}{w_1}$.
    Then $\mcD_{f_1, x}(g)$ is given by
    \begin{equation*}
        \mcD_{f_1, x}(g) = \begin{cases}
      g\otimes \lrp{\sn{\sigma'(\delta_1)} \times 2^{-\expo{\sigma'(\delta_1)}}} \otimes \lrp{\sn{\sigma'(\delta_1)} \times 2^{\expo{\sigma'(\delta_1)}}} = g  &\text{ if } x= \delta_0,
      \\ g \otimes \lrp{\sn{\sigma'(\delta_1)} \times 2^{-\expo{\sigma'(\delta_1)}}} \otimes \sigma'(\delta_0)  &\text{ if } x= \delta_1.
  \end{cases} 
    \end{equation*}
    Moreover, we have
    \begin{equation*}
     \lrb{g \otimes \lrp{\sn{\sigma'(\delta_1)} \times 2^{-\expo{\sigma'(\delta_1)}}} \otimes \sigma'(\delta_0)}   = \lrb{\lrp{g\times  2^{-\expo{\sigma'(\delta_1)}}} \otimes \sigma'(\delta_0)}
     \le g \otimes 2^{-e_0}.
    \end{equation*}
by \cref{condition:gradient_two_exponent}.
    Therefore, for $n\in \bbN$ satisfying $n\ge (2^{\ebit}+\mbit)/e_0$, we obtain 
    \begin{equation*}
         \mcD_{f_1^{\circ n}, x}(g) =\begin{cases}
     0  &\text{ if } x= \delta_0,
      \\ g  &\text{ if } x= \delta_1.
  \end{cases} 
    \end{equation*}  
    Finally, note that the output values can be replaced by any $c_1, c_2\in\fpq$ satisfying $|c_1|,|c_2|,|c_1-c_2|\le 2^{\emax}$ by composing a sequential addition in the final layer.
This completes the proof.
\end{proof}

\section{Proofs for Real Activation Functions}

\begin{lemma}\label[lemma]{lem:2k_control}
    For $a,b \in \bbR$ with $2^{\emin} \le a  \le 2^{-k} b \le  b \le  2^{\emax}$ for some $k \in \bbN_{\ge 0}$, then $\round{a} \le 2^k \round{b}$. 
\end{lemma}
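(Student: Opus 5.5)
Since rounding is monotone, the plan is to compare $\round{a}$ with $\round{2^{-k}b}$ and then relate $\round{2^{-k}b}$ to $2^{-k}\round{b}$. The key observation is that multiplying by a power of two commutes with rounding whenever no subnormal or overflow effects intervene. The hypotheses $2^{\emin}\le 2^{-k}b\le b\le 2^{\emax}$ place both $2^{-k}b$ and $b$ in the normal range, where the floating-point grid is scale-invariant under multiplication by $2^{-k}$.

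\textbf{Step 1 (monotonicity).} $\round{\cdot}$ is nondecreasing on $\bbR$: it is the nearest-point projection onto the ordered set $\fpq$, and the round-half-to-even tie-breaking preserves monotonicity. This holds because two distinct reals $x<x'$ cannot be rounded to $y>y'$, or else $|x-y'|<|x-y|$ or $|x'-y|<|x'-y'|$ would contradict minimality; equality cases fall under the tie rule. From $a\le 2^{-k}b$ we therefore get $\round{a}\le\round{2^{-k}b}$.

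\textbf{Step 2 (scaling commutes with rounding).} I will show $\round{2^{-k}b}=2^{-k}\round{b}$. Write $b\in[2^{e},2^{e+1})$ with $e\in[\emin+k,\emax]_{\bbZ}$; this is possible since $2^{-k}b\ge 2^{\emin}$ and $b\le 2^{\emax}$.
\begin{itemize}[leftmargin=0.15in]
\item On $[2^{e},2^{e+1}]$ the floats are exactly $2^{e}(1+i2^{-\mbit})$ for $i\in[0,2^{\mbit}]_{\bbZ}$, including the endpoint $2^{e+1}$, which is finite since $e+1\le\emax+1$. At $e=\emax$ we have $b\le 2^{\emax}$, so $b=2^{\emax}$ exactly and the endpoint does not matter.
\item The map $x\mapsto 2^{-k}x$ sends this grid bijectively onto the grid of floats in $[2^{e-k},2^{e-k+1}]$, which are normal because $e-k\ge\emin$.
\item This map preserves distances up to the factor $2^{-k}$, and it preserves parity of the mantissa index $i$, hence evenness. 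Note that $2^{e+1}$ corresponds to an even index.
\end{itemize}
It follows that the nearest float to $2^{-k}b$ is $2^{-k}$ times the nearest float to $b$, with ties resolved identically.

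The one point to check is that the nearest float to $b$ really lies in $[2^{e},2^{e+1}]$, and likewise after scaling. This holds because $b$ is bracketed by two consecutive grid points in that interval, and both are floats.

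\textbf{Conclusion.} Combining the two steps gives $\round{a}\le 2^{-k}\round{b}$. Since $\round{b}\ge 0$, this implies $\round{a}\le 2^{-k}\round{b}\le 2^{k}\round{b}$, which is the stated bound.

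The main obstacle is the boundary bookkeeping in Step 2, namely the top binade near $2^{\emax}$ and the bottom binade near $2^{\emin}$. Both are handled by the hypotheses. If the binade argument proves awkward there, a fallback is to use only the cruder fact $\round{b}\ge b/(1+2^{-\mbit-1})$ together with $\round{a}\le a(1+2^{-\mbit-1})$. This gives $\round{a}\le(1+2^{-\mbit-1})^2 2^{-k}\round{b}$, which is at most $2^{k}\round{b}$ whenever $k\ge1$, while the case $k=0$ follows from monotonicity alone.
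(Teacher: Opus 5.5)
Your proof is correct and follows the same route as the paper. The paper's one-line argument is that $\round{2^{-k}b}=2^{-k}\round{b}$ because both $2^{-k}b$ and $b$ lie in the normal range; monotonicity of rounding then gives $\round{a}\le\round{2^{-k}b}=2^{-k}\round{b}\le 2^{k}\round{b}$. Your binade-by-binade check fills in the details the paper leaves implicit. There is one small slip: $2^{e+1}$ is not a finite float when $e=\emax$, but you treat that case separately, so nothing breaks.
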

\begin{proof}
Since $ 2^{\emin} \le b , \; 2^{-k}b \le 2^{\emax}$, we have $2^{-k}\round{b}=\round{2^{-k}b}$ and we get the desired result. 
\end{proof}

\subsection{Proof of \cref{thm:main1-activation}}\label[appendix]{appendix:proof_lemma_main1-activation}
To prove \cref{thm:main1-activation}, we use \cref{lemma:real_activations,lemma:real_activations_sigmoid,lemma:real_activations_tanh}  presented below. \\

First note that, by \cref{lemma:real_activations,lemma:real_activations_sigmoid,lemma:real_activations_tanh}, $\domain= [-M_\sigma,M_\sigma]_{\fpq}$ is $\round{\hatsig}$-distinguishable with range $\left[-2^{\emax}, \allowbreak 2^{\emax}\right]_{\fpq}$ for $\sigma \in \Sigma$ .

Suppose $\sigma \in \{\round{\relu},\round{\elu}, \round{\GeLU}, \round{\Swish}\}$. By \cref{lemma:real_activations}, $\sigma$  satisfies \cref{condition:final_layer_refined,condition:gradient_two_exponent,condition:first_layer_new} with $\kappa= \emax+1, \delta_0=-2^{\ebit+1}, \sigma'(\delta_1)=1$,  $\eta=\emax-1, \zeta= 2^{\emax-2}$.
Note that $\tau$ (\cref{eq:tau} in \cref{sec:formal}) is 
\begin{align*}
    \tau &\defeq\ceilZ{{\max\lrp{\frac{2^{\ebit}+\mbit}{\eta +\expo{\sigma'(\delta_1)}}+4,  \frac{2^{\ebit}+\mbit}{-\emin - \expo{\sigma'(\delta_1)}}+ 4,7}}} = \ceilZ{{\max\lrp{\frac{2^{\ebit}+\mbit}{\emax-1 }+4,  \frac{2^{\ebit}+\mbit}{-\emin }+ 4,7}}} \\
&= \ceilZ{{\max\lrp{\frac{2^{\ebit}+\mbit}{ 2^{\ebit-1}-2 }+4,  \frac{2^{\ebit}+\mbit}{ 2^{\ebit-1}-2 }+ 4,7}}} = 7, \\
     &\min\lrp{\Omega\times 2^{\sigma'(\delta_1)}, 2^{\kappa}}  = \min\lrp{ \Omega , 2^{\emax+1} } = \Omega,
\end{align*}
since $2^{\ebit} + \mbit \le 5 \cdot  2^{{\ebit}-2}   \le 3 \cdot (2^{{\ebit}-1} -2)$ for ${\ebit} \ge 6$.

By \cref{thm:main1}, for any $L\ge 7$, there exists an $L$-layer $\sigma$ network $f$ such that $f(x)=f^*(x)$ and  
$\mcD_{f,x}(h^*(x))=g^*(x)$ for $|h^*(x)|\le \Omega = H_\sigma$ and all $x\in\domain$.

Suppose $\sigma$ is $\round{\Sigmoid}$. By \cref{lemma:real_activations_sigmoid}, $\sigma$ satisfies \cref{condition:final_layer_refined,condition:gradient_two_exponent,condition:first_layer_new} with $\kappa= \mbit-1, \round{\hatsig'}(\delta_1)=\tfrac14$,  $\eta=\emax, \zeta= 2^{\emax-1}$. Then we have 
\begin{align*}
    \tau &\defeq\ceilZ{{\max\lrp{\frac{2^{\ebit}+\mbit}{\eta +\expo{\sigma'(\delta_1)}}+4,  \frac{2^{\ebit}+\mbit}{-\emin - \expo{\sigma'(\delta_1)}}+ 4,7}}} = \ceilZ{{\max\lrp{\frac{2^{\ebit}+\mbit}{\emax-3 }+4,  \frac{2^{\ebit}+\mbit}{-\emin - 2 }+ 4,7}}} \\
 &= \ceilZ{{\max\lrp{\frac{2^{\ebit}+\mbit}{ 2^{\ebit-1}-4 }+4,  \frac{2^{\ebit}+\mbit}{ 2^{\ebit-1}-4 }+ 4,7}}} = 7, \\
    &\min\lrp{\Omega\times 2^{\sigma'(\delta_1)}, 2^{\kappa}}  = \min\lrp{ \Omega \times 2^{-2}, 2^{\mbit-1} } = 2^{\mbit-1},
\end{align*}
since $2^{\ebit} + \mbit \le 5 \cdot  2^{{\ebit}-2} \le 3 \cdot (2^{{\ebit}-1}-4)$ for ${\ebit} \ge 6$.

By \cref{thm:main1}, for any $L\ge 7 $, there exists an $L$-layer $\sigma$ network $f$ such that $f(x)=f^*(x)$ and  
$\mcD_{f,x}(h^*(x))=g^*(x)$ for $|h^*(x)|\le 2^{\mbit-1}$  all $x\in\domain$. \\
Suppose $\sigma$ is $\round{\tanh}$. By \cref{lemma:real_activations_tanh} $\sigma$ satisfies \cref{condition:final_layer_refined,condition:gradient_two_exponent,condition:first_layer_new} with $\kappa= \mbit-2, \round{\hatsig'}(\delta_1)=1$,  $\eta=\emax-1$. Then we have 
\begin{align*}
    \tau &\defeq\ceilZ{{\max\lrp{\frac{2^{\ebit}+\mbit}{\eta +\expo{\sigma'(\delta_1)}}+4,  \frac{2^{\ebit}+\mbit}{-\emin - \expo{\sigma'(\delta_1)}}+ 4,7}}} = \ceilZ{{\max\lrp{\frac{2^{\ebit}+\mbit}{\emax-1 }+4,  \frac{2^{\ebit}+\mbit}{-\emin }+ 4,7}}} \\
 &= \ceilZ{{\max\lrp{\frac{2^{\ebit}+\mbit}{ 2^{\ebit-1}-2 }+4,  \frac{2^{\ebit}+\mbit}{ 2^{\ebit-1}-2 }+ 4,7}}} = 7, \\
    &\min\lrp{\Omega\times 2^{\sigma'(\delta_1)}, 2^{\kappa}}  = \min\lrp{ \Omega , 2^{\mbit-1} } = 2^{\mbit-2},
\end{align*}
since $2^{\ebit} + \mbit \le  5\cdot 2^{{\ebit}-2} \le 3\cdot (2^{{\ebit}-1}-2)$ for ${\ebit} \ge 6$.

By \cref{thm:main1}, for any $L\ge 7$, there exists an $L$-layer $\sigma$ network $f$ such that $f(x)=f^*(x)$ and  
$\mcD_{f,x}(h^*(x))=g^*(x)$ for all $x\in\domain$. \\

\begin{lemma}\label[lemma]{lemma:real_activations}
Let $\hatsig$ be one of $\relu$, $\elu$, $\GeLU$, $\Swish$, $\elu$. Then $\domain= [-2^{\emax-2},2^{\emax-2}]_{\fpq} $ is $\round{\hatsig}$-distinguishable with range $\left[-2^{\emax}, \allowbreak 2^{\emax}\right]_{\fpq}$ and $\round{\hatsig}$ satisfies \cref{condition:final_layer_refined,condition:gradient_two_exponent,condition:first_layer_new} with $\kappa= \emax+1, \delta_0=-2^{{\ebit}}, \round{\hatsig'}(\delta_1)=1$,  $\eta=\emax-1, \zeta= 2^{\emax-2}$. 
\end{lemma}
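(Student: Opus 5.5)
I will verify each claim of \cref{lemma:real_activations} directly by exhibiting explicit constants and checking them against the rounding behavior of $\round{\hatsig}$. Near $0$ and on large positive inputs the four activations behave identically up to rounding: on $[2^{\emin+c},\fmax]$ for a small absolute constant $c$ we have $\round{\hatsig}(x)=x$ and $\round{\hatsig'}(x)=1$. For $\elu$ and $\relu$ this is exact. For $\GeLU$ and $\Swish$ the relative deviation from the identity is $e^{-\Theta(x^2)}$ or $e^{-x}$, which is below $2^{-\mbit-1}$ once $x\ge 2^{\ebit}$ or so. On large negative inputs $x\le -2^{\ebit}$ we have $\round{\hatsig}(x)\in\{0,-1\}$ and $\round{\hatsig'}(x)=0$. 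The value $-1$ occurs for $\elu$; for the others the output is $0$ after rounding, since the true value is below $\fmin/2$ because $e^{-2^{\ebit}}\ll 2^{\emin-\mbit}$.

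\textbf{Distinguishability.} For $x\neq x^\dagger$ in $[-2^{\emax-2},2^{\emax-2}]^d_{\fpq}$, pick a coordinate $k$ where they differ and take $\rho(x)=x_k\oplus 2^{\emax-1}$ or a similar shift. This lands all of $\domain$ in the region where $\round{\hatsig}$ is the identity, with values at most $2^{\emax}$. The image must still be injective on the pair. If the shift collapses the two points by rounding, I instead use $\rho(x)=x_k$ and split cases on sign. Both points nonnegative and large is fine. If both are negative I use $\rho(x)=-x_k$. Tiny values near $0$ need the pointwise identity $\round{\hatsig}(x)=x$ there too, or else a scaling $\rho(x)=2^{j}\otimes x_k$, which is exact for small magnitudes. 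Alternatively, I can cite the corresponding distinguishability lemma of \citet{hwang2025floating}, which I expect covers exactly these activations.

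\textbf{Conditions.} For \cref{condition:final_layer_refined} take $\nu$ large, $\gamma_0=-2^{\ebit}$ (so $\sigma(\gamma_0)=0$ and $\sigma'(\gamma_0)=0$), and $\gamma_i=2^{i}$ for $i$ ranging over large positive exponents, where $\sigma(\gamma_i)=\gamma_i$ and $\sigma'=1$. Then $k_i=\expo{\sigma(\gamma_i)}=i$ works since $1\le 2^{-i}2^{i}$. I need $\gamma_i\ge 1$ so that $k_i\ge 0$, and $|\gamma_i-\gamma_0|\le 2^{\emax}$. Taking $\gamma_i=2^{i}$ for $i\in[\mbit+1,\emax-1]$, the union of the intervals $[i+\emin,\emax-\kappa+i]$ with $\kappa=\emax+1$ equals $[i+\emin, i-1]$. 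These cover $[\emin-\mbit,\emax-\mbit]$ only if the $i$ range from near $-\mbit$ to near $\emax$. So for small $i$ I must allow $k_i=i<0$, which is permitted since $k_i\in\bbZ$, and $\min(\expo{\sigma(\gamma_i)},k_i)=i$. I take $i\in[-\mbit+1,\emax-1]$, where $\sigma(2^i)=2^i$ must hold; this requires $2^{i}$ to lie in the identity regime, which is true for $\relu$ and $\elu$. For $\GeLU$ and $\Swish$, $\hatsig(x)\approx x/2$ for small $x$. To handle this, I restrict to $i\ge \ebit$ and recover small outputs by scaling $w$, since the first interval already reaches down to $i+\emin$. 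Checking this covering arithmetic is a bookkeeping step. For \cref{condition:gradient_two_exponent} take $\delta_0=-2^{\ebit}$ and $\delta_1=1$ or $2$ with $\sigma'(\delta_1)=1$. Then $|\sigma(\delta_j)|\le 2$, so $\eta=\emax-1$ works, and $2\sigma'(\delta_0)=0$ sits between the required bounds. For $\GeLU$ and $\Swish$, $\delta_1$ must be large enough, about $2^{\ebit/2}$, that the rounded derivative is exactly $1$; then $\eta$ must shrink accordingly. Reconciling this with the claimed $\eta=\emax-1$ is the delicate point, so I would use the smallest power of two at which $\round{\hatsig'}=1$. For \cref{condition:first_layer_new} use the first bullet with $\zeta=2^{\emax-2}$ and $x=2^{\emax-2}$, which is even and gives $\zeta\oplus|x|=2^{\emax-1}\in\fpq$ with $\sigma'(x)=1$. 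The image of $[-2^{\emax-1},2^{\emax-1}]$ lies in $[-2^{\emax},2^{\emax}]$ because $|\hatsig(t)|\le|t|$ or $\ge -1$.

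\textbf{Main obstacle.} The hardest part is the rounding estimates for $\GeLU$ and $\Swish$: showing that $\round{\hatsig}(x)=x$ and $\round{\hatsig'}(x)=1$ exactly at the chosen points, and that the Condition~\ref{condition:final_layer_refined} covering still holds with the resulting $k_i$. I would bound $|\hatsig(x)-x|\le x e^{-x}$ (for $\Swish$) and $\le x e^{-x^2/2}$ (for $\GeLU$) and compare these against half an ulp.
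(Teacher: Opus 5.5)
\textbf{Main gap: \cref{condition:gradient_two_exponent} for $\GeLU$ and $\Swish$.} You flag the constant $\eta=\emax-1$ as ``the delicate point'' but do not resolve it, and your route cannot resolve it. Since $\eta=\emax-1$ forces $|\sigma(\delta_1)|\le 2^{\emax-\eta}=2$, you need a float $\delta_1$ of size $O(1)$ with $\round{\hatsig'}(\delta_1)=1$ exactly.

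Searching in the asymptotic regime, where $\hatsig'(x)\to1$, fails. There $\round{\hatsig'}(x)=1$ requires $x\phi(x)\lesssim 2^{-\mbit-1}$ for $\GeLU$ ($\phi$ the normal density), or $(x-1)e^{-x}\lesssim 2^{-\mbit-1}$ for $\Swish$. So $\delta_1$, and with it $\sigma(\delta_1)\approx\delta_1$, grows with $\mbit$. For $\GeLU$ the smallest admissible power of two is already $8$ in single and $16$ in double precision. You would only get $\eta\le\emax-\log_2\delta_1$.

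The missing idea, which is what condition (A3) of \cref{lemma:realcondition} encodes, is that $\hatsig'$ crosses $1$ at a moderate point. For $\hatsig=\GeLU$, $\hatsig'(1/2)\approx0.87<1<1.08\approx\hatsig'(1)$. For $\hatsig=\Swish$, $\hatsig'(1)\approx0.93<1<1.09\approx\hatsig'(2)$. The rounding cell of $1$ is $[1-2^{-\mbit-2},1+2^{-\mbit-1}]$, of width $3\cdot2^{-\mbit-2}$. Between consecutive floats, $\hatsig'$ moves by at most $\sup|\hatsig''|$ times the spacing: about $0.62\cdot2^{-\mbit-1}$ on $[1/2,1]$ for $\GeLU$, and $0.31\cdot2^{-\mbit}$ on $[1,2]$ for $\Swish$. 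So $\hatsig'$ cannot jump over the cell, and some float $\delta_1\in[1/2,2]$ has $\round{\hatsig'}(\delta_1)=1$ and $|\sigma(\delta_1)|\le2$.

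\textbf{\cref{condition:final_layer_refined} for $\GeLU$ and $\Swish$.} With $\gamma_i=2^i$, $k_i=i$ and $\kappa=\emax+1$, the $i$-th interval is $[i+\emin,i-1]$. Restricting to $i\ge\ebit$ therefore leaves the exponents $[\emin-\mbit,\emin+\ebit-1]$ uncovered. You cannot bypass this by ``scaling $w$'': the covering is exactly the hypothesis \cref{lemma:final_layer} uses when it chooses $w$, so it must be verified.

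The fix is that the identity regime is unnecessary. For small $\gamma_i>0$, the crude bounds $\hatsig(x)\ge x/2$ and $|\hatsig'|\le 3/2$ on $x\ge0$ give $|\sigma'(\gamma_i)|\le2^{-k_i}|\sigma(\gamma_i)|$ with $k_i=\expo{\gamma_i}-2$. This is how the paper proceeds, with five points $\gamma_i$ ranging from $2^{-\mbit-1}$ up to about $2^{\emax-\mbit+3}$.

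\textbf{Smaller slips.}
\begin{itemize}
\item Even for $\relu$ and $\elu$ you need $i=-\mbit$, not $-\mbit+1$, to reach $\emin-\mbit$.
\item For $\elu$, your $\gamma_0=-2^{\ebit}$ gives $\sigma(\gamma_0)=-1\ne0$, as you note yourself. Take $\gamma_0=0$, where $|\sigma'(0)|=1\le2^{-k_i}|\sigma(\gamma_i)|$.
\item For distinguishability, citing \citet{hwang2025floating} is what the paper does. Your fallback $\rho(x)=2^j\otimes x_k$ would overflow on the large elements of $\domain$ and violate the range requirement. A direct alternative is $\rho(x)=x_k\ominus\min(x_k,x_k^\dagger)$, which sends one point to $0$ and the other to a positive float $t$ with $\sigma(t)\neq0$.
\end{itemize}

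Your treatment of \cref{condition:first_layer_new}, and of \cref{condition:gradient_two_exponent} for $\relu$ and $\elu$, is fine.
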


\begin{lemma}\label[lemma]{lemma:real_activations_sigmoid}
Let $\hatsig$ be  $\Sigmoid$. Then $\domain= [-2^{\emax-1},2^{\emax-1}]_{\fpq} $ is $\round{\hatsig}$-distinguishable with range $\left[-2^{\emax}, \allowbreak 2^{\emax}\right]_{\fpq}$ and $\round{\hatsig}$ satisfies \cref{condition:final_layer_refined,condition:gradient_two_exponent,condition:first_layer_new} with $\kappa= \mbit-1, \round{\hatsig'}(\delta_1)=\tfrac14$,  $\eta=\emax-1, \zeta= 2^{\emax-1}$. 
\end{lemma}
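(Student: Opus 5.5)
We will verify the four claims of \cref{lemma:real_activations_sigmoid} separately for $\sigma=\round{\Sigmoid}$: distinguishability of $\domain=[-2^{\emax-1},2^{\emax-1}]_{\fpq}$, and \cref{condition:final_layer_refined,condition:gradient_two_exponent,condition:first_layer_new} with the stated constants. Throughout we use $\hatsig(x)=1/(1+e^{-x})\in(0,1)$ and $\hatsig'=\hatsig(1-\hatsig)\in(0,1/4]$. Because every activation value lies in $[0,1]$, the range constraint $[-2^{\emax},2^{\emax}]$ is automatic. This removes all overflow issues in the forward pass.

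\textbf{Distinguishability.} Take $x\ne x^\dagger$ in $\domain$ differing in coordinate $k$, with $x_k<x_k^\dagger$. We choose the affine map $\rho(u)=u_k\ominus x_k$, or more robustly $\rho(u)=w\otimes u_k\oplus b$ with $b$ chosen so that $\rho(x)$ lands near $0$, say at $0$ (subtraction of floats of the same binade is exact by Sterbenz). Then $\rho(x^\dagger)$ is a float $\ge$ the next float above $0$ on the relevant scale. Since $\round{\Sigmoid}$ is strictly increasing with slope about $1/4$ near $0$, its outputs at $0$ and at a float of magnitude $\ge 2^{-\mbit}$ differ. If the gap is smaller, we first rescale by a power of $2$ weight $w$, and the bound $|x_k|\le 2^{\emax-1}$ keeps $w\otimes x_k$ and $b$ finite. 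This is essentially the argument of \citet{hwang2025floating}, which we can cite.

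\textbf{Conditions.} For \cref{condition:gradient_two_exponent} we take $\delta_1=0$, so $\sigma(\delta_1)=1/2$ and $\sigma'(\delta_1)=1/4=2^{-2}$, which is an exact power of two in $[2^{-\mbit},1]$. We take $\delta_0$ very negative, for example $\delta_0=-2^{\ebit+1}$. Then $\hatsig(\delta_0)\approx e^{\delta_0}$ underflows or is tiny, so $\sigma(\delta_0)\approx0$ and $\sigma'(\delta_0)\approx0$. This gives $-2^{-\mbit-1}\sigma'(\delta_1)\le2\sigma'(\delta_0)\le\sigma'(\delta_1)$. The bounds $|\sigma(\delta_j)|\le 2^{\emax-\eta}$ hold with $\eta=\emax-1$, and $-\expo{\sigma'(\delta_1)}=2<\eta$.

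For \cref{condition:final_layer_refined} we take $\gamma_0=\delta_0$, choosing it negative enough that $\round{\hatsig(\gamma_0)}=0$ exactly. One single point $\gamma_1$ with $\sigma(\gamma_1)$ near $1$ (for instance $\gamma_1=2^{\mbit}$, so $\sigma(\gamma_1)=1$) has $\sigma'(\gamma_1)\approx e^{-\gamma_1}$, which is tiny. This lets $k_1$ be large. The coverage requirement $[\emin-\mbit,\emax-\mbit]\subset[\emin,\emax-\kappa+\min(0,k_1)]$ then fails only at the bottom end. We fix this with a second point $\gamma_2$ having $\sigma(\gamma_2)\approx 2^{-\mbit}$, which covers the low exponents. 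The constraint $\emax-\kappa\ge\emax-\mbit$ forces $\kappa\le\mbit$. The value $\kappa=\mbit-1$ leaves slack for rounding in $|\sigma'(\gamma_0)|\le 2^{-k_i}|\sigma(\gamma_i)|$ via \cref{lem:2k_control}.

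For \cref{condition:first_layer_new} we use the second bullet: $|\sigma'(0)|=1/4\ge2^{-\mbit}$, and $\sigma(\fpq)\subset[0,1]$ holds trivially, with $\zeta=2^{\emax-1}$ so that $2\zeta=2^{\emax}\in\fpq$.

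\textbf{Main obstacle.} The delicate step is the exact rounding bookkeeping in \cref{condition:final_layer_refined}. We must exhibit explicit $\gamma_i,k_i$ where $\round{\hatsig'}(\gamma_i)$ and $\round{\hatsig}(\gamma_i)$ are controlled precisely, including subnormal tails for $\gamma_0$. We would handle this with the elementary bounds $e^{x}/2\le\hatsig(x)\le e^{x}$ and $\hatsig'(x)\le\hatsig(x)\min(1,e^{-x})$ for the relevant ranges, together with \cref{lem:2k_control}.
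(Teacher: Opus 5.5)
Your proposal is correct, but it verifies the conditions by a more direct, Sigmoid-specific route than the paper.

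\textbf{How the paper does it.} The paper first abstracts hypotheses (B1)--(B6) on $\hatsig$ and derives \cref{lemma:realcondition_sigmoid} from them. For \cref{condition:gradient_two_exponent}, it locates a float $\delta_1\in[0,2]$ with $\round{\hatsig'}(\delta_1)=\tfrac14$ by a mean-value argument using $|\hatsig''|\le\tfrac1{10}$. For \cref{condition:final_layer_refined}, it uses only the crude bound $|\hatsig'(x)|\le 2^{x}$ for $x\le -2$. That bound forces negative $k_i$, so the paper needs $\nu=\mbit-2$ points with $\hatsig(\gamma_i)\approx 2^{-\mbit-2+i}$ to tile $[\emin-\mbit,\emax-\mbit]$.

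\textbf{What you do instead.} You exploit two exact facts.
\begin{itemize}
\item $\hatsig'(0)=\tfrac14$, so $\delta_1=0$ works outright and no search is needed.
\item $\hatsig'=\hatsig(1-\hatsig)\le\hatsig$, so by monotonicity of rounding $k_i=0$ is admissible at every point.
\end{itemize}
Hence $\nu=2$ suffices. The point with $\sigma(\gamma_1)=1$ contributes $[\emin,\emax-\kappa]$. The point with $\sigma(\gamma_2)\in[2^{-\mbit},2^{1-\mbit})$ contributes $[\emin-\mbit,\emax-\kappa-\mbit]$. Their union contains $[\emin-\mbit,\emax-\mbit]$ whenever $\kappa\le\mbit$ and $\kappa\le 2^{\ebit}-2-\mbit$. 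Both hold for $\kappa=\mbit-1$ under \cref{asm:pq}.

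So your ``main obstacle'' is not actually delicate in your construction. No appeal to \cref{lem:2k_control} is needed. Also, $\kappa=\mbit-1$ is not about slack for rounding; it is simply one admissible value.

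\textbf{Trade-off.} Your route is much shorter. The paper's route yields a reusable criterion for sigmoid-shaped activations where $\hatsig'$ need not equal a power of two at any float and only crude derivative bounds are available.

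\textbf{A slip in your distinguishability sketch.} Inputs of magnitude $\ge 2^{-\mbit}$ do not automatically separate from $0$. Since $\hatsig(t)-\tfrac12=\tfrac12\tanh(t/2)<t/4$, the value $\hatsig(2^{-\mbit})$ lies strictly below the midpoint $\tfrac12+2^{-\mbit-2}$. Therefore $\round{\hatsig}(2^{-\mbit})=\tfrac12=\round{\hatsig}(0)$.

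The fix is to make the rescaled gap at least $2^{1-\mbit}$; then $\round{\hatsig}$ of the larger image is at least $\tfrac12+2^{-\mbit-1}$. A power-of-two weight $w\le 2^{\emax}$ always achieves this: distinct floats differ by at least $\fmin$, and $2^{\emax}\fmin=2^{1-\mbit}$. Overflow at other points of $\domain$ is harmless, because $\round{\hatsig}(\pm\infty)\in\{0,1\}$. Alternatively, cite the separating-point result (Lemma 3.7 of \citet{hwang2025floating}), as the paper does.
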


\begin{lemma}\label[lemma]{lemma:real_activations_tanh}
Let $\hatsig$ be  $\tanh$. Then $\domain= [-2^{\emax-1},2^{\emax-1}]_{\fpq} $ is $\round{\hatsig}$-distinguishable with range $\left[-2^{\emax}, \allowbreak 2^{\emax}\right]_{\fpq}$ and $\round{\hatsig}$ satisfies \cref{condition:final_layer_refined,condition:gradient_two_exponent,condition:first_layer_new} with $\kappa= \mbit-2, \round{\hatsig'}(\delta_1)=1$,  $\eta=\emax-1, \zeta= 2^{\emax-1}$. 
\end{lemma}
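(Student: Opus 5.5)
The plan is to check the four claims separately, using the explicit constants $\kappa=\mbit-2$, $\sigma'(\delta_1)=1$, $\eta=\emax-1$, and $\zeta=2^{\emax-1}$, where $\sigma=\round{\tanh}$ and $\sigma'=\round{\tanh'}$. One observation is used throughout: $|\tanh|\le 1$, so every activation value lies in $[-1,1]\subset[-2^{\emax},2^{\emax}]$. This makes all the range requirements ($\sigma(\rho(\domain))\subset[-2^{\emax},2^{\emax}]$, $|\sigma(\delta_j)|\le 2^{\emax-\eta}=2$, and the range clause of \cref{condition:first_layer_new}) automatic.

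\cref{condition:first_layer_new} is the easy case, via its second bullet. Since $\tanh'(0)=1$, we have $|\sigma'(0)|=1\ge 2^{-\mbit}$. The number $2\zeta=2^{\emax}$ is a float, and $\sigma([-2\zeta,2\zeta]_{\fpq})\subset[-1,1]$.

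For \cref{condition:gradient_two_exponent} I would take $\delta_1$ to be a small power of two, say $\delta_1=2^{-\mbit}$. Then $\tanh'(\delta_1)=1-\tanh^2(\delta_1)\in(1-2^{-2\mbit},1]$ rounds to exactly $1=2^0$, and $\tanh(\delta_1)\ne0$. For $\delta_0$ I would take a moderate float such as $\delta_0=2$ or $4$. There $0<\tanh'(\delta_0)=\operatorname{sech}^2(\delta_0)<1/4$, so $-2^{-\mbit-1}\le 2\sigma'(\delta_0)\le 1=\sigma'(\delta_1)$. Also $\sigma(\delta_0)\ne\sigma(\delta_1)$, and all differences are at most $2^{\emax}$. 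Finally, $\eta=\emax-1$ satisfies $-\expo{\sigma'(\delta_1)}=0<\eta\le\emax$.

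For \cref{condition:final_layer_refined} the main point is that $\sigma(\gamma_0)=0$ forces $\gamma_0=0$, because $\round{\tanh}(x)\ne0$ for every nonzero float. Hence $\sigma'(\gamma_0)=1$, and every $\gamma_i$ must satisfy $2^{-k_i}|\sigma(\gamma_i)|\ge1$. Since $k_i\in\bbZ$ may be negative, I would use the family $\gamma_j$ with $\sigma(\gamma_j)=2^{-j}$ for $j\in[0,\mbit]_{\bbZ}$. For $j=0$, take $\gamma_0'$ large enough that $\tanh$ rounds to $1$. For $j\ge1$, take $\gamma_j$ near $\operatorname{artanh}(2^{-j})$; by monotonicity some float rounds exactly to $2^{-j}$, or a nearby value with the same exponent, which suffices. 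For these points $|\sigma'|\le1$, so $k_j=-j$ is admissible, and the derivative bound holds at both $\gamma_j$ and $\gamma_0$. The $j$-th interval is then $[-j+\emin,\ \emax-\mbit+2-j]$. As $j$ runs over $[0,\mbit]$, these intervals overlap (each has length far larger than $1$) and their union is $[\emin-\mbit,\emax-\mbit+2]\supset[\emin-\mbit,\emax-\mbit]$. All $|\gamma_i-\gamma_j|\le2^{\emax}$ since the $\gamma$'s are $O(\mbit)$.

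The main obstacle is distinguishability of $[-2^{\emax-1},2^{\emax-1}]_{\fpq}^d$. Given $x\ne x^\dagger$, I would pick a coordinate $k$ with $x_k\ne x_k^\dagger$ and use $\rho(z)=z_k\oplus b$ or $\rho(z)=w\otimes z_k\oplus b$. The goal is to place the two images in a region where $\round{\tanh}$ separates them. I would try first to invoke the separating-point criterion of \citet{hwang2025floating} (\cref{def:separating_point}). Near $0$, $\round{\tanh}(x)=x$ for tiny $|x|$, so every small float is a separating point, and the transition region where $\round{\tanh}$ approaches $\pm1$ also supplies separating points at every exponent scale. Because the domain bound is $2^{\emax-1}$, the translation $x_k\oplus b$ cannot overflow. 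I would then choose $b$, using \cref{lemma:addition_representability}-type arguments, so that $x_k\oplus b$ is a separating point while $x_k^\dagger\oplus b$ lands on a different side. The delicate part is the case where $x_k,x_k^\dagger$ are huge and adjacent. There, adding $b$ rounds coarsely, and one must show that a suitable $b$ (for example $b=-x_k$, sending $x_k$ to $0$ and $x_k^\dagger$ to a nonzero value where $\round{\tanh}$ is nonzero) still keeps them apart. I expect this case analysis, mirroring the corresponding lemma of \citet{hwang2025floating}, to carry most of the work.
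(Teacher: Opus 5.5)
Your proposal is correct. For Conditions 1–3 it takes a more direct route than the paper. The paper never checks the conditions on $\round{\tanh}$ itself: it verifies real-analytic hypotheses (C1)--(C5) and invokes \cref{lemma:realcondition_tanh}. That lemma uses the witnesses $\gamma_1=2^{-\mbit-1},\gamma_2=2^{-3},\gamma_3=1$ with very negative $k_i=-\mbit-3,-5,-2$ (each interval is shorter, but three suffice), together with $\delta_1=0$ and a saturated $\delta_0=-2^{\ebit+2}$ where $\sigma'(\delta_0)$ is negligible. You work with $\round{\tanh}$ directly, and your choices differ as follows. First, you make explicit that $\sigma(\gamma_0)=0$ forces $\gamma_0=0$ and hence $\sigma'(\gamma_0)=1$, which is exactly what dictates $2^{-k_i}|\sigma(\gamma_i)|\ge 1$; the paper's route leaves this implicit. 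Second, your $\mbit+1$ witnesses with $\expo{\sigma(\gamma_j)}=-j=k_j$ cover $[\emin-\mbit,\emax-\mbit+2]$. Third, your $\delta_1=2^{-\mbit}$ and $\delta_0=2$ satisfy \cref{condition:gradient_two_exponent} as stated, since it only asks that $2\sigma'(\delta_0)\le\sigma'(\delta_1)$. The paper's packaging is meant to be reusable across activations. Yours is self-contained and avoids the auxiliary hypotheses; note that the $\hat\sigma'$ bound in (C3) does not literally hold for $\tanh$ near $0$.

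For distinguishability, the paper simply cites Lemma~3.7 of \citet{hwang2025floating} using two separating points, which is your first option. The case analysis you anticipate is unnecessary, because your $b=-x_k$ trick already works for every pair. Take $\rho(z)=z_k\oplus(-x_k)$; then $\rho(x)=0$. Also $\rho(x^\dagger)=\round{x_k^\dagger-x_k}$ is a nonzero finite float, because distinct floats differ by at least $\fmin$ and $|x_k^\dagger-x_k|\le 2^{\emax}$. Since $\round{\tanh}$ vanishes only at $0$ and is bounded by $1$, you get $\sigma(\rho(x))\ne\sigma(\rho(x^\dagger))$ and $\sigma(\rho(\domain))\subset[-1,1]$.
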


The detailed proofs of \cref{lemma:real_activations,lemma:real_activations_sigmoid,lemma:real_activations_tanh} are presented in \cref{appendix:proof_real_activations,appendix:real_activations_sigmoid,appendix:real_activations_tanh}.

\subsection{Proof of \cref{thm:main2-activation}}\label[appendix]{appendix:proof_lemma_main2-activation}

From the proof of \cref{thm:main1-activation}, we observe that $\sigma$ satisfies \cref{condition:final_layer_refined,condition:gradient_two_exponent,condition:first_layer_new} with {$\tau = 7$}. Hence, by \cref{thm:main2}, for $L\ge 2^{\ebit+1}+2\mbit+9$, there exists an $L$-layer $\sigma$ network $f$ such that $f(x)=f^*(x)$ and  
$\mcD_{f,x}(y)=g^*(x,y)$ for all $x\in\domain$ and {$y\in [-H_\sigma,H_\sigma]_{\fpq}$}.

\subsection{The case of $4\le {\ebit} \le5$  in \cref{thm:main1-activation,thm:main2-activation}}\label[appendix]{appendix:proof_q5}

In the case $\ebit=5$, the number of layers required in 
\cref{thm:main1-activation} increases to $8$ when 
$\sigma=\round{\Sigmoid}$, while it remains unchanged for the other activations.
In the case $\ebit=4$, the number of layers required in 
\cref{thm:main1-activation} increases by $1$ for all activations in $\Sigma$.

For \cref{thm:main2-activation} with $\ebit=5$, the required number of layers 
increases to $2^{\ebit+1}+2\mbit+10$ when $\sigma=\round{\Sigmoid}$, while it 
remains unchanged for the other activations.
In the case $\ebit=4$, the number of layers required in 
\cref{thm:main2-activation} increases by $1$ for all activations in $\Sigma$.

Suppose $\sigma \in \{\round{\relu},\round{\elu}, \round{\GeLU}, \round{\Swish}\}$. Then $\tau$ is 
\begin{align*} 
\tau = \ceilZ{{\max\lrp{\frac{2^{\ebit}+\mbit}{ 2^{\ebit-1}-2 }+4,  \frac{2^{\ebit}+\mbit}{ 2^{\ebit-1}-2 }+ 4,7}}} = \begin{cases} 
8 \; &\text{if} \; {\ebit} = 4 \\
7  \; &\text{if} \; {\ebit} = 5 
\end{cases},
\end{align*}

Suppose $\sigma$ is $\round{\Sigmoid}$. Then we have 
\begin{align*} 
\tau = \ceilZ{{\max\lrp{\frac{2^{\ebit}+\mbit}{ 2^{\ebit-1}-4 }+4,  \frac{2^{\ebit}+\mbit}{ 2^{\ebit-1}-4 }+ 4,7}}} = \begin{cases} 
9 \; &\text{if} \; {\ebit} = 4 \\
8  \; &\text{if} \; {\ebit} = 5 
\end{cases},
\end{align*}

Suppose $\sigma$ is $\round{\tanh}$. Then we have
\begin{align*} 
\tau = \ceilZ{{\max\lrp{\frac{2^{\ebit}+\mbit}{ 2^{\ebit-1}-2 }+4,  \frac{2^{\ebit}+\mbit}{ 2^{\ebit-1}-2 }+ 4,7}}} = \begin{cases} 
8 \; &\text{if} \; {\ebit} = 4 \\
7  \; &\text{if} \; {\ebit} = 5 
\end{cases},
\end{align*}

\begin{corollary}
Let $4 \le \ebit \le 5$, $\sigma\in\Sigma$, $\domain = [-M_\sigma,M_\sigma]_{\fpq}^d$, $f^*:\domain\to\fpq$, $h^*:\domain\to[-H_\sigma,H_\sigma]_{\fpq}$, and $g^*:\fpq^{d}\to\fpq^d$ such that $g^*(x)=0$ for all $x\in\mcX$ with $h^*(x)=0$.
Then, for any $L\ge \tau$, there exists an $L$-layer $\sigma$ network $f$ such that $f(x)=f^*(x)$ and  
$\mcD_{f,x}(h^*(x))=g^*(x)$ for all $x\in\domain$ where $$\tau = \begin{cases}
    13 \; &\text{if} \; {\ebit} = 4 ,\sigma = \round{\Sigmoid}, \\
    10 \; &\text{if} \; {\ebit}=4 ,\sigma \in \Sigma_\sigma \setminus \{\round{\Sigmoid}\} \; \text{or} \; {\ebit} = 5, \sigma \in \{ \Sigma \setminus \{\round{\tanh}\}  \} \\
    9 \; &\text{if} \; {\ebit}= 5, \sigma = \tanh.
\end{cases}$$
\end{corollary}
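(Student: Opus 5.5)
The corollary follows the proof of \cref{thm:main1-activation}: specialize \cref{thm:main1} to each $\sigma\in\Sigma$, with only the depth bound $\tau$ from \cref{eq:tau} recomputed for the smaller exponent widths $\ebit\in\{4,5\}$. First I check that \cref{lemma:real_activations,lemma:real_activations_sigmoid,lemma:real_activations_tanh} do not use $\ebit\ge6$ in any essential way. Only \cref{asm:pq} is needed there, together with the explicit constants: $\kappa$, $\sigma'(\delta_1)$, $\eta$, $\zeta$, and the distinguishability of $[-M_\sigma,M_\sigma]_\fpq^d$ with range $[-2^{\emax},2^{\emax}]$. If some bound inside those lemmas (for instance $\delta_0=-2^{\ebit}$ being representable and making $\sigma$ numerically zero) is tight for small $\ebit$, I would adjust $\delta_0$ to a more negative float. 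Since $\emax=7$ or $15$ is still large relative to the needed saturation thresholds, I expect this to go through. With the conditions verified, \cref{thm:main1} immediately gives the desired network for every $L\ge\tau$, with admissible input gradients bounded by $\min(\Omega 2^{\expo{\sigma'(\delta_1)}},2^\kappa)=H_\sigma$, exactly as in the $\ebit\ge6$ computation.

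The remaining work is arithmetic: evaluate \cref{eq:tau} for each activation and each $\ebit\in\{4,5\}$, substituting $\emax=2^{\ebit-1}-1$, $-\emin=2^{\ebit-1}-2$, and $\mbit\le 2^{\ebit-2}$. The denominators are $\eta+\expo{\sigma'(\delta_1)}$ and $-\emin-\expo{\sigma'(\delta_1)}$, which equal $2^{\ebit-1}-2$ for the ReLU-type activations and $\tanh$, and $2^{\ebit-1}-4$ for Sigmoid, because $\expo{\sigma'(\delta_1)}=-2$. Taking the worst case $\mbit=2^{\ebit-2}$ allowed by \cref{asm:pq} gives numerators $20$ for $\ebit=4$ and $40$ for $\ebit=5$. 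Then $\lceil 20/6\rceil+4$, $\lceil 40/14\rceil+4$, $\lceil 20/4\rceil+4$, and $\lceil 40/12\rceil+4$ yield the values listed just before the corollary. Note that the corollary's stated thresholds ($13$, $10$, $9$) are larger than these $\tau$'s. They are therefore safe upper bounds, and since \cref{thm:main1} holds for all $L\ge\tau$, any larger threshold is also valid. The statement's own $\tau$ dominates the computed one in every case, so it suffices to compare the two case by case.

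The main obstacle is not the depth bound but confirming that the small-format constants are legitimate, specifically for Sigmoid. With $\ebit=4$, the factor $2^{\ebit-1}-4=4$ is small, and \cref{condition:gradient_two_exponent} requires $-\expo{\sigma'(\delta_1)}<\eta\le\emax$, i.e.\ $2<\eta\le 7$. I would recheck this constraint together with $|\sigma(\delta_i)|\le 2^{\emax-\eta}$ directly from the rounded sigmoid values, and if necessary choose $\eta$ slightly smaller. A smaller $\eta$ only increases $\tau$, and the generous threshold $13$ absorbs this. Similarly, for \cref{condition:first_layer_new} I would verify that $\zeta=2^{\emax-1}$ still satisfies $\zeta\oplus|x|\in\fpq$ in the tiny formats. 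If it does not, I would shrink $\zeta$ correspondingly, which only shrinks $M_\sigma$ within what the statement allows.
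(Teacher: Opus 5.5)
Your proposal follows the paper's route: you specialize \cref{thm:main1} using the constants from \cref{lemma:real_activations,lemma:real_activations_sigmoid,lemma:real_activations_tanh} and recompute $\tau$ from \cref{eq:tau} for $\ebit\in\{4,5\}$, getting $8,7$ for the ReLU-type activations and $\tanh$ and $9,8$ for Sigmoid, exactly as the paper does; you then correctly note (which the paper leaves implicit) that the stated thresholds $13,10,9$ dominate these values, so the corollary follows because \cref{thm:main1} holds for every $L\ge\tau$. One caution: your fallback of shrinking $\zeta$ would not be legitimate, since $M_\sigma$ is fixed by the statement, but you never need it, because for Sigmoid and $\tanh$ \cref{condition:first_layer_new} is verified via its second bullet with $2\zeta=2^{\emax}$.
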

\begin{corollary}
Let $4 \le \ebit \le 5$, $\sigma\in\Sigma$, $\domain = [-M_\sigma,M_\sigma]_{\fpq}^d$, $f^*:\domain\to\fpq$, and $g^*:\fpq^{d}\times\fpq\to\fpq^d$ such that $g^*(x,-y)=-g(x,y)$.
Then, for any $L\ge 2^{\ebit+1}+2\mbit+2 + \tau$, there exists an $L$-layer $\sigma$ network $f$ such that $f(x)=f^*(x)$ and  
$\mcD_{f,x}(y)=g^*(x,y)$ for all $x\in\domain$ and $y\in [-H_\sigma,H_\sigma]_{\fpq}$ where $$\tau = \begin{cases}
    13 \; &\text{if} \; {\ebit} = 4 ,\sigma = \round{\Sigmoid}, \\
    9 \; &\text{if} \; {\ebit}= 5, \sigma = \tanh.
\end{cases}$$
\end{corollary}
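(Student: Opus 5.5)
The plan is to repeat the argument of \cref{appendix:proof_lemma_main2-activation}, now without the assumption $\ebit\ge6$. In that proof, $\ebit\ge 6$ was used only to force $\tau=7$. So everything reduces to two things: applying \cref{thm:main2} with the activation-specific constants, and recomputing $\tau$ from \cref{eq:tau} for $\ebit\in\{4,5\}$.

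First, I would check that \cref{lemma:real_activations,lemma:real_activations_sigmoid,lemma:real_activations_tanh} remain valid for $4\le\ebit\le5$. These lemmas are stated under \cref{asm:pq} only, which requires $\ebit\ge4$ and $2\le\mbit\le 2^{\ebit-2}$. Their conclusions therefore apply here, with the constants used before:
\begin{itemize}
\item $\kappa=\emax+1$, $\sigma'(\delta_1)=1$, $\eta=\emax-1$ for the $\relu$-type activations;
\item $\kappa=\mbit-1$, $\sigma'(\delta_1)=1/4$, $\eta=\emax-1$ for $\round{\Sigmoid}$;
\item $\kappa=\mbit-2$, $\sigma'(\delta_1)=1$, $\eta=\emax-1$ for $\round{\tanh}$.
\end{itemize}
They also give that $\domain=[-M_\sigma,M_\sigma]^d_{\fpq}$ is $\sigma$-distinguishable with range $[-2^{\emax},2^{\emax}]$ and lies in $[-\zeta,\zeta]^d_{\fpq}$.

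The step I expect to need the most care is Condition~\ref{condition:gradient_two_exponent}, specifically the requirement $-\expo{\sigma'(\delta_1)}<\eta$. For $\round{\Sigmoid}$ we have $\expo{\sigma'(\delta_1)}=-2$, so this needs $\emax-1>2$. That holds for $\ebit=4$ (where $\emax=7$) and for $\ebit=5$ (where $\emax=15$). If the cited lemma turned out to use $\ebit\ge6$ internally, I would re-verify this inequality directly, together with the overflow bounds in Condition~\ref{condition:first_layer_new}, since $\zeta=2^{\emax-1}$ or $2^{\emax-2}$ leaves the same margin for every $\ebit$.

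Next, I would recompute $\tau$. For each activation, the two fractions in \cref{eq:tau} have the common denominator $-\emin-\expo{\sigma'(\delta_1)}=\eta+\expo{\sigma'(\delta_1)}$, equal to $2^{\ebit-1}-2$ for the $\relu$-type activations and $\tanh$ and $2^{\ebit-1}-4$ for $\Sigmoid$. Taking the worst case $\mbit=2^{\ebit-2}$ allowed by \cref{asm:pq}, the values are:
\begin{itemize}
\item $\ebit=4$ ($2^{\ebit}+\mbit\le 20$): $\lceil 20/6+4\rceil=8$ for $\relu$-type and $\tanh$, and $\lceil 20/4+4\rceil=9$ for $\Sigmoid$;
\item $\ebit=5$ ($2^{\ebit}+\mbit\le 40$): $\lceil 40/14+4\rceil=7$ for $\relu$-type and $\tanh$, and $\lceil 40/12+4\rceil=8$ for $\Sigmoid$.
\end{itemize}
These agree with the displayed computations preceding the corollary. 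Any $\tau$ at least this large, such as the values listed in the statement, yields an admissible depth, because \cref{thm:main2} holds for every $L\ge 2^{\ebit+1}+2\mbit+2+\tau$.

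Finally, I would check the admissible range of $y$ exactly as in \cref{appendix:proof_lemma_main1-activation}, namely $\min(\Omega\cdot2^{\expo{\sigma'(\delta_1)}},2^{\kappa})\ge H_\sigma$:
\begin{itemize}
\item $\relu$-type: the minimum is $\Omega$, since $2^{\emax+1}>\Omega$.
\item $\Sigmoid$: the minimum is $2^{\mbit-1}$, because $\Omega/4\ge 2^{\mbit-1}$ follows from $\mbit\le 2^{\ebit-2}\le\emax$.
\item $\tanh$: the minimum is $2^{\mbit-2}$.
\end{itemize}
Applying \cref{thm:main2} with $g^*$ odd in $y$ then gives an $L$-layer $\sigma$ network with $f=f^*$ and $\mcD_{f,x}(y)=g^*(x,y)$ for all $x\in\domain$ and $|y|\le H_\sigma$. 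In the paper's notation, this network is $f_1\funcsum f_3$, with $f_1$ from \cref{lemma:value_approximator} and $f_3$ from \cref{lemma:gradient_approximator2}.
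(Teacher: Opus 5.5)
Your proposal is correct and follows essentially the paper's route: apply \cref{thm:main2} with the constants from \cref{lemma:real_activations,lemma:real_activations_sigmoid,lemma:real_activations_tanh}, recompute $\tau$ for $\ebit\in\{4,5\}$ (which gives $9$ and $7$ in the two listed cases, below the stated $13$ and $9$), and check $H_\sigma\le\min(\Omega\cdot 2^{\expo{\sigma'(\delta_1)}},2^{\kappa})$. One harmless slip: for $\round{\Sigmoid}$ the two denominators in \cref{eq:tau} are not equal ($-\emin-\expo{\sigma'(\delta_1)}=2^{\ebit-1}$ versus $\eta+\expo{\sigma'(\delta_1)}=2^{\ebit-1}-4$), but the maximum comes from the smaller one, so your values of $\tau$ are unaffected.
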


\subsection{Lemmas about \cref{condition:final_layer_refined,condition:gradient_two_exponent,condition:first_layer_new} on real activation functions}

\begin{lemma}\label[lemma]{lemma:realcondition}
Suppose a continuous function $\hat{\sigma} : \bbR \to \bbR$ is  $C^2$ on $(0,\infty)$ and  satisfies the following with $\tfrac{1}{2}\le c_1\le 1 \le c_2\le2$:
\begin{enumerate}[label=\textnormal{(A\arabic*)}]
    \item \label{cond:increasing}  $\hatsig(x)$ is increasing and  $ c_1x \le \hatsig(x) \le c_2 x $ for $x \ge 0$.  
     \item \label{cond:prime} $ \tfrac12 \le \hat{\sigma}'(x) \le \tfrac32$  for $x \ge 0$.
     \item  \label{cond:twoprime} $\begin{cases}
          1 \in \hat{\sigma}'\left( [\tfrac12,1] \right) \text{ and } |\hat{\sigma}''(x)| \le  \tfrac{3}{4}\; \text{ for } \;\tfrac12 \le x \le 1 \\ 
          1 \in \hat{\sigma}'\left( [1,2] \right) \text{ and }  |\hat{\sigma}''(x)| \le  \tfrac{2}{5} \;  \text{ for } \;  1 \le x \le 2. 
     \end{cases}$ 
     \item \label{cond:real_neg}  $ |\hat{\sigma}(x)| \le 1$ for $x \le 0$, and $| \hat{\sigma}'(x) | \le 2^{x}$ for $x \le -10$. 
\end{enumerate}
Then $\round{\hat{\sigma}}=\sigma$ satisfies \cref{condition:final_layer_refined} 
with 
$\nu=5, \gamma_1=2^{-\mbit-1}, \gamma_2 = 2^{-3}, \gamma_3 = 2, \gamma_4 = 2^4 ,  \gamma_5 = (2-2^{-\mbit}) \times 2^{\emax-\mbit+2} ,   \kappa = \emax + 1 $, 
satisfies \cref{condition:gradient_two_exponent} with $\delta_0=-2^{{\ebit}+1},$  $\eta = \emax- 1$ and  some $\delta_1 \in (\tfrac{1}{2},2)_\fpq$, and satisfies \cref{condition:first_layer_new} with $\zeta=2^{\emax-2}$. 

\end{lemma}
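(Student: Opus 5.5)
We verify the three conditions one at a time, each time plugging in the proposed witnesses. Rounding is controlled by two facts. First, $\round{\cdot}$ is monotone. Second, $\round{\cdot}$ commutes with multiplication by powers of two in the normal range, which is \cref{lem:2k_control}.

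\textbf{\cref{condition:final_layer_refined}.} Take $\gamma_0=-2^{\ebit+1}$. We need $\sigma(\gamma_0)=0$, or at least a value that rounds to $0$ and has tiny derivative. The plan is to argue this from (A4) together with the explicit forms of the activations. For $\gamma_1,\dots,\gamma_5$ we use (A1) and (A2):
\begin{itemize}
\item (A1) gives $c_1\gamma_i\le\hatsig(\gamma_i)\le c_2\gamma_i$, so $\expo{\sigma(\gamma_i)}\in\{\expo{\gamma_i}-1,\expo{\gamma_i},\expo{\gamma_i}+1\}$.
\item (A2) gives $|\sigma'(\gamma_i)|\le 3/2$.
\item Hence we may take $k_i\approx\expo{\gamma_i}-2$. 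For $\gamma_1=2^{-\mbit-1}$ the value $k_1$ is negative, which is allowed since $k_i\in\bbZ$.
\end{itemize}
With $\kappa=\emax+1$, the interval for index $i$ is $[\expo{\sigma(\gamma_i)}+\emin,\ \min(\expo{\sigma(\gamma_i)},k_i)-1]$. Its right end grows with $\gamma_i$ and is about $\expo{\gamma_i}-2$. We check that the five intervals chain together with no gap and cover $[\emin-\mbit,\emax-\mbit]$:
\begin{itemize}
\item $\gamma_1$ covers the bottom, since $\emin-\mbit-1+\emin$ is below $\emin-\mbit$.
\item $\gamma_5\approx2^{\emax-\mbit+3}$ reaches $\emax-\mbit$.
\item The intermediate points $2^{-3},2,2^4$ fill the gaps. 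Each left endpoint $\expo{\gamma_i}+\emin\pm1$ is far below the previous right endpoint, because $-\emin$ is huge compared with $\mbit$ (\cref{asm:pq}).
\end{itemize}
The boundary condition $|\gamma_i-\gamma_j|\le2^{\emax}$ is immediate.

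\textbf{\cref{condition:gradient_two_exponent}.} Take $\delta_0=-2^{\ebit+1}$, so $\sigma(\delta_0)$ has magnitude at most $1$ and, by (A4), $|\sigma'(\delta_0)|\le 2^{-2^{\ebit+1}}$, which rounds to $0$ (or below $\fmin$). We need $\delta_1\in(1/2,2)_{\fpq}$ with $\round{\hatsig'(\delta_1)}=1$ exactly; this is the \emph{main obstacle}. The plan is as follows:
\begin{itemize}
\item Use (A3) to get a real point $t\in[1/2,2]$ with $\hatsig'(t)=1$.
\item Take $\delta_1$ to be a float nearest to $t$, so $|\delta_1-t|\le 2^{-\mbit-1}$ (spacing $2^{-\mbit}$ on $[1,2]$, half that on $[1/2,1]$).
\item By the mean value theorem and the bound $|\hatsig''|\le 3/4$ or $2/5$, $|\hatsig'(\delta_1)-1|$ is at most about $\tfrac34\cdot2^{-\mbit-2}$ or $\tfrac25\cdot2^{-\mbit-1}$.
\item These are below the rounding half-gaps around $1$, namely $2^{-\mbit-2}$ below and $2^{-\mbit-1}$ above. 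So $\round{\hatsig'(\delta_1)}=1$, provided the constants are checked carefully on each subinterval.
\end{itemize}
Then $\expo{\sigma'(\delta_1)}=0$, and the ordering $-2^{-\mbit-1}\sigma'(\delta_1)\le2\sigma'(\delta_0)\le\sigma'(\delta_1)$ holds. Since $|\sigma(\delta_1)|\le 4$ and $|\sigma(\delta_0)|\le1$, we may choose $\eta=\emax-1$ provided $4\le 2^{\emax-\eta}=2$. That fails, so I would instead refine the bound to $\sigma(\delta_1)\le c_2\cdot 2\le 2$ using $\delta_1<2$ more carefully, or reduce the range of $\delta_1$. Distinctness of $\sigma(\delta_0)$ and $\sigma(\delta_1)$ follows from their signs ($\sigma(\delta_1)\ge c_1\delta_1>0\ge\sigma(\delta_0)$, the latter from monotonicity and negativity or zero on the left).

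\textbf{\cref{condition:first_layer_new}.} Use the first bullet with $\zeta=2^{\emax-2}$ and $x=\delta_1$ or $x=1$; choose an even float with $\sigma'\ge1/2\ge2^{-\mbit}$ by (A2), lying outside $(-\zeta,\zeta)$. The cleanest choice is $x=2^{\emax-2}$ itself, which is even, and $\zeta\oplus x=2^{\emax-1}\in\fpq$. Then (A1) and (A4) bound $\sigma$ on $[-2^{\emax-1},2^{\emax-1}]$ by $c_2 2^{\emax-1}\le 2^{\emax}$, and (A2) gives $\sigma'(x)\ge 2^{-1}$ after rounding.
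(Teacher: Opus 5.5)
There is a genuine gap in your verification of \cref{condition:final_layer_refined}: the choice $\gamma_0=-2^{\ebit+1}$. The lemma is about an arbitrary $\hatsig$ satisfying (A1)--(A4), so you cannot fall back on explicit formulas. Moreover, (A4) only says that $|\hatsig|\le 1$ on $(-\infty,0]$ and that $\hatsig$ is nearly constant on $(-\infty,-10]$; nothing forces that constant to be $0$. For $\elu$, which satisfies (A4), one gets $\round{\elu}(-2^{\ebit+1})=-1\neq 0$, so $\sigma(\gamma_0)=0$ fails.

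The paper takes $\gamma_0=0$ instead. Setting $x=0$ in $c_1x\le\hatsig(x)\le c_2x$ gives $\hatsig(0)=0$, hence $\sigma(\gamma_0)=0$. The requirement $|\sigma'(\gamma_0)|\le 2^{-k_i}|\sigma(\gamma_i)|$ then holds because $|\sigma'(0)|\le 3/2$ by (A2), the same bound you already use for $|\sigma'(\gamma_i)|$. So $\sigma'(\gamma_0)$ need not vanish; the paper's remark that $\round{\hatsig'}(0)=0$ is not what (A2) gives, but it is not needed.

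With this change, your $k_i=\expo{\gamma_i}-2$ and the interval chaining coincide with the paper's. Two small corrections there:
\begin{itemize}
\item The last link of the chain is not due to $-\emin$ being large but to $\emax+\emin=1$. The $\gamma_5$ interval starts at most at $\emax-\mbit+3+\emin=4-\mbit\le 2$, right after the $\gamma_4$ interval ends at $1$.
\item $|\gamma_i-\gamma_j|\le 2^{\emax}$ is not automatic: $\gamma_5\le 2^{\emax}$ needs $\mbit\ge 3$, since for $\mbit=2$ one has $\gamma_5=\tfrac74\cdot 2^{\emax}$.
\end{itemize}

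For \cref{condition:gradient_two_exponent}, your construction of $\delta_1$ is correct and more direct than the paper's. The paper argues by contradiction through the maximal initial segment $A$ of floats on which $\round{\hatsig'}<1$. You instead round a real root $t$ of $\hatsig'=1$ to the nearest float and apply the mean value theorem. This gives $|\hatsig'(\delta_1)-1|\le\tfrac34\cdot 2^{-\mbit-2}$ or $\tfrac25\cdot 2^{-\mbit-1}$, both below the half-gap $2^{-\mbit-2}$ under $1$.

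However, you leave the $\eta$ step open, and your proposed repair does not work: $c_2$ may equal $2$, and even (A2) only gives $\hatsig(\delta_1)\le\tfrac32\delta_1<3$. The paper simply asserts $\hatsig(\delta_1)\le 2$ ``by (A1)''. That is justified only in the first case of (A3), where $\delta_1\le 1$ and so $\hatsig(\delta_1)\le c_2\le 2$, or when $c_2=1$, as for $\relu$, $\elu$, $\GeLU$, $\Swish$.

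In the second case of (A3) the bound can genuinely fail. Take $\hatsig=0$ on $(-\infty,0]$, and on $[0,\infty)$ let $\hatsig'$:
\begin{itemize}
\item equal $\tfrac32$ on $[0,0.9]$,
\item decrease smoothly to $\tfrac75$ at $1$,
\item equal $\tfrac75-\tfrac25(x-1)$ on $[1,2]$,
\item then level off at $0.9$.
\end{itemize}
Then (A1)--(A4) hold, using the second case of (A3) with $c_2=\tfrac32$. The only float in $(\tfrac12,2)$ at which $\round{\hatsig'}$ is a power of two is $2-2^{-\mbit}$, and there $\hatsig>2.4$, so $\sigma(\delta_1)>2$.

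So you must either use $c_2=1$ (or the first case of (A3)), or settle for $\eta=\emax-2$, which still satisfies $\eta>-\expo{\sigma'(\delta_1)}=0$. Your treatment of \cref{condition:first_layer_new} matches the paper's.
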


\begin{proof}
    
\textbf{\cref{condition:final_layer_refined} : } \\
By \cref{cond:increasing}, we have $\hat{\sigma}(0)=0$ leading to $\gamma_0=0$ and $\round{\hatsig}(\gamma_0)=0$ and $\round{\hatsig'}(\gamma_0)=0$ by \cref{cond:prime}.

Let $\nu=5, \gamma_1=2^{-\mbit-1}, \gamma_2 = 2^{-3}, \gamma_3 = 2, \gamma_4 = 2^4 ,  \gamma_5 = (2-2^{-\mbit}) \times 2^{\emax-\mbit+2} ,   \kappa = \emax + 1 $. 

 By \cref{cond:prime}, we have 
\begin{align*}
  \tfrac12 \le  | \hatsig'(\gamma_i)| &\le \frac32 \le  2^{-k_i} c_1 \gamma_i  \le 2^{-k_i}|\hatsig(\gamma_i)|, \quad 2^{k_i} \le \frac13 2^{\expo{\gamma_i}} , \;    k_i \le  \expo{\gamma_i} - 2, \quad i=1, \dots , 4\\  
\tfrac12 \le  | \hatsig'(\gamma_5)| &\le \frac32 \le  2^{-k_5`} c_1 \gamma_5  \le 2^{-k_i}|\hatsig(\gamma_i)|, \quad 2^{k_5} \le \frac13 \times  (2-2^{-\mbit}) \times 2^{\emax-\mbit+2}, \; k_i \le \emax-\mbit+1, 
\end{align*}
By \cref{lem:2k_control}, we have 
\begin{align*}
   | \round{\hatsig'}(\gamma_i)| &\le  2^{-k_i}|\round{\hatsig}(\gamma_i)|.
\end{align*}

Hence we pick $k_1=-\mbit-3$, $k_2 = -5,k_3=-1, k_4 = 2, k_5 = \emax-\mbit+1$.

Since $\emin \le -\mbit-1, \emin \le -6, \expo{\gamma_i}-1 \le \expo{\hatsig(\gamma_i)} \le \expo{\gamma_i}+1$ by \cref{cond:increasing}, we have 
\begin{align*}
 \bigcup_{i=1}^{\nu} 
       & [\expo{\hatsig(\gamma_i)} + \emin,\emax -\kappa  +\min\lrp{\expo{\hatsig(\gamma_i)} ,  k_i}] \supset  \\
       & [\emin - \mbit , -\mbit - 4]_\bbZ \cup [\emin-2 , -6]_\bbZ \cup [\emin+2 , -2]_\bbZ \cup [\emin+5 , 1]_\bbZ \cup  [-\mbit+4 , \emax -\mbit]_\bbZ \\
       &= [\emin-\mbit, \emax -\mbit] .
\end{align*}

\textbf{\cref{condition:gradient_two_exponent} : } \\
Suppose there is no $\delta_1 \in [1,2]_\fpq$ such that $\round{\hatsig'}(\delta_1)=1$. 
Since $\hatsig$ satisfies \cref{cond:twoprime}, we consider following cases. 

\paragraph{\textbf{Case 1:} } $ 1 \in \hat{\sigma}'\left( [\tfrac12,1] \right) \text{ and } |\hat{\sigma}''(x)| \le  \tfrac{3}{4}\; \text{ for } \;\tfrac12 \le x \le 1 $.  \\  

\paragraph{\textbf{Case 2:} } $ 1 \in \hat{\sigma}'\left( [1,2] \right) \text{ and } |\hat{\sigma}''(x)| \le  \tfrac{2}{5}\; \text{ for } \;1 \le x \le2  $.  \\

Let $[a,b] = \begin{cases}
    [\tfrac12,1] \; &\text{if} \; \textbf{Case 1},\\ 
    [1,2] \; &\text{if} \; \textbf{Case 2}. \\ 
\end{cases},  M_* = \begin{cases}
    \tfrac34 \; &\text{if} \; \textbf{Case 1},\\ 
    \tfrac25 \; &\text{if} \; \textbf{Case 2}. \\ 
\end{cases}$ 

Let $A :=  \{ x \in [a,b]_\fpq  :  \; \round{\hatsig'}(y) < 1 \text{ for } y\in [a, x]_\fpq  \}$. We claim that $A \ne \emptyset, [a,b]_\fpq $. 

First suppose $A = \emptyset$, \text{i.e.} $\round{\hatsig'}(x) > 1$ for all $x \in [a,b]_\fpq$. 
By \cref{cond:twoprime}, pick $\xi \in [a,b] \setminus [a,b]_\fpq $ such that $\hatsig'(\xi)=1$. Then there exist $\theta \in [a,b)_\fpq$ such that $\theta < \xi < \theta^+$ with $\round{\hatsig'}(\theta),\round{\hatsig'}(\theta^+) > 1$. Then we have $\hatsig'(\theta),\hatsig'({\theta^+}) > 1+2^{-\mbit-1}$. By mean value theorem, there exist $t_1,t_2 \in (\theta,\theta^+)$ such that 
\begin{align*}
    \hatsig'(\xi)-\hatsig'(\theta)= \hatsig''(t_1)(\xi-\theta), \quad \hatsig'(\theta^+)-\hatsig'(\hatsig)= \hatsig''(t_2)(\theta^+-\xi). 
\end{align*}
Since $(\xi-\theta)+(\theta^+-\xi) = \theta^+-\theta \le \begin{cases}
    2^{-\mbit-1} \; &\text{if} \; \textbf{Case 1} \\
    2^{-\mbit} \; &\text{if} \; \textbf{Case 2}
\end{cases} $, we have 
$$(\xi-\theta) \le  \begin{cases}
    2^{-\mbit-2} \; &\text{if} \; \textbf{Case 1}, \\
    2^{-\mbit-1} \; &\text{if} \; \textbf{Case 2},
\end{cases}   \text{  or  }  (\theta^+-\xi) \le   \begin{cases}
    2^{-\mbit-2} \; &\text{if} \; \textbf{Case 1}, \\
    2^{-\mbit-1} \; &\text{if} \; \textbf{Case 2}.
\end{cases} $$ 
Since 
\begin{align*}
    |\hatsig''(t_1)| = \frac{ \hatsig'(\xi)-\hatsig'(\theta)}{\xi-\theta} > \frac{2^{-\mbit-1}}{\xi-\theta}, \quad  |\hatsig''(t_2)|  > \frac{2^{-\mbit-1}}{\theta^+-\xi}, 
\end{align*}
we have 
\[ |\hatsig''(t_1)| >  \begin{cases}
     2            \; &\text{if} \; \textbf{Case 1}, \\
    1 \; &\text{if} \; \textbf{Case 2},
\end{cases} \text{  or  } |\hatsig''(t_2)| >  \begin{cases}
     2            \; &\text{if} \; \textbf{Case 1}, \\
    1 \; &\text{if} \; \textbf{Case 2},
\end{cases}   \]
which is contradiction to \cref{cond:twoprime}. 
Hence $A \ne \emptyset$. \\

Next suppose $A =  [a, b]_\fpq$, \text{i.e.} $\round{\hatsig'}(x) < 1$ for all $x \in [a ,b]_\fpq$.  Pick $\xi \in [a,b] \setminus [a,b]_\fpq $ such that $\hatsig'(\xi)=1$. Then there exist $\theta \in [a,b)_\fpq$ such that $\theta < \xi < \theta^+$ with $\round{\hatsig'}(\theta),\round{\hatsig'}(\theta^+) < 1$. Then we have $\hatsig'(\theta),\hatsig'({\theta^+}) < 1-2^{-\mbit-2}$.

Since $(\xi-\theta)+(\theta^+-\xi) = \theta^+-\theta \le \begin{cases}
    2^{-\mbit-1} \; &\text{if} \; \textbf{Case 1} \\
    2^{-\mbit} \; &\text{if} \; \textbf{Case 2}
\end{cases} $, we have 
$$(\xi-\theta) \le  \begin{cases}
    2^{-\mbit-2} \; &\text{if} \; \textbf{Case 1}, \\
    2^{-\mbit-1} \; &\text{if} \; \textbf{Case 2},
\end{cases}   \text{  or  }  (\theta^+-\xi) \le   \begin{cases}
    2^{-\mbit-2} \; &\text{if} \; \textbf{Case 1}, \\
    2^{-\mbit-1} \; &\text{if} \; \textbf{Case 2}.
\end{cases} $$ 
Since 
\begin{align*}
    |\hatsig''(t_1)| = \frac{ \hatsig'(\xi)-\hatsig'(\theta)}{\xi-\theta} > \frac{2^{-\mbit-2}}{\xi-\theta}, \quad  |\hatsig''(t_2)|  > \frac{2^{-\mbit-2}}{\theta^+-\xi}, 
\end{align*}
we have 
\[ |\hatsig''(t_1)| >  \begin{cases}
     1            \; &\text{if} \; \textbf{Case 1}, \\
    \tfrac12 \; &\text{if} \; \textbf{Case 2},
\end{cases} \text{  or  } |\hatsig''(t_2)| >  \begin{cases}
     1            \; &\text{if} \; \textbf{Case 1}, \\
    \tfrac12 \; &\text{if} \; \textbf{Case 2},
\end{cases}   \]
which is contradiction to \cref{cond:twoprime}.

Therefore $A \ne \emptyset , [a,b]_\fpq$. Let $\xi_* := \max A <b$. \\
 Then by \cref{cond:twoprime}, we have  $\round{\hatsig'}(\xi_*^+) > 1$ and $\hatsig'(\xi_*^+) > 1+ 2^{-\mbit-1}$. 

\[ \hatsig'(\xi_*) \ge  \hatsig'(\xi_*^+) - \max_{x\in(a,b)} |\hatsig''(x)|(\xi_*^+- \xi_* ) \ge \hatsig'(\xi_*^+) - M_* (\xi_*^+- \xi_* ) > 1 - 2^{-\mbit-2}, \]
leading to $\round{\hatsig'}(\xi_*) \ge 1$ which is a contradiction. Therefore, we have $\delta_1 \in [1,2]_\fpq$ such that $\round{\hatsig'}(\delta_1)=1$.

By \cref{cond:increasing}, $ \tfrac12\le \hatsig(\delta_1) \le 2$. 

Let $  \delta_0= -2^{{\ebit}+1}$ and $\eta = \emax-1$. Since $-2^{2^{{\ebit}-1}-1} \le \delta_0 \le -10$, $\delta_0 \in \fpq$. By \cref{cond:real_neg}, we have $|\hatsig'(\delta_0)| \le 2^{-2^{{\ebit}+1}} \le 2^{-8\mbit}$. By \cref{cond:prime,cond:real_neg},
\begin{align*}
    |\delta_0|,|\delta_1|, |\delta_0-\delta_1| &\le 2^{{\ebit}+1}+2  \le  2^{\emax}, \\
   |\round{\hatsig}(\delta_0)|,|\round{\hatsig}(\delta_1)|, |\round{\hatsig}(\delta_0)-\round{\hatsig}(\delta_1)| &\le 5 < 8 \le  2^{\emax}, \\ 
     - 2^{-\mbit - 1}  \round{\hatsig'}(\delta_1) &= -  2^{-\mbit - 1}  \le  2 \round{\hatsig'}(\delta_0) \le 1 = \round{\hatsig'}(\delta_1), \\ 
\max(|\round{\hatsig}(\delta_0)|,|\round{\hatsig}(\delta_1)|) &\le 2 = 2^{\emax-\eta}.
\end{align*}

\textbf{\cref{condition:first_layer_new} : } \\
Take $\zeta = 2^{\emax-2}$. 
Now we verify the first item in \cref{condition:first_layer_new}. Let $x_0 \in [-\zeta, \zeta]$ and take $x = \zeta$ and $x$ is even. By \cref{cond:prime}, $|\round{\hatsig'}(x) | \ge \tfrac12 \ge 2^{-\mbit}$. By \cref{cond:increasing}, we have 
\begin{align*}
    x \oplus \zeta = 2\zeta \in \fpq, \; \zeta \oplus |x| = 2 \zeta, \; \hatsig([-( \zeta \oplus |x|),0]) \subset [0,\frac{1}{4}], \; \hatsig([0, \zeta \oplus |x|]) \subset [0,4 \zeta ] \subset [0,2^{\emax}]. 
\end{align*} 
Therefore, 
\begin{align*}
    \round{\hatsig}([-(\zeta \oplus |x|),(\zeta \oplus |x|)]) \subset [-2^{\emax},2^{\emax}].
\end{align*}

\end{proof}

\begin{lemma}\label[lemma]{lemma:realcondition_sigmoid}

Suppose continuous function $\hatsig : \bbR \to \bbR$ satisfies the following with $\tfrac{1}{2}\le c_1\le 1 \le c_2\le2$:

\begin{enumerate}[label=\textnormal{(B\arabic*)}]
    \item $ 4^{x} \le |\hatsig(x)| , |\hatsig'(x)| \le 2^{x}$ for $x \le -2$. \label{condB:zero}
    \item \label{condB:bound} $ c_1 \le |\hatsig(x)| \le c_2  $, $\hatsig'(x) \le 1$ for $x \ge 0$.  
     \item \label{condB:prime} $\hatsig(x)$ is increasing and $\hatsig'(0) \ge \tfrac14$. 
     \item $|\hatsig''(x)| \le \tfrac{1}{10}$ and is $C^2$ on $(0,\infty)$.  \label{condB:twoprime}
     \item  $ \tfrac14 \in \hatsig'\left( [0,2] \right)$. \label{condB:range}
     \item $ |\hatsig(x)| \le 1$ for $x \le 0$. \label{condB:real_neg}
\end{enumerate}

Then $\round{\hatsig}=\sigma$ satisfies \cref{condition:final_layer_refined} 
with  $\kappa= \mbit-1$, 
satisfies \cref{condition:gradient_two_exponent} with $\eta=\emax-1$, and satisfies \cref{condition:first_layer_new} with $\zeta=2^{\emax-1}$. 

\end{lemma}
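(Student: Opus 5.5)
We verify the three conditions one at a time, mirroring the proof of \cref{lemma:realcondition} but exploiting that a sigmoid-type $\hatsig$ is bounded on $[0,\infty)$ by \ref{condB:bound} and decays exponentially on $(-\infty,-2]$ by \ref{condB:zero}. Throughout, \cref{lem:2k_control} transfers real inequalities of the form $|\hatsig'(\gamma)|\le 2^{-k}|\hatsig(\gamma)|$ to the rounded versions $\round{\hatsig}$ and $\round{\hatsig'}$.

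\textbf{\cref{condition:final_layer_refined}.} Since $\hatsig>0$ and is increasing, no finite input gives $\round{\hatsig}=0$ through positive arguments. We therefore take $\gamma_0$ very negative, for instance $\gamma_0=-2^{\ebit+1}$ or a slightly larger power of two with $|\gamma_0|\le 2^{\emax}$. By \ref{condB:zero}, both $\hatsig(\gamma_0)$ and $\hatsig'(\gamma_0)$ are below $\fmin/2$, so both round to $0$.

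For $\gamma_1,\dots,\gamma_\nu$ we use two families.
\begin{itemize}
\item Points $\gamma\ge 0$, where $\round{\hatsig}(\gamma)\in[1/2,2]$, so the exponent of $\sigma(\gamma)$ lies in $\{-1,0\}$, and $|\sigma'(\gamma)|\le 1$. These give $k_i\approx -1$, which is enough once $\kappa=\mbit-1$.
\item A grid of negative points $\gamma=-2^{j}$ or $\gamma=-j$. By \ref{condB:zero}, $\hatsig(\gamma)$ ranges over every scale between $4^{\gamma}$ and $2^{\gamma}$, so the exponents $\expo{\sigma(\gamma)}$ sweep down to $\emin$. On this region $|\sigma'|\le 2^{\gamma}\le 2^{-\gamma}|\sigma(\gamma)|\cdot$(const), so the $k_i$ are mildly negative.
\end{itemize}
We then check that the intervals $[\expo{\sigma(\gamma_i)}+\emin,\ \emax-\kappa+\min(\expo{\sigma(\gamma_i)},k_i)]$ cover $[\emin-\mbit,\emax-\mbit]$. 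The upper end $\emax-\mbit$ is reached by a point with $\expo{\sigma}=0$ and $k_i\ge -1$, using $\kappa=\mbit-1$. The lower end $\emin-\mbit$ is reached by a point with $\expo{\sigma(\gamma)}\le -\mbit$. The remaining gaps are closed because consecutive grid points change $\expo{\sigma}$ by at most a bounded amount, here $\le 2$ per unit step in $\gamma$, since the ratio lies between $2^{\gamma}$ and $4^{\gamma}$. This covering bookkeeping, in particular controlling $k_i$ on the negative side where \ref{condB:zero} only gives $|\hatsig'|\le 2^{x}$ against $|\hatsig|\ge 4^{x}$, is the main obstacle. If needed, we would restrict to $\gamma\in[-\mbit-2,0]$ and accept $k_i\ge \gamma$, relying on the large interval width $\emax-\kappa$.

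\textbf{\cref{condition:gradient_two_exponent}.} We reproduce the intermediate-value and second-derivative argument from the proof of \cref{lemma:realcondition}, now targeting the value $1/4$. By \ref{condB:range}, $\hatsig'=1/4$ somewhere in $[0,2]$, and by \ref{condB:twoprime}, $|\hatsig''|\le 1/10$. The same maximal-element argument on $A$ yields $\delta_1\in[0,2]_\fpq$ with $\round{\hatsig'}(\delta_1)=1/4$, an exact power of two, so $\expo{\sigma'(\delta_1)}=-2$. The float spacing on $[0,2]$ is even finer here, which makes the argument easier.

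We take $\delta_0=-2^{\ebit+1}$, so $\sigma(\delta_0)=\sigma'(\delta_0)=0$ and $0\le 2\sigma'(\delta_0)\le\sigma'(\delta_1)$. Also $\sigma(\delta_1)\in[1/2,2]$, hence $\max|\sigma(\delta_i)|\le 2=2^{\emax-\eta}$ with $\eta=\emax-1>2$.

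\textbf{\cref{condition:first_layer_new}.} We use the second bullet: $\sigma'(0)\ge 1/4\ge 2^{-\mbit}$ by \ref{condB:prime}. Since $|\hatsig|\le 2$ everywhere by \ref{condB:bound} and \ref{condB:real_neg}, we have $\sigma(\fpq)\subset[-2^{\emax},2^{\emax}]$ for any $\zeta$. We take $\zeta=2^{\emax-1}\le(2^{\emax})^-$.
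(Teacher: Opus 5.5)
Your plan follows the paper's proof. You take very negative $\gamma_0,\delta_0$ where $\hatsig$ and $\hatsig'$ both round to $0$, use a nonnegative point with $k=-1$ plus negative points for the covering, find a float $\delta_1\in[0,2]$ with $\round{\hatsig'}(\delta_1)=1/4$, and use the second bullet of \cref{condition:first_layer_new}. Your checks of \cref{condition:first_layer_new} and of the remaining bullets of \cref{condition:gradient_two_exponent} are fine. Two local steps fail as written, but both are easy to repair.

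\textbf{Covering in \cref{condition:final_layer_refined}.} The claim that consecutive grid points change $\expo{\sigma(\gamma)}$ by at most $2$ per unit step does not follow from (B1). (B1), including $\hatsig'\le 2^{x}$, allows $\hatsig(-n-1)=4^{-n-1}$ together with $\hatsig(-n)\ge 2^{-n}/2$, a jump of about $n$ binades. Your fallback is the right idea, and because the intervals are very long, two points already suffice.

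Keep $\gamma_0=-2^{\ebit+1}$ and take $\gamma_1=-\mbit$ and $\gamma_2=0$.
\begin{itemize}
\item At $\gamma_1=-\mbit$: by (B1) and monotonicity of rounding, $\round{\hatsig'}(-\mbit)\le 2^{-\mbit}$ and $2^{-2\mbit}\le\round{\hatsig}(-\mbit)\le 2^{-\mbit}$. So $k_1=-\mbit$ is admissible, $\expo{\sigma(\gamma_1)}\in[-2\mbit,-\mbit]$, and the first interval contains $[\emin-\mbit,\emax-3\mbit+1]$.
\item At $\gamma_2=0$: by (B2), $\sigma(0)\in[1/2,2]$ and $\sigma'(0)\le 1$. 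So $k_2=-1$, and the second interval is $[\expo{\sigma(0)}+\emin,\emax-\mbit]$ with left end at most $\emin+1$.
\end{itemize}
Their union contains $[\emin-\mbit,\emax-\mbit]$ because $\emax-3\mbit+1\ge\emin$ is equivalent to $3\mbit\le 2^{\ebit}-2$, which follows from $\mbit\le 2^{\ebit-2}$. Compare the powers of two directly here rather than invoking \cref{lem:2k_control}, which is stated only for $k\ge 0$. This two-point choice is also more robust than the paper's $\nu=\mbit-2$. That choice has no negative point when $\mbit\le 3$ and then never reaches $\emin-\mbit$.

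\textbf{Existence of $\delta_1$.} The maximal-element argument does not transplant, and the paper's write-up has the same defect. By (B3), $\round{\hatsig'}(0)\ge 1/4$. So either $\delta_1=0$ already works, or the set $A$ of floats $x$ with $\round{\hatsig'}<1/4$ on $[0,x]_{\fpq}$ is empty, whereas the argument needs $A\neq\emptyset$.

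Nor is the spacing ``even finer'': on $[1,2]$ it is still $2^{-\mbit}$. Moreover, the rounding radius at $1/4$ is four times smaller than at $1$, and it is the bound $1/10$ on $\hatsig''$ that compensates.

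Argue directly instead. By (B5), pick $\xi\in[0,2]$ with $\hatsig'(\xi)=1/4$; if $\xi=0$, take $\delta_1=0$. Otherwise let $\theta\in(0,2]_{\fpq}$ be a float nearest to $\xi$, so $|\theta-\xi|\le 2^{-\mbit-1}$. The mean value theorem on $(0,\infty)$ with (B4) gives $|\hatsig'(\theta)-1/4|\le 2^{-\mbit-1}/10<2^{-\mbit-4}$. Since $2^{-\mbit-4}$ is half the gap from $1/4$ to its predecessor (the gap above is larger), $\round{\hatsig'}(\theta)=1/4$, and $\delta_1=\theta$ works.
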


\begin{proof}

\textbf{\cref{condition:final_layer_refined} : } \\
Let $\gamma_0 = - 2^{{\ebit}}$. Since $ - 2^{\ebit} \ge -2^{\emax}= -2^{2^{{\ebit}-1}-1}$ for ${\ebit} \ge 3$, $\gamma_0 \in \fpq$. 
By \cref{condB:zero}, we have 
\[ |\hatsig(\gamma_0)| , |\hatsig'(\gamma_0)| \le 2^{-2^{\ebit}} \le  2^{-2^{{\ebit}-1}-\mbit+1 } =  2^{\emin-\mbit -1} = \frac{\fmin}{2} ,  \]
leading to $\round{\hatsig}(\gamma_0)=0$ and $\round{\hatsig'}(\gamma_0)=0$.

Let $\nu=\mbit-2 , \kappa = \mbit-1$. Pick $\gamma_1 , \gamma_2 , ... , \gamma_{\mbit-3}, \gamma_{\mbit-2} \in \fpq$ such that $\gamma_1, \dots , \gamma_{\mbit-3} < 0$, $\gamma_{\mbit-2}>0$ and 
\begin{align*}
     2^{-\mbit-2+i}\le |\hatsig(\gamma_i)| \le 2^{-\mbit-1+i}, \; i=1,\dots,\mbit-3,  \quad    2^{-1} \le |\hatsig(\gamma_{\mbit-2})|
\end{align*}

Since $\hatsig$ is continuous and \cref{condB:zero,condB:bound}, such $\gamma_1, \dots, \gamma_{\mbit-3} \in \fpq$ exists with
\[  -\mbit-2+i   \le  \gamma_i \le   \frac{-\mbit-1+i}{2},  \; i=1,\dots,\mbit-3,  \quad  0 <  \gamma_{\mbit-2} \le 1.  \]
By \cref{condB:zero}, we have 
\[  2^{-2\mbit-4+2i}   \le  |\hatsig'(\gamma_i)| \le   2^{\frac{-\mbit-1+i}{2}},  \; i=1,\dots,\mbit-3,  \quad  |\hatsig'(\gamma_{\mbit-2})| \le 1.  \]

Note that 
\begin{align*}
    |\gamma_i - \gamma_j| \le 2^{\ebit}+ 1 \le  2^{\emax}, \quad i,j \in \{0\}\cup [\mbit-2].
\end{align*}

 By \cref{condB:bound,condB:prime}, we have 
$$k_i=  \floorZ{-{\tfrac{\mbit}{2} } -3/2 + i/2 }, i=1,\dots \mbit-3 , \quad k_{\mbit-2} =-1$$
and 
\begin{align*}
    | \hatsig'(\gamma_i)| &\le 2^{(-\mbit-1+i)/2}  \le  2^{-k_i-\mbit-2+i}   \le 2^{-k_i}|\hatsig(\gamma_i)|, \quad i= 1 , \dots \mbit - 4\\
    | \hatsig'(\gamma_{\mbit-3})| &\le 2^{-2} \le  2^{-k_{\mbit-3}-5}   \le 2^{-k_{\mbit-3}}|\hatsig(\gamma_{\mbit-3})|, \\
    | \hatsig'(\gamma_{\mbit-2})| & \le 1  \le  2^{-k_{\mbit-2}-1}   \le 2^{-k_{\mbit-2}}|\hatsig(\gamma_{\mbit-2})|.
\end{align*}
Then we have 
\begin{align*}
 \bigcup_{i=1}^{n_0} 
       & [\expo{\hatsig(\gamma_i)} + \emin,\emax -\kappa  +\min\lrp{\expo{\hatsig(\gamma_i)} ,  k_i}]_\bbZ \supset  \\
       & [\emin - \mbit , \emax -2\mbit  ]_\bbZ \cup \dots \cup [\emin-4 , \emax -\mbit -4  ]_\bbZ  \cup [\emin-1 , \emax -\mbit]_\bbZ  = [\emin-\mbit, \emax -\mbit]_\bbZ .
\end{align*}

\textbf{\cref{condition:gradient_two_exponent} : } \\

Suppose there is no $\delta_1 \in [0,2]_\fpq$ such that $\round{\hatsig'}(\delta_1)=2^{-2}$. 

Let $A :=  \{ x \in [0,2]_\fpq  :  \; \round{\hatsig'}(y) < 2^{-2} \text{ for } y\in [0, 2]_\fpq  \}$. We claim that $A \ne \emptyset, [0,2]_\fpq $.

Suppose $A =\emptyset$. Pick $\xi \in [0,2] \setminus [0,2]_\fpq $ such that $\hatsig'(\xi)=2^{-2}$. Then there exist $\theta \in [0,2)_\fpq$ such that $\theta < \xi < \theta^+$ with $\round{\hatsig'}(\theta),\round{\hatsig'}(\theta^+) > 2^{-2}$. Then we have $\hatsig'(\theta),\hatsig'(\theta) > 2^{-2}+2^{-\mbit-3}$. By mean value theorem, there exist $t_1,t_2 \in (\theta,\theta^+)$ such that 
\begin{align*}
    \hatsig'(\xi)-\hatsig'(\theta)= \hatsig''(t_1)(\xi-\theta), \quad \hatsig'(\theta^+)-\hatsig'(\xi)= \hatsig''(t_2)(\theta^+-\xi). 
\end{align*}
Since $(\xi-\theta)+(\theta^+-\xi) \le 2^{-\mbit}$, we have  $(\xi-\theta) \le 2^{-\mbit-1}$ or  $(\theta^+-\xi) \le 2^{-\mbit-1} $. Since 
\begin{align*}
    |\hatsig''(t_1)|  > \frac{2^{-\mbit-3}}{\xi-\theta}, \quad  |\hatsig''(t_2)|  > \frac{2^{-\mbit-3}}{\theta^+-\xi}, 
\end{align*}
we have $|\hatsig''(t_1)| > 2^{-2}$ or $|\hatsig''(t_2)| > 2^{-2}$ which is contradiction to \cref{condB:twoprime}. Hence $A \ne \emptyset$. \\

Now suppose $A =[0,2]_\fpq$.  Pick $\xi \in [0,2] \setminus [0,2]_\fpq $ such that $\hatsig'(\xi)=2^{-2}$. Then there exist $\theta \in [0,2)_\fpq$ such that $\theta < \xi < \theta^+$ with $\round{\hatsig'}(\theta),\round{\hatsig'}(\theta^+) < 2^{-2}$. Then we have $\hatsig'(\theta),\hatsig'(\theta) > 2^{-2}-2^{-\mbit-4}$. By mean value theorem, there exist $t_1,t_2 \in (\theta,\theta^+)$ such that 
\begin{align*}
    \hatsig'(\xi)-\hatsig'(\theta)= \hatsig''(t_1)(\xi-\theta), \quad \hatsig'(\theta^+)-\hatsig'(\xi)= \hatsig''(t_2)(\theta^+-\xi). 
\end{align*}
Since $(\xi-\theta)+(\theta^+-\xi) \le 2^{-\mbit}$, we have  $(\xi-\theta) \le 2^{-\mbit-1}$ or  $(\theta^+-\xi) \le 2^{-\mbit-1} $. Since 
\begin{align*}
    |\hatsig''(t_1)|  > \frac{2^{-\mbit-4}}{\xi-\theta}, \quad  |\hatsig''(t_2)|  > \frac{2^{-\mbit-4}}{\theta^+-\xi}, 
\end{align*}
we have $|\hatsig''(t_1)| > 2^{-3}$ or $|\hatsig''(t_2)| > 2^{-3}$ which is contradiction to \cref{condB:twoprime}. Hence $A \ne [0,2]_\fpq$. 

Let $\xi_* := \max A < 2 $. 
Then by \cref{condB:range}, we have  $\round{\hatsig'}(\xi_*^+) > 2^{-2}$ and $\hatsig'(\xi_*^+) > 2^{-2}+ 2^{-\mbit-3}$. 
Let $M_*:= \max_{x\in(0,2)} |\hatsig''(x)|$. By \cref{condB:twoprime}, $M_* \le \tfrac{1}{10}$ and  
\[ \hatsig'(\xi_*) \ge  \hatsig'(\xi_*^+) - M_*(\xi_*^+- \xi_* ) > 2^{-2} + \frac{1}{40}, \]
leading to $\round{\hatsig'}(\xi_*) \ge 2^{-2}$ which is a contradiction.

Therefore, we have $\delta_1 \in \fpq$ such that $\round{\hatsig'}(\delta_1)=2^{-2}$. 

By \cref{condB:bound,condB:prime}, $ \tfrac12\le \hatsig(\delta_1) \le 2$.
Let $\delta_0=-2^{{\ebit}+2}$ and $\eta=\emax-1$. Since $ -2^{2^{{\ebit}-1}-1} \le  -2^{{\ebit}+2}$ for ${\ebit} \ge 4$, we have $\delta_0 \in \fpq$.
By \cref{condB:zero}, $0 \le \hatsig'(\delta_0) \le 2^{-2^{{\ebit}+2}}$ and $\round{\hatsig'}(\delta_0) \le 2^{-2^{{\ebit}+2}}$. Then we have 
\begin{align*}
    |\delta_0|,|\delta_1|, |\delta_0-\delta_1| \le 2^{{\ebit}+2} + 2   &<  2^{\emax}, \\
   |\round{\hatsig}(\delta_0)|,|\round{\hatsig}(\delta_1)|, |\round{\hatsig}(\delta_0)-\round{\hatsig}(\delta_1)| &\le 3  <  2^{\emax} \\ 
    - 2^{-\mbit - 1}  \round{\hatsig'}(\delta_1) &= -  2^{-\mbit - 3} < 0 \le  2 \round{\hatsig'}(\delta_0) \le 2^{-2} = \round{\hatsig'}(\delta_1), \\ 
\max(|\round{\hatsig}(\delta_0)|,|\round{\hatsig}(\delta_1)|)
   &\le 2 \le 2^{\emax-\eta}.
\end{align*}

\textbf{\cref{condition:first_layer_new} : } \\

Take $\zeta = 2^{\emax-1}$. 
Now we verify the second item in \cref{condition:first_layer_new}. By \cref{condB:prime}, we have $\hatsig'(0)\ge \tfrac14$ and $\round{\hatsig'}(0)\ge \tfrac14$. By \cref{condB:bound,condB:real_neg}, we have $\hatsig([-2\zeta,2\zeta]) \in [-2,2] \subset [-2^{\emax},2^{\emax}] $.
\end{proof}

\begin{lemma}\label[lemma]{lemma:realcondition_tanh}

Suppose continuous function $\hatsig : \bbR \to \bbR$ satisfies the following with $\tfrac{1}{2}\le c_1\le 1 \le c_2\le \tfrac32$:

\begin{enumerate}[label=\textnormal{(C\arabic*)}]
    \item \label{condC:zero} $\hatsig(0)=0$. 
    \item $   8^x \le |\hatsig'(x)| \le 2^{x}$ for $x \le -2$. \label{condC:sigma_bound}
    \item \label{condC:bound} $ c_1x \le \hatsig(x), \hatsig'(x) \le c_2x  $ for $ 0 \le x \le 1$.  
     \item \label{condC:prime} $\hatsig(x)$ is increasing and $\hatsig'(0) =  1$. 
     \item $ |\hatsig(x)| \le 1$ for $x \in \bbR$. \label{condC:real_neg}
\end{enumerate}

Then $\round{\hatsig}=\sigma$ satisfies \cref{condition:final_layer_refined} 
with $\kappa= \mbit-2$, 
satisfies \cref{condition:gradient_two_exponent} with $\eta=\emax-1$ and satisfies \cref{condition:first_layer_new} with $\zeta=2^{\emax-1}$.

\end{lemma}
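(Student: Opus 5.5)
We verify the three conditions one at a time, following the template of \cref{lemma:realcondition,lemma:realcondition_sigmoid}. Throughout we use only (C1)--(C5) and \cref{lem:2k_control} to pass from real to rounded values.

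\textbf{\cref{condition:final_layer_refined}.} Take $\gamma_0=0$. Then $\round{\hatsig}(0)=0$ by (C1), and $\round{\hatsig'}(0)=1$. The ratio condition $|\sigma'(\gamma_0)|\le 2^{-k_i}|\sigma(\gamma_i)|$ then forces $2^{-k_i}|\sigma(\gamma_i)|\ge1$ for every $i$, so the exponent bookkeeping must absorb this.

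We choose $\gamma_1,\dots,\gamma_\nu$ geometrically spaced in $(0,1]$, namely $\gamma_i\approx 2^{-\mbit+i}$ for $i\in[\mbit]$ (or the negative mirror). By (C3), $|\hatsig(\gamma_i)|\in[c_1\gamma_i,c_2\gamma_i]$, so $\expo{\sigma(\gamma_i)}\approx -\mbit+i$. Since $|\sigma'(\gamma_i)|\le1$ and $|\sigma'(\gamma_0)|=1$, we need $k_i\le \expo{\sigma(\gamma_i)}-1$; we take $k_i=\expo{\sigma(\gamma_i)}-1$.

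The $i$-th interval is then $[\expo{\sigma(\gamma_i)}+\emin,\ \emax-\kappa+\expo{\sigma(\gamma_i)}-1]$. With $\kappa=\mbit-2$ its right endpoint is $\emax-\mbit+1+\expo{\sigma(\gamma_i)}$. As $i$ runs over $[\mbit]$, consecutive intervals overlap, because the interval lengths ($\approx 2^{\ebit}$) far exceed the unit step. The smallest left endpoint is $\le\emin-\mbit$ (take $i=1$, $\expo{}\approx-\mbit+1$, possibly adding one more $\gamma$ with $\sigma(\gamma)\approx2^{-\mbit-1}$). The largest right endpoint is $\ge\emax-\mbit$ (take $\gamma_\nu=1$). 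The constraints $|\gamma_i-\gamma_j|\le 2^{\emax}$ are trivial.

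\textbf{\cref{condition:gradient_two_exponent}.} Take $\delta_1=0$, so $\sigma'(\delta_1)=1$, $\expo{\sigma'(\delta_1)}=0$, and $\sigma(\delta_1)=0$. Take $\delta_0=-2^{\ebit+1}$. By (C2), $|\hatsig'(\delta_0)|\le 2^{-2^{\ebit+1}}<\fmin/2$, so $\round{\hatsig'}(\delta_0)=0$ and the sandwich $-2^{-\mbit-1}\le 0\le 1$ holds. Since $\hatsig$ is increasing and bounded by $1$ (C4, C5), $\sigma(\delta_0)<0=\sigma(\delta_1)$, with difference at most $1$. Choosing $\eta=\emax-1$ gives $|\sigma(\delta_j)|\le 1\le 2^{\emax-\eta}=2$, and $-0<\eta\le\emax$. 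The bounds $|\delta_0|,|\delta_0-\delta_1|\le 2^{\ebit+1}\le 2^{\emax}$ hold for $\ebit\ge4$.

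One must also check that $\round{\hatsig}(\delta_0)\ne0$ is not required; only $\sigma(\delta_0)\ne\sigma(\delta_1)$ is, and it holds whether or not $\sigma(\delta_0)$ rounds to $0$. If rounding makes $\sigma(\delta_0)=0$, we instead pick $\delta_0=-2$ and bound $\hatsig'(-2)\le1/4$. Then $2\sigma'(\delta_0)\le \tfrac12\le 1$, which still suffices.

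\textbf{\cref{condition:first_layer_new}.} Use the second bullet: $|\round{\hatsig'}(0)|=1\ge 2^{-\mbit}$ by (C4), and $\round{\hatsig}$ maps everything into $[-1,1]$ by (C5). Hence any $\zeta$ works, in particular $\zeta=2^{\emax-1}$, which lies in $(0,(2^{\emax})^-]$.

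The main obstacle is the first condition. One must pin down the rounded exponents $\expo{\round{\hatsig}(\gamma_i)}$ precisely enough from (C3), whose constants $c_1,c_2$ allow a shift by one. One must also verify that the union of intervals covers both extremes $\emin-\mbit$ and $\emax-\mbit$ given the tight choice $\kappa=\mbit-2$. I would handle it by fixing $\gamma_i$ so that $\hatsig(\gamma_i)\in[2^{j},2^{j+1})$ exactly, using continuity and monotonicity on $[0,1]$ as in \cref{lemma:realcondition_sigmoid}. The top endpoint then becomes $\emax-\mbit+1+j$ with $j$ up to $-1$, which yields exactly $\emax-\mbit$. I expect the bookkeeping at this top end to be where an off-by-one may force $\kappa=\mbit-2$ rather than $\mbit-1$.
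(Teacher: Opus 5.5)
\textbf{Gap: the bullet $\sigma(\delta_0)\neq\sigma(\delta_1)$ in \cref{condition:gradient_two_exponent} is not established.}

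With $\delta_1=0$ and $\hatsig(0)=0$ you have $\sigma(\delta_1)=0$. So this requirement is exactly $\round{\hatsig(\delta_0)}\neq 0$. Your remark that it ``holds whether or not $\sigma(\delta_0)$ rounds to $0$'' is therefore false. Monotonicity and $|\hatsig|\le 1$ only tell you that $\hatsig(\delta_0)$ is negative. They do not tell you that $|\hatsig(\delta_0)|$ exceeds $\fmin/2$, which is what survival under rounding needs.

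The fallback $\delta_0=-2$ cannot rescue this. Since $\hatsig$ is increasing, $\hatsig(-2^{\ebit+1})\le\hatsig(-2)\le 0$. So if $\round{\hatsig(-2^{\ebit+1})}=0$, then also $\round{\hatsig(-2)}=0$: the fallback fails in exactly the case it was meant for.

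\textbf{Fix: use the lower half of (C2), which you never invoke.} Since $\hatsig$ is increasing with $\hatsig(0)=0$, and $\delta_0=-2^{\ebit+1}\le -3$,
\[
|\hatsig(\delta_0)|=\hatsig(0)-\hatsig(\delta_0)\ \ge\ \int_{\delta_0}^{-2}\hatsig'(t)\,dt\ \ge\ \int_{-3}^{-2}8^{t}\,dt\ >\ 2^{-8}\ \ge\ \fmin .
\]
The last inequality holds because $\fmin=2^{\emin-\mbit}$ with $\emin\le-6$ and $\mbit\ge 2$. Hence $\sigma(\delta_0)\le-\fmin<0=\sigma(\delta_1)$, and no case split is needed. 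The paper's own list of checks for this condition also passes over this bullet silently; this is the argument it needs.

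\textbf{The rest is sound.}
\begin{itemize}
\item Your other checks of \cref{condition:gradient_two_exponent} and your use of the second bullet of \cref{condition:first_layer_new} coincide with the paper's. You take the same $\delta_1=0$; your $\delta_0=-2^{\ebit+1}$ versus the paper's $-2^{\ebit+2}$ is immaterial.
\item For \cref{condition:final_layer_refined} you use a geometric ladder $\gamma_i\approx 2^{-\mbit+i}$ with $k_i=\expo{\sigma(\gamma_i)}-1$. The paper uses only $\gamma_1=2^{-\mbit-1}$, $\gamma_2=2^{-3}$, $\gamma_3=1$ with $k_i=\expo{\gamma_i}-2$.
\item Both choices work, because each interval has length about $2^{\ebit}-\mbit$, far more than the exponent gaps between points. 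Yours uses more points but avoids checking specific endpoints.
\item The extra point near $2^{-\mbit-1}$ is needed, not optional. For $\tanh$ itself with $\mbit\ge 3$, $\round{\tanh(2^{-\mbit+1})}=2^{-\mbit+1}$, so without it the union would start at $\emin-\mbit+1$. Include it outright.
\end{itemize}
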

\begin{proof}
    
\textbf{\cref{condition:final_layer_refined} : } \\

Let $\nu=3, \gamma_1=2^{-\mbit-1}, \gamma_2 = 2^{-3}, \gamma_3 = 1,   \kappa = \mbit -2 $.

Then we have 
$k_1=-\mbit-3$, $k_2 = -5,k_3=-2$ since
\begin{align*}
    | \hatsig'(\gamma_i)| &\le  \frac32 \le  2^{-k_i} c_1 \gamma_i  \le 2^{-k_i}|\hatsig(\gamma_i)|, \quad k_i =\expo{\gamma_i} - 2.
\end{align*}

Since $\emin \le -\mbit-1$, we have 
\begin{align*}
 \bigcup_{i=1}^{n_0} 
       & [\expo{\hatsig(\gamma_i)} + \emin,\emax -\kappa  +\min\lrp{\expo{\hatsig(\gamma_i)} ,  k_i}]_\bbZ \supset  \\
       & [\emin - \mbit , \emax - 2\mbit-1 ]_\bbZ \cup [\emin-2 , \emax -\mbit  - 3 ]_\bbZ  \cup [\emin + 1  , \emax -\mbit]_\bbZ \\
       &= [\emin-\mbit, \emax -\mbit]_\bbZ .
\end{align*}

\textbf{\cref{condition:gradient_two_exponent} : } \\

By \cref{condC:prime}, we have $\hatsig'(\delta_1) = 1$ for $\delta_1 = 0$. 
Let $\delta_0=-2^{{\ebit}+2}$ and $\eta = \emax- 1$. Similar to above case, $\delta_0 \in \fpq$.  By \cref{condC:sigma_bound}, $\hatsig'(\delta_0) \le 2^{-2^{{\ebit}+2}}$ and $\round{\hatsig'}(\delta_0) \le 2^{-2^{{\ebit}+2}} \le 2^{-16\mbit}$. Then we have 
\begin{align*}
    |\delta_0|,|\delta_1|, |\delta_0-\delta_1| \le 4\emax   &<  2^{\emax}, \\
   |\round{\hatsig}(\delta_0)|,|\round{\hatsig}(\delta_1)|, |\round{\hatsig}(\delta_0)-\round{\hatsig}(\delta_1)| \le 1   &<  2^{\emax} \\ 
   - 2^{-\mbit - 1}  \round{\hatsig'}(\delta_1) = -  2^{-\mbit - 1} \le 2\round{\hatsig'}(\delta_0)  & \le   1 = \round{\hatsig'}(\delta_1), \\ 
  \max(|\round{\hatsig}(\delta_0)|,|\round{\hatsig}(\delta_1)|)
  & =1  \le 2^{\emax-1}.
\end{align*}

\textbf{\cref{condition:first_layer_new} : } \\

Take $\zeta = 2^{\emax-1}$. 
Now we verify the second item in \cref{condition:first_layer_new}. By \cref{condC:prime}, we have $\hatsig'(0) = 1$ and $\round{\hatsig'}(0)=1$. By \cref{condC:real_neg}, we have $\hatsig([-2\zeta,2\zeta]) \in [-2,2] \subset [-2^{\emax},2^{\emax}] $.

\end{proof}

\subsection{Proof of \cref{lemma:real_activations}}\label[appendix]{appendix:proof_real_activations}

Since conditions \cref{cond:increasing}-\cref{cond:real_neg} implies \cref{condition:final_layer_refined,condition:gradient_two_exponent,condition:first_layer_new} by \cref{lemma:realcondition}, we need to show that  $\hatsig$ satisfies \cref{cond:increasing}-\cref{cond:real_neg}.

For $\relu$, $\elu$, $\GeLU$, $\Swish$, $\elu$, one can verify \cref{cond:increasing}-\cref{cond:twoprime} and the first condition of \cref{cond:real_neg} by referring \cref{table:realcond}. Also note that 
 $\domain= [-\zeta,\zeta]_{\fpq} $ is $\round{\hatsig}$-distinguishable with range $\left[-2^{\emax}, \allowbreak 2^{\emax}\right]_{\fpq}$ by Lemma 3.10 in \cite{hwang2025floating}.
where $$2^{\emax-2} \le \zeta = 2^{\emax-\floor{\log_2(c_2)}-1} \le 2^{\emax-1}.$$ 

Hence we only need to show the second of \cref{cond:real_neg}, \textit{i.e.} $| \hat{\sigma}'(x) | \le 2^{x}$ for $x \le -10$.\\

For $\sigma=\relu$, it is trivial since $\sigma'(x)=0$ for $x \le 0$. \\
For $\sigma=\elu= \begin{cases}
    x \quad \text{if} \; x >0 \\
    e^x-1 \quad \text{if} \; x >0 \\
\end{cases}$, we have $\sigma'(x) = e^x \le 2^x $ for $ x \le 0$. \\
For $\sigma=\GeLU= x \Phi(x)$ where $\Phi(x) = \int_{-\infty}^x \frac{1}{\sqrt{2\pi}} e^{-t^2/2}dt$ is the standard normal cumulative distribution function. Then we have $\sigma'(x) = \Phi(x)+x\phi(x)$ where $\phi(x) = \frac{1}{\sqrt{2\pi}} e^{-x^2/2}$.
Note that for $ a > 0$, 
\begin{align*}
    \int_a^\infty \phi(t)dt =  \int_a^\infty \frac{-\phi'(t)}{t} dt = \left[ - \frac{\phi(t)}{t} \right]_{a}^\infty  - \int_a^\infty  \frac{\phi'(t)}{t^2} \le \frac{\phi(a)}{a}
\end{align*}
where we use $\phi'(x) = -x\phi(x)$. Hence for $x < 0$,  
\begin{align*}
    \Phi(x) = \int_{-\infty}^x \phi(t)dt = \int_{-x}^\infty \phi(t)dt \le \frac{\phi(-x)}{-x}.
\end{align*}

Therefore, for $x \le -1$, 
\begin{align*}
    \sigma'(x) = \Phi(x) + x\phi(x) \le \frac{\phi(x)}{-x} + x \phi(x) = \phi(x) \left( x-\frac{1}{x} \right) \le \phi(x) \left( \frac{x^2-1}{x} \right) \le 0,
\end{align*}
and we have $|\sigma'(x)| \le x \phi(x)$ since $\Phi(x)>0$.
Hence we need to show $|\sigma'(x)| \le |x \phi(x)| \le 2^x$ for $x \le -3$. It suffices to show :
\begin{align*}
    a \phi(a) = a \frac{1}{\sqrt{2\pi}}e^{-a^2/2} \le 2^{-a}, \; a \ge 3.
\end{align*}
Since both are positive, take natural logs and we have 
\begin{align*}
    \ln{a} -\frac{1}{2}\ln{2 \pi} - \frac{a^2}{2} \le -a \ln{2}. 
\end{align*}
Define $F(\cdot)$ as 
\begin{align*}
    F(a) = \frac{a^2}{2} - a \ln{2}-\ln{a}+\frac{1}{2}\ln{2 \pi},
\end{align*}
and we need to show $F(a) \ge 0 , a \ge 3$. Note that $F(3) \approx 2.24 > 0$. By taking derivative, 
\begin{align*}
    F'(a) = a - \ln{2}-\frac{1}{a} > 0 ,\quad  a \ge3. 
\end{align*}
Since $F(3)>0$ and $F(a)$ is increasing for $a \ge 3$, we have $F(a) > 0$ for $a>3$ and $|\sigma'(x)|\le 2^{x}$ for $x \le -3$. \\

For $\sigma(x)= \Swish(x) =x s(x)$, where $s(x) = \frac{1}{1+e^{-x}}$ is $\Sigmoid$. We have $\sigma'(x) = s(x)+xs'(x) = s(x)+xs(x)(1-s(x))$. For $x\le -3$, we have $s(x) = \frac{1}{1+e^{-x}} \ge \frac{1}{1+e^{-3}} >  \frac{1}{2} $, 
\begin{align*}
    \sigma'(x)  = s(x) \left( 1 + x(1-s(x)) \right) < s(x) \left( 1 + (-3)(\frac{1}{2}) \right) = s(x)(-\frac{1}{2}) < 0. 
\end{align*}
Since $\sigma'(x) < 0$ for $ x \le -6$, we have
\begin{align*}
    | \sigma'(x)| =  | s(x)+xs(x)(1-s(x))| \le |xs(x)(1-s(x))| \le |\frac{x}{1+e^{-x}}| \le |xe^x| \le 2^x.
\end{align*}
for $x \le -10$. 

\subsection{Proof of \cref{lemma:real_activations_sigmoid}}\label[appendix]{appendix:real_activations_sigmoid}

Since conditions \cref{condB:zero}-\cref{condB:real_neg} implies \cref{condition:final_layer_refined,condition:gradient_two_exponent,condition:first_layer_new} by \cref{lemma:realcondition_sigmoid}, we need to show that  $\hatsig$ satisfies \cref{condB:zero}-\cref{condB:real_neg}.

One can verify \cref{condB:bound}-\cref{condB:real_neg} by referring \cref{table:realcond_sigmoid}. 

 By \cref{condB:zero}, there exists a separating point $\theta_1\le -4$. By \cref{condB:bound}, there exist a separating point $ \theta_2 \in [0,2)_\fpq$. 
By Lemma 3.7 in \cite{hwang2025floating}, $\domain= [-\zeta,\zeta]_{\fpq} $ is $\round{\hatsig}$-distinguishable with range $\left[-2^{\emax}, \allowbreak 2^{\emax}\right]_{\fpq}$,
where $\zeta \le  \fmax.$

Hence we only need to show the second of \cref{condB:zero}, \textit{i.e.} $4^x \le | \hat{\sigma}(x)|, |\hat{\sigma}'(x) | \le 2^{x}$ for $x \le -2$.
Since $1+e^t \le 4^t$ for $t \ge 2$, we have   
\begin{align*}
    4^{x} \le \hatsig(x) = \frac{1}{1+e^{-x}} \le e^{x} \le  2^{x}, \quad x \ge -2.
\end{align*}
Since $\hatsig'(x) = \hatsig(x)(1-\hatsig(x))$, we have 
\begin{align*}
    \hatsig(x) (1- \hatsig(-2)) \le \hatsig(x)(1-\hatsig(x)) \le \hatsig(x) \le  2^{x}, \quad x \ge -2.
\end{align*}
We need to show $\hatsig(x)(1-\hatsig(-2)) \ge \hatsig(x)(0.8)\ge 4^x$. Since $\hatsig(-2)(0.8) \approx 0.105 \ge 4^{-2}$ and $4^x$ is decreasing faster as $x \to -\infty$, we get the desired result.

\subsection{Proof of \cref{lemma:real_activations_tanh}}\label[appendix]{appendix:real_activations_tanh}

Since conditions \cref{condC:zero}-\cref{condC:real_neg}  implies \cref{condition:final_layer_refined,condition:gradient_two_exponent,condition:first_layer_new} by \cref{lemma:realcondition_tanh}, we need to show that  $\hatsig$ satisfies \cref{condC:zero}-\cref{condC:real_neg}.

By \cref{condC:sigma_bound}, there exists a separating point $\theta_1\le -4$. By \cref{condC:bound}, there exist a separating point $ \theta_2 \in [0,1)_\fpq$. 

By Lemma 3.7 in \cite{hwang2025floating}, $\domain= [-\zeta,\zeta]_{\fpq} $ is $\round{\hatsig}$-distinguishable with range $\left[-2^{\emax}, \allowbreak 2^{\emax}\right]_{\fpq}$,
where $$\zeta \le  \fmax.$$ 

One can verify \cref{condC:zero}-\cref{condC:real_neg}  since $\hatsig(0)=0, \hatsig'(0)=1$, $0.76 x \le \hatsig(x) \le x $ for $0 \le x \le 1$, $\hatsig(x)$ is increasing and $|\hatsig(x)|\le 1$ for $x \in \bbR$. We only need to show  $8^x \le |\hat{\sigma}'(x) | \le 2^{x}$ for $x \le -2$ and we have 
\begin{align*}
  8^x \le e^{2x} \le \frac{4e^{2x}}{4}  \le \hatsig'(x) = \frac{4e^{2x}}{(1+e^{2x})^2} \le 4e^{2x} \le 2^x. 
\end{align*}

\begin{table}[h]
\centering
    \begin{tabular}{cccccc@{\;\;\;}ccccc}
\toprule
\begin{tabular}{@{}c@{}}
  $\hatsig(x)$ 
\end{tabular}
& $\gamma_0$ & $(c_1,c_2)$ & $\hatsig'((0,\infty))$ & $\hatsig''((a,b))$ & $\hatsig'([a,b])$ & $\hatsig((-\infty,0])$ & $f(x)=\tfrac{\hatsig'(x)}{2^{-x}}, f(-\infty,-3)$   \\ 
\midrule
$\relu$   & 0 & (1,1)  & $\{1\}$   & $\{0\}$  & $\{1\}$ & $\{0\}$ & $\{0\}$    \\
$\elu$    & 0 & (1,1)    &  $\{1\}$  & $\{0\}$& $\{1\}$ & $\subset [-1,0]$ & $\subset(-0.08,0)$    \\
$\GeLU$   & 0 & (0.5,1) & $\subset[0.5,1.13)$    & $\subset (0.24,0.62) $ & $\supset(0.85,1.08)$ & $\subset(-0.18,0]$ & $\subset(-0.002,0)$ \\
$\Swish$   & 0 & (0.5,1) & $\subset(0.5,1.10)$   & $\subset (0.05,0.31)$ & $\supset(0.93,1.09)$ & $\subset((-0.28,0))$ & $\subset(-0.03,0)$ \\
\bottomrule
\end{tabular}
\vspace{0.2in}
\caption{%
  Properties of various real activations functions. Note that $(a,b)$ is $(\tfrac12,2)$ for $\hatsig=\relu,\elu$, $(\tfrac12,1)$ for $\hatsig=\GeLU$ and $(1,2)$ for $\hatsig=\Swish$.
}
\label{table:realcond}
\end{table}

\begin{table}[h]
\centering
 \begin{tabular}{cccccc@{\;\;\;}ccccc}
\toprule
$\hatsig(x)$ & $\gamma_0$ & $(c_1,c_2)$ & $\hatsig'(0)$ & $\hatsig''(\bbR)$ & $\hatsig'([0,2])$ & $\hatsig((-\infty,0])$    \\ 
\midrule
$\Sigmoid$   & 0 & (0.5,1)  & $\tfrac14$   & $\subset(-0.10,0) $  & $\subset (0.1,0.25]$ &$(0,0.5]$    \\
\midrule
$\hatsig(x)$ & $\gamma_0$ & $(c_1,c_2)$ & $\hatsig'(0)$ & $\hatsig''(\bbR)$ & $\hatsig'([0,2])$ & $\hatsig((-\infty,0])$    \\ 
\midrule
$\tanh$   & 0 & (0.7616,1)  & $\tfrac14$   & $\subset(-0.10,0) $  & $\subset (0.1,0.25]$ &$(0,0.5]$    \\
\bottomrule
\end{tabular}
\vspace{0.2in}
\caption{%
  Properties of floating-point format for verifying the conditions.
}
\label{table:realcond_sigmoid}
\end{table}

\bibliography{reference}
\bibliographystyle{plainnat}

\end{document}

%% file: figures/table_activation.tex
\begin{table}[h]
\caption{Constants in \cref{condition:final_layer_refined,condition:gradient_two_exponent,condition:first_layer_new}. $\relu^\ast$ denotes $\relu$, $\elu$, $\GeLU$, and $\Swish$.
See \cref{lemma:real_activations,lemma:real_activations_sigmoid,lemma:real_activations_tanh} for derivations.}
\normalsize
\centering
\setlength{\tabcolsep}{1pt}
    \begin{tabular}{@{}p{1.2cm}c@{\hspace{1pt}}c@{\hspace{3pt}}c@{\hspace{3pt}}c@{\hspace{3pt}}c@{\hspace{3pt}}c@{\hspace{3pt}}@{}}
\toprule
\begin{tabular}{@{}c@{}}
  $\hatsig(x)$ 
\end{tabular}
& $\kappa$ & $\sigma'(\delta_1) $ & $\eta$ & $\zeta$  \\ 
\midrule
$\relu^\ast$  & $\emax-1$ &  1 & $2^{\ebit\hspace{-0.007in}-\hspace{-0.007in}1}-2$ & $2^{\emax\hspace{-0.007in}-\hspace{-0.007in}2}$   \\
$\Sigmoid$   & $\mbit-1$ & $1/4$ & $2^{\ebit\hspace{-0.007in}-\hspace{-0.007in}1}-2$ & $2^{\emax\hspace{-0.007in}-\hspace{-0.007in}1}$\\
$\tanh$   & $\mbit-2$& $ 1$ & $2^{\ebit\hspace{-0.007in}-\hspace{-0.007in}1}-1$ & $2^{\emax-1}$ \\
\bottomrule
\end{tabular}
\vspace{0pt}
\label{table:act}
\end{table}